\newcommand{\simonesolved}[1]{\textcolor{black}{#1}}
\newcommand{\simonenew}[1]{\textcolor{black}{#1}}
\newcommand{\clip}{C_{\textup{clip}}}
\newcommand{\diff}{\textup{d}}
\def\Ppm{P_{\Phi_{-1}}}
\def\P{\mathbb{P}}
\def\tr{\mathrm{tr}}
\def\rank{\mathrm{rank}}
\def\R{\mathbb{R}}
\newcommand{\opnorm}[1]{\left\lVert#1\right\rVert_{\textup{op}}}
\def\b0{{0}}
\def\>{\rangle}
\def\rank{\operatorname{\mathop{rank}}}
\newcommand{\E}{\mathbb{E}}
\newcommand{\distas}[1]{\mathbin{\overset{#1}{\sim}}}
\newcommand{\bigO}[1]{\mathcal{O}\left(#1\right)}
\newcommand{\norm}[1]{\left\|#1\right\|}
\newcommand{\subGnorm}[1]{\left\|#1\right\|_{\psi_2}}
\newcommand{\subEnorm}[1]{\left\|#1\right\|_{\psi_1}}
\newcommand{\evmax}[1]{\lambda_{\rm max}\left(#1\right)}
\newcommand{\evmin}[1]{\lambda_{\rm min}\left(#1\right)}
\def\tr{\mathop{\rm tr}\nolimits}
\def\min{\mathop{\rm min}\nolimits}
\def\max{\mathop{\rm max}\nolimits}
\numberwithin{equation}{section}
\newtheoremstyle{myexample} % name
    {\topsep}                    % Space above
    {\topsep}                    % Space below
    {\rm }                   % Body font
    {}                           % Indent amount
    {\bf }                   % Theorem head font
    {.}                          % Punctuation after theorem head
    {.5em}                       % Space after theorem head
    {}  % Theorem head spec (can be left empty, meaning normal)
\newtheoremstyle{myremark} % name
    {\topsep}                    % Space above
    {\topsep}                    % Space below
    {\rm}                        % Body font
    {}                           % Indent amount
    {\bf}                        % Theorem head font
    {.}                          % Punctuation after theorem head
    {.5em}                       % Space after theorem head
    {}  % Theorem head spec (can be left empty, meaning normal)
\newtheorem{claim}{Claim}[section]
\newtheorem{lemma}[claim]{Lemma}
\newtheorem{assumption}{Assumption}
\newtheorem{theorem}{Theorem}
\newtheorem*{theorem*}{Theorem}
\newtheorem{proposition}[claim]{Proposition}
\newtheorem{definition}[claim]{Definition}
\theoremstyle{myremark}
\theoremstyle{myremark}
\theoremstyle{myexample}
\author{Simone Bombari\thanks{Institute of Science and Technology Austria (ISTA). Emails: \texttt{\{simone.bombari, marco.mondelli\}@ist.ac.at}.}\;,
\;\;Marco Mondelli\footnotemark[1]}
\title{Privacy for Free in the Overparameterized Regime}
\begin{document}

\newtheorem*{theoremcentering}{Theorem \ref{thm:maincentering}}
\newtheorem*{theoremcentered}{Theorem \ref{thm:centered}}
\newtheorem*{corhammer}{Corollary \ref{cor:hammer}}
\newtheorem*{cormem}{Corollary \ref{cor:memcap}}
\newtheorem*{theoremoptim}{Theorem \ref{thm:optimization}}

\maketitle

\begin{abstract}

Differentially private gradient descent (DP-GD) is a popular algorithm to train deep learning models with provable guarantees on the privacy of the training data. In the last decade, the problem of understanding its performance cost with respect to standard GD has received remarkable attention from the research community, which formally derived upper bounds on the excess population risk $\mathcal R_{\textup{P}}$ in different learning settings. However, existing bounds typically degrade with over-parameterization, \emph{i.e.}, as the number of parameters $p$ gets larger than the number of training samples $n$ -- a regime which is ubiquitous in current deep-learning practice.
As a result, the lack of theoretical insights leaves practitioners without clear guidance, leading some to reduce the effective number of trainable parameters to improve performance, while others use larger models to achieve better results through scale. In this work, we show that in the popular \emph{random features} model with quadratic loss, for \emph{any} sufficiently large $p$, privacy can be obtained for free, \emph{i.e.}, $\left| \mathcal R_{\textup{P}} \right| = o(1)$, not only when the privacy parameter $\varepsilon$ has constant order, but also in the strongly private setting $\varepsilon = o(1)$. This challenges the common wisdom that over-parameterization inherently hinders performance in private learning.

\end{abstract}

\section{Introduction}

Deep learning models have been shown to be vulnerable to various attacks directed to retrieve information about the training dataset \citep{fredrikson15, carlini2019, carlini2021, haim2022reconstructing}. This vulnerability is a concern when sensitive, private data is included in the training set of a learning pipeline. To allow developers and model providers to still utilize these data, \emph{differential privacy} (DP) \citep{dwork2006} consolidated as the golden standard for %to obtain provable guarantees of
privacy in learning algorithms. In fact, this framework comes with algorithmic methods \citep{shokri2015privacy, Abadi2016} that provide formal protection guarantees for each individual sample in the training set, which becomes safeguarded (up to some level) by any adversary with access to the trained model and the rest of the dataset.

Neural networks can be trained in a differentially private fashion via algorithms such as DP (stochastic) gradient descent (DP-GD) \citep{Abadi2016}. This method involves minimizing the training loss with additional ``tweaks'' to guarantee protection, which typically boil down to \emph{(i) clipping} the per-sample gradients before averaging, \emph{(ii)} perturbing the parameters updates with \emph{random noise}, and \emph{(iii)} limiting the number of training iterations with \emph{early stopping} (see Algorithm \ref{alg:dp-gd}).
While these adjustments provide privacy guarantees, they often come with a performance cost with respect to standard (stochastic) GD algorithms \citep{tramer2021, Kurakin2022}. The impact on performance tends to worsen as the privacy requirements become stronger, as this requires greater perturbations to the gradients. Thus, private training of deep learning models involves carefully tuning additional hyper-parameters, e.g., the clipping constant, the magnitude of the noise, and the number of training iterations, in order to reach the best performance for a given privacy requirement. Notably, this comes at a remarkable computational cost, also considering the higher training times and memory loads that DP optimization has with respect to standard GD algorithms, mainly due to the computation of the per-sample gradients \citep{de2022}.

The challenging problem of optimizing neural networks with an assigned privacy guarantee has motivated a thriving field of research proposing novel architectures and training algorithms \citep{tramer2021, Papernot2021, de2022}. Concurrently, theoretical studies have emerged with the scope of better understanding and quantifying the \emph{privacy-utility} trade-offs in different learning settings. Here, \emph{privacy} is often defined via the pair of parameters $(\varepsilon, \delta)$: the impact of a single data point on the output of the algorithm is controlled by $\varepsilon$ with probability $1-\delta$, see Definition \ref{def:dp} for the formal introduction of $(\varepsilon, \delta)$-DP. In order to provide meaningful protection, practitioners pick constant-order values of $\varepsilon$, \emph{e.g.}, $\varepsilon\in \{1, 2, 4, 8\}$, and $\delta < 1/n$, where $n$ is the number of training samples \citep{bassily19}.
The \emph{utility}, in this context, is typically measured as the degradation in generalization performance of the DP-trained solution $\theta^p \in \R^p$ compared to a non-private baseline $\theta^* \in \R^p$, where $p$ denotes the number of parameters of the model. In particular, considering the standard supervised setting and denoting by $(x, y) \sim \mathcal P_{XY}$ an input-label pair distributed accordingly to the data distribution $\mathcal P_{XY}$, the \emph{excess population risk} is defined as
\begin{equation}\label{eq:excess}
    \mathcal R_{\textup{P}} = \E_{(x, y) \sim \mathcal P_{XY}} \left[ \ell (x, y, \theta^p) \right] - \E_{(x, y) \sim \mathcal P_{XY}} \left[ \ell (x, y, \theta^*) \right],
\end{equation}
where $\ell \left( x, y, \theta \right)$ is the loss over the sample $(x, y)$ of the model evaluated in $\theta$. Intuitively, the excess population risk worsens with more stringent privacy requirements on $\theta^p$ (\emph{i.e.}, with smaller values of $\varepsilon$ and $\delta$), and a rich line of work spanning over a decade has investigated this trade-off in various settings, see e.g.\ \citep{kifer12, jain14, bassily2014differentially, bassily19, song2021evading, varshney22nearly, brown2024private}.

Despite this flurry of research, existing results are unable to address the over-parameterized regime, \emph{i.e.}, $p = \Omega(n)$. In fact, in this regime, taking $\varepsilon$ of constant order (\emph{e.g.}, $\varepsilon = 1$) typically makes the upper bound on the excess population risk to read (at best) $\mathcal R_{\textup{P}} = {\mathcal O}(1)$, which is vacuous as the performance of a trivial model that constantly outputs zero is of the same order. For additional details, we refer to the discussion on related works in Section \ref{sec:related}. This result is sometimes understood via the qualitative argument that the noise introduced at each iteration by DP-GD increases with the dimension of the parameter space. More specifically, if each entry of $\theta \in \R^p$ is perturbed with standard Gaussian noise, the $\ell_2$ norm of the perturbation scales as $\sqrt p$, which suggests that the privacy-utility trade-off worsens as $p$ grows \citep{yu2021do, mehta2022large}. This motivated a line of research that focuses on DP algorithms acting over lower dimensional subspaces, both in theory \citep{zhou2021bypassing, hilal21private} and in practice \citep{yu2021large, Golatkar2022}.

However, reducing the dimensionality of architectures is in stark contrast with the current trend of increasingly large networks in deep learning. In the non-private setting, the apparent contradiction between excellent generalization of over-parameterized models and classical bias-variance trade-off has been the subject of intense investigation, which has unveiled phenomena such as benign overfitting \citep{bartlett2021deep, bartlett2020benign} and double descent \citep{Belkin19, mei2022generalization}. 

\begin{wrapfigure}{r}{0.5\textwidth}
  \begin{center}
    \includegraphics[width=0.5\textwidth]{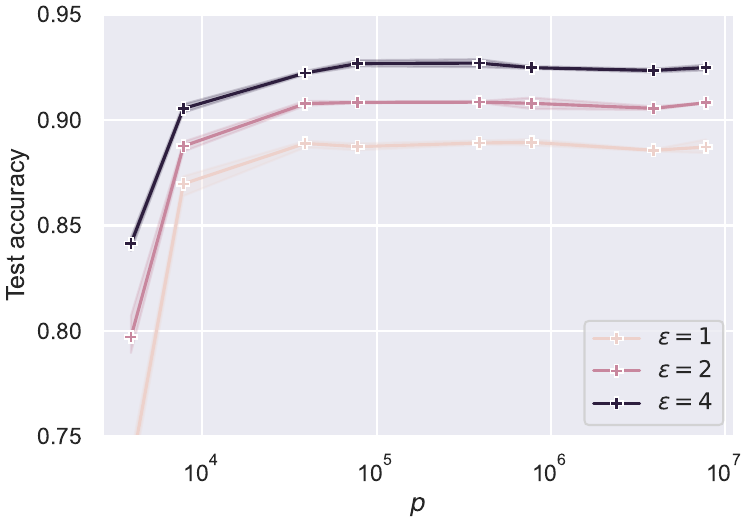}
  \end{center}
  \caption{Test accuracy of DP-GD on MNIST for a 2-layer, fully connected ReLU network, as a function of the number of parameters $p$ with fixed $n = 50000$. Further details on the experimental setting can be found in Section \ref{sec:experiments}.}
  \label{fig:MNISTintro}
\end{wrapfigure}

Recently, larger models have also shown benefits on down-stream tasks requiring private fine-tuning \citep{li2022large, yu2022differentially}, which motivated theoretical studies that provide refined privacy-utility trade-offs in this specific setting \citep{wang2024neural, li2022when}.
Perhaps surprisingly, the recent work by \citet{de2022} gives evidence of the benefits of scale even in the absence of public pre-training data, as the generalization performance is shown to improve with model size on datasets such as CIFAR-10 and ImageNet, following an accurate hyper-parameter search.
In Figure \ref{fig:MNISTintro}, we investigate the interplay between privacy and over-parameterization in a simpler and more controllable setting: training a 2-layer, fully-connected ReLU network on MNIST ($n = 6 \cdot 10^4$) with DP-GD (see Algorithm \ref{alg:dp-gd}). We vary the network width, spanning both the under-parameterized ($p \sim 10^4$) and over-parameterized ($p \sim 10^7$) regime, questioning if the algorithm suffers as the number of parameters grows larger.
The plot shows that this is not the case: the test accuracy initially increases until the network is wide enough ($p \sim 10^5$), and then plateaus without showing any decreasing trend in the right side of the figure.

Then, are larger models truly disadvantageous when doing private optimization? How can the result in the simple experiment in Figure \ref{fig:MNISTintro} be reconciled with existing theoretical work on privacy-utility trade-offs in the over-parameterized regime? How can one solve the disagreement between the experimental evidence in \cite{de2022} and the widespread idea that more parameters hurt the performance of DP-GD?
These questions call for a principled understanding of private training in deep learning. This in particular amounts to \emph{(i)} establishing meaningful generalization guarantees (i.e., $\mathcal R_{\textup{P}}$ significantly smaller than trivial guessing at test time) for over-parameterized models, and \emph{(ii)} characterizing how the hyperparameters introduced by DP-GD (clipping constant, noise magnitude, and number of training iterations) affect performance.

\subsection{Main contribution}

In this work, we provide %the \emph{first} 
privacy-utility guarantees $\mathcal R_{\textup{P}} = o(1)$ under over-parameterization -- not only when $\varepsilon$ has constant order, but also in the strongly private setting $\varepsilon = o(1)$. We frame this result as achieving \emph{privacy for free} in the over-parameterized regime.

We consider a family of machine learning models where the number of parameters $p$ is a free variable that can be significantly larger than the number of samples $n$ and the input dimension $d$, just as in the experimental setup of Figure \ref{fig:MNISTintro}. This excludes, for example, linear regression, where $p = d$ \citep{varshney22nearly, liu2023near, brown2024private}. Specifically, we focus on the widely studied \emph{random features} (RF) model \citep{rahimi2007random} with quadratic loss, which takes the form
\begin{equation}\label{eq:rfmodelintro}
     \ell(x, y, \theta) =  \left( f_{\textup{RF}}(x, \theta) - y \right)^2, \qquad f_{\textup{RF}}(x, \theta) = \phi (Vx)^\top \theta,
\end{equation}
where $f_{\textup{RF}}$ is a generalized linear model, $\phi: \R \to \R$ a non-linearity applied component-wise to the vector $Vx \in \R^p$, and $V \in \R^{p \times d}$ a random weight matrix. The RF model can be regarded as a 2-layer fully-connected neural network, where only the second layer $\theta$ is trained and the hidden weights $V$ are frozen at initialization. Its appeal comes from the fact that it is simple enough to be analytically tractable and, at the same time, rich enough to exhibit properties occurring in more complex deep learning models \citep{bartlett2021deep,mei2022generalization}.

The DP-trained solution $\theta^p$ is obtained via DP-GD as reported in Algorithm \ref{alg:dp-gd}. Differently from %the algorithm presented in 
\citep{Abadi2016}, we do not perform stochastic batching of the data when aggregating the gradients. % and consider small learning rates.
The cost of privacy typically refers to the performance gap between an algorithm that enforces $(\varepsilon, \delta)$-DP and one that does not. Thus, we set the non-private baseline $\theta^*$ in \eqref{eq:excess} to be the solution obtained by (non-private) GD. In the RF model, the test error of this baseline has been characterized precisely in a recent line of work \citep{mei2022generalization, mmm2022, hu2024asymptotics} and, for a class of data distributions, $\theta^*$ also corresponds to the Bayes-optimal solution given by $\textup{arg\,min}_\theta \E_{(x, y) \sim \mathcal P_{XY}} \left[ \ell (x, y, \theta) \right]$. At this point, we can present an informal version of our result (formally stated as Theorem \ref{thm:mainthm}).
\begin{theorem*}[main result -- informal] Consider the RF model in \eqref{eq:rfmodelintro} with input dimension $d$ and number of features $p$. Let $n$ be the number of training samples and $\mathcal R_{\textup{P}}$ be defined according to \eqref{eq:excess}, where $\theta^*$ is the solution of GD and $\theta^p$ is the $(\varepsilon, \delta)$-differentially private solution of DP-GD (Algorithm \ref{alg:dp-gd}).
%solution of , both applied to the  with small enough learning rate $\eta$. 
Then, for all sufficiently over-parameterized models, under some technical conditions, the following holds with high probability
%Let $d$ be the input dimension, $n$
%Then, under the Assumptions \ref{ass:data}-\ref{ass:privacy}, and for a proper choice of the hyper-parameters of Algorithm \ref{alg:dp-gd}, we have
\begin{equation}\label{eq:RP}
\begin{aligned}
    \left| \mathcal R_{\textup{P}} \right| = \tilde{\mathcal O} \left(\frac{d}{n \varepsilon} + \sqrt{\frac{d}{n}} + \sqrt{\frac{n }{d^{3/2}}} \right) = o(1).
\end{aligned}
\end{equation}
%with high probability over the randomness of the training dataset, the random features, and the noise of the algorithm. The notation $\tilde {\mathcal O} (\cdot)$ neglects poly-logarithmic factors, and the last relation holds due Assumptions \ref{ass:scalings} and \ref{ass:privacy}.
\end{theorem*}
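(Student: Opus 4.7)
The plan is to decompose the DP-GD output as $\theta^p = \theta^*_T + \tilde{Z}^T$, where $\theta^*_T$ is the non-private GD iterate and $\tilde{Z}^T$ captures the accumulated DP noise, and then bound the resulting test error via sharp concentration of the random features kernel. The overarching strategy leverages the fact that in the overparameterized regime the GD trajectory interpolates the training data so quickly that clipping becomes irrelevant, reducing the analysis to a linear Gaussian recursion.

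First, I would show that for a clipping constant $C$ chosen at the scale of $\max_i \|\phi(Vx_i)\| \cdot |y_i|$, clipping is inactive along the entire DP-GD trajectory with high probability. Non-private GD has residuals $r^t_i = f_{\textup{RF}}(x_i,\theta^*_t) - y_i$ decaying geometrically at rate $1 - \eta \lambda_{\min}(\Phi \Phi^\top/n)$, so the per-sample gradients $2 r^t_i \phi(Vx_i)$ quickly drop well below $C$. A Gronwall-type induction propagates this control to the DP-GD iterates: the DP noise perturbs training residuals only through its component in $\col(\Phi)$, whose magnitude can be absorbed in the constants. With clipping inactive, DP-GD is exactly GD plus i.i.d.\ Gaussian noise, and, letting $A = I - (\eta/n)\Phi\Phi^\top$, unrolling the recursion yields $\tilde Z^T \sim \mathcal{N}(0,\Lambda)$ with $\Lambda = (\eta^2 C^2 \sigma^2 / n^2) \sum_{t=0}^{T-1} A^{2t}$.

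Expanding the quadratic excess risk now gives
\begin{equation*}
\mathcal R_{\textup{P}} \;=\; \E_x \bigl[(\phi(Vx)^\top(\theta^*_T - \theta^*))^2\bigr] \;+\; \tr(\Lambda \, \Sigma) \;+\; \text{lower-order terms},
\end{equation*}
with $\Sigma = \E_x[\phi(Vx)\phi(Vx)^\top]$; the cross term with $f_{\textup{RF}}(x,\theta^*)-y$ vanishes in expectation because $\tilde Z^T$ is independent of $(x,y)$. The first summand is the GD-convergence bias and decays with $T$. For $\tr(\Lambda\Sigma)$, I would decompose $\Sigma$ along $\col(\Phi)$ and its orthogonal complement $\mathcal N$: on $\col(\Phi)$, $A$ is a contraction so the block of $\Lambda$ is bounded by $\mathcal O(\eta C^2 \sigma^2/(n\,\lambda_{\min}))$; on $\mathcal N$, $\Lambda$ grows as $\eta^2 C^2 \sigma^2 T/n^2$ times $P_\mathcal N$, but $\tr(P_\mathcal N \Sigma)$ is small because test features concentrate near $\col(\Phi)$ for $p \gg n,d$. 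Finally, Gaussian composition over $T$ rounds gives $\sigma \asymp C \sqrt{T \log(1/\delta)}/(n\varepsilon)$; balancing the noise term, the bias term, and the clipping slack by optimizing over $T$ yields the announced $\tilde{\mathcal O}(d/(n\varepsilon) + \sqrt{d/n} + \sqrt{n/d^{3/2}})$ rate.

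The hard part is the last ingredient above: bounding the null-space leakage $\tr(P_\mathcal N \Sigma)$ tightly enough that the linear-in-$T$ blow-up of $\Lambda|_\mathcal N$ does not destroy the rate. A naive bound treating $\mathcal N$ as an arbitrary $(p-n)$-dimensional subspace gives a factor $(p-n)/p = \Theta(1)$ for $p \gg n$, which kills the dimension-freeness and is precisely the source of the pessimistic $\sqrt p$ folklore. The resolution is that the null space is \emph{oriented} by the data: $\Phi^\top\Phi/p$ concentrates around the population kernel operator and the test feature $\phi(Vx)$ lies in $\col(\Phi)$ up to a residual of order $\tilde{\mathcal O}(\sqrt{d/n} + \sqrt{n/d^{3/2}})$. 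Establishing this requires sharp spectral concentration of the random-features kernel in the proportional regime, in the spirit of \citep{mei2022generalization, mmm2022}, together with a careful calculation of the posterior-like quantity $\E_x[\phi(Vx)^\top P_\mathcal N \phi(Vx)]$, and this is the main technical obstacle of the proof.
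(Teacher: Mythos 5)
Your skeleton matches the paper's at a high level: show clipping never activates, so DP-GD reduces to GD plus Gaussian noise, then split the excess risk into a GD-convergence bias and a noise term (the paper does this in continuous time, writing DP-GD as the Euler--Maruyama discretization of an SDE whose clipping-free version is an Ornstein--Uhlenbeck process, but that difference is cosmetic). However, there are two substantive gaps, and you have misidentified where the difficulty lies.

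First, the bias term does not simply ``decay with $T$'' until negligible. If you ran GD long enough to converge in \emph{all} directions of $\col(\Phi^\top)$, you would need $\eta T \gtrsim n/\lambda_{\min}(K) \asymp n/p$, and with $\sigma \asymp C\sqrt{T\log(1/\delta)}/(n\varepsilon)$ the accumulated noise variance per direction is $\asymp \Sigma^2 \eta T$ with $\Sigma = 2C\sigma/n$; summed against $\tr(\E_x[\varphi\varphi^\top]) = \Theta(p)$ this gives $\Theta(\log(1/\delta)/\varepsilon^2)$, which is not $o(1)$. The trade-off forces early stopping at $\eta T \asymp d/p$ (up to logs), at which time the flow has converged only in the top-$d$ eigendirections of $\Phi^\top\Phi$ (the paper proves a spectral gap $\lambda_d(K) = \Omega(pn/d)$ vs.\ $\lambda_{d+1}(K) = \mathcal O(p)$). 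The residual bias in the remaining $n-d$ directions does \emph{not} vanish, and it is precisely the source of the $\sqrt{d/n} + \sqrt{n/d^{3/2}}$ terms; controlling it requires the Hermite decomposition $\varphi(x) = \mu_1 Vx + \tilde\varphi(x)$ together with operator-norm bounds on $\E_x[\tilde\varphi(x)\tilde\varphi(x)^\top]$ and $V^\top P_\Lambda^\perp \Phi^+$. Relatedly, the ``main obstacle'' you identify is a non-obstacle, and your proposed resolution of it is false: a constant fraction of the test-feature energy lies \emph{outside} $\col(\Phi^\top)$, since $\tr(P_{\mathcal N}\,\E_x[\tilde\varphi\tilde\varphi^\top]) \geq \tr(\E_x[\tilde\varphi\tilde\varphi^\top]) - n\,\|\E_x[\tilde\varphi\tilde\varphi^\top]\|_{\op} = \Theta(p) - \tilde{\mathcal O}(n + np/d^{3/2}) = \Theta(p)$. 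What controls the null-space noise is not feature concentration near $\col(\Phi^\top)$ but early stopping: each of the $p$ directions accumulates variance at most $\Sigma^2\tau$, and $p\,\Sigma^2\tau = \tilde{\mathcal O}(d^2/(n\varepsilon)^2) = o(1)$ with the stated hyper-parameters, so even the trivial bound $\tr(P_{\mathcal N}\Sigma) \leq \Theta(p)$ suffices.

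Second, the clipping step is underspecified in a way that matters. Clipping is inactive iff $|r_i^t|\,\|\varphi(x_i)\|_2 \lesssim C$ for \emph{all} $i$ and all $t$, i.e., you need $\sup_{t}|r_i^t| = \mathcal O(\mathrm{polylog}\, n)$ uniformly over $i$ with failure probability small enough to union-bound over $n$ samples. Geometric decay of $\|r^t\|_2$ only gives $\|r^t\|_\infty \leq \|Y\|_2 = \mathcal O(\sqrt n)$, which would make the per-sample gradient of order $\sqrt{np} \gg C \asymp \sqrt{p}\,\mathrm{polylog}\,n$. Obtaining the per-coordinate, whole-trajectory, exponentially-high-probability control is where the paper's leave-one-out stability argument, the Lie product formula manipulation, and the Dudley chaining bound live; a Markov/second-moment argument fails after the union bound. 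Your Gronwall induction for the noisy residuals is plausible in spirit (the paper's analogue is the Sudakov--Fernique comparison with a Wiener process plus Borell--TIS), but the deterministic part of the claim is the hard one and your sketch does not supply it.
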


In words, in the regime $d\ll n\ll d^{3/2}$ -- considered e.g.\ in \citep{mmm2022, bombari2023universal}, see Assumption \ref{eq:scalings} for details -- we show that 
$\left| \mathcal R_{\textup{P}} \right| = o(1)$ as long as $\varepsilon \gg d / n$. In fact, when $d\ll n\ll d^{3/2}$ and $\varepsilon \gg d / n$, the three terms $\frac{d}{n \varepsilon}$, $\sqrt{\frac{d}{n}}$ and $\sqrt{\frac{n }{d^{3/2}}}$ appearing in the RHS of \eqref{eq:RP} become $o(1)$. We make two observations: \emph{(i)} as $d\ll n$, our result guarantees vanishing excess population risk, even with a strong privacy requirement $\varepsilon = o(1)$; \emph{(ii)} the bound in \eqref{eq:RP} does not depend on the number of parameters $p$
and Assumption \ref{eq:scalings} only requires a lower bound on $p$,
%and it holds for $p=\Omega(n^2)$ (see Assumption \ref{eq:scalings}), 
which allows for arbitrarily large over-parameterization in the model. As typical in the related literature, the dependence of $\mathcal R_{\textup{P}}$ on $\delta$ is only logarithmic and, therefore, it is neglected in the notation $\tilde {\mathcal O} (\cdot)$ that hides poly-logarithmic factors in $\delta$ and $n$.

Achieving \emph{privacy for free} might seem surprising, so we now comment on the intuition behind the result. In the regime $n \gg d$, there is a surplus of samples that can be used to learn privately. In fact, in our model, the test error of (non-private) GD plateaus when $n$ is between $d$ and $d^{3/2}$ \citep{mmm2022, hu2024asymptotics}, hence $\Theta(d)$ samples are enough to achieve \emph{utility}, with the remaining ones playing the role of achieving \emph{privacy}. \simonenew{The plateau in the test loss of GD has been shown for other kernel ridge regression models \citep{ghorbani2021linearized}, and in RF it is exhibited when $d^l \ll n \ll d^{l+1}$ for any $l\in\mathbb N$, as long as $p \gg n$, see Figure 1 of \cite{hu2024asymptotics}. We expect that DP-GD catches up with the performance of GD in any of these plateaus. However, as $n$ approaches $d^{l + 1}$, the test loss of GD sharply decreases and it is unclear whether DP-GD has the same rate of improvement.
This suggests that our result could be extended to the regime $d^l \ll n \ll d^{l+1}$ with  $p \gg n$. In the present paper, we focus on $d \ll n \ll d^{3/2}$ and $p = \Omega(n^2)$ due to the additional challenges in the analysis of clipping, see the discussion after Lemma \ref{lemma:stability} in Section \ref{sec:proof}.}

Figure \ref{fig:MNISTintro2} displays \simonenew{the intuition of having a surplus of samples}. There, we plot the test accuracy of a fixed model as the number of training samples $n$ increases, both for GD and DP-GD\simonenew{, keeping $p$ fixed and much larger than $n$}. In the left side of the figure, the performance of $\theta^*$ sharply increases with $n$, while the private algorithms have very low utility. Moving towards the right side of the plot, the performance of GD tends to saturate, thus approaching the regime where there is a surplus of samples required to learn the task. Here, the extra samples are used to achieve privacy: the utility of DP-GD increases, eventually approaching the non-private baseline.

\begin{wrapfigure}{r}{0.5\textwidth}
  \begin{center}
    \includegraphics[width=0.5\textwidth]{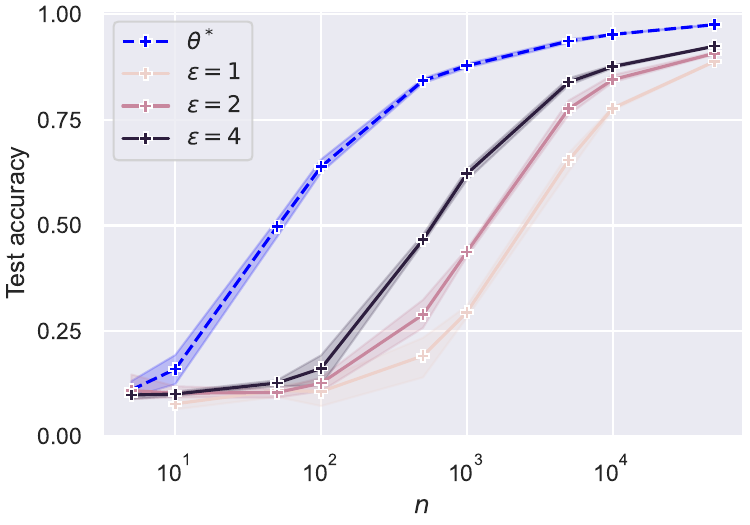}
  \end{center}
  \caption{Test accuracy of DP-GD on MNIST for a 2-layer, fully connected ReLU network, as a function of the number of training samples $n$ with fixed hidden layer width $ = 1000$. Further details on the experimental setting can be found in Section \ref{sec:experiments}.}
  \label{fig:MNISTintro2}
\end{wrapfigure}

Our main result proves that, in the RF model, this heuristic picture holds for any sufficiently large degree of over-parameterization. The key technical hurdle is to formalize these intuitions.
Specifically, while \cite{mmm2022,hu2024asymptotics} establish the asymptotic test error of $\theta^*$ at convergence, we need a \emph{non-asymptotic} control (in terms of $n, d, p$) on the \emph{whole DP-GD trajectory} to understand the impact of clipping and early stopping. This in turn leads to a completely different proof strategy: we first frame DP-GD as the Euler-Maruyama discretization scheme of a stochastic differential equation (SDE); then, exploiting a number of concentration arguments (Dudley, Sudakov-Fernique, and Borell-TIS inequalities), we establish the size of a set containing the full trajectory of the SDE (with high probability), which allows to set the clipping constant; finally, we study the DP-trained solution $\theta^p$ via an Ornstein-Uhlenbeck process, which allows to set the early stopping. Remarkably, our approach provides insights on the problem of hyper-parameter optimization. In fact, as a byproduct of our analysis, we establish the scaling of key hyper-parameters (clipping constant, noise magnitude and number of training iterations) in terms of the learning setup (input dimension $d$, number of parameters $p$) for random features.

The remainder of the paper is organized as follows. In Section \ref{sec:related}, we discuss relevant related work. In Section \ref{sec:dpgd}, we introduce the DP-GD algorithm, and we formally define the DP-trained solution $\theta^p$ as well as the non-private baseline $\theta^*$ that is considered in the excess population risk \eqref{eq:excess}. In Section \ref{sec:mainresult}, we %list all required assumptions, set the hyper-parameters of DP-GD and 
formally state our main result (Theorem \ref{thm:mainthm}). In Section \ref{sec:experiments}, we provide supporting numerical evidence, including a discussion on the role of hyper-parameters. %, both for the RF model and for a 2-layer NN trained on MNIST. 
In Section \ref{sec:proof}, we provide the proof of Theorem \ref{thm:mainthm}, deferring some technical parts to the appendix, and we conclude with a discussion in Section \ref{sec:discussion}.

\section{Related work}\label{sec:related}

Differentially private empirical risk minimization is the problem of minimizing the empirical risk $\mathcal L (\theta) = \sum_{i =1}^n \ell(x_i, y_i, \theta) / n$ while also ensuring privacy guarantees, where $(x_i, y_i)$ represents an input-label pair in the training set % distributed according to a data distribution $\mathcal P_{XY}$,
and $\ell$ is a loss function. Different algorithms have been proposed to tackle this problem, and the performance is %generally
measured through the excess risks of their solution $\theta^p$ %has 
with respect to the %another 
baseline %solution 
$\theta^*$. In particular, we can define the \emph{excess empirical risk} as the difference in the training losses $\mathcal R_{\textup{E}} = \mathcal L (\theta^p) - \mathcal L (\theta^*)$, and the \emph{excess population risk} $\mathcal R_{\textup{P}}$ as the difference in the test losses, see \eqref{eq:excess}. These quantities capture the privacy-utility trade-off of the DP algorithm, as excess risks %upper bounds on them generally 
worsen with an increase in the privacy guarantees (\emph{i.e.}, with smaller values of the privacy parameter $\varepsilon$). The seminal works by \citet{chauduri2008, chaudhuri11a} quantitatively investigate this trade-off for strongly convex losses in the context of $\varepsilon$-DP (obtained by setting $\delta=0$), both for output and objective perturbation methods.

\paragraph{Constrained optimization.} \citet{kifer12} extend the analysis of the objective perturbation by \citet{chaudhuri11a}, providing a bound on the empirical excess risk of $\mathcal R_{\textup{E}} = \tilde{\mathcal O}(\sqrt p / (n \varepsilon))$ for constrained, strongly convex $(\varepsilon, \delta)$-DP optimization (see their Theorem 4). The subsequent work by \citet{bassily2014differentially} focuses on a variant of DP-GD previously studied by \cite{williams2010}, and it bounds the excess empirical risk as $\mathcal R_{\textup{E}} = \beta L \, \tilde {\mathcal O} ( \sqrt p / (n \varepsilon ))$ (see their Theorem 2.4.2).\footnote{The definition of empirical excess risk in \cite{bassily2014differentially} is not re-normalized by $n$. %(see their problem formulation section)
This makes the bounds of Theorem 2.4.2 therein larger by a factor $n$, which is removed here for the sake of comparison with other works.} Here, 
$\beta$ is the diameter of the (bounded) optimization domain $\mathcal B$, $L$ is the Lipschitz constant of the loss with respect to the parameters of the model, and the baseline is defined as the training loss minimizer $\theta^* = \textup{arg min}_{\theta \in \mathcal B} \mathcal L (\theta)$. We note that, in this setting,
clipping is not necessary as the loss is Lipschitz. 
By relying on the algorithmic stability analysis in \citet{2009shalevstochastic},
\citet{bassily2014differentially} also provide an excess population risk bound of the form $\mathcal R_{\textup{P}} = \tilde {\mathcal O} ( p^{1/4} / \sqrt{n \varepsilon} )$ (see their Theorem F.2.2), where the baseline is the Bayes optimal solution $\theta^* = \textup{arg min}_{\theta \in \mathcal B} \E_{(x, y) \sim \mathcal P_{XY}} \left[ \ell (x, y, \theta) \right]$. Using again algorithmic stability \citep{hardt2016train}, \citet{bassily19} later improve this bound to $\E [ \mathcal R_{\textup{P}} ] = \beta L \, \tilde {\mathcal O} ( 1 / \sqrt n + \sqrt p  / (n \varepsilon))$, where the expectation is taken with respect to the randomness of the training dataset and algorithm.

\paragraph{Unconstrained optimization.}

In the setting of (convex) unconstrained optimization, %the work of 
\citet{jain14} aim to remove the dependence of $p$ in excess risk bounds for $(\varepsilon, \delta)$-DP, which was present in earlier work by \citet{chaudhuri11a}. They focus on generalized linear models (GLMs, \emph{i.e.}, 
$\ell(x, y, \theta) = \ell(\varphi(x)^\top \theta, y)$ where $\varphi$ is the feature map), assume $L$-Lipschitz loss with a strongly convex regularization term, and bound the excess population risk with respect to the Bayes optimal solution $\theta^*$ as $\mathcal R_{\textup{P}} = \tilde {\mathcal O} ( L \| \theta^* \|_2 / \sqrt n \varepsilon)$. The more recent work by \cite{song2021evading} also considers unconstrained convex Lipschitz GLMs, without the assumption on the strong convexity.
Using the same non-private baseline $\theta^*$ and denoting by $M$ the projector on the column space of $\E_{x \sim \mathcal P_X} [ \varphi(x) \varphi(x)^\top ]$, the previous bound is improved by $\E [ \mathcal R_{\textup{P}} ] = L \norm{ M \theta^* }_2 \, \tilde {\mathcal O}  \left( 1 / \sqrt{n} +  \sqrt{ \min \left(\rank (M), n \right)} / (\varepsilon n) \right)$ (see their Theorem 3.2).
\citet{li2022when} consider Lipschitz losses with $\ell_2$ regularization and introduce the notion of restricted Lipschitz continuity, which gives dimension-free bounds. However, in the setting of GLMs, this approach leads to the same bounds as in \citep{song2021evading}. A similar approach is taken by \cite{ma2022dimension} that remove the dependence on $p$ at the cost of an additional factor $\tr(\tilde H)$, where $\tilde H \succeq \sup_{\theta} \nabla^2_{\theta} \, \E_{(x, y) \sim \mathcal P_{XY}} [ \ell (x, y, \theta) ]$. The work by \citet{arora2022differentially} considers GLMs with a loss that is not necessarily Lipschitz, and it also recovers the result of \cite{song2021evading} in the Lipschitz setting with $\varepsilon = \Omega(1)$.

Importantly, even in the prototypical case of an RF model, existing bounds do not cover the over-parameterized regime. We now provide a detailed discussion explaining why this is the case. First, note that the RF model in \eqref{eq:rfmodelintro} has a non-Lipschitz (quadratic) loss, which does not allow a direct application of most previous bounds. Thus, to ensure a fair comparison, one can estimate $\norm{\theta^*}_2$ and evaluate the Lipschitz constant of the training loss restricted to a bounded set $\mathcal B$ with radius $\beta = \norm{\theta^*}_2$. This provides the scaling of the effective Lipschitz constant of the model, as if the optimization was bounded to the set $\mathcal B$. From our results in the later sections, it can be shown that $\norm{\theta^*}_2 = \Theta(\sqrt{n / p})$, which in turn gives $\norm{\theta^*}_2 \sup_{\theta \in \mathcal B} \norm{ \nabla_\theta \ell (x_i, y_i, \theta) }_2 = \Theta(n)$. This makes the bound by \citet{jain14} read $\mathcal R_{\textup{P}} = \tilde {\mathcal O} ( \sqrt n / \varepsilon)$, which is vacuous for $\varepsilon = \Theta(1)$. A similar discussion holds for the bound by \cite{bassily19}. Furthermore, even by assuming that the loss function in \eqref{eq:rfmodelintro} is Lipschitz, the result improves only by a factor $\sqrt n$, which gives $\mathcal R_{\textup{P}} = \tilde {\mathcal O} ( 1 / \varepsilon)$ and, therefore, a trivial bound when $\varepsilon = \Theta(1)$. Similar considerations apply to the bound by \cite{song2021evading} (and, thus, to the ones by \cite{li2022when, arora2022differentially}), since in this setting we have that $\norm{M \theta^*}_2 = \norm{\theta^*}_2$ and $\rank(M) \geq n$. As concerns the result by \cite{ma2022dimension}, it can be verified that $\tr (\tilde H) \geq \E_x \left [\norm{\varphi(x)}_2^2 \right] = \Theta(p)$, which reintroduces the dependence on the number of parameters $p$, preventing an improvement upon \citet{jain14}. Finally, while \cite{arora2022differentially} give bounds for quadratic losses, the reasoning resembles the one above on the effective Lipschitz constant in $\mathcal B$ and it does not lead to an improvement with respect to the Lipschitz case. The detailed calculation of the quantities mentioned in this paragraph is deferred to Appendix \ref{app:related}.

\paragraph{Linear regression.} This model reads $\ell(x, y, \theta) = (x^\top \theta - y)^2$, where $y = x^\top \theta^* + \sigma$, $\theta^*$ is the ground-truth solution and $\sigma$ independent label noise. \citet{varshney22nearly} assume that each point is sampled from a standard Gaussian distribution and obtain a sample complexity  of $d / \varepsilon$ via gradient descent with adaptive clipping \citep{andrew2021differentially}. This improves over \citep{wang2018revisiting, cai21, milionis22a}, and it matches the result of earlier work by \citet{liu22b}, which however proposes a computationally inefficient method. \cite{liu2023near} relax the assumption on the data (sub-Weibull concentration) and improve the dependence of the sample complexity on the condition number of the covariance matrix.
\citet{brown2024private} remove the need for adaptive clipping, focusing on a setting where the clipping constant is large enough so that clipping will most likely never happen.

\section{Differentially Private Gradient Descent}\label{sec:dpgd}

The formal definition of DP builds on the notion of \emph{adjacent datasets}. In our supervised learning setting, a dataset $D'$ is said to be adjacent to a dataset $D$ if they differ by only one sample.
\begin{definition}[$(\varepsilon, \delta)$-DP \citep{dwork2006}]\label{def:dp}
A randomized algorithm % $\mathcal A : \mathcal D \to \R^p$
satisfies $(\varepsilon, \delta)$-differential privacy if for any pair of adjacent datasets $D, D'$, % $ \in \mathcal D$
and for any subset of the parameters space $S \subseteq \R^p$, we have
\begin{equation}
    \P \left( \mathcal A (D) \in S \right) \leq e^{\varepsilon} \P \left( \mathcal A (D') \in S \right) + \delta.
\end{equation}
\end{definition}
Here, the probability is with respect to the randomness induced by the algorithm, and the inequality has to hold uniformly on all the adjacent datasets $D$ and $D'$.

Different methods have been proposed to train machine learning models while enforcing $(\varepsilon, \delta)$-DP. An approach is to rely on the scheme of \emph{output perturbation}, which adds noise to the output of empirical risk minimization \citep{dwork2006, chaudhuri11a, kifer12}. In deep learning, however, it is not easy to characterize the final parameters of the model, and therefore to estimate the minimal amount of noise required to guarantee protection. For this reason, a popular choice relies on DP-GD algorithms, which involve perturbing the individual updates during training time, thus providing privacy guarantees based on the size of the perturbation and the number of iterations \citep{song13dpsgd, bassily2014differentially, Abadi2016}. In this work, we consider Algorithm \ref{alg:dp-gd}, which can be regarded as a variant of the well-established method by \cite{Abadi2016} without stochastic batching. Its privacy guarantees are reported below.
{
\begin{algorithm}[t]
\caption{DP-GD}
\label{alg:dp-gd}
\begin{algorithmic}
\REQUIRE Number of iterations $T$, learning rate $\eta$, clipping constant $\clip$, noise $\sigma$, initialization $\theta_0$.
\FOR{$t \in [T]$}
    \STATE Compute the gradient $g(x_i, y_i, \theta_{t-1}) \leftarrow \nabla_{\theta} \ell(x_i, y_i, \theta_{t-1})$
    \STATE Clip the gradients $g_{\clip}(x_i, y_i, \theta_{t-1}) \leftarrow g(x_i, y_i, \theta_{t-1}) / \max\left(1, \frac{\|g_t(x_i, y_i, \theta_{t-1})\|_2}{\clip}\right)$
    \STATE Aggregate the gradients $g_{\theta_{t-1}} \leftarrow \frac{1}{n} \sum_{i} g_{\clip}(x_i, y_i, \theta_{t-1})$
    \STATE Update the model parameters adding noise $\theta_t = \theta_{t-1} - \eta g_{\theta_{t-1}} + \sqrt{\eta} \, \frac{2\clip}{n} \sigma \mathcal{N}(0, I_p)$
\ENDFOR
\ENSURE Model parameters $\theta_T$.
\end{algorithmic}
\end{algorithm}
}
\begin{proposition}\label{prop:alg1dp}
    For any $\delta \in (0, 1)$, $\varepsilon \in (0, 8 \log (1 / \delta))$, if we set
    \begin{equation}
        \sigma \geq \sqrt {\eta T} \frac{\sqrt{8 \log(1 / \delta)}}{\varepsilon},
    \end{equation}
    then Algorithm \ref{alg:dp-gd} is $(\varepsilon, \delta)$-differentially private.
\end{proposition}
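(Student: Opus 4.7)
The proof follows a standard privacy accounting argument, treating each iteration of Algorithm~\ref{alg:dp-gd} as a Gaussian mechanism and composing across the $T$ iterations. The plan is to bound the per-step sensitivity, invoke the Rényi DP guarantee of the Gaussian mechanism, compose adaptively, and finally convert back to $(\varepsilon,\delta)$-DP.

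The first step is a sensitivity calculation. Fix any iteration $t$ and any pair of adjacent datasets $D,D'$ differing in a single sample. Since each clipped per-sample gradient has $\ell_2$ norm at most $\clip$ by construction, swapping one sample changes the aggregated clipped gradient $g_{\theta_{t-1}} = \tfrac{1}{n}\sum_i g_{\clip}(x_i,y_i,\theta_{t-1})$ by at most $2\clip/n$ in $\ell_2$ norm (triangle inequality). Hence the deterministic update $\eta g_{\theta_{t-1}}$ has $\ell_2$-sensitivity $\Delta = 2\eta\clip/n$, while the Gaussian noise added in the update rule has per-coordinate standard deviation $\sigma_{\textup{noise}} = \sqrt{\eta}(2\clip/n)\sigma$. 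The crucial ratio is then
\begin{equation*}
\frac{\Delta}{\sigma_{\textup{noise}}} \;=\; \frac{\sqrt{\eta}}{\sigma}.
\end{equation*}

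Next, I would apply the classical Rényi DP bound for the Gaussian mechanism: releasing a function of sensitivity $\Delta$ with additive isotropic Gaussian noise of scale $\sigma_{\textup{noise}}$ is $(\alpha,\alpha \Delta^{2}/(2\sigma_{\textup{noise}}^{2}))$-RDP for every $\alpha>1$. Plugging in the ratio above, each step is $(\alpha,\alpha\eta/(2\sigma^{2}))$-RDP, and by the adaptive composition theorem for Rényi DP the full $T$-step trajectory $(\theta_1,\dots,\theta_T)$ is $(\alpha, T\alpha\eta/(2\sigma^{2}))$-RDP. Since $\theta_T$ is a post-processing of this trajectory, the same guarantee holds for the released output.

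Finally, converting via the standard implication ``$(\alpha,\rho)$-RDP $\Rightarrow$ $(\rho + \log(1/\delta)/(\alpha-1),\delta)$-DP'' yields
\begin{equation*}
\varepsilon(\alpha) \;\leq\; \frac{T\alpha\eta}{2\sigma^{2}} \;+\; \frac{\log(1/\delta)}{\alpha-1}.
\end{equation*}
Optimizing over $\alpha>1$ gives $\alpha^{\star}-1 = \sigma\sqrt{2\log(1/\delta)/(T\eta)}$, and substituting the hypothesis $\sigma^{2} \geq 8\eta T\log(1/\delta)/\varepsilon^{2}$ makes $\varepsilon(\alpha^{\star}) \leq \varepsilon$. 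The side condition $\varepsilon < 8\log(1/\delta)$ simply ensures that this optimal $\alpha^{\star}$ lies in the valid regime $\alpha^{\star}>1$ where the RDP-to-DP conversion applies with the stated constants. This yields the claimed $(\varepsilon,\delta)$-DP.

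Conceptually, there is no real obstacle: the argument is pure bookkeeping on a chain of standard ingredients (clipping $\Rightarrow$ sensitivity, Gaussian mechanism $\Rightarrow$ per-step RDP, adaptive composition, RDP-to-DP conversion). The only delicate point is tracking constants so that the factor $\sqrt{8}$ in the noise lower bound exactly matches the optimization of $\alpha$ in the conversion step; one could alternatively quote the moments-accountant bound of \cite{Abadi2016} (without subsampling amplification, since we aggregate the full dataset) to obtain the same constant directly.
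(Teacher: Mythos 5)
Your proposal is correct and is essentially the paper's own argument in different notation: the paper bounds the per-step sensitivity by $2\eta\clip/n$ exactly as you do, computes the log-MGF of the privacy loss of each Gaussian step as $\alpha_{\mathcal M_t}(\lambda)=\tfrac{\eta}{2\sigma^2}(\lambda+\lambda^2)$ (which is precisely the Rényi guarantee with $\alpha=\lambda+1$), composes over $T$ steps, and optimizes the tail bound at $\lambda^*=4\log(1/\delta)/\varepsilon$ — the same optimization you perform over $\alpha$. The only small imprecision is your explanation of the side condition: $\alpha^\star>1$ holds automatically, and $\varepsilon<8\log(1/\delta)$ is actually what guarantees the first-order term $T\eta/(2\sigma^2)\le \varepsilon^2/(16\log(1/\delta))$ is at most $\varepsilon/2$ so that $\varepsilon(\alpha^\star)\le\varepsilon$, but your substitution already uses it correctly.
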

The proof follows the strategy of \cite{Abadi2016} and is based on the moment accountant method. As our formulation is slightly different, we provide the complete argument in Appendix \ref{app:dp}.

While existing work tackles the problem of understanding utility through stability arguments \citep{bassily19, song2021evading}, we propose a novel approach, which %controls
focuses on the implicit bias of DP-GD. % in an over-parameterized landscape.
The idea is to resort to a continuous process defined by Algorithm \ref{alg:dp-gd} in the limit of small learning rates $\eta \to 0$.
This has %proved 
proven effective in the non-private setting, where the \emph{gradient flow} equation helped understanding the properties of GD in over-parameterized models \citep{mei2022generalization, mmm2022, hu2024asymptotics}.
However, two challenges arise in the analysis of DP-GD: \emph{(i)} the \emph{clipping} of the per-sample gradients, and \emph{(ii)} the injection of \emph{noise}. To overcome the former, we define an auxiliary \emph{clipped loss} $\mathcal L_{\clip}(\theta)$ that incorporates clipping. As for the latter, we consider the stochastic differential equation (SDE) obtained by adding a Wiener process to the gradient flow. This motivates the scaling of the standard deviation of the injected noise as $\sqrt{\eta}$, as e.g.\ considered by \cite{wang19nonconvex}.

More specifically, similarly to \cite{song2021evading}, we note that the clipping step in Algorithm \ref{alg:dp-gd} can be reformulated as the optimization of the surrogate loss $\ell_{i, \clip}(\varphi(x_i)^\top \theta - y_i)$, whose derivative for the $i$-th training sample reads
\begin{equation}\label{eq:lclip}
    \ell_{i, \clip}'(z) = \ell' (z) \min \left( 1, \frac{\clip}{\left| \ell'(z) \right| {\norm{\varphi(x_i)}_2}} \right). % \qquad \ell_{i, \clip}(0) = 0.
\end{equation}
This is formalized by the result below, whose proof is deferred to Appendix \ref{app:dp}. 
\begin{proposition}\label{prop:clipandcliploss}
For any $\theta \in \R^p$ and any clipping constant $\clip > 0$, we have that
$g_{\clip}(x_i, y_i, \theta) = \nabla_{\theta} \ell_{i, \clip} \left( \varphi(x_i)^\top \theta_{t-1} - y_i \right)$, 
where $g_{\clip}(x_i, y_i, \theta)$ is defined in the clipping step of Algorithm \ref{alg:dp-gd}.
\end{proposition}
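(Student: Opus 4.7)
The plan is to verify the identity by a direct chain-rule computation, since in the GLM setting of \eqref{eq:rfmodelintro} the per-sample loss depends on $\theta$ only through the scalar prediction error $z_i(\theta) := \varphi(x_i)^\top \theta - y_i$. First I would write $\ell(x_i, y_i, \theta) = \ell(z_i(\theta))$ (abusing notation for the scalar loss) and, by the chain rule, obtain
\begin{equation*}
    g(x_i, y_i, \theta) = \nabla_\theta \ell(x_i, y_i, \theta) = \ell'(z_i(\theta))\, \varphi(x_i).
\end{equation*}
From this, the Euclidean norm of the per-sample gradient factorizes as $\|g(x_i, y_i, \theta)\|_2 = |\ell'(z_i(\theta))|\, \|\varphi(x_i)\|_2$, which is the critical identity making the proposition work.

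Next I would plug this factorization into the clipping rule from Algorithm \ref{alg:dp-gd}. Observe that $1/\max(1, a) = \min(1, 1/a)$ for any $a > 0$, so
\begin{equation*}
    g_{\clip}(x_i, y_i, \theta) = \ell'(z_i(\theta))\, \varphi(x_i)\, \min\!\left(1, \frac{\clip}{|\ell'(z_i(\theta))|\, \|\varphi(x_i)\|_2}\right).
\end{equation*}
Separately, I would apply the chain rule to $\ell_{i,\clip}(z_i(\theta))$, which gives $\nabla_\theta \ell_{i,\clip}(z_i(\theta)) = \ell_{i,\clip}'(z_i(\theta))\, \varphi(x_i)$; substituting the explicit formula for $\ell_{i,\clip}'$ from \eqref{eq:lclip} yields exactly the right-hand side above, establishing the claim.

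The only bookkeeping point to address is the degenerate case $\ell'(z_i(\theta)) = 0$, where the factor $\min(1, \clip/(|\ell'|\,\|\varphi(x_i)\|_2))$ and the ratio in \eqref{eq:lclip} are formally ill-defined. Here both $g(x_i, y_i, \theta)$ and $\ell_{i,\clip}'(z_i(\theta))$ are zero by construction (no clipping is triggered when the unclipped gradient vanishes), so both sides of the proposition are $0$ and the identity holds trivially. I do not expect any genuine obstacle: the proof is essentially a one-line chain-rule verification, and its role in the paper is simply to justify recasting the algorithmic clipping step in Algorithm \ref{alg:dp-gd} as the gradient of the smooth surrogate loss $\ell_{i,\clip}$, which will then enable the SDE/gradient-flow analysis performed in Section \ref{sec:proof}.
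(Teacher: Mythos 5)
Your proposal is correct and follows essentially the same route as the paper: chain rule to factor the per-sample gradient as $\ell'(z_i)\varphi(x_i)$, factorization of its norm, and substitution of the definition of $\ell_{i,\clip}'$ in \eqref{eq:lclip}. Your explicit handling of the degenerate case $\ell'(z_i(\theta))=0$ is a minor point the paper leaves implicit, but it does not change the argument.
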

In other words, running Algorithm \ref{alg:dp-gd} is equivalent to running the same algorithm, without the clipping step, on  the \emph{clipped loss} $\mathcal L_{\clip}(\theta) = \frac{1}{n} \sum_{i = 1}^n \ell_{i, \clip}(\varphi(x_i)^\top \theta - y_i)$.
Hence, we can write the $t$-th iteration of Algorithm \ref{alg:dp-gd} as a noisy GD update on $\mathcal L_{\clip}(\theta)$, \emph{i.e.},
\begin{equation}\label{eq:discretization}
    \theta_t - \theta_{t-1} = - \eta \nabla \mathcal L_{\clip}(\theta_{t-1}) + \sqrt \eta \frac{2 \clip}{n} \sigma \mathcal N(0, I_p).
\end{equation}
This update rule corresponds to the Euler-Maruyama discretization scheme of the SDE
\begin{equation}\label{eq:sde1}
    \diff \Theta(t) = - \nabla \mathcal L_{\clip}(\Theta(t)) \diff t + \frac{2 \clip}{n} \sigma \, \diff B(t) = - \nabla \mathcal L_{\clip}(\Theta(t)) \diff t + \Sigma \, \diff B(t),
\end{equation}
with discretization $\eta$ (see Section 10.2 of \cite{kloeden2011numerical}). Here, $B(t)$ represents the standard $p$-dimensional Wiener process, we introduce the shorthand $\Sigma = 2 \clip \sigma / n$, and we set the initial condition of \eqref{eq:sde1} to correspond to the initialization of Algorithm \ref{alg:dp-gd}, \emph{i.e.} $\Theta(0) = \theta_0$.
%  We remark that Proposition \ref{prop:convergenceEM} holds for any dataset.
%Informally, % (we defer the full argument to Appendix \ref{app:dpsde}), 
The strong convergence of the Euler-Maruyama method guarantees that, for any $\tau = \eta T$, the solution $\theta_T$ of Algorithm \ref{alg:dp-gd} approaches the solution $\Theta(\tau)$ of the SDE in \eqref{eq:sde1} as the learning rate $\eta$ gets smaller. 
\simonenew{We note that previous work \citep{paquette2024homogenization} has considered a similar SDE to analyze the effects of stochastic batching, separating the dynamics in a gradient flow plus a Wiener process. Thus, the approach developed here could prove useful also to study DP-SGD, after incorporating an additional independent Wiener process in \eqref{eq:sde1}.}
Importantly, privacy guarantees analogous to those of Proposition \ref{prop:alg1dp} hold for $\Theta(\tau)$, as formalized in the result below (the proof is deferred to Appendix \ref{app:dp}). 
\begin{proposition}\label{prop:dpsde}
    Let $\ell: \R \to \R$ be a differentiable function with Lipschitz-continuous derivative. Then, for any $\delta \in (0, 1)$, $\varepsilon \in (0, 8 \log (1 / \delta))$, if we set
    \begin{equation}\label{eq:Sigmahp}
        \Sigma \geq \frac{2 \clip \sqrt \tau}{n} \frac{\sqrt{8 \log(1 / \delta)}}{\varepsilon},
    \end{equation}
    the solution $\Theta(\tau)$ of the SDE \eqref{eq:sde1} at time $\tau$ is $(\varepsilon, \delta)$-differentially private.
\end{proposition}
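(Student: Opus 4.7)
The plan is to derive the privacy guarantee for the SDE by taking the small-step-size limit of Algorithm \ref{alg:dp-gd}, leveraging the already-established Proposition \ref{prop:alg1dp}. Fix $\eta > 0$ and set $T = \tau/\eta$ in Algorithm \ref{alg:dp-gd}. Identifying the per-step Gaussian noise $\sqrt{\eta}\,(2\clip\sigma/n)$ with $\sqrt{\eta}\,\Sigma$ (\emph{i.e.}, $\sigma = n\Sigma/(2\clip)$), the hypothesis $\sigma \geq \sqrt{\eta T}\,\sqrt{8\log(1/\delta)}/\varepsilon$ of Proposition \ref{prop:alg1dp} rewrites exactly as
\[
\Sigma \;\geq\; \frac{2\clip\sqrt{\tau}}{n}\,\frac{\sqrt{8\log(1/\delta)}}{\varepsilon},
\]
which is \eqref{eq:Sigmahp}. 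Crucially, this rewritten condition depends on $\eta$ and $T$ only through the product $\tau = \eta T$, so for every $\eta > 0$ the discrete iterate $\theta_T^{(\eta)}$ produced by Algorithm \ref{alg:dp-gd} is $(\varepsilon,\delta)$-DP with the same parameters.

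To transfer this guarantee to the SDE, I would invoke strong convergence of the Euler--Maruyama scheme (Theorem 10.2.2 in \cite{kloeden2011numerical}). The assumption that $\ell'$ is Lipschitz passes through the clipping formula \eqref{eq:lclip}: since $\ell_{i,\clip}'(z) = \ell'(z)\min(1,\clip/(|\ell'(z)|\,\|\varphi(x_i)\|_2))$ is a bounded, piecewise smooth cap of a Lipschitz function, a direct case analysis shows $\ell_{i,\clip}'$ is itself Lipschitz, and hence so is the drift $\nabla\mathcal{L}_{\clip}(\theta) = \frac{1}{n}\sum_{i}\ell_{i,\clip}'(\varphi(x_i)^\top\theta - y_i)\,\varphi(x_i)$. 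Standard Euler--Maruyama error bounds then give $\E\|\theta_T^{(\eta)} - \Theta(\tau)\|_2^2 \to 0$ as $\eta \to 0$, coupling the discrete and continuous processes through the same driving Brownian motion.

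The main obstacle is pushing the $(\varepsilon,\delta)$-DP inequality through this limit, because weak convergence alone only yields Portmanteau-type bounds that point the wrong way on both sides of the DP relation. I see two clean routes. The first is to upgrade weak convergence to total-variation convergence: since the diffusion coefficient $\Sigma I_p$ is constant and nondegenerate, the law of $\Theta(\tau)$ admits a smooth density, and standard weak-error results for Euler--Maruyama on SDEs with sufficiently regular drift (\emph{e.g.}, Bally--Talay) give $L^1$-convergence of densities, which preserves $(\varepsilon,\delta)$-DP on every measurable set. The second, more direct route bypasses discretization entirely: apply Girsanov's theorem to bound the Rényi divergence between the two path laws for adjacent datasets $D,D'$ by $\tfrac{\alpha}{2\Sigma^2}\int_0^\tau \|\nabla\mathcal{L}^D_{\clip}(\Theta(t)) - \nabla\mathcal{L}^{D'}_{\clip}(\Theta(t))\|_2^2\,\diff t \leq 2\alpha\tau\clip^2/(n^2\Sigma^2)$ (since the two empirical losses differ in a single per-sample term, each of clipped gradient norm at most $\clip$, for an averaged difference at most $2\clip/n$), then apply the usual RDP-to-DP conversion. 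Either route recovers \eqref{eq:Sigmahp} with the same constants as Proposition \ref{prop:alg1dp}, completing the proof.
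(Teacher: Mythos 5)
Your first route is, up to the final limit-transfer step, the paper's own argument: it also fixes a family of discretizations $\theta_T^j$ with $\eta_j T_j = \tau$, observes that the condition of Proposition \ref{prop:alg1dp} depends on $(\eta, T)$ only through $\tau$, verifies Lipschitzness of the clipped drift (Proposition \ref{prop:Lclipisclip}), and invokes strong Euler--Maruyama convergence (Proposition \ref{prop:convergenceEM}). Where you diverge is in how the $(\varepsilon,\delta)$ inequality is pushed through the limit. The paper does \emph{not} upgrade to total-variation convergence; instead it proves convergence of probabilities on open balls (Lemma \ref{lemma:convind}, which combines $L^1$ convergence with continuity of the density of $\Theta(\tau)$) and then runs a contradiction argument on a small ball around a hypothetical violating point. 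Your TV route would indeed preserve DP cleanly on all measurable sets, but the specific machinery you cite (Bally--Talay weak-error expansions) requires smooth coefficients, whereas $\nabla\mathcal L_{\clip}$ is only Lipschitz; for constant nondegenerate diffusion with Lipschitz drift one can still get total-variation convergence (e.g.\ via a Girsanov/Pinsker comparison of the interpolated Euler scheme with the SDE), but that needs to be argued rather than cited. Your second route is genuinely different from the paper and, in my view, cleaner: Girsanov bounds the R\'enyi divergence of the path measures by $2\alpha\tau\clip^2/(n^2\Sigma^2)$ (the drift difference on adjacent datasets is uniformly bounded by $2\clip/n$, so Novikov is immediate), the data-processing inequality passes this to the time-$\tau$ marginals, and the standard RDP-to-DP conversion with $\alpha-1 = 4\log(1/\delta)/\varepsilon$ recovers \eqref{eq:Sigmahp} with exactly the moment-accountant constants. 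This bypasses the discretization and the delicate limit-transfer entirely, at the cost of not literally reusing Proposition \ref{prop:alg1dp}; either route, carefully executed, is a valid proof.
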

Armed with Proposition \ref{prop:dpsde}, the rest of the paper focuses on the analysis of the DP-trained solution $\Theta(\tau)$. In fact, by strong convergence, the privacy-utility trade-offs of $\theta_T$ approach the ones of $\Theta(\tau)$, for sufficiently small learning rates. As a non-private baseline, we consider the solution of the gradient flow equation
\begin{equation}\label{eq:gradientflow}
     \diff \hat \theta(t) = - \nabla \mathcal L(\hat \theta(t)) \diff t, \qquad \theta^* = \lim_{t \to +\infty} \hat \theta(t),
\end{equation}
where $\mathcal L(\theta) = \frac{1}{n} \sum_{i = 1}^n \ell(\varphi(x_i)^\top \theta - y_i)$ is the original training loss (see Section 5.1 of \cite{bartlett2021deep} for more details on the convergence). Note that \eqref{eq:gradientflow} resembles \eqref{eq:sde1}, without the effects of clipping and the introduction of noise, and $\theta^*$ is defined at convergence, as it does not require early stopping.

\section{Main result}\label{sec:mainresult}

For simplicity, we set the initialization $\theta_0 = 0$, although similar results can be obtained for a broad class of initializations. The random features matrix $V \in \R^{p \times d}$ is a normalized Gaussian matrix, \emph{i.e.}, $V_{i,j} \distas{}_{\rm i.i.d.}\mathcal{N}(0, 1 / d)$, and the data distribution is scaled as follows.  %This scaling on the entries of $V$ is justified by the following 
\begin{assumption}[Data distribution]\label{ass:data}
    The training samples $\{ (x_1, y_1), \ldots, (x_n, y_n) \}$ are $n$ i.i.d.\ samples from the joint distribution $\mathcal P_{XY}$, such that the marginal distribution $\mathcal P_X$ satisfies the following properties:
    \begin{enumerate}
    \item $x \sim \mathcal P_X$ is sub-Gaussian, with $\subGnorm{x} = \bigO{1}$.
    \item % For all $i$, $\norm{x_i}_2 = \sqrt{d}$,
    The data $x \sim \mathcal P_X$ are normalized such that $\norm{x}_2 = \sqrt d$.
    \item $\evmin {\E_{x \sim \mathcal P_X} \left[ xx^\top \right]} = \Omega(1)$, \emph{i.e.}, the second-moment matrix of the data is well-conditioned.
    \end{enumerate}
    Furthermore, we assume all the labels $(y_1, \ldots, y_n)$ to be bounded.
\end{assumption}
The first assumption %informally
requires the data to have well-behaved tails. It is commonly used in the literature \citep{liu2023near}, and it covers %corresponds to Definition 3.4.1 of \cite{vershynin2018high}. This assumption 
%covers 
a number of important cases such as data respecting Lipschitz concentration \citep{bubeck2021a} (\emph{e.g.}, standard Gaussian \citep{varshney22nearly, brown2024private} or uniform on the sphere \citep{mei2022generalization}). The second assumption (and the boundedness of the labels) can be achieved via data pre-processing. The third assumption is also required in related work to achieve the best rates in linear regression \citep{varshney22nearly, liu2023near}.
The scaling of input data and random features $V$ guarantees that the pre-activations of the model (\emph{i.e.}, the entries of $Vx$) are of constant order. We then process the entries of $Vx$ via the following family of activation functions.
\begin{assumption}[Activation function]\label{ass:activation}
    The activation function $\phi: \R \to \R$ is a non-linear, Lipschitz continuous function such that $\mu_0 = \mu_2 = 0$ and $\mu_1 \neq 0$, where $\mu_k$ denotes the $k$-th Hermite coefficient of $\phi$.
\end{assumption}
This choice is motivated by theoretical convenience, and it covers a wide family of activations, including all odd ones (\emph{e.g.}, $\tanh$). We believe that our result can be extended to a more general setting, as the one in \cite{mmm2022}, at the cost of a more involved analysis.
\begin{assumption}[Parameter scaling]\label{ass:scalings}
\begin{equation}\label{eq:scalings}
    n = \bigO{\sqrt p}, \qquad \log n = \Theta \left( \log p \right), \qquad n = \omega \left( d \log^2 d \right), \qquad  n = o \left( \frac{d^{3/2}}{\log^3 d} \right).
\end{equation}
\end{assumption}
We consider an over-parameterized setting in which the number of parameters $p$ is at least of order $n^2$. To guarantee that the RF model interpolates the data, it suffices that $p\gg n$ (see e.g. \citep{mmm2022,wang2023overparameterized}), and we expect our result to hold under this milder assumption. This would however need a different argument, specifically for Lemma \ref{lemma:stability}. 
The requirement $\log n = \Theta \left( \log p \right)$ is mild and it could be relaxed at the expenses of a poly-logarithmic dependence on $p$ in our final result in Theorem \ref{thm:mainthm}. Finally, we focus on the regime $d\ll n\ll d^{3/2}$, which corresponds to standard datasets, such as CIFAR-10 ($n = 5 \cdot 10^4$, $d \approx 3 \cdot 10^3$), or ImageNet as considered in \citep{zhang2017understanding} ($n \approx 1.3 \cdot 10^6$, $d \approx 9 \cdot 10^4$). %We finally assume
\begin{assumption}[Privacy budget]\label{ass:privacy}
\begin{equation}
    \delta \in (0, 1), \qquad \varepsilon \in (0, 8 \log (1 / \delta)), \qquad \frac{\varepsilon}{\sqrt{\log(1 / \delta)}} = \omega \left( \frac{d \log^5 n}{n} \right).
\end{equation}
\end{assumption}
The first two conditions on $\delta, \varepsilon$ are standard in differential privacy. The lower bound on $\varepsilon$ coming from the third condition still allows for strong privacy regimes with $\varepsilon = o(1)$, as $n\gg d$ from Assumption \ref{ass:scalings}. To state our main result, we set the hyper-parameters in the SDE \eqref{eq:sde1} as
\begin{equation}\label{eq:tauSigmaapp}
    \tau = \frac{d \log^2 n}{p}, \qquad \clip = \sqrt p \log^2 n, \qquad \Sigma = \frac{2 \clip \sqrt \tau}{n} \frac{\sqrt{8 \log(1 / \delta)}}{\varepsilon}.
\end{equation}
\vspace{1cm}
\begin{theorem}\label{thm:mainthm}
    Consider the RF model in \eqref{eq:rfmodelintro} with input dimension $d$ and number of features $p$. Let $n$ be the number of training samples and $\mathcal R_{\textup{P}}$ be defined in \eqref{eq:excess}, where $\theta^*$ is given by  \eqref{eq:gradientflow} and $\theta^p$ is % the $(\varepsilon, \delta)$-differentially private 
    %defined as 
    the solution $\Theta(\tau)$ of the SDE \eqref{eq:sde1} at time $\tau$, with hyper-parameters set as in \eqref{eq:tauSigmaapp}. Let Assumptions \ref{ass:data}, \ref{ass:activation}, \ref{ass:scalings}, \ref{ass:privacy} hold. Then, we have that $\theta^p$ is $(\varepsilon, \delta)$-differentially private, and that
    \begin{equation}\label{eq:eqmainstatement}
    \begin{aligned}
        \left| \mathcal R_P \right| &= \bigO{\frac{d}{n \varepsilon} \, \log^{5} n \sqrt{\log(1 / \delta)} + \sqrt{\frac{d}{n}} + \sqrt{\frac{n \log^3 d}{d^{3/2}}}} \\
        &= \tilde{\mathcal O} \left(\frac{d}{n \varepsilon} + \sqrt{\frac{d}{n}} + \sqrt{\frac{n }{d^{3/2}}} \right) = o(1),
    \end{aligned}
    \end{equation}
    with probability at least $1 - 2 \exp\left( -c \log^2 n \right)$, % over $X$, $V$, and $B$, 
    where $c$ is an absolute constant.
\end{theorem}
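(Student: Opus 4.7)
The overall plan is to exploit the SDE reformulation in Section \ref{sec:dpgd}: by Proposition \ref{prop:dpsde}, the choice of $\Sigma$ in \eqref{eq:tauSigmaapp} gives $(\varepsilon, \delta)$-DP for free, so the bulk of the work is on the \emph{utility} side. Since the quadratic loss makes the unclipped gradient flow a linear ODE, once clipping is verified to be inactive the SDE reduces to an Ornstein-Uhlenbeck process and $\Theta(\tau)$ admits an explicit Gaussian description. I would therefore split the proof into two pieces: (i) showing that clipping never activates along the entire trajectory $\{\Theta(t) : t \in [0, \tau]\}$ with high probability, and (ii) analysing the resulting Gaussian law against $\theta^*$.

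For step (i), writing $\Phi = [\varphi(x_1), \dots, \varphi(x_n)] \in \R^{p \times n}$, the per-sample gradient norm equals $2 |\varphi(x_i)^\top \Theta(t) - y_i| \cdot \|\varphi(x_i)\|_2$; since $\|\varphi(x_i)\|_2 = O(\sqrt{p})$ by Assumption \ref{ass:data} and Lipschitzness of $\phi$, it suffices to bound $\sup_{t \le \tau,\, i \le n} |\varphi(x_i)^\top \Theta(t) - y_i|$ by $O(\log^2 n)$. Conditional on no clipping, $\Theta(t) = \hat\theta(t) + M(t)$ with $M(t) = \Sigma \int_0^t e^{-(t-s) H} \, dB(s)$ and $H = (2/n) \Phi \Phi^\top$. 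The deterministic part $\varphi(x_i)^\top \hat\theta(t) - y_i$ is controlled by a non-asymptotic spectral analysis of the empirical kernel $K = \Phi^\top \Phi$, which concentrates by standard RF arguments. The stochastic part $\varphi(x_i)^\top M(t)$ is a centred Gaussian process indexed by $(i, t)$; I would bound its expected supremum via Sudakov-Fernique together with a Dudley entropy integral over the time index, and then obtain concentration around this expectation via Borell-TIS. A bootstrap closes the loop: if the unclipped trajectory lives in the no-clip region with high probability, then pathwise uniqueness forces the clipped and unclipped SDE to coincide on this event.

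For step (ii), $\Theta(\tau)$ is Gaussian with mean $\hat\theta(\tau)$ and covariance $\Sigma^2 \int_0^\tau e^{-2(\tau - s) H} \, ds$, so I decompose $\Theta(\tau) - \theta^* = (\hat\theta(\tau) - \theta^*) + (\Theta(\tau) - \hat\theta(\tau))$ into bias and variance. Expanding the quadratic loss yields
\begin{equation*}
    \mathcal R_P = (\Theta(\tau) - \theta^*)^\top K_{\textup{test}} (\Theta(\tau) - \theta^*) + 2 (\Theta(\tau) - \theta^*)^\top \E[(\varphi(x)^\top \theta^* - y) \varphi(x)],
\end{equation*}
with $K_{\textup{test}} = \E[\varphi(x) \varphi(x)^\top]$, and both terms reduce to controlling $\Theta(\tau) - \theta^*$ in the $K_{\textup{test}}$-seminorm (the second by Cauchy-Schwarz against the $O(1)$ population risk of $\theta^*$). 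For the bias, $\hat\theta(\tau) - \theta^* = -\Phi K^{-1} e^{-(2\tau/n) K} y$ and the choice $\tau = d \log^2 n / p$ is calibrated so that $(2 \tau / n) K$ is large on the top $\Theta(d)$ eigenvalues of $K$; combined with the concentration of $K_{\textup{test}}$ onto the population kernel and the rates from \citep{mmm2022, hu2024asymptotics}, this produces a bias contribution of order $\sqrt{d/n} + \sqrt{n / d^{3/2}}$. The variance term is a Gaussian quadratic form with expectation $\Sigma^2 \, \tr(K_{\textup{test}} \int_0^\tau e^{-2sH}\, ds)$; plugging in $\Sigma$ from \eqref{eq:tauSigmaapp} and using that $K_{\textup{test}}$ effectively concentrates on the column space of $\Phi$ with effective rank $\tilde O(d)$, this yields the $\tilde O(d/(n\varepsilon))$ term, with concentration around the expectation by Hanson-Wright.

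The main obstacle is the uniform trajectory control in step (i): the clipping threshold must be chosen a priori, but it has to be small enough that $\Sigma = 2 \clip \sqrt{\tau} \sqrt{8 \log(1/\delta)}/(n \varepsilon)$ does not blow up and destroy the $\tilde O(d/(n \varepsilon))$ bound on the variance. This is precisely what forces the over-parameterization scaling $p = \Omega(n^2)$ in Assumption \ref{ass:scalings}, as flagged by the paper after Lemma \ref{lemma:stability}: with a smaller $p$, the concentration of $M(t)$ in the data-aligned directions degrades, and $\clip$ must be correspondingly larger. A secondary but non-trivial difficulty is that all spectral and concentration statements about $K$, $K_{\textup{test}}$, and the alignment of $\theta^*$ must be fully non-asymptotic in $(n, d, p)$, which distinguishes the argument from the asymptotic analyses underlying the non-private baseline.
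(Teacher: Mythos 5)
Your overall architecture coincides with the paper's: establish privacy via Proposition \ref{prop:dpsde}, show clipping is inactive along the whole trajectory so that $\Theta(\tau)$ equals the OU process $\hat\Theta(\tau)$, split $\hat\Theta(\tau)-\theta^*$ into an early-stopping bias and a Gaussian noise term, and assemble $\mathcal R_{\textup{P}}$ by Cauchy--Schwarz against the $O(1)$ risk of $\theta^*$. Your treatment of the stochastic part of the trajectory (Sudakov--Fernique comparison, Dudley entropy integral in $t$, Borell--TIS) is exactly the paper's Lemma \ref{lemma:lastclipnoise}, and the bias/variance analysis at time $\tau$ matches Lemmas \ref{lemma:noise} and \ref{lemma:earlystopping} in spirit.

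The genuine gap is in the deterministic half of step (i). You claim $\sup_{t\le\tau}|\varphi(x_i)^\top\hat\theta(t)-y_i|$ is ``controlled by a non-asymptotic spectral analysis of the empirical kernel, which concentrates by standard RF arguments,'' but no such argument delivers the required bound. Since $y_i-\varphi(x_i)^\top\hat\theta(t)=[e^{-2Kt/n}Y]_i$, operator-norm control of $K$ only gives $\|e^{-2Kt/n}Y\|_2\le\|Y\|_2=O(\sqrt n)$, hence a per-coordinate bound of $O(\sqrt n)$ --- far too large, since you need $O(\log^2 n)$ to keep $\clip=\sqrt p\log^2 n$ and hence $\Sigma$ small enough for the $\tilde O(d/(n\varepsilon))$ variance term. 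The difficulty is the correlation between $\varphi(x_i)$ and $\Phi^+Y$; a second-moment/Markov bound on $\varphi(x_i)^\top e^{-2\Phi^\top\Phi t/n}\Phi^+Y$ (the route suggested by \citep{mmm2022}) has only a polynomial tail and becomes vacuous after the union bound over $i\in[n]$, as the paper explicitly notes after Lemma \ref{lemma:stability}. The paper's actual mechanism is a leave-one-out decoupling: first $\|\Phi^+Y-\Phi_{-i}^+Y_{-i}\|_2=\tilde O(p^{-1/2})$ (Lemma \ref{lemma:stability}, which is where $p=\Omega(n^2)$ and $n\ll d^{3/2}$ really enter), then a Lie product formula argument to replace $\Phi^\top\Phi$ by $\Phi_{-i}^\top\Phi_{-i}$ in the exponent (Lemma \ref{lemma:usingLie}), and only then a chaining bound over the now-independent curve $\gamma(t)=V^\top e^{-2\Phi_{-i}^\top\Phi_{-i}t/n}\Phi_{-i}^+Y_{-i}$ (Lemma \ref{lemma:Tisgood}). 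Without some version of this decoupling your step (i) does not close. Relatedly, in step (ii) you invoke the rates of \citep{mmm2022,hu2024asymptotics} for the bias term, but those are asymptotic statements and cannot supply the high-probability, finite-$(n,d,p)$ bound $\tilde O(d/n+n/d^{3/2})$; the paper instead proves Lemma \ref{lemma:earlystopping} directly from the spectral gap $\lambda_d(K)\gg\lambda_{d+1}(K)$ and the Hermite split $\varphi=\mu_1Vx+\tilde\varphi$, a tension you flag yourself but do not resolve.
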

The strong convergence of the Euler-Maruyama discretization scheme guarantees that, for sufficiently small learning rates $\eta$, the solution $\theta_T$ of Algorithm \ref{alg:dp-gd} also satisfies \eqref{eq:eqmainstatement}, which establishes the utility of DP-GD.

\section{Numerical experiments}\label{sec:experiments}

Theorem \ref{thm:mainthm} proves that, in the RF model, over-parameterization is not inherently detrimental in private learning. To verify this numerically, we plot in the first panel of Figure \ref{fig:RFfirst} the test loss of a family of RF models trained on a synthetic dataset via DP-GD, as the number of parameters $p$ increases. As a comparison, we also report the performance achieved by (non-private) GD, which provides the baseline $\theta^*$. 
While the test loss of $\theta^*$ displays the typical double-descent curve \citep{Belkin19,hastie2022surprises}, with the expected peak at the interpolation threshold ($p = n$), the performance of $\theta^p$ steadily improves and, as $p$ increases, it plateaus to a value close to the corresponding test loss of $\theta^*$. This is in agreement with Theorem \ref{thm:mainthm}, which predicts that, for a sufficiently over-parameterized model, there is a small performance gap between %  only concerns the final part of the plot at the right side of the peak (namely, $p \gtrsim 10^5$), where indeed there is a small difference in performance between 
$\theta^p$ and $\theta^*$.
\simonenew{Furthermore, the lack of an interpolation peak in the test loss of DP-GD points to the regularization offered by this algorithm and it resembles the effect of a ridge penalty, see \citep{raskutti14a} for a connection between ridge and early stopping and also the discussion after Lemma \ref{lemma:earlystopping} in Section \ref{sec:proof}.}

In the first panel of Figure \ref{fig:RFfirst}, the hyper-parameters in DP-GD are chosen to maximize performance, and the scaling of such optimal hyper-parameters is considered in the other panels. Specifically, in the second plot, we take 
$\clip = 0.5 \sqrt p$ and report the test error as a function of the number of iterations $T$, setting the noise according to Proposition \ref{prop:alg1dp} to guarantee the desired privacy budget. 
As suggested by \eqref{eq:tauSigmaapp}, the optimal $T$ minimizing the test error %In the second plot of Figure \ref{fig:RFfirst} it can be seen that $T_{\textup{best}}(p)$ 
decreases with $p$. %, as expected from the hyper-parameters in \eqref{eq:tauSigmaapp} considered in our analysis. 
Then, as $\eta T = \tau$ scales proportionally to $d / p$ in \eqref{eq:tauSigmaapp}, we rescale the plot putting $\eta T p / d$ on the $x$-axis of the third panel. The curves now collapse onto each other, confirming the accuracy of the proposed scaling. This enables us to transfer the hyper-parameters from small RF models (where brute-force optimization is affordable) to larger models (where training is more time and resources intensive).
Finally, the heat-map in the rightmost panel displays the results of a full hyper-parameter grid search over $(\clip, T)$ for a fixed $p$ and $\varepsilon$.
We distinguish 4 regions in hyper-parameters space: in the \emph{(1) top-right} we have very low utility due to the large noise required by the high number of training iterations $T$ and large clipping constants $\clip$. In the \emph{(2) bottom-left} the test loss is close to 1 as at initialization, since the model does not have the time to learn due to low $T$ and small $\clip$. In the \emph{(3) bottom-right} we have larger $\clip$ which %, differently from the bottom-left, implies larger parameter update and thus 
could allow for faster convergence. However, as $\clip$ becomes % too large ($\clip \gg \sqrt p$), the convergence speed will anyways be limited by the 
larger than the typical per-sample gradient size ($\clip \gg \sqrt p$), the overly-pessimistic injection of noise ultimately undermines utility. In the \emph{(4) top-left} we have \emph{high} $T$ and \emph{small} $\clip$, which lead to low test loss. At the center of the panel, we have $\clip \sim \sqrt p$ and $\eta T \sim d / p$ as in \eqref{eq:tauSigmaapp}, which also lead to low test loss. Moving towards the upper-left part of the plot there is no decrease in performance, as the slower rate of convergence (smaller $\clip$ and thus smaller gradients) is compensated by larger $T$, keeping fixed the overall amount of injected noise. Thus, the model has enough time to learn (differently from the bottom-left), while the added noise does not catastrophically degrade utility (differently from the top-right).
This behaviour is in line with earlier empirical work \citep{Kurakin2022, li2022large, de2022} noting that wide ranges of $\clip$ result in optimal performance. %, as long as the clipping is small enough.
\begin{figure}
  \begin{center}
    \includegraphics[width=\textwidth]{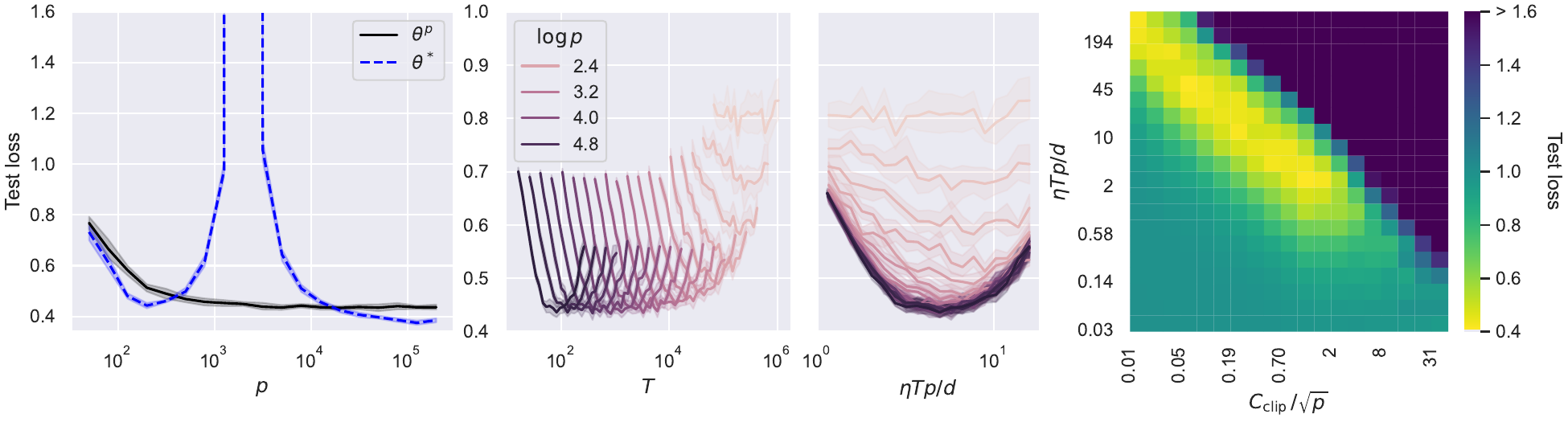}
  \end{center}
  \caption{Experiments on RF models with $\tanh$ activation, and synthetic data sampled from a standard Gaussian distribution with $d = 100$. The learning task is given by $y = \textup{sign}(u^\top x)$, where $u \in \R^d$ is a fixed vector sampled from the unit sphere, and we consider a fixed number of training samples $n = 2000$. $\theta^p$ is the solution of Algorithm \ref{alg:dp-gd} with $\varepsilon = 4$, $\delta = 1/n$, and $\theta^*$ is the solution of GD, both with small enough learning rate $\eta$. \emph{First panel:} test losses of $\theta^p$ and $\theta^*$ for different number of parameters $p$. \emph{Second panel:} test loss for $\theta^p$ %  in a family of RF models,
  as a function of the number of training iterations $T$. \emph{Third panel:} Same plot as in the second panel, with the $x$-axis set to be $\eta T p / d$. \emph{Fourth panel:} test loss of $\theta^p$ for a fixed $p= 40000$, as a function of the hyper-parameters $(\clip, T)$.}
  \label{fig:RFfirst}
\end{figure}

\begin{wrapfigure}{r}{0.6\textwidth}
  \begin{center}
    \vspace{-1em}\includegraphics[width=0.6\textwidth]{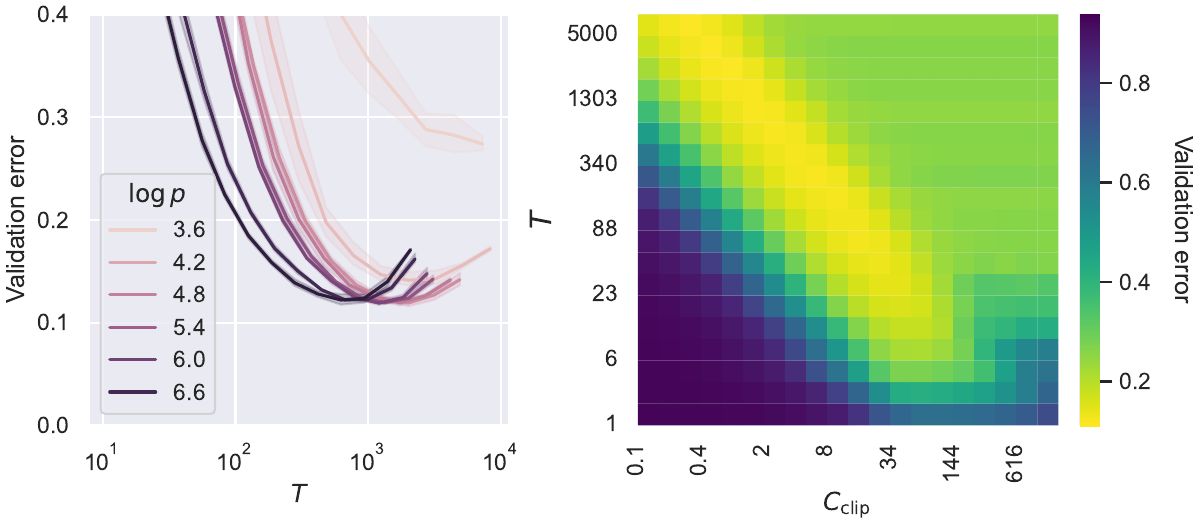}
  \end{center}
  \caption{Experiments on a family of 2-layer fully-connected ReLU networks trained with cross-entropy loss on the MNIST classification task ($d=768$ and we fix $n = 50000$), with privacy budget $\varepsilon = 1$, $\delta = 1/n$. \emph{First panel:} validation error as a function of the number of training iterations $T$. \emph{Second panel:} validation error for a hidden-layer width of 1000 ($p \sim 10^6$), as a function of the hyper-parameters $(\clip, T)$.}
  \vspace{-1em}
  \label{fig:NNcomplete}
\end{wrapfigure}

A similar picture emerges from the training of 2-layer neural networks with DP-GD and cross-entropy loss on MNIST. Figure \ref{fig:MNISTintro} already showed that the performance of $\theta^p$ steadily improves and then plateaus as $p$ increases, thus demonstrating  that DP-GD benefits from larger models, contrary to the view that more parameters %inherently 
hinder privacy. Figure \ref{fig:NNcomplete} then considers the scaling of the hyper-parameters (which are chosen to maximize performance in Figure \ref{fig:MNISTintro}). Specifically, in the first panel, we set $\clip = 1$ and report the validation error as a function of  $T$; in the second panel, we present the full hyper-parameter grid search over $(\clip, T)$. % for a fixed hidden-layer width of $1000$ neurons ($p \sim 10^6$). 
The results are qualitatively similar to those presented in Figure \ref{fig:RFfirst} for random features: on the left, the optimal $T$ decreases with $p$; on the right, the grid exhibits the 4 regions discussed above. The implementation of the experiments is publicly available at the GitHub repository \href{https://github.com/simone-bombari/privacy-for-free}{\texttt{https://github.com/simone-bombari/privacy-for-free}}.

\section{Proof of Theorem \ref{thm:mainthm}}\label{sec:proof}

\paragraph{Notation.} All logarithms are in the natural basis. Random variables are defined on different probability spaces, which describe the randomness in the data, the random features and the private mechanisms. %We refer to these probability spaces as $X$, $V$ and $B$, respectively. %, and we will remark where the randomness is intended during the discussion.
All complexity notations % $\Omega(\cdot)$, $\bigO{\cdot}$, $\omega(\cdot)$, $o(\cdot)$ and $\Theta(\cdot)$
are understood for sufficiently large data size $n$, input dimension $d$ and number of parameters $p$. 
%The symbol $\tilde \cdot$ % The symbols $\tilde \Omega(\cdot)$, $\tilde {\mathcal O}(\cdot)$, $\tilde \omega(\cdot)$, $\tilde o(\cdot)$ and $\tilde \Theta(\cdot)$
%is used when neglecting poly-logarithmic factors. 
We indicate with $C$ and $c$ absolute, strictly positive, numerical constants, that do not depend on the scalings of the problem and whose value may change from line to line.
% When stating that a random variable $Z$ is sub-Gaussian (sub-exponential), we implicitly mean $\subGnorm{Z} = \bigO{1}$ ($\subEnorm{Z} = \bigO{1}$), \emph{i.e.}, its sub-Gaussian (sub-exponential) norm %(as defined in \cite{vershynin2018high}) 
% does not increase with the scalings of the problem.
Given a positive number $s$, $[s]$ denotes the set of positive numbers from $1$ to $s$. %, \emph{i.e.} $\{1, 2, \ldots , k \}$.
Given a vector $v$, $\norm{v}_2$ denotes its Euclidean norm. Given a matrix $A$, %, $A_{j:}$ ($A_{:j}$) denotes its $j$-th row (column), %A matrix $A\in \R^{s \times s}$ is positive semi definite (p.s.d.) if it is symmetric and if for every vector $v \in \R^s$ we have $v^\top A v \geq 0$.
% Given $A\in \R^{s \times s}$ and $B \in \R^{s \times s}$, we use the notation $A \preceq B$ ($B \succeq A$) to indicate that $B - A$ is p.s.d.. Given a matrix $A$, we indicate with $\sigma_{\min}(A) = \sqrt{\evmin{A^\top A}}$ its smallest singular value, with
 %Given a matrix $A$, we indicate with 
$\opnorm{A}$ denotes its operator norm, % ($\opnorm{A} = \sqrt{\evmax{A^\top A}}$), with 
$\norm{A}_F$ its Frobenius norm and % ($\norm{A}^2_F = \sum_{jk} A_{jk}^2 = \tr(A^\top A)$, where $\tr$ denotes the trace), and with 
%and 
$A^+$ its Moore-Penrose inverse.
% Given a p.s.d.\ matrix $A$, $\lambda_j(A)$ denotes its $j$-th eigenvalue sorted in non-increasing order ($\evmax{A} = \lambda_1(A) \geq \lambda_2(A) \geq \ldots \geq \lambda_s(A) = \evmin{A}$).
%We will denote by $\mathbf 1 : \R \to \{0, 1\}$ the indicator function, \emph{i.e.} $\mathbf{1}(z) = 1$ if $z > 0$, and $0$ otherwise. 
% All the logarithms in the following sections are considered in the natural basis, \emph{i.e.}, $\log e = 1$, as this is the basis considered for the proofs and the statements in Appendix \ref{app:dp}.
%During this section, we will 
We use %the notation 
$X \in \R^{n \times d}$ to denote the data matrix (containing the $i$-th element of the training set $x_i \in \R^d$ in its $i$-th row), %. We use the shorthands 
$\Phi \in \R^{n \times p}$ to denote the feature matrix (containing $\varphi (x_i):= \phi (Vx) \in \R^p$ in its $i$-th row) and $K := \Phi \Phi^\top \in \R^{n \times n}$ to denote the kernel associated with the feature map.

\subsection{Technical outline}

To circumvent the difficulty in explicitly solving the SDE in \eqref{eq:sde1} (\emph{i.e.}, in characterizing the probability density function of $\Theta(\tau)$), we instead consider the SDE
\begin{equation}\label{eq:OU1}
    \diff \hat \Theta(t) = - \nabla \mathcal L(\hat \Theta(t)) \diff t + \Sigma \, \diff B(t) = - \frac{2\Phi^\top}{n} \left( \Phi \hat \Theta(t) - Y \right) \diff t + \Sigma \, \diff B(t),
\end{equation}
where $\mathcal L(\theta)$ is the original (quadratic) training loss and $B(t)$ is the same standard Wiener process as in \eqref{eq:sde1}. The solution of the SDE in \eqref{eq:OU1} is a multi-dimensional Ornstein–Uhlenbeck (OU) process which admits a closed form (see, e.g., Section 4.4.4 in \cite{gardiner1985handbook}).
Let us then define
\begin{equation}\label{eq:definitionsofC0}
    \mathcal C := \left \{  \theta \quad \textup{s.t.} \quad \|\nabla_{\theta} \ell(\varphi(x_i)^\top \theta - y_i)\|_2 < \clip  \quad \textup{ for all }  i \in [n] \right \},
\end{equation}
which corresponds to the subset of the parameters space where \emph{clipping does not happen}, \emph{i.e.}, where $\mathcal L_{\clip}(\theta) = \mathcal L(\theta)$. If the full path of the process $\Theta(t)$ happens in this region (\emph{i.e.} $\Theta(t) \in \mathcal C$ for all  $t \in [0, \tau]$), then $\Theta(\tau) = \hat \Theta(\tau)$. This corresponds to the event
\begin{equation}\label{eq:hatthetainC}
    \hat \Theta(t) \in \mathcal C, \textup{ for all } t \in [0, \tau],
\end{equation}
which is easier to control, as $\hat \Theta(t)$ is an OU process. The first part of our proof consists in showing that, for our choice of the hyper-parameters in \eqref{eq:tauSigmaapp}, this event happens with high probability. To do so, we consider the decomposition
\begin{equation}\label{eq:tildeTheta}
    \hat \Theta (t) = \E_B \left [ \hat \Theta(t) \right] + \tilde \Theta(t) = \hat \theta(t) + \tilde \Theta (t),
\end{equation}
where we introduce the notation $\tilde \Theta(t) := \hat \Theta (t) - \E_B [ \hat \Theta(t) ]$ and use that the expectation of an OU process corresponds to the gradient flow $\E_B [ \hat \Theta(t)] = \hat \theta(t)$ defined in \eqref{eq:gradientflow}.
Then, $\mathcal C$ can be characterized as
\begin{equation}\label{eq:definitionsofC}
\begin{aligned}
    \mathcal C &:= \left \{  \theta \quad \textup{s.t.} \quad \norm{\nabla_{\theta} \ell(\varphi(x_i)^\top\theta - y_i)}_2 < \clip  \quad \textup{ for all }  i \in [n] \right \} \\
    &= \left \{  \theta \quad \textup{s.t.} \quad \left| \varphi(x_i)^\top \theta - y_i \right| < \frac{\clip}{2 \norm{\varphi(x_i)}_2}  \quad \textup{ for all }  i \in [n] \right \},
\end{aligned}
\end{equation}
which implies that the probability of the event in \eqref{eq:hatthetainC} is lower bounded by the probability of the event
\begin{equation}\label{eq:actualthesisofclipping}
    \left| \varphi(x_i)^\top \tilde \Theta(t) \right| + \left| \varphi(x_i)^\top \hat \theta(t) - y_i \right| \leq \frac{\clip}{2 \norm{\varphi(x_i)}_2},  \quad  \textup{ for all }  i \in [n], \quad \textup{for all } t \in [0, \tau].
\end{equation}
As we set $\clip = \sqrt p \log^2 n$ (see \eqref{eq:tauSigmaapp}) and $\norm{\varphi(x_i)}_2 = \Theta(\sqrt p)$ with high probability (see the argument in \eqref{eq:concentrationfeaturevector1new} and \eqref{eq:concentrationfeaturevector2new}), 
%\eqref{eq:actualthesisofclipping} can be addressed 
we therefore want to show that, for all $i \in [n]$ and for all $t \in [0, \tau]$, $| \varphi(x_i)^\top \tilde \Theta(t) | = o (\log^2 n )$ and $| \varphi(x_i)^\top \hat \theta(t) - y_i | = o ( \log^2 n )$ with high probability. The term $| \varphi(x_i)^\top \hat \theta(t) - y_i |$ is handled by the lemma below.
\begin{lemma}\label{lemma:lastcliptraj}
    Let Assumptions \ref{ass:data}, \ref{ass:activation} and \ref{ass:scalings} hold. Then, we have that, jointly for all $i \in [n]$,
    \begin{equation}\label{eq:res1}
         \sup_{t \in [0, \tau]} \left| \varphi(x_i)^\top \hat \theta(t) - y_i \right| % = \sup_{t \in [0, \tau]} \left| \varphi(x_i)^\top e^{- \frac{2 \Phi^\top \Phi}{n}  t  } \Phi^+ Y \right|
         = \bigO{\log n},
    \end{equation}
     with probability at least $1 - 2 \exp \left( -c \log^2 n\right)$ over $V$ and $X$, where $c$ is an absolute constant.
\end{lemma}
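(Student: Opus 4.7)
The plan is to first solve the gradient flow in closed form and then perform an entrywise analysis of the resulting matrix exponential. Applied to the quadratic loss with $\theta_0=0$, the flow \eqref{eq:gradientflow} is a linear ODE in the residual $r(t):=\Phi\hat\theta(t)-Y$, namely $\dot r(t)=-\tfrac{2}{n}Kr(t)$ with $r(0)=-Y$, so $r(t)=-\exp(-\tfrac{2t}{n}K)Y$ and $\varphi(x_i)^\top\hat\theta(t)-y_i=r_i(t)$. It therefore suffices to prove
\[
\sup_{t\in[0,\tau]}\bigl|[\exp(-\tfrac{2t}{n}K)Y]_i\bigr|=O(\log n)\qquad\text{uniformly in }i\in[n],
\]
with the claimed probability.

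Next, I will replace the random-features kernel $K$ by an explicit low-rank approximation. Using Assumption \ref{ass:activation} ($\mu_0=\mu_2=0$) and the regime $n\ll d^{3/2}$ from Assumption \ref{ass:scalings}, a Hermite expansion combined with standard concentration for the random-features kernel yields
\[
K=p\rho I_n+p\mu_1^2\,\frac{XX^\top}{d}+E,\qquad \rho:=\E_{g\sim\mathcal N(0,1)}[\phi(g)^2]-\mu_1^2,
\]
with $\opnorm{E}$ small enough to be treated as a perturbation. Since $I_n$ commutes with $XX^\top$, Duhamel's formula gives $\exp(-\tfrac{2t}{n}K)=e^{-2tp\rho/n}\exp(-sXX^\top)+\Delta(t)$ where $s:=\tfrac{2tp\mu_1^2}{nd}\in[0,O(\log^2(n)/n)]$, the scalar factor lies in $[1-o(1),1]$ for $t\le\tau$, and the entrywise error from $\Delta(t)Y$ is $o(1)$. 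Since $\mathrm{rank}(XX^\top)\le d$, the intertwining $\exp(-sXX^\top)X=X\exp(-sX^\top X)$ together with the decomposition $Y=(I-P)Y+X\beta$, where $P:=X(X^\top X)^{-1}X^\top$ and $\beta:=(X^\top X)^{-1}X^\top Y$, yields
\[
\exp(-sXX^\top)Y=(I-P)Y+X\exp(-sX^\top X)\beta,
\]
whose $i$-th coordinate equals $y_i-x_i^\top\beta+x_i^\top\exp(-sX^\top X)\beta$. The label term is $O(1)$, so what remains is to bound $|x_i^\top w|=O(\log n)$ uniformly in $i$ and $s$, for $w\in\{\beta,\,\exp(-sX^\top X)\beta\}$.

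Both candidate vectors $w$ have Euclidean norm $O(1)$ with high probability: Assumption \ref{ass:data} gives $\opnorm{(X^\top X)^{-1}}=O(1/n)$ and $\|X^\top Y\|_2=O(n)$, hence $\|\beta\|_2=O(1)$, and $\exp(-sX^\top X)$ is a contraction on $\mathrm{range}(X^\top)$. If $w$ were independent of $x_i$, the sub-Gaussianity of $x$ (Assumption \ref{ass:data}) would yield $|x_i^\top w|\le C\|w\|_2\log n$ with probability $1-e^{-c\log^2 n}$, and a union bound over $i\in[n]$ would finish the argument. To handle the actual dependence of $w$ on $x_i$, I will use a leave-one-out reduction: with $\beta^{(-i)}:=(X_{-i}^\top X_{-i})^{-1}X_{-i}^\top Y_{-i}$ and $w^{(-i)}:=\exp(-sX_{-i}^\top X_{-i})\beta^{(-i)}$, a Sherman--Morrison update combined with a Duhamel expansion of the difference of matrix exponentials gives $|x_i^\top(w-w^{(-i)})|=o(\log n)$ with high probability (the required cancellations use $n\gg d\log^2 d$). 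Since $w^{(-i)}$ is independent of $x_i$ with $O(1)$ norm, the sub-Gaussian bound applies; passing from a single $s$ to the supremum over $s\in[0,O(\log^2 n/n)]$ uses an $\varepsilon$-net together with the Lipschitz dependence of $w(s)$ on $s$.

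The main technical obstacle I expect is the quantitative control of the kernel-linearization error: although $\opnorm{E}$ is small, the matrix exponential is nonlinear in $K$, and since $\tfrac{2t}{n}\opnorm{K}$ can grow like $\log^2 n$ along the trajectory, the Duhamel bound on $\opnorm{\Delta(t)}$ must track this exponent carefully. It is precisely the regime $n\ll d^{3/2}/\log^3 d$ from Assumption \ref{ass:scalings} that makes the Hermite tail $\sum_{k\ge 3}\mu_k^2(x_i^\top x_j/d)^k$ contribute negligibly to $\opnorm{E}$.
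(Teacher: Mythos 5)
Your closed-form reduction of the gradient flow to $r(t)=-\exp(-\tfrac{2t}{n}K)Y$ and the identity $\exp(-sXX^\top)Y=(I-P)Y+X\exp(-sX^\top X)\beta$ are correct, and the leave-one-out/Sherman--Morrison treatment of $x_i^\top\beta$ and $x_i^\top\exp(-sX^\top X)\beta$ is a reasonable way to handle what would remain. The step that fails is the one you yourself flag as the main obstacle: replacing $K$ by $K_0:=p\rho I_n+p\mu_1^2XX^\top/d$ and treating $E:=K-K_0$ perturbatively. The Hermite-tail part of $E$ (entries $p\sum_{l\ge3}\mu_l^2(x_i^\top x_j/d)^l$ for $i\neq j$) has off-diagonal entries of order $p\log^3 n/d^{3/2}$, and the bound on its operator norm obtainable via the Frobenius norm (as in \eqref{eq:difftildeM2}) is $\bigO{pn\log^3 n/d^{3/2}}$. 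Duhamel with two contractive exponentials gives $\opnorm{\Delta(\tau)}\le\tfrac{2\tau}{n}\opnorm{E}=\bigO{\log^5 n/\sqrt d}$, so the best coordinatewise estimate this yields is
\begin{equation*}
\left|[\Delta(\tau)Y]_i\right|\;\le\;\opnorm{\Delta(\tau)}\,\norm{Y}_2\;=\;\bigO{\sqrt{\tfrac{n}{d}}\,\log^5 n},
\end{equation*}
which \emph{diverges} in the standing regime $n=\omega(d\log^2 d)$ of Assumption \ref{ass:scalings}. Your claim that ``the entrywise error from $\Delta(t)Y$ is $o(1)$'' would require $n=\bigO{d/\mathrm{polylog}(d)}$, i.e.\ it holds only outside the regime of the lemma. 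Nor can you rescue it with a row-wise bound $\norm{E^\top e_i}_2=\bigO{\sqrt n\,p\log^3 n/d^{3/2}}$ (which \emph{would} suffice), because in the Duhamel integral the vector $e_i^\top$ is hit first by $e^{-2(t-s)K/n}$, after which only operator-norm control of $E$ is available.

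This is exactly why the paper never linearizes the kernel inside the matrix exponential: it keeps the analysis coordinatewise throughout. It first swaps $\Phi^+Y$ for the leave-one-out $\Phi_{-i}^+Y_{-i}$ (Lemma \ref{lemma:stability}), then removes $x_i$ from the exponent via Lie's product formula (Lemma \ref{lemma:usingLie}), and only then splits $\varphi(x_i)=\mu_1Vx_i+\tilde\varphi(x_i)$. The Hermite tail enters only through the row-wise quantity $\norm{(\Phi_{-i}^+)^\top\tilde\varphi(x_i)}_2=\bigO{\log n/\sqrt n}$ (see \eqref{eq:expectedVd32}), whose $\sqrt{n}$-scaling in the number of off-diagonal entries is precisely what makes $n\ll d^{3/2}$ sufficient, while the linear part requires Dudley chaining along the trajectory $\gamma(t)$ rather than an $\varepsilon$-net in $s$ alone. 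Unless you supply a genuinely entrywise control of $e_i^\top(e^{-2tK/n}-e^{-2tK_0/n})Y$, uniform over $i$ and $t$ --- which in effect forces you back into this leave-one-out machinery --- the proposal does not establish the lemma.
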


We now briefly sketch the argument to obtain \eqref{eq:res1}, deferring the full proof to Section \ref{sec:proof12}. 
%\begin{proof}[Proof sketch]
As the initialization $\theta_0 = 0$, the solution of the gradient flow in \eqref{eq:gradientflow} takes the form $\hat \theta(t) = ( 1 - e^{- \frac{2 \Phi^\top \Phi}{n}  t  }) \Phi^+ Y$. Since $K$ is invertible with high probability (due to Lemma \ref{lemma:hyperK}), the quantity of interest can be expressed as $$y_i - \varphi(x_i)^\top \hat \theta(t) = \varphi(x_i)^\top  e^{- \frac{2 \Phi^\top \Phi}{n}  t  } \Phi^+ Y.$$ Our approach consists in iteratively controlling this quantity introducing auxiliary \emph{leave-one-out} variables, \emph{i.e.}, $\Phi_{-i} \in \R^{(n-1) \times p}$ and $Y_{-i} \in \R^{n-1}$, defined without the $i$-th sample.
The first step involves showing that $$\norm{ \Phi^+ Y - \Phi_{-i}^+ Y_{-i} }_2 = \tilde{\mathcal O} \left( p^{-1/2} \right),$$ which implies that we can focus the analysis on $\sup_{t \in [0, \tau]} | \varphi(x_i)^\top e ^{-\frac{2 \Phi^\top \Phi}{n} t} \Phi_{-i}^+ Y_{-i}|$. This result comes from the \emph{stability} of GD, as shown in Lemma \ref{lemma:stability}, and is a consequence of lower bounding the smallest eigenvalue of the kernel $\evmin{K}$.

The second step aims to introduce a leave-one-out variable in the exponent, as we show that $$\sup_{t \in [0, \tau]} \left| \varphi(x_i)^\top e ^{-\frac{2 \Phi^\top \Phi}{n} t} \Phi_{-i}^+ Y_{-i} \right| \leq 2 \sup_{t \in [0, \tau]} \left| \varphi(x_i)^\top e ^{-\frac{2 \Phi_{-i}^\top \Phi_{-i}}{n} t} \Phi_{-i}^+ Y_{-i} \right|.$$ This result is achieved via an explicit computation based on \emph{Lie's product formula} for the matrix exponential, which is carried through in Lemma \ref{lemma:usingLie}.

Finally, since the only dependence on $x_i$ is via the term $\varphi(x_i)^\top$, we can conclude the argument with \emph{Dudley's (chaining tail) inequality}, upper bounding the covering number of the set described by the curve $\gamma(t) = V^\top e ^{-\frac{2 \Phi_{-i}^\top \Phi_{-i}}{n} t} \Phi_{-i}^+ Y_{-i}$  in Lemma \ref{lemma:Tisgood}.

Next, we handle the term $| \varphi(x_i)^\top \tilde \Theta(t)|$ via the lemma below, also proven in Section \ref{sec:proof12}. 
%\end{proof}
\begin{lemma}\label{lemma:lastclipnoise}
Let Assumptions \ref{ass:data}, \ref{ass:activation}, and \ref{ass:privacy} hold. % , and let $p = \Omega(\log^2 n)$.
Then, we have that, jointly for all $i \in [n]$,
\begin{equation}\label{eq:res2}
    \sup_{t \in [0, \tau] } \left| \varphi(x_i)^\top \tilde \Theta(t) \right| = \bigO {\log n},
\end{equation}
with probability at least $1 - 2 \exp\left(-c \log ^2 n\right)- 2n \exp\left(-c p \right)$ over $B$ and $V$, where $c$ is an absolute constant and $B$ refers to the probability space of the private mechanism, \emph{i.e.}, the noise in \eqref{eq:OU1}.
\end{lemma}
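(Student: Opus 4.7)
The plan is to exploit that, conditionally on $V$ and $X$, the object $Z_i(t) := \varphi(x_i)^\top \tilde\Theta(t)$ is a centered Gaussian process on $[0,\tau]$. This follows from the explicit solution of the OU SDE \eqref{eq:OU1},
$\tilde\Theta(t) = \Sigma \int_0^t e^{-A(t-s)}\, dB(s), \qquad A := \frac{2\Phi^\top \Phi}{n},$
so \eqref{eq:res2} reduces to a supremum concentration statement for the Gaussian process $Z_i$. I would establish this via the Borell--TIS inequality combined with a Dudley chaining estimate for the expected supremum, and then union-bound over $i\in[n]$.

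First I would condition on a high-probability event for $V$ on which $\norm{\varphi(x_i)}_2 \le C\sqrt p$ holds uniformly in $i\in[n]$. This follows from Gaussian concentration applied to the Lipschitz map $V \mapsto \norm{\phi(Vx_i)}_2$ (Lipschitz constant $L_\phi \sqrt d$ under $V_{kj}\sim \mathcal N(0,1/d)$, with mean $\Theta(\sqrt p)$), plus a union bound; this contributes the $2n\exp(-cp)$ term to the failure probability. Because $x_i$ belongs to the training set, $\varphi(x_i)\in\col(\Phi^\top)$, so after diagonalizing $A$ on that subspace, $Z_i(t)$ decomposes into an independent sum of one-dimensional OU components, one per nonzero eigenvalue $\lambda_j$ of $A$.

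On the above event, the instantaneous variance is bounded via the contraction $\norm{e^{-A(t-s)}v}_2 \le \norm{v}_2$,
$\sigma_i^2 := \sup_{t \in [0,\tau]} \E_B\!\bigl[Z_i(t)^2\bigr] \le \Sigma^2 \norm{\varphi(x_i)}_2^2 \tau = \bigO{\frac{d^2 \log^8 n \, \log(1/\delta)}{n^2\varepsilon^2}},$
and Assumption \ref{ass:privacy} forces $\sigma_i = o(1/\log n)$. A similar computation for the canonical pseudo-metric $d(t_1,t_2)^2 := \E_B[(Z_i(t_1)-Z_i(t_2))^2]$ yields a $1/2$-Hölder bound $d(t_1,t_2) \le C\Sigma\norm{\varphi(x_i)}_2 \sqrt{|t_1-t_2|}$ (up to lower-order corrections involving $\opnorm{A}$). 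Dudley's chaining inequality then gives $\E_B[\sup_t Z_i(t)] = \bigO{\sigma_i \sqrt{\log n}}$, and Borell--TIS yields
$\Pr_B\!\bigl(\sup_{t\in[0,\tau]} |Z_i(t)| \ge \E_B[\sup_t Z_i(t)] + u\bigr) \le 2\exp\!\Bigl(-\frac{u^2}{2\sigma_i^2}\Bigr).$
Choosing $u = c\sigma_i \log n$ turns the right-hand side into $\exp(-c\log^2 n)$, and combined with the Dudley bound one obtains $\sup_t |Z_i(t)| = \bigO{\sigma_i \log n} = o(1)$, which comfortably implies \eqref{eq:res2}. A final union bound over $i\in[n]$, absorbed via $\log n = \Theta(\log p)$ from Assumption \ref{ass:scalings}, produces the stated $2\exp(-c\log^2 n)$ summand.

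The main obstacle I anticipate is obtaining the $1/2$-Hölder bound for $d(t_1,t_2)$ with a sharp enough constant. The increment $Z_i(t_2)-Z_i(t_1)$ decomposes into a fresh-noise contribution on $[t_1,t_2]$, which is cleanly of size $\Sigma\norm{\varphi(x_i)}_2 \sqrt{t_2-t_1}$, plus a coherent-evolution contribution from the noise accumulated on $[0,t_1]$, which picks up a factor of $\opnorm{A}(t_2-t_1)$ through the identity $e^{-A(t_2-s)} - e^{-A(t_1-s)} = e^{-A(t_1-s)}(e^{-A(t_2-t_1)}-I)$. Since $\opnorm{A}\tau = O(d\log^2 n)$ on the standard spectral event for $\Phi$ (so the second contribution only produces a polylogarithmic overhead), the chaining integral is controlled at the cost of a $\sqrt{\log n}$ factor, which is absorbed by $\sigma_i = o(1/\log n)$.
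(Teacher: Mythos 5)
Your proposal is correct and follows the same overall skeleton as the paper's proof: condition on $\norm{\varphi(x_i)}_2 = \bigO{\sqrt p}$ (the $2n\exp(-cp)$ term), bound the increment variance of the Gaussian process $Z_i(t)=\varphi(x_i)^\top\tilde\Theta(t)$ by $\Sigma^2\norm{\varphi(x_i)}_2^2|t-s|$ via the diagonalized OU decomposition, bound the expected supremum, apply Borell--TIS, and union-bound over $i$. The one genuine difference is in how the expected supremum is controlled: the paper (Lemma \ref{lemma:lessthanB} plus the proof of Lemma \ref{lemma:lastclipnoise}) proves the exact increment bound $\E_B[(Z_i(t)-Z_i(s))^2]\le\Sigma^2|t-s|\norm{\varphi(x_i)}_2^2$ from the OU autocorrelation formulas, then invokes the Sudakov--Fernique comparison against an auxiliary Wiener process $z_i$ and the reflection principle to get $\E_B[\sup_t|Z_i(t)|]\le\E[\sup_t|z_i(t)|]=\bigO{1}$; you instead run Dudley's chaining directly on the canonical metric of $Z_i$. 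Both work: the comparison route offloads the supremum to an explicitly solvable process, while your chaining route is self-contained but requires the covering-number estimate for $([0,\tau],d)$. Two small imprecisions in your "anticipated obstacle" paragraph are worth fixing: \emph{(i)} $\opnorm{A}\tau = \bigO{\log^2 n}$, not $\bigO{d\log^2 n}$ (since $\opnorm{A}=\bigO{p/d}$ and $\tau = d\log^2 n/p$) --- with your stated value the overhead would not be polylogarithmic and the absorption argument would be in jeopardy; and \emph{(ii)} the coherent-evolution term actually needs no $\opnorm{A}$ correction at all: writing the increment variance per eigenmode as $\frac{\Sigma^2}{\lambda_j}(1-e^{-\lambda_j\delta})(1-e^{-2\lambda_j s}(1-e^{-\lambda_j\delta})/2)\le\Sigma^2\delta$, one gets the clean $1/2$-H\"older bound exactly, which is precisely what the paper's Lemma \ref{lemma:lessthanB} establishes.
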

%\begin{proof}[Proof sketch]
To prove \eqref{eq:res2}, we start by noticing that $\varphi(x_i)^\top \tilde \Theta(t)$ evolves as a Gaussian random variable with time-dependent variance. The idea is to upper bound this variance with that of the auxiliary process
$$
\diff z_i(t) = \varphi(x_i)^\top \Sigma \, \diff B(t),
$$
which is obtained by removing the \emph{attractive drift} $[- 2\Phi^\top ( \Phi \hat \Theta(t) - Y ) / n ]$ from \eqref{eq:OU1}. Intuitively, the removal of the attractive drift increases the variance, as the process becomes less concentrated around the mean. This is formalized in Lemma \ref{lemma:lessthanB}, where an application of the \emph{Sudakov-Fernique inequality} gives that 
\begin{equation}\label{eq:comp}    
\E_B \left[ \sup_{t \in [0, \tau] } \left| \varphi(x_i)^\top \tilde \Theta(t) \right| \right] \leq \E_{z_i} \left[ \sup_{t \in [0, \tau]} \left| z_i(t) \right| \right].
\end{equation}
We note that the RHS of \eqref{eq:comp} is easier to control than the LHS, as $z_i(t)$ is a Wiener process and, therefore, its variance is bounded by  $\bigO{\Sigma^2 \tau \norm{\varphi(x_i)}_2^2}$ via the \emph{reflection principle}. Then, given our hyper-parameter choice in \eqref{eq:tauSigmaapp} and the lower bound on $\varepsilon$ in Assumption \ref{ass:privacy}, we have that $\Sigma^2 \tau \norm{\varphi(x_i)}_2^2 = \bigO{1}$. To conclude, we exploit the \emph{Borell-TIS inequality} to show that, with high probability, $$\sup_{t \in [0, \tau] } |  \varphi(x_i)^\top \tilde \Theta(t) | \leq \E_B [ \sup_{t \in [0, \tau] } | \varphi(x_i)^\top \tilde \Theta(t) | ] + \log n,$$ which concludes the argument.  
%\end{proof}

The combination of Lemma \ref{lemma:lastcliptraj} and \ref{lemma:lastclipnoise} gives that
%Then, it holds that %, for the specific choice of the hyper-parameters in \eqref{eq:tauSigmaapp},
the event in \eqref{eq:hatthetainC} happens with high probability, which allows us to study the utility of $\Theta(\tau)$ through the closed form of the OU process $\hat \Theta(\tau)$, as $\Theta(\tau) = \hat \Theta(\tau)$. This, in turn, boils down to controlling the effects of the noise and of the early stopping that are decoupled via the decomposition %in \eqref{eq:tildeTheta} 
$\hat \Theta(\tau) = \hat \theta(\tau) + \tilde \Theta (\tau)$. In fact, on the one hand, $\tilde \Theta (\tau)$ is a mean-0 random variable in the probability space of $B$ and it captures the effect of the noise; on the other hand, $\hat \theta(\tau)$ 
% \begin{equation}
%     \hat \theta(\tau) = \left( 1 - e^{- \frac{2 \Phi^\top \Phi}{n}  \tau  } \right) \Phi^+ Y
% \end{equation}
is the deterministic component (with respect to $B$) describing the flow and it captures the effect of the early stopping. %, where the non-private baseline reads $\theta^* = \Phi^+ Y$ .
We show that both the noise and the early stopping provide negligible damage to utility in the following two lemmas, whose proofs are contained in Section \ref{sec:proof34}.
\begin{lemma}\label{lemma:noise}
    Let Assumptions \ref{ass:data}, \ref{ass:activation} and \ref{ass:privacy} hold, and let $d = o(p)$. Then, we have
    \begin{equation}\label{eq:res3}
        \E_{x \sim \mathcal P_X} \left[ \left( \varphi(x)^\top \tilde \Theta(\tau) \right)^2 \right] = \bigO{ \frac{d^2 \log^{10} n}{n^2} \, \frac{\log(1 / \delta)}{\varepsilon^2}} = \tilde{\mathcal O} \left( \frac{d^2}{\varepsilon^2 n^2} \right) = o(1),
    \end{equation}
    with probability at least $1 - 2 \exp \left(-c \log^2 n \right)$ over $V$ and $B$, where $c$ is an absolute constant.
\end{lemma}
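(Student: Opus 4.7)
The plan is to exploit the linearity of the SDE satisfied by $\tilde \Theta(t)$: the process obeys $\diff \tilde \Theta(t) = -(2\Phi^\top \Phi/n)\, \tilde \Theta(t) \diff t + \Sigma\, \diff B(t)$ with $\tilde \Theta(0) = 0$, so It\^o's formula gives the closed-form Gaussian solution
$$\tilde \Theta(\tau) = \Sigma \int_0^\tau e^{-\frac{2\Phi^\top \Phi}{n}(\tau - s)}\, \diff B(s) \;\sim\; \mathcal N(0,\, \Sigma^2 A), \qquad A := \int_0^\tau e^{-\frac{4\Phi^\top \Phi}{n}u}\, \diff u.$$
Setting $\Psi := \E_{x}[\varphi(x)\varphi(x)^\top]$ and introducing a standard Gaussian vector $\xi \sim \mathcal N(0, I_p)$, the target quantity becomes a Gaussian quadratic form:
$$\E_{x}\!\left[(\varphi(x)^\top \tilde \Theta(\tau))^2\right] = \tilde \Theta(\tau)^\top \Psi \tilde \Theta(\tau) \;\stackrel{d}{=}\; \Sigma^2\, \xi^\top M \xi, \qquad M := A^{1/2} \Psi A^{1/2}.$$

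The crucial deterministic observation is that every eigenvalue of $e^{-\frac{4\Phi^\top \Phi}{n} u}$ lies in $[0,1]$, hence $A \preceq \tau I_p$ and $\opnorm{A} \leq \tau$ regardless of $X$. Hanson--Wright with deviation level $t = c \log^2 n$ then yields, with probability at least $1 - 2 e^{-c\log^2 n}$ over $B$,
$$\Sigma^2\, \xi^\top M \xi \;\leq\; \Sigma^2 \bigl(\tr(M) + C\,\|M\|_F \log n + C\,\opnorm{M} \log^2 n\bigr).$$
I would then bound each quantity via the PSD chains $\tr(M) \leq \tau\, \tr(\Psi)$, $\opnorm{M} \leq \tau\, \opnorm{\Psi}$, and $\|M\|_F \leq \sqrt{\tr(M)\,\opnorm{M}} \leq \tau \sqrt{\tr(\Psi)\,\opnorm{\Psi}}$. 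Here $\tr(\Psi) = \E_x[\|\varphi(x)\|_2^2] = \bigO{p}$ is immediate from Lipschitzness of $\phi$, while the spectral bound $\opnorm{\Psi} = \bigO{p/d}$ (w.h.p.\ in $V$) follows by Hermite-expanding $\phi$: since $\mu_0 = \mu_2 = 0$, the leading contribution is $\mu_1^2\, V \E[xx^\top] V^\top$, whose operator norm is controlled by the standard $\opnorm{V}^2 = \bigO{p/d}$ for the Gaussian matrix $V$ with entries of variance $1/d$, and the higher-order Hadamard-power corrections inherit at most the same spectral size.

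Substituting the hyperparameters from \eqref{eq:tauSigmaapp} gives $\Sigma^2 \tau = \Theta\bigl(d^2 \log^8 n\, \log(1/\delta)/(p\, n^2 \varepsilon^2)\bigr)$, so the three contributions to $\Sigma^2 \xi^\top M \xi$ become $\tilde{\mathcal O}(d^2/(n^2 \varepsilon^2))$, $\tilde{\mathcal O}(d^{3/2}/(n^2 \varepsilon^2))$, and $\tilde{\mathcal O}(d/(n^2 \varepsilon^2))$ respectively, summing to $\bigO{d^2 \log^{10} n\, \log(1/\delta)/(n^2 \varepsilon^2)} = o(1)$ under Assumption \ref{ass:privacy}.

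The main obstacle is the sharp spectral estimate $\opnorm{\Psi} = \bigO{p/d}$: the crude inequality $\opnorm{\Psi} \leq \tr(\Psi) = \bigO{p}$ would inflate $\Sigma^2 \opnorm{M}\log^2 n$ by a factor $d$ and produce a bound of order $d\, p/(n^2 \varepsilon^2)$, which is not $o(1)$ under Assumption \ref{ass:scalings}. The remaining ingredients---the closed-form OU solution, the elementary $A \preceq \tau I_p$, and Hanson--Wright concentration---are otherwise routine.
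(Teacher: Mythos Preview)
Your approach is essentially the same as the paper's: write $\tilde\Theta(\tau)$ as a centred Gaussian with covariance $\Sigma^2 A$ (the paper does this via the diagonalisation $O^\top\tilde\Theta$ rather than the It\^o integral, but it is the same object), rewrite the target as a Gaussian quadratic form in $\Psi = \E_x[\varphi(x)\varphi(x)^\top]$, bound the covariance entrywise by $\Sigma^2\tau$, and finish with a concentration step. The paper uses norm concentration (Vershynin, Theorem~6.3.2) on $\|\Psi^{1/2}\tilde\Theta(\tau)\|_2$ instead of Hanson--Wright, but the two are interchangeable here.

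The only real problem is your last paragraph, which contains an arithmetic slip that leads you to invent an obstacle that is not there. With the crude bound $\opnorm{\Psi}\le\tr(\Psi)=\bigO{p}$ you get
\[
\Sigma^2\,\opnorm{M}\,\log^2 n \;\le\; \Sigma^2\tau\cdot p\cdot\log^2 n
\;=\;\Theta\!\left(\frac{d^2\log^8 n\,\log(1/\delta)}{p\,n^2\varepsilon^2}\right)\cdot p\cdot\log^2 n
\;=\;\bigO{\frac{d^2\log^{10}n\,\log(1/\delta)}{n^2\varepsilon^2}},
\]
which is exactly the target, not $dp/(n^2\varepsilon^2)$; the $p$ in $\Sigma^2\tau$'s denominator cancels the $p$ from $\opnorm{\Psi}$. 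So all three Hanson--Wright contributions are already $\bigO{d^2\log^{10}n\,\log(1/\delta)/(n^2\varepsilon^2)}$ using only $\tr(\Psi)=\bigO{p}$, and this is $o(1)$ by Assumption~\ref{ass:privacy}. The sharper estimate $\opnorm{\Psi}=\bigO{p/d}$ is unnecessary---which is fortunate, because your Hermite-expansion justification for it is not valid under Assumption~\ref{ass:data}: the expectation is over $x\sim\mathcal P_X$, which is only sub-Gaussian on the sphere, not Gaussian, so the Hermite orthogonality argument does not directly apply. The paper's proof uses precisely the crude route: it bounds $\|\Psi^{1/2}O\tilde\Delta\|_F^2\le\|\Psi^{1/2}\|_F^2\,\opnorm{\tilde\Delta}^2=\tr(\Psi)\cdot\Sigma^2\tau=\bigO{p\,\Sigma^2\tau}$ and then feeds the trivial inequality $\opnorm{\cdot}\le\|\cdot\|_F$ into the concentration step.
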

%\begin{proof}[Proof sketch]
 Similarly to Lemma \ref{lemma:lastclipnoise}, Lemma \ref{lemma:noise} also uses that
    $\varphi(x_i)^\top \tilde \Theta(\tau)$ is a Gaussian random variable with variance increasing linearly in $\norm{\varphi(x_i)}_2^2$, $\tau$, and $\Sigma^2$. Then, the claim in \eqref{eq:res3} is a consequence of the choice of the hyper-parameters in \eqref{eq:tauSigmaapp} and Assumption \ref{ass:privacy}.
%\end{proof}
\begin{lemma}\label{lemma:earlystopping}
    Let Assumptions \ref{ass:data}, \ref{ass:activation} and \ref{ass:scalings} hold. Then, we have
    \begin{equation}\label{eq:res4}
        \E_{x \sim \mathcal P_X} \left[ \left( \varphi(x)^\top \left( \hat \theta(\tau) - \theta^* \right)\right)^2 \right] = \bigO{\frac{d}{n} + \frac{n \log^3 d}{d^{3/2}}} = \tilde {\mathcal O} \left( \frac{d}{n} + \frac{n}{d^{3/2}} \right) = o(1),
    \end{equation}
    with probability at least $1 - 2 \exp \left( -c \log^2 n \right)$ over $X$ and $V$, where $c$ is an absolute constant.
\end{lemma}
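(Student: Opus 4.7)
Since $\hat\theta(0)=0$ and the training loss is quadratic, the gradient flow \eqref{eq:gradientflow} admits the closed form $\hat\theta(t) = (I_p - e^{-\frac{2\Phi^\top\Phi}{n}t})\Phi^+ Y$, with $\theta^* = \Phi^+ Y$. Hence
\[
\hat\theta(\tau)-\theta^* = -\Phi^\top e^{-\frac{2K\tau}{n}}K^{-1}Y,
\]
and the quantity to bound equals the quadratic form
\[
\mathcal Q \;:=\; Y^\top K^{-1}\,e^{-\frac{2K\tau}{n}}\,\Phi\Sigma_\varphi\Phi^\top\,e^{-\frac{2K\tau}{n}}\,K^{-1}Y, \qquad \Sigma_\varphi := \E_x[\varphi(x)\varphi(x)^\top].
\]
The plan is to exploit the spectral structure of $K$ and $\Sigma_\varphi$ in the regime $d\ll n\ll d^{3/2}$, splitting the spectrum of $K$ into a $d$-dimensional \emph{signal} block (from the linear Hermite mode of $\phi$) and a \emph{residual} block of dimension $n-d$, and bounding each contribution.

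By the Hermite expansion ($\mu_0=\mu_2=0$) and standard RF concentration (\textit{\`a la} \cite{mmm2022}), with high probability
\[
K \approx \tfrac{\mu_1^2 p}{d}XX^\top + (\eta_\phi^2-\mu_1^2)p\,I_n, \qquad \Sigma_\varphi \approx \mu_1^2 VV^\top + (\eta_\phi^2-\mu_1^2)I_p,
\]
with $\eta_\phi^2 := \E[\phi(g)^2]$ for $g\sim\mathcal N(0,1)$, and remainders $E_K,E_\Sigma$ from higher-order Hermite modes satisfying $\|E_K\|_{\op}=\tilde{\mathcal O}(np/d^{3/2})$ and $\|E_\Sigma\|_{\op}=\tilde{\mathcal O}(p/d^{3/2})$. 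To leading order the top $d$ eigenvectors of $K$ span $\col(X)\subset\R^n$ with eigenvalues of order $pn/d$, while the remaining $n-d$ eigenvalues are $\Theta(p)$; correspondingly, the first batch of right singular vectors of $\Phi$ lies in $\col(V^\top)\subset\R^p$, on which $\|\Sigma_\varphi\|_{\op}=\Theta(p/d)$, while the rest lies in $\col(V^\top)^\perp$ on which $\|\Sigma_\varphi\|_{\op}=O(1)$.

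With $\tau=d\log^2 n/p$, on the signal subspace $2K\tau/n\gtrsim\mu_1^2\log^2 n$, making $e^{-2K\tau/n}$ of order $n^{-\omega(1)}$ and the signal contribution to $\mathcal Q$ negligible; on the residual subspace $2K\tau/n\lesssim d\log^2 n/n = o(1)$, so $e^{-2K\tau/n}\approx I$ and $K^{-1}$ acts as $((\eta_\phi^2-\mu_1^2)p)^{-1}I$. The $O(1)$ operator norm of $\Sigma_\varphi$ there together with $\|Y\|_2^2=O(n)$ gives a clean residual contribution of order $\tilde{\mathcal O}(n/p)$, absorbed by $d/n$ since $p\gtrsim n^2$. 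The $n\log^3 d / d^{3/2}$ term in the claimed bound is produced by $E_\Sigma$: its contribution to $\mathcal Q$ is bounded by $\|E_\Sigma\|_{\op}\,\|\hat\theta(\tau)-\theta^*\|_2^2 = \tilde{\mathcal O}((p/d^{3/2})\cdot(n/p)) = \tilde{\mathcal O}(n/d^{3/2})$. The $d/n$ term stems from the mixing between signal and residual subspaces induced by $E_K$, coupled to the $\Theta(p/d)$ eigenvalues of $\Sigma_\varphi$ on $\col(V^\top)$.

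The main obstacle is this last perturbation step. Since the signal-residual eigenvalue gap of $K$ is $\Theta(pn/d)$ and $\|E_K\|_{\op}=\tilde{\mathcal O}(np/d^{3/2})$, Davis--Kahan yields subspace leakage of order $\tilde{\mathcal O}(1/\sqrt d)$; propagating this through the quadratic form $\mathcal Q$, with careful bookkeeping of its interaction with the large eigenvalues of $\Sigma_\varphi$ on $\col(V^\top)$, is what ultimately contributes the $d/n$ term. A final union bound over the concentration events --- for $\|V\|_{\op}$, for the lower bound $\sigma_{\min}(X^\top X)\gtrsim n$ (via Assumption \ref{ass:data}), and for the higher-degree Hermite matrices on $X$ and $V$ --- yields the stated probability $1-2\exp(-c\log^2 n)$.
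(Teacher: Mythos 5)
Your high-level plan mirrors the paper's: the same closed form for $\hat\theta(\tau)-\theta^*$, the same signal/residual split according to the top-$d$ eigenspace of the kernel (the paper's $P_\Lambda$, $P_\Lambda^\perp$ and $\varrho_\Lambda$, $\varrho_\Lambda^\perp$), and the same separation of $\varphi(x)$ into its linear Hermite mode $\mu_1 Vx$ and the remainder $\tilde\varphi(x)$. The signal term and the identity part of the residual term are handled correctly. However, the bound $\|E_\Sigma\|_{\op}=\tilde{\mathcal O}(p/d^{3/2})$ is not "standard RF concentration": $\Sigma_\varphi$ is an expectation over the merely sub-Gaussian $x$, not over the Gaussian $V$, so no Hermite expansion in $x$ is available; the paper has to prove this separately (Lemma \ref{lemma:Eopnormsmall}) by approximating $\E_x[\tilde\varphi(x)\tilde\varphi(x)^\top]$ with an empirical average over $N=p^2 n'$ fresh samples and applying matrix concentration.

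The genuine gap is the step you yourself flag as "the main obstacle": the coupling of the $\mu_1^2 VV^\top$ block of $\Sigma_\varphi$ (operator norm $\Theta(p/d)$) with $X^\top K^{-1}\varrho_\Lambda^\perp$. This is exactly where a naive estimate fails -- using $\|X^\top\varrho_\Lambda^\perp\|_{\op}\le\|X\|_{\op}=O(\sqrt n)$ together with $\|K^{-1}\varrho_\Lambda^\perp\|_{\op}=O(1/p)$ yields a divergent $n/d$ -- and you never carry it out. Your Davis--Kahan sketch is a plausible route (gap $\Theta(pn/d)$ against perturbation $\tilde{\mathcal O}(np/d^{3/2})$ does give leakage $\tilde{\mathcal O}(1/\sqrt d)$, hence $\|X^\top\varrho_\Lambda^\perp\|_{\op}=\tilde{\mathcal O}(\sqrt{n/d})$), but if you actually propagate it through $\mathcal Q$ the contribution is $(p/d)^2\cdot(n/d)\cdot(1/p)^2\cdot\|Y\|_2^2=\tilde{\mathcal O}(n^2/d^3)$, which is absorbed by $n\log^3 d/d^{3/2}$ and is not the $d/n$ term you attribute to it; the mismatch indicates the bookkeeping was never done. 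The paper resolves this step differently, proving $\|V^\top P_\Lambda^\perp\Phi^+\|_{\op}=\bigO{\sqrt d/n+\sqrt n\log^3 d/d^{3/2}}$ (Lemma \ref{lemma:earlystoppingtermnew}) via the Woodbury identity: $X^\top K^{-1}$ must be estimated jointly, as $(\mu_1^2 X^\top X+\tilde\mu^2 dI/\mu_1)^{-1}X^\top$ up to controllable errors, precisely because $X$ aligns with the top eigenspace of $K$ where $K^{-1}$ has order $d/(pn)$ rather than $1/p$; this is where the paper's $d/n$ actually originates. Until this interaction term is executed in full -- by Woodbury or by a completely worked-out Davis--Kahan argument -- the proof is incomplete.
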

%\begin{proof}[Proof sketch]
The idea of the argument is to decompose the LHS of \eqref{eq:res4} through two disjoint subspaces, \emph{i.e.}, $\varphi(x)^\top (P_\Lambda + P_\Lambda^\perp) ( \hat \theta(\tau) - \theta^* )$, where $P_\Lambda \in \R^{p \times p}$ is the projector on the space spanned by the eigenvectors associated with the $d$ largest eigenvalues of $\Phi^\top \Phi$. The rationale is that there is a \emph{spectral gap} between the $d$-th and the $(d + 1)$-th eigenvalue of the kernel $K$, when sorting them in non-increasing order, see Lemma \ref{lemma:hyperK}. \simonenew{We note that this result also uses the well conditioning of the data covariance ($\evmin {\E_{x \sim \mathcal P_X} \left[ xx^\top \right]} = \Omega(1)$), see the discussion right after the proof of Lemma \ref{lemma:earlystopping} in Section \ref{sec:proof}.}
As a consequence of the spectral gap, the term $\| P_\Lambda ( \hat \theta(\tau) - \theta^*)\|_2$ is negligible, since in this subspace $\hat \theta(\tau)$ is \emph{already close to convergence}, despite the early stopping. To control the other subspace, we consider the decomposition $\phi(Vx) = \mu_1 Vx + \tilde \phi(Vx)$, where $\mu_1$ is the first Hermite coefficient of $\phi$. Then, we show that \emph{(i)} $\E_{x} [ ( \tilde \phi(Vx)^\top P_\Lambda^\perp ( \hat \theta(\tau) - \theta^* ))^2 ] = \mathcal O (d / n +  n \log^3 d / d^{3/2})$, exploiting the upper bound on $\| \E_{x} [ \tilde \phi(Vx) \tilde \phi(Vx)^\top ] \|_{\textup{op}}$ in Lemma \ref{lemma:Eopnormsmall}, and \emph{(ii)} $\E_{x} [ ( x^\top V^\top  P_\Lambda^\perp ( \hat \theta(\tau) - \theta^* ))^2] = \bigO{d/n + n \log^3 d / d^{3/2}}$ exploiting the bound on $\| V^\top  P_\Lambda^\perp \Phi^+ \|_{\textup{op}}$ in \simonesolved{Lemma \ref{lemma:earlystoppingtermnew}.} % \ref{lemma:VPperpPhi+}.
%\end{proof}

Finally, denoting by $\hat{\mathcal R}$ and $\mathcal R^*$ the generalization error of $\hat \Theta(\tau)$ and $\theta^*$ respectively, Lemmas \ref{lemma:noise}-\ref{lemma:earlystopping} (together with some additional manipulations) guarantee that $$| \hat{\mathcal R} - \mathcal R^* | = \tilde{\mathcal O} \left( \frac{d}{n \varepsilon} + \sqrt{\frac{d}{n}} + \sqrt{\frac{n}{d^{3/2}}}\right).$$ As $\Theta(\tau) = \hat \Theta(\tau)$ (due to Lemmas \ref{lemma:lastcliptraj}-\ref{lemma:lastclipnoise}), the result of Theorem \ref{thm:mainthm} follows. The details are deferred to the end of Section \ref{sec:proof34}.

\subsection{Analysis of clipping}\label{sec:proof12}

This section contains the proofs of Lemmas \ref{lemma:lastcliptraj} and \ref{lemma:lastclipnoise}, including a number of auxiliary results required by the argument.
We make use of the notation introduced in the previous section, including the leave-one-out variables ($\Phi_{-1} \in \R^{(n-1) \times p}$, $Y_{-1} \in \R^{n-1}$, $X_{-1} \in \R^{(n-1) \times d}$, $K_{-1} \in \R^{(n-1) \times (n-1)}$) and % where the first sample was removed, 
the OU process $\hat \Theta(\tau) = \hat \theta(\tau) + \tilde \Theta (\tau)$. We also consider the hyper-parameter choice in \eqref{eq:tauSigmaapp}. When stating that a random variable $Z$ is sub-Gaussian (sub-exponential), we implicitly mean $\subGnorm{Z} = \bigO{1}$ ($\subEnorm{Z} = \bigO{1}$), \emph{i.e.}, its sub-Gaussian (sub-exponential) norm %(as defined in \cite{vershynin2018high})
does not increase with the scalings of the problem.
%Given a matrix $A$, we indicate with $\norm{A}_F$ its Frobenius norm and ($\norm{A}^2_F = \sum_{jk} A_{jk}^2 = \tr(A^\top A)$, where $\tr$ denotes the trace). 
Given a p.s.d.\ matrix $A$, $\lambda_j(A)$ denotes its $j$-th eigenvalue sorted in non-increasing order ($\evmax{A} = \lambda_1(A) \geq \lambda_2(A) \geq \ldots \geq \lambda_s(A) = \evmin{A}$).
We indicate with $P_\Lambda \in \R^{p \times p}$ the projector on the space spanned by the eigenvectors associated with the $d$ largest eigenvalues of $\Phi^\top \Phi$. This quantity is well defined due to Lemma \ref{lemma:hyperK} which proves a spectral gap, and we will condition on the result of this lemma throughout the paper. %During the proof, we will avoid remarking the implicit conditioning on the high probability event described by Lemma \ref{lemma:hyperK}, necessary for its definition.}
% \subsection{Proof of Lemma \ref{lemma:lastcliptraj}}\label{subsec:lastcliptraj}
It is convenient to define the function $\tilde \phi(z) := \phi(z) - \mu_1 z$, where $\mu_1$ is the first Hermite coefficient of $\phi$. Note that this function is Lipschitz continuous and has the first 3 Hermite coefficients equal to 0, due to Assumption \ref{ass:activation}. We then define the shorthands $\tilde \varphi(x) = \tilde \phi (Vx) \in \R^p$, $\tilde \Phi \in \R^{n \times p}$ as the matrix containing $\tilde \varphi (x_i) \in \R^p$ in its $i$-th row, and $\tilde K = \tilde \Phi \tilde \Phi^\top \in \R^{n \times n}$.

\begin{lemma}\label{lemma:stability}
Let Assumptions \ref{ass:data} and \ref{ass:activation} hold, and let $n = \bigO{\sqrt p}$, $n \log^3 n =\bigO{d^{3/2}}$ and $n = \omega(d)$. Then, we have
\begin{equation}
    \norm{\Phi^+ Y - \Phi_{-1}^+ Y_{-1}}_2 = \bigO{\frac{\log n}{\sqrt p}},
\end{equation}
with probability at least $1 - 2 \exp \left( -c \log^2 n\right)$ over $V$ and $X$, where $c$ is an absolute constant.
\end{lemma}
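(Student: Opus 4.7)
The plan is a leave-one-out / Schur-complement analysis that reduces the stability question to two scalar quantities. Since $n = \bigO{\sqrt p} \ll p$, Lemma \ref{lemma:hyperK} guarantees that $K$ and $K_{-1}$ are invertible, so $\Phi^+ = \Phi^\top K^{-1}$ and $\Phi_{-1}^+ = \Phi_{-1}^\top K_{-1}^{-1}$. First I would write $K$ in $1 \oplus (n-1)$ block form (isolating $\varphi(x_1)$), apply the Schur-complement inverse formula, and expand $\Phi^\top K^{-1} Y$ to obtain the identity
\begin{equation*}
\Phi^+ Y - \Phi_{-1}^+ Y_{-1} = \frac{r}{s}\, P_{\Phi_{-1}}^\perp \varphi(x_1), \qquad r = y_1 - \varphi(x_1)^\top \theta^*_{-1}, \qquad s = \norm{P_{\Phi_{-1}}^\perp \varphi(x_1)}_2^2,
\end{equation*}
where $\theta^*_{-1} := \Phi_{-1}^+ Y_{-1}$. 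Since $\norm{P_{\Phi_{-1}}^\perp \varphi(x_1)}_2 = \sqrt s$, this yields $\norm{\Phi^+ Y - \Phi_{-1}^+ Y_{-1}}_2 = |r|/\sqrt s$, and it suffices to prove $\sqrt s = \Omega(\sqrt p)$ and $|r| = \bigO{\log n}$ with the required probability.

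For the lower bound $s = \Omega(p)$, I would decompose $\varphi(x) = \mu_1 Vx + \tilde \varphi(x)$, so that $\tilde \phi$ has zero first Hermite coefficient. A Bernstein-type bound on the sum of $p$ bounded-variance terms gives $\norm{\tilde \varphi(x_1)}_2^2 = \Theta(p)$. The key refinement is to show $\norm{P_{\Phi_{-1}} \tilde \varphi(x_1)}_2^2 = o(p)$: each entry of $\tilde \Phi_{-1} \tilde \varphi(x_1)$ is a sum of $p$ independent terms with expectation $\bigO{p/d^{3/2}}$ (using $\mu_0(\tilde \phi) = \mu_1(\tilde \phi) = \mu_2(\tilde \phi) = 0$ together with $|x_i^\top x_j|/d = \bigO{\log n / \sqrt d}$ for $i \neq j$) and fluctuation $\bigO{\sqrt{p \log n}}$; summing across the $n-1$ entries and dividing by $\lambda_{\min}(K_{-1}) = \Omega(p)$ (Lemma \ref{lemma:hyperK}) yields $\norm{P_{\Phi_{-1}} \tilde \varphi(x_1)}_2^2 = \bigO{np/d^3 + n \log^2 n} = o(p)$ under $n \log^3 n = \bigO{d^{3/2}}$. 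Cross-term contributions from the linear part $\mu_1 V x_1$ are similarly negligible.

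For the upper bound $|r| = \bigO{\log n}$, split $\varphi(x_1)^\top \theta^*_{-1} = \mu_1 x_1^\top (V^\top \theta^*_{-1}) + \tilde \varphi(x_1)^\top \theta^*_{-1}$. To control the first summand, I would show $\norm{V^\top \theta^*_{-1}}_2 = \bigO{1}$ by decomposing $\theta^*_{-1} = \mu_1 V X_{-1}^\top K_{-1}^{-1} Y_{-1} + \tilde \Phi_{-1}^\top K_{-1}^{-1} Y_{-1}$. Using $V^\top V \approx (p/d) I_d$ and the spectral decomposition of $K_{-1}$ (top $d$ eigenvalues $\Theta(pn/d)$, remaining $\Theta(p)$ by Lemma \ref{lemma:hyperK}), one obtains $\norm{X_{-1}^\top K_{-1}^{-1} Y_{-1}}_2 = \bigO{d/p}$, so the linear contribution is $\bigO{1}$; the residual contribution is bounded by $\opnorm{V^\top \tilde \Phi_{-1}^\top} \norm{K_{-1}^{-1} Y_{-1}}_2 = \bigO{\sqrt{np}} \cdot \bigO{\sqrt n/p} = \bigO{n/\sqrt p} = \bigO{1}$ using $n = \bigO{\sqrt p}$. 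Conditional on $V, X_{-1}, Y_{-1}$, the vector $V^\top \theta^*_{-1}$ is deterministic while $x_1$ is sub-Gaussian, so $|x_1^\top V^\top \theta^*_{-1}| = \bigO{\log n}$ with high probability. The second summand $|\tilde \varphi(x_1)^\top \theta^*_{-1}|$ is handled by Cauchy-Schwarz with factors $\norm{K_{-1}^{-1/2} \tilde \Phi_{-1} \tilde \varphi(x_1)}_2$ and $\norm{\theta^*_{-1}}_2 = \bigO{\sqrt{n/p}}$, reusing the estimate on $\norm{\tilde \Phi_{-1} \tilde \varphi(x_1)}_2^2$ from the $s$-analysis; this yields $|\tilde \varphi(x_1)^\top \theta^*_{-1}| = \bigO{n/d^{3/2} + n \log n/\sqrt p} = \bigO{\log n}$ under Assumption \ref{ass:scalings}.

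The main obstacle will be the lower bound on $s$. The naive estimate $s \geq \norm{\varphi(x_1)}_2^2 - \opnorm{\Phi_{-1}}^2 \norm{\varphi(x_1)}_2^2 / \lambda_{\min}(K_{-1})$ is vacuous since $\opnorm{\Phi_{-1}}^2 \asymp \lambda_{\max}(K_{-1}) \asymp pn/d$ is much larger than $\lambda_{\min}(K_{-1}) \asymp p$. The required refinement crucially exploits the vanishing low-order Hermite coefficients of $\tilde \phi$ to show that $\tilde \varphi(x_1)$ is nearly orthogonal to the row span of $\Phi_{-1}$, and relies on the scaling $n \log^3 n = \bigO{d^{3/2}}$ from Assumption \ref{ass:scalings} together with the spectral gap of Lemma \ref{lemma:hyperK}. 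Combining $|r| = \bigO{\log n}$ and $\sqrt s = \Omega(\sqrt p)$ then yields the claimed bound.
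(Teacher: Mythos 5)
Your overall strategy coincides with the paper's: the same leave-one-out identity $\Phi^+ Y - \Phi_{-1}^+ Y_{-1} = \frac{r}{s}\,P_{\Phi_{-1}}^\perp\varphi(x_1)$, the same Hermite split $\varphi(x_1)=\mu_1 Vx_1+\tilde\varphi(x_1)$, the same sub-Gaussian bound $|x_1^\top V^\top\Phi_{-1}^+Y_{-1}|=\bigO{\log n}$ (the paper gets $\norm{V^\top\Phi_{-1}^+Y_{-1}}_2=\bigO{1}$ directly from Lemma \ref{lemma:facts} rather than re-deriving it), and the same expectation-plus-Bernstein control of $\Phi_{-1}\tilde\varphi(x_1)$, yielding $\bigO{\sqrt n\log^3 n/d^{3/2}+\sqrt{n/p}\,\log n}=\bigO{\log n/\sqrt n}$ exactly as in \eqref{eq:firstpartthesisoldbadlemma}. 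The one genuine divergence is the lower bound on $s=\norm{P_{\Phi_{-1}}^\perp\varphi(x_1)}_2^2$: the paper invokes the Schur-complement fact (Lemma B.1 of \cite{bombari2023stability}) that $s\geq\evmin{K}$, which combined with Lemma \ref{lemma:hyperK} immediately gives $s=\Omega(p)$ with no further work. Your direct route via $\norm{\tilde\varphi(x_1)}_2^2=\Theta(p)$ and $\norm{P_{\Phi_{-1}}\tilde\varphi(x_1)}_2^2=o(p)$ can be made to work, but as written it has a gap you only gesture at: expanding $s=\norm{P_{\Phi_{-1}}^\perp(\mu_1Vx_1+\tilde\varphi(x_1))}_2^2$ produces the cross term $2\mu_1(Vx_1)^\top P_{\Phi_{-1}}^\perp\tilde\varphi(x_1)$, which Cauchy--Schwarz only bounds by $\bigO{\norm{Vx_1}_2\norm{\tilde\varphi(x_1)}_2}=\bigO{p}$ --- the same order as the term you are trying to keep. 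Making it negligible requires the extra observation that $(Vx_1)^\top\tilde\varphi(x_1)=\sum_k(v_k^\top x_1)\tilde\phi(v_k^\top x_1)$ is a sum of i.i.d.\ mean-zero sub-exponential terms (mean zero because the first Hermite coefficient of $\tilde\phi$ vanishes), hence $\bigO{\sqrt p\log n}$, together with $|(Vx_1)^\top P_{\Phi_{-1}}\tilde\varphi(x_1)|\leq\norm{Vx_1}_2\norm{P_{\Phi_{-1}}\tilde\varphi(x_1)}_2=o(p)$. With that step supplied your argument closes; the paper's route simply buys this for free from $\evmin{K}=\Omega(p)$, which you need anyway.
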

\begin{proof}
    \simonesolved{From Lemma \ref{lemma:hyperK}, we have that $K$ is invertible with probability at least $1 - 2 \exp \left( -c_1 \log^2 n\right)$ over $V$ and $X$. Conditioning on such high probability event, we have that also $K_{-1}$ is invertible and, therefore, we can write $\Phi^+ = \Phi^\top K^{-1}$ and $\Phi^+_{-1} = \Phi_{-1}^\top K^{-1}_{-1}$.} Thus, from the proof of Lemma 4.1 in \cite{bombari2023stability} (see their Equation (44)), we have
    \begin{equation}\label{eq:fromstability}
        \Phi^+ Y - \Phi_{-1}^+ Y_{-1} = \frac{\Ppm^\perp \varphi(x_1)}{\norm{\Ppm^\perp \varphi(x_1)}_2^2}  \left(y_1 - \varphi(x_1)^\top \Phi_{-1}^+ Y_{-1} \right),
    \end{equation}
    where $\Ppm$ is the projector over the span of the rows of $\Phi_{-1}$. %The main idea of the argument follows from the identity $\Ppm \Phi^+ Y = \Phi^+_{-1} Y_{-1}$ (see their Equation (39)).

    To bound the term $\varphi(x_1)^\top \Phi_{-1}^+ Y_{-1}$, we decompose $\varphi(x_1) = \mu_1 V x_1 + \tilde \varphi(x_1)$. %, according to the notation introduced in Appendix \ref{app:matconc}.
    Thus, an application of the triangle inequality gives
    \begin{equation}\label{eq:badlemma1}
        \left| \varphi(x_1)^\top \Phi_{-1}^+ Y_{-1} \right| \leq  \left| \mu_1 x_1^\top V^\top \Phi_{-1}^+ Y_{-1} \right| +  \left| \tilde \varphi(x_1)^\top \Phi_{-1}^+ Y_{-1} \right|.
    \end{equation}
    We bound the two terms in \eqref{eq:badlemma1} separately. As for the first term, we have that $$\norm{V^\top \Phi_{-1}^+ Y_{-1}}_2 \leq \opnorm{V^\top \Phi_{-1}^+} \norm{Y_{-1}}_2 = \bigO{1},$$ 
    % \begin{equation}
    %     \norm{V^\top \Phi_{-1}^+ Y_{-1}}_2 \leq \opnorm{V^\top \Phi_{-1}^+} \norm{Y_{-1}}_2 = \bigO{\frac{1}{\sqrt n} \, \sqrt n} = \bigO{1},
    % \end{equation}
    where the second step is a consequence of Lemma \ref{lemma:facts}, and holds with probability at least $1 - 2 \exp \left( -c_2 \log^2 n\right)$ over $V$ and $X_{-1}$. In fact, notice that considering $\Phi$ or $\Phi_{-1}$ in Lemma \ref{lemma:facts} does not change the argument, and therefore the final result. Then, conditioning on this high probability event, since $x_1$ is sub-Gaussian, we have that the first term of \eqref{eq:badlemma1} reads
    \begin{equation}\label{eq:badlemma11}
        \left| \mu_1 x_1^\top V^\top \Phi_{-1}^+ Y_{-1} \right| = \bigO{\log n},
    \end{equation}
    with probability at least $1- 2 \exp \left( -c_3 \log^2 n\right)$ over $V$ and $X$. 
    
    As for the second term in \eqref{eq:badlemma1}, by triangle inequality, we have that
    \begin{equation}\label{eq:RFminusK}
        \norm{\left( \Phi_{-1}^+ \right)^\top \tilde \varphi(x_1)}_2 \leq \norm{K_{-1}^{-1} \E_V \left[ \Phi_{-1} \tilde \varphi(x_1) \right] }_2 + \norm{K_{-1}^{-1} \left(  \Phi_{-1} \tilde \varphi(x_1) - \E_V \left[ \Phi_{-1} \tilde \varphi(x_1) \right] \right) }_2.
    \end{equation}
    We bound the two terms in \eqref{eq:RFminusK} separately. As for the first term, using indices $i \in [n-1]$, we have
    \begin{equation}\label{eq:expectednormtilde}
        \E_V \left[ \Phi_{-1} \tilde \varphi(x_1) \right]_i = p \, \E_v \left[ \phi (x_{i + 1}^\top v) \tilde \phi(x_1^\top v) \right] = p \sum_{l = 3}^{+\infty} \mu_l^2 \left( \frac{x_{i + 1}^\top x_1}{d} \right)^l,
    \end{equation}
    where we use the Hermite decomposition of the functions $\phi$ and $\tilde \phi$. Recall that the first 3 Hermite coefficients of $\tilde \phi$ are 0, and the others correspond to the ones of $\phi$, which we denote as $\mu_l$.
    Since $x_1$ is sub-Gaussian and independent from the $x_{i+1}$-s, with $\norm{x_{i+1}}_2 = \sqrt d$, we have that
    \begin{equation}
        \max_{i \in [n -1]} \left| x_{i+1}^\top x_1 \right| \leq \sqrt d \log n,
    \end{equation}
    with probability at least $1 - 2 \exp \left( -c_4 \log^2 n \right)$ over $x_1$. Thus, conditioning on this high probability event, \eqref{eq:expectednormtilde} gives
    \begin{equation}
        \left| \E_V \left[ \Phi_{-1} \tilde \varphi(x_1) \right]_i \right| \leq p \left( \frac{\left| x_{i+1}^\top x_1 \right|}{d} \right)^3 \sum_{l = 3}^{+\infty} \mu_l^2 = \bigO{p \frac{\log^3 n}{d^{3/2}}},
    \end{equation}
    which implies
    \begin{equation}\label{eq:expectedVd32}
         \norm{K_{-1}^{-1} \E_V \left[ \Phi_{-1} \tilde \varphi(x_1) \right] }_2 \leq \opnorm{K_{-1}^{-1}} \norm{\E_V \left[ \Phi_{-1} \tilde \varphi(x_1) \right]}_2 = \bigO{\frac{\sqrt n \log^3 n}{d^{3/2}}},
    \end{equation}
    % \begin{equation}
    %     \norm{\E_V \left[ \Phi_{-1} \tilde \varphi(x_1) \right]}_2 = \bigO{p \frac{\sqrt n \log^3 n}{d^{3/2}}}.
    % \end{equation}
    with probability at least $1 - 2 \exp \left( -c_5 \log^2 n \right)$ over $X$ and $V$. Here, the last passage also uses Lemma \ref{lemma:hyperK}, which provides a lower bound on $\evmin{K_{-1}} \geq \evmin{K} = \Omega(p)$.

As for the second term of \eqref{eq:RFminusK}, we have
    \begin{equation}\label{eq:newforrevisionpweird}
        \left[ \Phi_{-1} \tilde \varphi(x_1) \right]_i - \E_V \left[ \Phi_{-1} \tilde \varphi(x_1) \right]_i = \sum_{k = 1}^p \phi(x_{i + 1}^\top v_k) \tilde \phi(x_{1}^\top v_k) - \E_{v_k} \left[ \phi(x_{i + 1}^\top v_k) \tilde \phi(x_{1}^\top v_k) \right].
    \end{equation}
    All terms in the previous sum are independent, sub-exponential, mean-0 random variables, since both $\phi$ and $\tilde \phi$ are Lipschitz. Thus, Bernstein inequality (see Theorem 2.8.1 of \cite{vershynin2018high}) gives
    \begin{equation}
        \left| \left[ \Phi_{-1} \tilde \varphi(x_1) \right]_i - \E_V \left[ \Phi_{-1} \tilde \varphi(x_1) \right]_i \right| = \bigO{\log n \sqrt p},
    \end{equation}
    with probability at least $1 - 2 \exp \left( -c_6 \log^2 n\right)$ over $V$. Then, performing a union bound over all the indices $i \in [n-1]$, we obtain
    \begin{equation}\label{eq:overparamweird}
        \norm{K_{-1}^{-1} \left(  \Phi_{-1} \tilde \varphi(x_1) - \E_V \left[ \Phi_{-1} \tilde \varphi(x_1) \right] \right) }_2 = \bigO{\frac{1}{p} \sqrt{np} \log n} = \bigO{\sqrt{\frac{n}{p}} \log n},
    \end{equation}
    with probability at least $1 - 2 \exp \left( -c_7 \log^2 n\right)$ over $V$ and $X$. Then, \eqref{eq:overparamweird} and \eqref{eq:expectedVd32} make \eqref{eq:RFminusK} read
    \begin{equation}\label{eq:firstpartthesisoldbadlemma}
        \norm{\left( \Phi_{-1}^+ \right)^\top \tilde \varphi(x_1)}_2 = \bigO{\frac{\sqrt n \log^3 n}{d^{3/2}} + \sqrt{\frac{n}{p}} \log n} = \bigO{\frac{\log n}{\sqrt n}},
    \end{equation}
    with probability at least $1 - 2 \exp \left( -c_8 \log^2 n\right)$ over $V$ and $X$, where the last step is a consequence of $p = \Omega(n^2)$. % Assumptions \ref{ass:scalings} and \ref{ass:strongoverparam}.
    % Conditioning on such high probability event, this gives
    % \begin{equation}\label{eq:badlemma12}
    %      \left| \tilde \varphi(x_1)^\top \Phi_{-1}^+ Y_{-1} \right| \leq \norm{\left( \Phi_{-1}^+ \right)^\top \tilde \varphi(x_1)}_2 \norm{Y_{-1}}_2 = \bigO{\frac{\log n}{\sqrt n} \, \sqrt n} = \bigO{\log n}.
    % \end{equation}
    Plugging \eqref{eq:firstpartthesisoldbadlemma} and \eqref{eq:badlemma11} % and \eqref{eq:badlemma12}
    in \eqref{eq:badlemma1} provides the upper bound
    \begin{equation}\label{eq:secondpartthesisoldbadlemma}
        \left| \varphi(x_1)^\top \Phi_{-1}^+ Y_{-1} \right| = \bigO{\log n},
    \end{equation}
    which holds with probability at least $1 - 2 \exp \left( -c_9 \log^2 n\right)$ over $V$ and $X$.
    
    % Since $Y_{-1}$ is a vector independent from $x_1$, such that $\norm{Y_{-1}}_2 = \bigO{\sqrt n}$ by Assumption \ref{ass:data}, we have that
    % \begin{equation}
    %     \left| \varphi(x_1)^\top \Phi_{-1}^+ Y_{-1}  \right| = \bigO{\log n},
    % \end{equation}
    % with probability at least $1 - 2 \exp \left( -c_1 \log^2 n\right)$ over $V$ and $X$, because of Lemma \ref{lemma:badlemma}.

    To bound the term $\norm{\Ppm^\perp \varphi(x_1)}_2$ in \eqref{eq:fromstability}, we can use Lemma B.1. in \cite{bombari2023stability}, which gives
    \begin{equation}\label{eq:projnormfin}
        \norm{\Ppm^\perp \varphi(x_1)}_2 \geq \sqrt{\evmin{K}} = \Omega(\sqrt p),
    \end{equation}
    where the last step is a consequence of Lemma \ref{lemma:hyperK}, and holds with probability at least $1 - 2 \exp \left( -c_{10} \log^2 n\right)$ over $V$ and $X$. The combination of \eqref{eq:fromstability}, \eqref{eq:secondpartthesisoldbadlemma} and \eqref{eq:projnormfin} readily gives the desired result. 
    % Then, we can bound \eqref{eq:fromstability} as
    % \begin{equation}
    %     \norm{\Phi^+ Y - \Phi_{-1}^+ Y_{-1}}_2 = \frac{\left| \varphi(x_1)^\top \Phi_{-1}^+ Y_{-1}  \right| + \left| y_1 \right|}{\norm{\Ppm^\perp \varphi(x_1)}_2} = \bigO{\frac{\log n}{\sqrt p}},
    % \end{equation}
    % with probability at least $1 - 2 \exp \left( -c_3 \log^2 n\right)$ over $V$ and $X$, and the thesis readily follows.
\end{proof}

\simonenew{After performing a union bound on all $i \in [n]$, the result of Lemma \ref{lemma:stability} guarantees that, with high probability, %, for every $i \in [n]$, %we have
\begin{equation*}
    \norm{\Phi^+ Y - \Phi_{-i}^+ Y_{-i}}_2 = \bigO{\frac{\log n}{\sqrt p}}, \qquad \textup{for all } i \in [n].
\end{equation*}
This will be key in our later proof of Lemma \ref{lemma:lastcliptraj}, where we will use Lemma \ref{lemma:stability} and the fact that it holds with probability at least $1 - 2 \exp \left( -c \log^2 n\right)$. % (since $n \exp \left( -c \log^2 n\right) = o(1)$).}
We note that the proof strategy of Lemma \ref{lemma:stability} is specifically designed to obtain such a probability guarantee, and it differs from previous work, especially in the argument to show
\begin{equation*}
    \| (\Phi_{-i}^+)^\top \tilde \varphi(x_i)\|_2 = \bigO{ \frac{\log n}{\sqrt n} }, \qquad \textup{for all } i \in [n],
\end{equation*}
see \eqref{eq:firstpartthesisoldbadlemma}.
%where $\tilde \varphi(x_i) = \phi(V x_i) - \mu_1 V x_i$.
Note that, for a fixed $i$, this can be proved via Markov inequality through an upper bound on the second moment of the LHS of the equation above, which in turn reduces to an upper bound on $\| \mathbb E_{x} [ \tilde \varphi(x) \tilde \varphi(x)^\top ] \|_{\textup{op}}$. This resembles the strategy followed by \cite{mmm2022} (see their Proposition 7.(b)) to prove that this term is negligible when estimating the test loss of the RF model, which is their object of interest. However, this approach would lead to vacuous probability guarantees after performing a union bound on all the training samples, due to the polynomial tail bound given by Markov inequality. To solve the issue, % added difficulty
we pursue our desired result via the argument between \eqref{eq:RFminusK} and \eqref{eq:expectedVd32}, which leads to
% alternative argument in Lemma 3.5, and in particular the one that leads to eqs. [94 - 97], where we estimate the term $\|\mathbb E_V \left[ (\Phi_{-i}^+)^\top \tilde \varphi(x_i) \right] \|_2$ via the Hermite expansion of $\phi$, which only depends on the Hermite coefficients of order larger or equal than 3, which at the end rely on the bound
\begin{equation*}
    \frac{\sqrt n \log^3 n}{d^{3/2}} = O \left( \frac{\log n}{\sqrt n} \right),
\end{equation*}
and therefore to our assumption $d^{3/2} \gg n$, see \eqref{eq:scalings}. Similarly, in the argument between \eqref{eq:newforrevisionpweird} and \eqref{eq:overparamweird}, we upper bound $\|(\Phi_{-i}^+)^\top \tilde \varphi(x_i) - \mathbb E_V \left[ (\Phi_{-i}^+)^\top \tilde \varphi(x_i) \right] \|_2$ via Bernstein inequality on the separate indices of the argument vector, which at the end relies on the bound
\begin{equation*}
    \sqrt{\frac{n}{p} \log n} = O \left( \frac{\log n}{\sqrt n}\right),
\end{equation*}
 leading to our assumption $p = \Omega(n^2)$, see \eqref{eq:scalings}.
}

\begin{lemma}\label{lemma:fakeinequality}
Let Assumptions \ref{ass:data} and \ref{ass:activation} hold, and let $n = o \left( p / \log^4 p \right)$, $n = \omega \left( d \log^2 d \right)$, and $n = \bigO{d^{3/2} / \log^3 d}$. Then, we have that, with probability at least $1 - 2 \exp \left(-c \log^2 n \right)$ over $X$ and $V$,
\begin{equation}\label{eq:decclaim}
    \frac{\varphi(x_1)^\top}{\norm{\varphi(x_1)}_2} e^{-\frac{2 \Phi^\top \Phi}{n} t} \frac{\varphi(x_1)}{\norm{\varphi(x_1)}_2} < \frac{\varphi(x_1)^\top }{\norm{\varphi(x_1)}_2} e ^{-\frac{2 \varphi(x_1) \varphi(x_1)^\top}{n} t} \frac{\varphi(x_1)}{\norm{\varphi(x_1)}_2},
\end{equation}
holds uniformly for all $t \in (0, \tau]$, where $c$ is an absolute constant.
\end{lemma}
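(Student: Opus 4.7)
Let $u = \varphi(x_1)/\|\varphi(x_1)\|_2$, $\mu = \|\varphi(x_1)\|_2^2$, $A = \Phi^\top \Phi$, and $s = 2t/n \in (0, 2\tau/n]$. Since $u$ is a unit eigenvector of $\varphi(x_1)\varphi(x_1)^\top$ with eigenvalue $\mu$, the right-hand side is exactly $e^{-s\mu}$, so the claim reduces to $u^\top e^{-sA} u < e^{-s\mu}$. The strategy is to exploit the spectral gap of $K = \Phi\Phi^\top$ provided by Lemma \ref{lemma:hyperK}: with high probability the top $d$ nonzero eigenvalues $\lambda_1 \geq \ldots \geq \lambda_d$ of $A$ are of order $np/d$, while the remaining $\lambda_{d+1}, \ldots, \lambda_n$ are of order $p$, which is the same order as $\mu$.

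Let $P_\Lambda$ denote the projector onto the span of the top-$d$ eigenvectors of $A$; since $P_\Lambda$ commutes with $A$, I would obtain
\[
u^\top e^{-sA} u \;=\; \|e^{-sA/2} P_\Lambda u\|_2^2 + \|e^{-sA/2}(I - P_\Lambda) u\|_2^2 \;\leq\; \alpha\, e^{-s\lambda_d} + (1-\alpha)\, e^{-s\lambda_n},
\]
where $\alpha := \|P_\Lambda u\|_2^2 \in [0,1]$. Dividing by $e^{-s\mu}$, it suffices to show $L(s) := \alpha\, e^{-s(\lambda_d - \mu)} + (1-\alpha)\, e^{s(\mu - \lambda_n)} < 1$ on $(0, 2\tau/n]$. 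Since $L$ is a convex combination of exponentials, it is convex, and $L(0) = 1$. I would then check the inequality at the right endpoint $s_\star = 2\tau/n$: because $\lambda_d - \mu = \Omega(np/d)$, one has $s_\star(\lambda_d - \mu) = \Omega(\log^2 n)$, so $e^{-s_\star(\lambda_d - \mu)} = o(1)$; because $\mu - \lambda_n = O(p)$, one has $s_\star(\mu - \lambda_n) = O(d\log^2 n/n) = o(1)$ under $n = \omega(d\log^2 d)$, so $e^{s_\star(\mu - \lambda_n)} = 1 + o(1)$. Putting these together gives $L(s_\star) \leq \alpha\cdot o(1) + (1-\alpha)(1+o(1)) = (1-\alpha) + o(1)$, which is strictly less than $1$ provided $\alpha$ is bounded below by a positive constant. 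The chord inequality for convex functions then propagates $L(s) < 1$ to every $s \in (0, s_\star]$, since $L(0) = 1$ and $L(s_\star) < 1$ force $L(s) \leq (1 - s/s_\star) + (s/s_\star)L(s_\star) < 1$ for all $s \in (0, s_\star]$.

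The main technical step is thus to establish $\alpha = \Omega(1)$ with the required probability, and this is what I expect to be the main obstacle. The natural route uses the decomposition $\varphi(x_1) = \mu_1 V x_1 + \tilde\varphi(x_1)$ (Assumption \ref{ass:activation}) together with the fact that, to leading order, $A \approx \mu_1^2 V X^\top X V^\top$ has its top-$d$ eigenspace equal to the column span of $V$, which contains $V x_1$. Concentration of Gaussian random features gives $\|V x_1\|_2^2 = \Theta(p)$, and the Hermite structure of $\tilde\varphi$ (no linear component) combined with standard concentration arguments yields $\|P_\Lambda \tilde\varphi(x_1)\|_2^2 = O(d) = o(p)$. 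Therefore $\|P_\Lambda \varphi(x_1)\|_2^2 \geq (\mu_1 \|Vx_1\|_2 - \|P_\Lambda \tilde\varphi(x_1)\|_2)^2 = (1-o(1))\mu_1^2 p$, while $\|\varphi(x_1)\|_2^2 = \Theta(p)$, so $\alpha \geq \mu_1^2/\sum_{l\geq 0}\mu_l^2 - o(1) = \Omega(1)$. Making the ``to leading order'' quantitative --- in particular, controlling the perturbation of the top-$d$ eigenspace of $A$ away from the column span of $V$ using Lemma \ref{lemma:hyperK} together with the scalings $n = \omega(d\log^2 d)$ and $n = O(d^{3/2}/\log^3 d)$ --- is where the full strength of the assumptions is needed.
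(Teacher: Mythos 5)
Your reduction to $L(s)=\alpha e^{-s(\lambda_d-\mu)}+(1-\alpha)e^{s(\mu-\lambda_n)}<1$ is sound, and your convexity-plus-endpoint argument is a genuinely different (and cleaner) way to get uniformity in $t$ than what the paper does: the paper splits $(0,\tau]$ into $0<t\le 1/\Lambda$ and $1/\Lambda<t\le\tau$ and runs two separate elementary exponential inequalities (\eqref{eq:ineqexponint}, \eqref{eq:ineqexpon}) in the two regimes, whereas you observe that $L$ is convex with $L(0)=1$, so a single check at $s_\star=2\tau/n$ propagates to the whole interval via the chord inequality. Your endpoint estimates match the paper's ingredients exactly ($s_\star(\lambda_d-\mu)=\Omega(\log^2 n)$ from $\lambda_d(K)=\Omega(pn/d)$ and $n=\omega(d\log^2 d)$; $s_\star\mu=O(d\log^2 n/n)=o(1)$), and the spatial input $\alpha=\|P_\Lambda\varphi(x_1)\|_2^2/\|\varphi(x_1)\|_2^2=\Omega(1)$ is precisely the paper's Lemma \ref{lemma:Pvarphi}, which you could simply cite rather than re-derive. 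Two small caveats. First, when you bound $\|e^{-sA/2}(I-P_\Lambda)u\|_2^2\le(1-\alpha)e^{-s\lambda_n}$ you are implicitly using that $u$ has no component in $\ker(\Phi^\top\Phi)$ (otherwise that component decays with rate $e^{0}$, not $e^{-s\lambda_n}$); this is true because $\varphi(x_1)$ is a row of $\Phi$ and hence orthogonal to $\ker\Phi$, but it should be stated, as the paper does via $P_0\varphi(x_1)=0$. Second, your route to $\alpha=\Omega(1)$ via the claim $\|P_\Lambda\tilde\varphi(x_1)\|_2^2=O(d)$ is stronger than needed and would require controlling the rotation between $P_\Lambda$ and the column span of $V$; the paper sidesteps this entirely with the norm-difference trick of Lemma \ref{lemma:MuMu1} ($\|\varphi(x_1)\|_2-\|\tilde\varphi(x_1)\|_2=\Omega(\sqrt p)$ because $M-M_1=\mu_1^2>0$) combined with $\|P_\Lambda^\perp Vx_1\|_2=O(\sqrt{pd/n})=o(\sqrt p)$ from Lemma \ref{lemma:PPV}, which never bounds $\|P_\Lambda\tilde\varphi(x_1)\|_2$ at all. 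With those two adjustments your proof closes, and the convexity step is arguably a simplification of the published argument.
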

\begin{proof}
    Let $P_\Lambda$ be the projector on the space spanned by the eigenvectors associated with the $d$ largest eigenvalues of $\Phi^\top \Phi$, $P_0$ be the projector on the kernel of $\Phi^\top \Phi$, and $P_\lambda = I - P_\Lambda - P_0$. % \simone{Note that, when conditioning on the high probability event given by Lemma \ref{lemma:hyperK} (which happens with probability at least $1 - 2 \exp \left( -c_1 \log^2 n \right)$ over $X$ and $V$), $P_\Lambda$ is well defined as $\lambda_{d}(K) > \lambda_{d+1}(K)$.}
    By definition, we have that $P_0 \varphi(x_1) = 0$. Thus, since both $P_\Lambda$ and $P_\lambda$ are projectors on eigenspaces of $\Phi^\top \Phi$, we can write
    \begin{equation}\label{eq:fakeineq1}
    \begin{aligned}
        & \frac{\varphi(x_1)^\top}{\norm{\varphi(x_1)}_2} e^{-\frac{2 \Phi^\top \Phi}{n} t} \frac{\varphi(x_1)}{\norm{\varphi(x_1)}_2} \\
        % &= \frac{\varphi(x_1)^\top}{\norm{\varphi(x_1)}_2} \left( P_\lambda + P_\Lambda\right) e^{-\frac{\left( P_\lambda + P_\Lambda\right) 2 \Phi^\top \Phi \left( P_\lambda + P_\Lambda\right)}{n} t} \left( P_\lambda + P_\Lambda\right) \frac{\varphi(x_1)}{\norm{\varphi(x_1)}_2} \\
        % &= \frac{\varphi(x_1)^\top}{\norm{\varphi(x_1)}_2} \left( P_\lambda + P_\Lambda\right) e^{-\frac{2 P_\lambda \Phi^\top \Phi P_\lambda  +  2 P_\Lambda \Phi^\top \Phi P_\Lambda }{n} t} \left( P_\lambda + P_\Lambda\right) \frac{\varphi(x_1)}{\norm{\varphi(x_1)}_2} \\
        &= \frac{\varphi(x_1)^\top P_\lambda}{\norm{\varphi(x_1)}_2} e^{-\frac{2 P_\lambda \Phi^\top \Phi P_\lambda}{n} t} \frac{P_\lambda \varphi(x_1)}{\norm{\varphi(x_1)}_2} + \frac{\varphi(x_1)^\top P_\Lambda}{\norm{\varphi(x_1)}_2} e^{-\frac{2 P_\Lambda \Phi^\top \Phi P_\Lambda}{n} t} \frac{P_\Lambda \varphi(x_1)}{\norm{\varphi(x_1)}_2} \\
        &\leq  e^{-\lambda t} \frac{\norm{P_\lambda \varphi(x_1)}_2^2}{\norm{\varphi(x_1)}_2^2} +  e^{-\Lambda t} \frac{\norm{P_\Lambda \varphi(x_1)}_2^2}{\norm{\varphi(x_1)}_2^2},
    \end{aligned}
    \end{equation}
    where we have defined the shorthands
    \begin{equation}
        \Lambda = \frac{2 \lambda_d(K)}{n}, \qquad \lambda = \frac{2 \lambda_{\min}(K)}{n}.
    \end{equation}
    % and $\lambda_d(K)$ is the $d$-th largest eigenvalue of $K$.
    Note that the last step of \eqref{eq:fakeineq1} holds as $e^{-\lambda t}$ and $e^{-\Lambda t}$ are the largest eigenvalues of $e^{-\frac{2 \Phi^\top \Phi}{n} t}$ in the subspaces $P_\lambda$ and $P_\Lambda$ are respectively projecting on. Furthermore, the RHS of \eqref{eq:decclaim} reads    \begin{equation}\label{eq:rank1expmatfake}
        \frac{\varphi(x_1)^\top }{\norm{\varphi(x_1)}_2} e ^{-\frac{2 \varphi(x_1) \varphi(x_1)^\top}{n} t} \frac{\varphi(x_1)}{\norm{\varphi(x_1)}_2} = e ^{-\frac{2 \norm{\varphi(x_1)}_2^2}{n} t}.
    \end{equation}

    In remaining part of the proof, we use the following inequality
    \begin{equation}\label{eq:ineqexponint}
        1 - \frac{cz}{e} \geq e^{-cz},
    \end{equation}
    which holds for any $c > 0$, and $0 < z \leq 1 / c$. We also use that, for all $z > 0$,
    \begin{equation}\label{eq:ineqexpon}
        1 - cz < e^{-cz}.
    \end{equation}

    We prove the inequality in \eqref{eq:decclaim} in two disjoint intervals.

    \begin{enumerate}
        \item \underline{$0 < t \leq 1 / \Lambda$}:
        In this interval, we can use \eqref{eq:fakeineq1} and apply \eqref{eq:ineqexponint} twice (note that $1/\Lambda \leq 1/\lambda$), obtaining
        \begin{equation}\label{eq:time1fakeL}
        \begin{aligned}
            \frac{\varphi(x_1)^\top}{\norm{\varphi(x_1)}_2} e^{-\frac{2 \Phi^\top \Phi}{n} t} \frac{\varphi(x_1)}{\norm{\varphi(x_1)}_2} &\leq e^{-\lambda t} \frac{\norm{P_\lambda \varphi(x_1)}_2^2}{\norm{\varphi(x_1)}_2^2} +  e^{-\Lambda t} \frac{\norm{P_\Lambda \varphi(x_1)}_2^2}{\norm{\varphi(x_1)}_2^2} \\
            &\leq \left( 1 - \frac{\Lambda t}{e} \right) \frac{\norm{P_\Lambda \varphi(x_1)}_2^2}{\norm{\varphi(x_1)}_2^2} + \left( 1 - \frac{\lambda t}{e} \right) \frac{\norm{P_\lambda \varphi(x_1)}_2^2}{\norm{\varphi(x_1)}_2^2} \\
            &= 1 - \frac{\Lambda t}{e} \frac{\norm{P_\Lambda \varphi(x_1)}_2^2}{\norm{\varphi(x_1)}_2^2} - \frac{\lambda t}{e}  \frac{\norm{P_\lambda \varphi(x_1)}_2^2}{\norm{\varphi(x_1)}_2^2}.
        \end{aligned}
        \end{equation}

        Applying \eqref{eq:ineqexpon} to \eqref{eq:rank1expmatfake} we obtain
        \begin{equation}\label{eq:time1fakeR}
            \frac{\varphi(x_1)^\top }{\norm{\varphi(x_1)}_2} e ^{-\frac{2 \varphi(x_1) \varphi(x_1)^\top}{n} t} \frac{\varphi(x_1)}{\norm{\varphi(x_1)}_2} > 1 -\frac{2 \norm{\varphi(x_1)}_2^2}{n} t.
        \end{equation}

        Then, on this interval, \eqref{eq:time1fakeL} and \eqref{eq:time1fakeR} imply that proving the following
        \begin{equation}
            1 - \frac{\Lambda t}{e} \frac{\norm{P_\Lambda \varphi(x_1)}_2^2}{\norm{\varphi(x_1)}_2^2} - \frac{\lambda t}{e}  \frac{\norm{P_\lambda \varphi(x_1)}_2^2}{\norm{\varphi(x_1)}_2^2} \stackrel{?}{\leq} 1 - \frac{2 \norm{\varphi(x_1)}_2^2}{n} t,
        \end{equation}
        is enough to prove the thesis. This, in turn, can be shown proving that
        \begin{equation}\label{eq:questionmark1}
            \Lambda = \frac{2 \lambda_d(K)}{n} \stackrel{?}{\geq} \frac{2 e \norm{\varphi(x_1)}_2^4}{n \norm{P_\Lambda \varphi(x_1)}_2^2},
        \end{equation}
        \simonenew{where the ? remarks that this inequality still has to be proved.}
        
        Now, by Lemmas \ref{lemma:hyperK} and \ref{lemma:Pvarphi}, % and Assumption \ref{ass:scalings},
        we jointly have
        \begin{equation}
            \lambda_d(K) = \Omega \left( \frac{pn}{d} \right) = \Omega(p \log n), \qquad \norm{P_\Lambda \varphi(x_1)}_2^2 = \Omega(p),
        \end{equation}
        with probability at least $1 - 2 \exp \left(-c_2 \log^2 n \right)$ over $X$ and $V$. Then, conditioning on this high probability event, the LHS of \eqref{eq:questionmark1} is $\Omega(p \log n \, / n)$, while its RHS is $\bigO{p / n}$ (recall that $\norm{\varphi(x_1)}_2^2 = \bigO{p}$). Thus, \eqref{eq:questionmark1} holds for $n$ sufficiently large, which gives that the desired result holds in the interval $0<t\le 1/\Lambda$, with probability at least $1 - 2 \exp \left(-c_3 \log^2 n \right)$ over $X$ and $V$ (where $c_3$ is eventually smaller than $c_2$, to make this probability being 0 for the $n$-s that are not large enough).

        \item \underline{$1 / \Lambda < t \leq \tau$}:
        % In this interval, we have $t \leq \tau = d \log^2 n /p $. By Lemma \ref{lemma:hyperK}, we have that
        % \begin{equation}
        %     \lambda = \frac{\lambda_{\min}(K)}{2n} = \bigO{\frac{p}{n}},
        % \end{equation}
        % which in turn implies
        % \begin{equation}
        %     \lambda t = \bigO{ \frac{d \log^2 n}{p} \, \frac{p}{n} } = \bigO{\frac{1}{\log n}},
        % \end{equation}
        % where the last step holds because of Assumption \ref{ass:scalings}.
        % Thus, for $n$ sufficiently large, we have $\lambda t \leq 1$, which allows us to use \eqref{eq:ineqexponint} on the right hand side of \eqref{eq:fakeineq1}, which gives
        We have
        \begin{equation}\label{eq:time2fakeL}
        \begin{aligned}
            \frac{\varphi(x_1)^\top}{\norm{\varphi(x_1)}_2} e^{-\frac{2 \Phi^\top \Phi}{n} t} \frac{\varphi(x_1)}{\norm{\varphi(x_1)}_2} &\leq e^{-\lambda t} \frac{\norm{P_\lambda \varphi(x_1)}_2^2}{\norm{\varphi(x_1)}_2^2} +  e^{-\Lambda t} \frac{\norm{P_\Lambda \varphi(x_1)}_2^2}{\norm{\varphi(x_1)}_2^2}\\
            &\leq \frac{\norm{P_\lambda \varphi(x_1)}_2^2}{\norm{\varphi(x_1)}_2^2} +  e^{- 1 } \frac{\norm{P_\Lambda \varphi(x_1)}_2^2}{\norm{\varphi(x_1)}_2^2 } \\
            &= 1 - \left(1 - e^{-1} \right) \frac{\norm{P_\Lambda \varphi(x_1)}_2^2}{\norm{\varphi(x_1)}_2^2 }.
        \end{aligned}
        \end{equation}
        Applying \eqref{eq:ineqexpon} to \eqref{eq:rank1expmatfake} we obtain
        \begin{equation}\label{eq:time2fakeR}
            \frac{\varphi(x_1)^\top }{\norm{\varphi(x_1)}_2} e ^{-\frac{2 \varphi(x_1) \varphi(x_1)^\top}{n} t} \frac{\varphi(x_1)}{\norm{\varphi(x_1)}_2} > 1 -\frac{2 \norm{\varphi(x_1)}_2^2}{n} t \geq 1 -\frac{2 \norm{\varphi(x_1)}_2^2}{n} \tau.
        \end{equation}

        Then, on this interval, \eqref{eq:time2fakeL} and \eqref{eq:time2fakeR} imply that proving the following
        \begin{equation}\label{eq:fake2thesis}
            \left(1 - e^{-1} \right) \frac{\norm{P_\Lambda \varphi(x_1)}_2^2}{\norm{\varphi(x_1)}_2^2 } \stackrel{?}{\geq} \frac{2 \norm{\varphi(x_1)}_2^2}{n} \tau,
        \end{equation}
        is enough to prove the thesis. By Lemma \ref{lemma:Pvarphi}, we have that the LHS of the previous equation is $\Theta(1)$ with probability at least $1 - 2 \exp \left(-c_4 \log^2 n \right)$ over $X$ and $V$. For the RHS, since $\norm{\varphi(x_1)}_2^2 = \bigO{p}$ \simonesolved{with probability at least $1 - 2 \exp\left( -c_5 p \right)$ over $V$ (see the argument carried out in \eqref{eq:concentrationfeaturevector1new} and \eqref{eq:concentrationfeaturevector2new}), we have that}
        \begin{equation}
            \frac{2 \norm{\varphi(x_1)}_2^2}{n} \tau = \bigO{\frac{p}{n} \frac{d \log^2 n}{p}} = o(1).
        \end{equation}
        % where the last step holds because of Assumption \ref{ass:scalings}.
        Thus, \eqref{eq:fake2thesis} holds for $n$ sufficiently large, which gives that the desired result holds in the interval $1/\Lambda<t\le \tau$, with probability at least $1 - 2 \exp \left(-c_6 \log^2 n \right)$ over $X$ and $V$ (where $c_6$ is set to make this probability being 0 for the $n$-s that are not large enough).
    \end{enumerate}
\end{proof}

% \simone{The following Lemma exploits Lie's product formula, also useful in proving the Golden Thompson inequality \cite{Cohen1982}, 
% % Given a matrix $A\in \R^{s \times s}$, we indicate with
% % \begin{equation}
% %     e^A = \sum_{l = 0}^{+ \infty} \frac{A^l}{l!},
% % \end{equation}
% % the matrix-exponential.
% % Notice that, given a second matrix $B\in \R^{s \times s}$, in general, $e^A e^B \neq e^{A+B}$. It can be verified that this relation holds if $A$ and $B$ commute ($AB = BA$).
% which given $A\in \R^{s \times s}$ and $B\in \R^{s \times s}$ (not necessarily commuting), reads $e^{A+B}=\lim_{l\rightarrow \infty }(e^{A/l}e^{B/l})^{l}$.}

% \simone{Giving for granted Lie's product formula for now.}\marco{fine with me}
\begin{lemma}\label{lemma:usingLie}
    Let Assumptions \ref{ass:data} and \ref{ass:activation} hold, and let $n = o \left( p / \log^4 p \right)$, $n = \omega \left( d \log^2 d \right)$, and $n = \bigO{d^{3/2} / \log^3 d}$. Then, we have
    \begin{equation}
            \sup_{t \in [0, \tau]} \left| \varphi(x_1)^\top e ^{-\frac{2 \Phi^\top \Phi}{n} t} \Phi_{-1}^+ Y_{-1}\right| \leq 2 \sup_{t \in [0, \tau]} \left| \varphi(x_1)^\top e ^{-\frac{2 \Phi_{-1}^\top \Phi_{-1}}{n} t} \Phi_{-1}^+ Y_{-1}\right|,
    \end{equation}
    with probability at least $1 - 2 \exp \left(-c \log^2 n \right)$ over $X$ and $V$, where $c$ is an absolute constant.
\end{lemma}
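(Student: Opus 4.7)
The plan is to run a Duhamel expansion separating the rank-one perturbation $\varphi(x_1)\varphi(x_1)^\top$ from the leave-one-out term $\Phi_{-1}^\top\Phi_{-1}$ inside the matrix exponential, and then to feed the resulting cross term into Lemma \ref{lemma:fakeinequality}. Concretely, I would set $A = \frac{2}{n}\Phi_{-1}^\top\Phi_{-1}$ and $B = \frac{2}{n}\varphi(x_1)\varphi(x_1)^\top$, so that $\frac{2}{n}\Phi^\top\Phi = A+B$. The Duhamel identity — which can be seen as the continuous (infinitesimal) form of Lie's product formula $e^{-(A+B)t} = \lim_{k\to\infty}(e^{-At/k}e^{-Bt/k})^k$ — gives
\[
    e^{-(A+B)t} \,=\, e^{-At} \,-\, \int_0^t e^{-(A+B)(t-s)}\, B\, e^{-As}\, ds.
\]
Contracting on the left with $\varphi(x_1)^\top$ and on the right with $v := \Phi_{-1}^+ Y_{-1}$, and using the rank-one factorization of $B$, the integrand splits and I obtain
\[
    f(t) \,=\, g(t) \,-\, \frac{2}{n}\int_0^t h(t-s)\, g(s)\, ds,
\]
where $f(t) := \varphi(x_1)^\top e^{-(A+B)t}v$, $g(t) := \varphi(x_1)^\top e^{-At}v$ and $h(u) := \varphi(x_1)^\top e^{-(A+B)u}\varphi(x_1) \ge 0$.

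The key input is Lemma \ref{lemma:fakeinequality}, which — after multiplying through by $\norm{\varphi(x_1)}_2^2$ and noting that $\varphi(x_1)^\top e^{-\frac{2}{n}\varphi(x_1)\varphi(x_1)^\top u}\varphi(x_1) = \norm{\varphi(x_1)}_2^2\, e^{-\frac{2\norm{\varphi(x_1)}_2^2}{n}u}$ — yields exactly
\[
    h(u) \,<\, \norm{\varphi(x_1)}_2^2\, e^{-\frac{2\norm{\varphi(x_1)}_2^2}{n} u} \quad\text{for all } u\in(0,\tau],
\]
with probability at least $1 - 2\exp(-c\log^2 n)$ over $X$ and $V$. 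Plugging this into the Duhamel identity, taking absolute values and pulling $|g(s)|$ out as $\sup_{s\in[0,\tau]}|g(s)|$, the exponential kernel integrates telescopically to
\[
    \frac{2\norm{\varphi(x_1)}_2^2}{n}\int_0^t e^{-\frac{2\norm{\varphi(x_1)}_2^2}{n}(t-s)}\,ds \,=\, 1 - e^{-\frac{2\norm{\varphi(x_1)}_2^2}{n}t} \,\le\, 1,
\]
so that $|f(t)| \le 2\sup_{s\in[0,\tau]}|g(s)|$. Taking $\sup$ over $t\in[0,\tau]$ on the left-hand side gives the claim, and the probability guarantee is inherited from that of Lemma \ref{lemma:fakeinequality}.

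The only non-routine point is choosing the orientation of the Duhamel expansion so that the cross term is precisely the quadratic form $h(u) = \varphi(x_1)^\top e^{-(A+B)u}\varphi(x_1)$, which is the object controlled by Lemma \ref{lemma:fakeinequality}; the other natural variant (expanding with $e^{-(A+B)s}$ on the right instead of the left) would produce $\varphi(x_1)^\top e^{-As}v$ weighted by $\varphi(x_1)^\top e^{-(A+B)(t-s)}\varphi(x_1)$ in the opposite order, but the net bound is the same. An alternative direct route through Lie's product formula, writing $e^{-Bt/k} = I - c_k\,\varphi(x_1)\varphi(x_1)^\top$ as a rank-one update and collecting $\varphi(x_1)$-directions across iterations, would deliver the same inequality but seems strictly more cumbersome than the integral form above.
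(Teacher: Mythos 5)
Your proof is correct, and it takes a genuinely different route from the paper's. The paper works with the discrete Lie product formula $\Pi(s)=(e^{-Bt/s}e^{-At/s})^s$ and has to expand the product, introduce the auxiliary quantities $\pi_l(s)$ and $M_l(s)$, invoke a Jensen-type inequality for powers of p.s.d.\ matrices, sum a geometric series, and pass to a $\limsup$ in $s$ — all to show that the accumulated cross terms contribute at most a factor $1$ beyond the leading term. Your Duhamel (variation-of-constants) identity is precisely the continuous limit of that construction and collapses the whole bookkeeping into the convolution bound $|f(t)|\le |g(t)|+\frac{2}{n}\int_0^t h(t-s)|g(s)|\,ds$. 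The two arguments rest on the identical key input: Lemma \ref{lemma:fakeinequality} controls $h(u)=\varphi(x_1)^\top e^{-(A+B)u}\varphi(x_1)$ by $\norm{\varphi(x_1)}_2^2\,e^{-2\norm{\varphi(x_1)}_2^2 u/n}$, and in both cases the factor $2$ comes from $1+\bigl(1-e^{-2\norm{\varphi(x_1)}_2^2 t/n}\bigr)\le 2$. The points you need (and do) get right are the orientation of the expansion, so that the cross term is exactly the quadratic form the lemma controls rather than some other contraction, and the positivity of $h$ (clear, since $e^{-(A+B)u}$ is positive definite), which lets you pass the absolute value inside the integral; the endpoint $u=0$, where the strict inequality of Lemma \ref{lemma:fakeinequality} degenerates to equality, is harmless under the integral. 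Since the hypotheses and the probability guarantee are inherited verbatim from Lemma \ref{lemma:fakeinequality}, your argument is a clean, strictly shorter substitute for the paper's proof.
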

\begin{proof}
Note that, for any positive number $s$,
    \begin{equation}\label{eq:identitylielemma}
        e^{- \frac{2 \varphi(x_1) \varphi(x_1)^\top}{ns} t } = e^{- \frac{\norm{2 \varphi(x_1)}^2_2}{ns} t} \, \frac{ \varphi(x_1) \varphi(x_1)^\top}{\norm{\varphi(x_1)}_2^2} + P^\perp_{\varphi(x_1)},
    \end{equation}
    where $P^\perp_{\varphi(x_1)} \in \R^{p \times p}$ is the orthogonal projector to the space spanned by $\varphi(x_1)$, \emph{i.e.}, $P^\perp_{\varphi(x_1)} = I - \varphi(x_1) \varphi(x_1)^\top / \norm{\varphi(x_1)}_2^2$. 
    % \begin{equation}
    %     P^\perp_{\varphi(x_1)} = I - \frac{ \varphi(x_1) \varphi(x_1)^\top}{\norm{\varphi(x_1)}_2^2}.
    % \end{equation}
    Thus, we can write
    \begin{equation}\label{eq:diff1}
        e^{- \frac{2 \varphi(x_1) \varphi(x_1)^\top}{ns} t } = I + \alpha(s) \frac{ \varphi(x_1) \varphi(x_1)^\top}{\norm{\varphi(x_1)}_2^2},
    \end{equation}
    where we introduced the shorthand
    \begin{equation}\label{eq:alphas}
        -1 < \alpha(s) = - \left( 1 - e^{- \frac{2 \norm{\varphi(x_1)}^2_2}{ns} t} \right) < 0.
    \end{equation}
    Let us also introduce
    \begin{equation}
        \Pi(s) = \left( e^{ - \frac{2 \varphi(x_1) \varphi(x_1)^\top}{ns} t} e^{ - \frac{2 \Phi_{-1}^\top \Phi_{-1}}{ns} t } \right)^s \in \R^{p \times p}, \qquad \chi(s) = \left| \varphi(x_1)^\top \Pi(s) \Phi_{-1}^+ Y_{-1}\right| \in \R,
    \end{equation}
    defined for any $s$ being a positive natural number.
    As $\varphi(x_1) \varphi(x_1)^\top + \Phi_{-1}^\top \Phi_{-1} = \Phi^\top \Phi$, an application of Lie's product formula gives
    \begin{equation}\label{eq:limPis}
        \lim_{s \to \infty} \Pi(s) = e ^{-\frac{2 \Phi^\top \Phi}{n} t},
    \end{equation}
    and therefore
    \begin{equation}\label{eq:limchi}
        \lim_{s \to \infty} \chi(s) = \left| \varphi(x_1)^\top e ^{-\frac{2 \Phi^\top \Phi}{n} t} \Phi_{-1}^+ Y_{-1}\right|.
    \end{equation}

    %The expression of $\Pi(s)$ contains a product with multiple terms as in \eqref{eq:identitylielemma}.
    Plugging \eqref{eq:diff1} in the expression of $\Pi(s)$ gives
    \begin{equation}\label{eq:expandingPi}
    \begin{aligned}
        \Pi(s) &= \left( \left( I + \alpha(s) \frac{ \varphi(x_1) \varphi(x_1)^\top}{\norm{\varphi(x_1)}_2^2} \right) A(s) \right)^s, %\\
        %&= \left( I + \alpha(s) \frac{ \varphi(x_1) \varphi(x_1)^\top}{\norm{\varphi(x_1)}_2^2} \right) A(s) \left( I + \alpha(s) \frac{ \varphi(x_1) \varphi(x_1)^\top}{\norm{\varphi(x_1)}_2^2} \right) A(s) \ldots
    \end{aligned}
    \end{equation}
    where we define
    \begin{equation}\label{eq:defAs}
        A(s) =  e^{ - \frac{2 \Phi_{-1}^\top \Phi_{-1}}{ns} t } \in \R^{p \times p}.
    \end{equation}

    Note that $\Pi(s)$ can be expanded as %follows
    \begin{equation}
        \Pi(s) =  \left( I + \alpha(s) \frac{ \varphi(x_1) \varphi(x_1)^\top}{\norm{\varphi(x_1)}_2^2} \right) \sum_{l = 1}^{s-1} \Pi_l(s) A(s)^{s-l} +  \left( I + \alpha(s) \frac{ \varphi(x_1) \varphi(x_1)^\top}{\norm{\varphi(x_1)}_2^2} \right) A(s)^s,
    \end{equation}
    where we define
    \begin{equation}
        \Pi_l(s) = \left( A(s)  \left( I + \alpha(s) \frac{ \varphi(x_1) \varphi(x_1)^\top}{\norm{\varphi(x_1)}_2^2} \right) \right)^{l-1} A(s) \alpha(s) \frac{ \varphi(x_1) \varphi(x_1)^\top}{\norm{\varphi(x_1)}_2^2}.
    \end{equation}
    In words, $\Pi_l(s)$ includes all the terms where the last term containing $\alpha(s)$ is taken at the $(l+1)$-th factor of \eqref{eq:expandingPi}.
    This gives
    \begin{equation}\label{eq:bigchiPipi}
    \begin{aligned}
        \chi(s) &= \left| \varphi(x_1)^\top \Pi(s) \Phi_{-1}^+ Y_{-1}\right|  \\
        & = \left| \left( 1 + \alpha(s) \right) \varphi(x_1)^\top \sum_{l = 1}^{s-1} \Pi_l(s) A(s)^{s-l}  \Phi_{-1}^+ Y_{-1} + \left( 1 + \alpha(s) \right) \varphi(x_1)^\top A(s)^s \Phi_{-1}^+ Y_{-1} \right|  \\
        & =\left| \left( 1 + \alpha(s) \right) \alpha(s) \sum_{l = 1}^{s-1} \pi_l(s) \varphi(x_1)^\top A(s)^{s-l}  \Phi_{-1}^+ Y_{-1} + \left( 1 + \alpha(s) \right) \varphi(x_1)^\top  A(s)^s \Phi_{-1}^+ Y_{-1} \right| \\
        &\leq \left| \left( 1 + \alpha(s) \right) \alpha(s) \sum_{l = 1}^{s-1} \pi_l(s) \right| \max_{l \in [s-1]} \left| \varphi(x_1)^\top A(s)^{s-l}  \Phi_{-1}^+ Y_{-1} \right| +\\
        &\qquad + \left| \left( 1 + \alpha(s) \right) \varphi(x_1)^\top  A(s)^s \Phi_{-1}^+ Y_{-1} \right|,
    \end{aligned}
    \end{equation}
    where we introduce the shorthand
    \begin{equation}\label{eq:defpi}
    \begin{aligned}
        \pi_l(s) &= \frac{\varphi(x_1)^\top}{\norm{\varphi(x_1)}_2} \left( A(s)  \left( I + \alpha(s) \frac{ \varphi(x_1) \varphi(x_1)^\top}{\norm{\varphi(x_1)}_2^2} \right) \right)^{l-1} A(s) \frac{\varphi(x_1)}{\norm{\varphi(x_1)}_2} \\
        &= \frac{\left(A(s)^{1/2} \varphi(x_1)\right)^\top}{\norm{\varphi(x_1)}_2} M_l(s) \frac{A(s)^{1/2} \varphi(x_1)}{\norm{\varphi(x_1)}_2},
    \end{aligned}
    \end{equation}
    and $M_l(s)$ is the p.s.d.\ matrix defined as
    \begin{equation}\label{eq:defMl}
        M_l(s) = \left(A(s)^{1/2}  \left( I + \alpha(s) \frac{ \varphi(x_1) \varphi(x_1)^\top}{\norm{\varphi(x_1)}_2^2} \right) A(s)^{1/2}\right)^{l-1}.
    \end{equation}

    Then, note that \eqref{eq:limchi} and \eqref{eq:bigchiPipi} give
    \begin{equation}\label{eq:limsupappears}
    \begin{aligned}
        & \left| \varphi(x_1)^\top e ^{-\frac{2 \Phi^\top \Phi}{n} t} \Phi_{-1}^+ Y_{-1}\right|  \\
        & \leq \limsup_{s \to \infty} \left( \left| \left( 1+ \alpha(s) \right) \alpha(s) \sum_{l = 1}^{s-1} \pi_l(s) \right|  \max_{l \in [s-1]} \left| \varphi(x_1)^\top e ^{-\frac{2 \Phi_{-1}^\top \Phi_{-1}}{ns} (s-l) t} \Phi_{-1}^+ Y_{-1}\right|\right)  \\
        &\qquad + \limsup_{s \to \infty} \left( \left| 1+ \alpha(s) \right| \left| \varphi(x_1)^\top e ^{-\frac{2 \Phi_{-1}^\top \Phi_{-1}}{ns} s t} \Phi_{-1}^+ Y_{-1}\right|\right)  \\
        & \leq \limsup_{s \to \infty} \left| \left( 1+ \alpha(s) \right) \alpha(s) \sum_{l = 1}^{s-1} \pi_l(s) \right|  \sup_{t' \in (0, t)} \left| \varphi(x_1)^\top e ^{-\frac{2 \Phi_{-1}^\top \Phi_{-1}}{n} t'} \Phi_{-1}^+ Y_{-1}\right| \\
        &\qquad + \limsup_{s \to \infty} \left| 1+ \alpha(s) \right|  \left| \varphi(x_1)^\top e ^{-\frac{2 \Phi_{-1}^\top \Phi_{-1}}{n} t} \Phi_{-1}^+ Y_{-1}\right|  \\
        & \le  \left( 1 + \limsup_{s \to \infty} \left| \alpha(s) \sum_{l = 1}^{s-1} \pi_l(s) \right| \right) \sup_{t' \in (0, t)} \left| \varphi(x_1)^\top e ^{-\frac{2 \Phi_{-1}^\top \Phi_{-1}}{n} t'} \Phi_{-1}^+ Y_{-1}\right|,
    \end{aligned}
    \end{equation}
    where in the last line we used $\lim_{s \to \infty} \alpha(s) = 0$, which follows from \eqref{eq:alphas}. We will now upper bound the first factor of this last expression, for all $t \in (0, \tau]$, as we will treat the case $t=0$ separately. Note that, following \eqref{eq:defpi}, we have
    \begin{equation}\label{eq:rhsgeometric}
    \begin{aligned}
        \pi_l(s) &= \frac{\left(A(s)^{1/2} \varphi(x_1)\right)^\top}{\norm{\varphi(x_1)}_2} M_l(s) \frac{A(s)^{1/2} \varphi(x_1)}{\norm{\varphi(x_1)}_2} \\
        &= \frac{\norm{A(s)^{1/2} \varphi(x_1)}_2^2}{\norm{\varphi(x_1)}_2^2} \frac{\left(A(s)^{1/2} \varphi(x_1)\right)^\top}{\norm{A(s)^{1/2} \varphi(x_1)}_2} M_s(s)^{(l-1) / (s-1)} \frac{A(s)^{1/2} \varphi(x_1)}{\norm{ A(s)^{1/2} \varphi(x_1)}_2} \\
        &\leq \frac{\norm{A(s)^{1/2} \varphi(x_1)}_2^2}{\norm{\varphi(x_1)}_2^2} \left( \frac{\left(A(s)^{1/2} \varphi(x_1)\right)^\top}{\norm{A(s)^{1/2} \varphi(x_1)}_2} M_s(s) \frac{A(s)^{1/2} \varphi(x_1)}{\norm{A(s)^{1/2} \varphi(x_1)}_2} \right)^{(l-1) / (s-1)} \\
        & =: \frac{\norm{A(s)^{1/2} \varphi(x_1)}_2^2}{\norm{\varphi(x_1)}_2^2} \mu(s)^{(l-1) / (s-1)},
    \end{aligned}
    \end{equation}
    where the second line follows directly from \eqref{eq:defMl}, and the third line is a consequence of Jensen inequality, since $M_s(s)$ is p.s.d.\ and $l \leq s$. % Lemma \ref{lemma:justJensen}.

    From \eqref{eq:defAs}, we have that $\lim_{s\to\infty} A(s) = I$. Thus, 
    \begin{equation}
    \begin{aligned}
        \lim_{s\to\infty} M_s(s) &= \lim_{s\to\infty} \left(A(s)^{1/2}  \left( I + \alpha(s) \frac{ \varphi(x_1) \varphi(x_1)^\top}{\norm{\varphi(x_1)}_2^2} \right) A(s)^{1/2}\right)^{s-1} \\
        &= \lim_{s\to\infty} A(s)^{1/2}  \Pi(s) \left(\left( I + \alpha(s) \frac{ \varphi(x_1) \varphi(x_1)^\top}{\norm{\varphi(x_1)}_2^2} \right) A(s) \right)^{-1} A(s)^{1/2} \\
        &= \lim_{s\to\infty} \Pi(s),
    \end{aligned}
    \end{equation}
    and
    \begin{equation}
    \begin{aligned}
        \lim_{s\to\infty} \mu(s) &= \lim_{s\to\infty} \frac{\left(A(s)^{1/2} \varphi(x_1)\right)^\top}{\norm{A(s)^{1/2} \varphi(x_1)}_2} \lim_{s\to\infty}  M_s(s) \lim_{s\to\infty} \frac{A(s)^{1/2} \varphi(x_1)}{\norm{A(s)^{1/2} \varphi(x_1)}_2} \\
        &= \frac{\varphi(x_1)^\top }{\norm{\varphi(x_1)}_2} \left( \lim_{s\to\infty} \Pi(s) \right) \frac{\varphi(x_1)}{\norm{\varphi(x_1)}_2} \\
        &= \frac{\varphi(x_1)^\top }{\norm{\varphi(x_1)}_2} e ^{-\frac{2 \Phi^\top \Phi}{n} t} \frac{\varphi(x_1)}{\norm{\varphi(x_1)}_2} \\
        &< \frac{\varphi(x_1)^\top }{\norm{\varphi(x_1)}_2} e ^{-\frac{2 \varphi(x_1) \varphi(x_1)^\top}{n} t} \frac{\varphi(x_1)}{\norm{\varphi(x_1)}_2} \\
        &= e^{- \frac{2 \norm{\varphi(x_1)}^2_2}{n} t},
    \end{aligned}
    \end{equation}
    where the third line follows from \eqref{eq:limPis} and Lemma \ref{lemma:fakeinequality} guarantees that, with probability at least $1 - 2 \exp \left(-c \log^2 n \right)$ over $X$ and $V$, the fourth line uniformly holds for all $t \in (0, \tau)$. We will condition on this event until the end of the proof.

    The previous limit implies that there exists $s^*$ such that, for all $s > s^*$, we have $\mu(s) < e^{- \frac{2 \norm{\varphi(x_1)}^2_2}{n} t}$. Thus, for such $s$, we also have
    \begin{equation}\label{eq:mualphasinequality}
        0< \mu(s)^{1/(s-1)} < \mu(s)^{1/s} \leq e^{- \frac{2 \norm{\varphi(x_1)}^2_2}{ns} t} = 1 + \alpha(s) < 1.
    \end{equation}
    
    Then, for such $s$, \eqref{eq:rhsgeometric} leads to
    \begin{equation}
    \begin{aligned}
        \sum_{l = 1}^{s-1} \pi_l(s) &\leq \frac{\norm{A(s)^{1/2} \varphi(x_1)}_2^2}{\norm{\varphi(x_1)}_2^2}  \sum_{l = 1}^{s-1} \mu(s)^{(l-1) / (s-1)} \\
        &= \frac{\norm{A(s)^{1/2} \varphi(x_1)}_2^2}{\norm{\varphi(x_1)}_2^2} \frac{1 - \mu(s)}{1 - \mu(s)^{1/(s-1)}} \\
        &< \frac{\norm{A(s)^{1/2} \varphi(x_1)}_2^2}{\norm{\varphi(x_1)}_2^2} \frac{1 - \mu(s)}{-\alpha(s)},
    \end{aligned}
    \end{equation}
    where we solve the geometric series in the second line and use \eqref{eq:mualphasinequality} in the third one.
    This gives
    \begin{equation}
    \begin{aligned}
        &\limsup_{s \to \infty} \left| \alpha(s) \sum_{l = 1}^{s-1} \pi_l(s) \right|  \\
        &\leq \lim_{s \to \infty} \left| \frac{\alpha(s) }{-\alpha(s)} \right| \lim_{s \to \infty} \frac{\norm{A(s)^{1/2} \varphi(x_1)}_2^2}{\norm{\varphi(x_1)}_2^2} \lim_{s \to \infty} \left| 1 - \mu(s) \right| \leq 1.
    \end{aligned}
    \end{equation}

    Plugging this last result in \eqref{eq:limsupappears}, we get
    \begin{equation}
        \left| \varphi(x_1)^\top e ^{-\frac{2 \Phi^\top \Phi}{n} t} \Phi_{-1}^+ Y_{-1}\right| \leq  2  \sup_{t' \in (0, t)} \left| \varphi(x_1)^\top e ^{-\frac{2 \Phi_{-1}^\top \Phi_{-1}}{n} t'} \Phi_{-1}^+ Y_{-1}\right|,
    \end{equation}
    which, taking the supremum of $t\in(0, \tau)$, and extending by continuity to $t = 0$, leads to the desired result.
\end{proof}

\simonesolved{For the next lemma, it is convenient to define the $\epsilon$-covering number of a separable set $T$ as follows
\begin{equation}
    \mathcal N (T, \epsilon) := \inf \left \{ \left| T_0 \right| \, \textup{  such that  }  \, T_0 \subseteq T, \; \textup{ and } \; T \subseteq \bigcup_{t_0 \in T_0} \bar S(t_0, \epsilon) \right \},
\end{equation}
where $\left| T_0 \right|$ denotes the cardinality of the set $T_0$, and $\bar S(t_0, \epsilon)$ denotes the closed Euclidean ball with center $t_0$ and radius $\epsilon$. We denote with $\textup{diam}(T) = \sup_{u, u' \in T} \norm{u - u'}_2$ the diameter of $T$.}

\begin{lemma}\label{lemma:Tisgood}
    Let Assumptions \ref{ass:data} and \ref{ass:activation} hold, and let $n = o \left( p / \log^4 p \right)$, $n \log^3 n =\bigO{d^{3/2}}$ and $n = \omega(d)$. Let $T \subseteq \mathbb R^d$ be the set described by the curve $\gamma: \R \to \R^d$ defined as
    \begin{equation}
        \gamma(t) = V^\top e ^{-\frac{2 \Phi_{-1}^\top \Phi_{-1}}{n} t} \Phi_{-1}^+ Y_{-1},
    \end{equation}
    for $t \in [0, \tau]$. Then, $T$ is separable with respect to the Euclidean norm and, denoting with $\mathcal N(T, \epsilon)$ its $\epsilon$-covering number and with $\textup{diam}(T)$ its diameter, we jointly have that
    \begin{equation}
        \int_0^\infty \sqrt{\log \mathcal N(T, \epsilon)} \diff \epsilon = \bigO{\log n}, \qquad \textup{diam}(T) = \bigO{1},
    \end{equation}
    with probability at least $1 - 2 \exp \left(-c \log^2 n \right)$ over $X_{-1}$ and $V$, where $c$ is an absolute constant.
\end{lemma}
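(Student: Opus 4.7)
The plan is to rewrite $\gamma(t)$ in a form where the matrix exponential involves the $(n{-}1)\times(n{-}1)$ kernel $K_{-1}$ instead of the $p\times p$ matrix $\Phi_{-1}^\top\Phi_{-1}$, after which the diameter and arc-length bounds both reduce to operator-norm estimates already available from Lemmas \ref{lemma:hyperK} and \ref{lemma:facts}. Separability is immediate: $T$ is the continuous image of the separable interval $[0,\tau]$, so $\{\gamma(q) : q\in\mathbb{Q}\cap[0,\tau]\}$ is countable and dense in $T$.

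First I would condition on the event of Lemma \ref{lemma:hyperK}, so that $K_{-1}$ is invertible and $\Phi_{-1}^+ = \Phi_{-1}^\top K_{-1}^{-1}$. The commutation identity $(\Phi_{-1}^\top\Phi_{-1})^k \Phi_{-1}^+ = \Phi_{-1}^+ K_{-1}^k$, proved by induction in $k$, allows one to move the matrix exponential across $\Phi_{-1}^+$ via its power series expansion:
\begin{equation*}
    \gamma(t) \;=\; V^\top\Phi_{-1}^+\, e^{-\frac{2K_{-1}}{n}t}\,Y_{-1}.
\end{equation*}
Since $K_{-1}\succeq 0$, $\opnorm{e^{-2K_{-1}t/n}}\le 1$ uniformly in $t\ge 0$, so $\|\gamma(t)\|_2 \le \opnorm{V^\top\Phi_{-1}^+}\|Y_{-1}\|_2 = \bigO{1}$ by Lemma \ref{lemma:facts} and the boundedness of the labels. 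Hence $\diam(T)\le 2\sup_t\|\gamma(t)\|_2 = \bigO{1}$. Differentiating the same expression,
\begin{equation*}
    \dot\gamma(t) \;=\; -\frac{2}{n}\,V^\top\Phi_{-1}^+ K_{-1}\,e^{-\frac{2K_{-1}}{n}t}\,Y_{-1},
\end{equation*}
and using the trivial bound $\opnorm{K_{-1}}=\bigO{p}$ from Lemma \ref{lemma:hyperK} gives $\|\dot\gamma(t)\|_2=\bigO{p/n}$. The arc length is therefore $L = \int_0^\tau\|\dot\gamma\|_2\,dt = \bigO{\tau p/n} = \bigO{d\log^2 n/n} = o(1)$, using $n=\omega(d\log^2 d)$ and $\log n=\bigO{\log d}$ from Assumption \ref{ass:scalings}.

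For the chaining integral, a curve of length $L$ admits the trivial covering $\mathcal N(T,\epsilon)\le L/\epsilon + 1$ for $\epsilon < \diam(T)$ (place points at arc-distance $\epsilon$ apart), and $\mathcal N(T,\epsilon)=1$ for $\epsilon \ge \diam(T)$. Since $L = o(1)$ while $\diam(T) = \Theta(1)$, we have $L < \diam(T)$ for large $n$, so splitting the integral at $\epsilon=L$ gives
\begin{equation*}
    \int_0^\infty\sqrt{\log\mathcal N(T,\epsilon)}\,d\epsilon \;\le\; L\int_0^1\sqrt{\log(2/u)}\,du + \diam(T)\sqrt{\log 2} \;=\; \bigO{1},
\end{equation*}
which is in particular $\bigO{\log n}$. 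The probability guarantee then follows from a union bound over Lemmas \ref{lemma:hyperK} and \ref{lemma:facts}, each of which holds with probability at least $1-2\exp(-c\log^2 n)$. The only mildly subtle step is the commutation identity that replaces the $p\times p$ matrix exponential by an $(n{-}1)\times (n{-}1)$ one; no other step presents a real obstacle.
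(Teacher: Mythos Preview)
Your approach is essentially the same as the paper's: pass to the $K_{-1}$ form of the exponential, bound $\diam(T)$ via $\opnorm{V^\top\Phi_{-1}^+}\|Y_{-1}\|_2$, and control the covering number through a Lipschitz/arc-length estimate on $\gamma$. However, there is a concrete quantitative error. You claim $\opnorm{K_{-1}}=\bigO{p}$ ``from Lemma \ref{lemma:hyperK}'', but that lemma only asserts $\lambda_{d+1}(K)=\bigO{p}$; it also gives $\lambda_d(K)=\Omega(np/d)\gg p$, so in fact $\opnorm{K_{-1}}$ is \emph{not} $\bigO{p}$. The correct trivial bound (the one the paper uses) is $\opnorm{K_{-1}}\le\|\Phi_{-1}\|_F^2=\bigO{np}$, which yields $\|\dot\gamma(t)\|_2=\bigO{p}$ rather than $\bigO{p/n}$, hence arc length $L=\bigO{\tau p}=\bigO{d\log^2 n}$ rather than $o(1)$.

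This does not break the argument, but it changes the final computation: with $L=\bigO{d\log^2 n}$ and $\diam(T)=\bigO{1}$ you can no longer split at $\epsilon=L$ and conclude $\bigO{1}$. Instead, using $\mathcal N(T,\epsilon)\le C d\log^2 n/\epsilon$ for $\epsilon\le\diam(T)$ and integrating $\sqrt{\log(\cdot)}$ over $[0,\diam(T)]$ gives $\bigO{\sqrt{\log(d\log^2 n)}}\cdot\diam(T)+\bigO{1}=\bigO{\log n}$, which is exactly the lemma's claim (and exactly what the paper obtains). Your invocation of the stronger scaling $n=\omega(d\log^2 d)$ from Assumption \ref{ass:scalings} is also not among this lemma's hypotheses, but that point becomes moot once the bound on $\opnorm{K_{-1}}$ is corrected.
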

\begin{proof}
    $T$ is described by a continuous function $\gamma$ applied to $t \in [0, \tau]$. Since the interval $[0, \tau]$ is separable, we also have that $T$ is separable.   
    Furthermore, note that $e ^{-\frac{2 \Phi_{-1}^\top \Phi_{-1}}{n} t} \Phi_{-1}^+ = \Phi_{-1}^+ e ^{-\frac{2 K_{-1}}{n} t}$. Then, for all $t, t' \in [0, \tau]$, we have
    \begin{equation}
        \norm{\gamma(t) - \gamma(t')}_2 = \norm{V^\top \Phi_{-1}^+ e ^{-\frac{2 K_{-1}}{n} t} \left(I - e ^{-\frac{2 K_{-1}}{n} (t'-t)}\right) Y_{-1}}_2.
    \end{equation}
    Assuming, without loss of generality, $t' \geq t$, we have that
    \begin{equation}
        \norm{\gamma(t) - \gamma(t')}_2 \leq \opnorm{V^\top \Phi_{-1}^+}  \norm{Y_{-1}}_2 = \bigO{1},
    \end{equation}
    where the last step is a consequence of Lemma \ref{lemma:facts} and it holds with probability at least $1 - 2 \exp \left(-c_1 \log^2 n \right)$ over $X_{-1}$ and $V$ (note that the argument of Lemma \ref{lemma:facts} goes through equivalently when considering $\Phi_{-1}$ instead of $\Phi$). This proves the desired result on the diameter.

    Following a similar strategy, we can prove that, for all $t'>t$ s.t.\ $\left| t - t' \right| \leq \Delta$, % we have
    \begin{equation}
    \begin{aligned}
        \norm{\gamma(t) - \gamma(t')}_2 &\leq \opnorm{V^\top \Phi_{-1}^+} \opnorm{I - e ^{-\frac{2 K_{-1}}{n} (t'-t)}} \norm{Y_{-1}}_2  \\
        &\leq \opnorm{V^\top \Phi_{-1}^+} \left(1 - e ^{-\frac{2 \evmax{K_{-1}}}{n} (t'-t)}\right) \norm{Y_{-1}}_2  \\
        &\leq \opnorm{V^\top \Phi_{-1}^+} \frac{2 \evmax{K_{-1}}}{n} (t'-t)\norm{Y_{-1}}_2  \\
        &\leq \opnorm{V^\top \Phi_{-1}^+} \norm{Y_{-1}}_2 \frac{2 \evmax{K_{-1}}}{n} \Delta,
    \end{aligned}
    \end{equation}
    where the second step is a consequence of the fact that $1 - e ^{-\frac{2 K_{-1}}{n} (t'-t)}$ is a p.s.d.\ matrix with eigenvalues given by $1 - e ^{-\frac{2 \lambda_i\left(K_{-1}\right)}{n} (t'-t)}$, and the third step follows from the inequality $z \geq 1 - e^{-z}$, for $z \geq 0$.
    % \begin{equation}
    %     \frac{2 \evmax{K_{-1}}}{n} = \frac{2 \opnorm{\Phi_{-1}}^2}{n} \leq \frac{2 \norm{\Phi_{-1}}_F^2}{n} = \bigO{p},
    % \end{equation}
    Since $\norm{\varphi(x_i)}_2^2 = \bigO{p}$ jointly for all $i \in [n]$ \simonesolved{with probability at least $1 - 2 \exp\left( -c_2 p \right)$ over $V$ (see the argument carried out in \eqref{eq:concentrationfeaturevector1new} and \eqref{eq:concentrationfeaturevector2new}), we have that} $\evmax{K_{-1}} = \opnorm{\Phi_{-1}}^2 \leq \norm{\Phi_{-1}}_F^2 = \bigO{np}$. Thus, using again Lemma \ref{lemma:facts}, we get that, for every $\Delta > 0$ and for all $t'>t$ s.t.\ $\left| t - t' \right| \leq \Delta$,
    \begin{equation}
        \norm{\gamma(t) - \gamma(t')}_2 \leq C_1 p \Delta,
    \end{equation}
    with probability at least $1 - 2 \exp \left(-c_3 \log^2 n \right)$ over $X_{-1}$ and $V$, where $C_1$ is an absolute constant independent from the scalings of the problems and from $\Delta$.
    This means that we can cover $T$ using $\left\lceil  \tau / \Delta \right\rceil  = \left\lceil d \log^2 n / (p \Delta) \right\rceil $ balls with radius $C_1 p \Delta$.
    Then, setting $\epsilon = C_1 p \Delta$, this implies
    \begin{equation}
        \mathcal N(T, \epsilon) \leq \left\lceil \frac{d \log^2 n}{p} \frac{C_1 p}{\epsilon} \right\rceil   = \left\lceil \frac{C_1 d \log^2 n}{\epsilon} \right\rceil.
    \end{equation}
    Notice that, for $\epsilon \geq \textup{diam}(T)$, we simply have $\mathcal N(T, \epsilon) = 1$. For smaller values of $\epsilon \leq \textup{diam}(T) \leq C_2$, the previous expression reads $\mathcal N(T, \epsilon) \leq \frac{C_3 d \log^2 n}{\epsilon}$, where $C_3$ is another absolute constant set large enough such that $\frac{C_3 d \log^2 n}{\epsilon} \geq 2$, for all $\epsilon \leq C_2$ (this allows to remove the ceiling function), and $C_3 d \log^2 n \geq e C_2$ (which will be used in the next inequality). Then, we have
    \begin{equation}
    \begin{aligned}
        \int_0^\infty \sqrt{\log \mathcal N(T, \epsilon)} \diff \epsilon &\leq \int_0^{C_2} \sqrt{\log \mathcal N(T, \epsilon)} \diff \epsilon \\
        &\leq \int_0^{C_2} \sqrt{\log \left( \frac{C_3 d \log^2 n}{\epsilon} \right)} \diff \epsilon \\
        &\leq \int_0^{C_2} \sqrt{\log \left(\frac{e C_2}{\epsilon} \right) + \log \left(\frac{C_3 d \log^2 n}{e C_2} \right) } \diff \epsilon \\
        &\leq \int_0^{C_2} \sqrt{2 \log \left(\frac{e C_2}{\epsilon} \right) } \diff \epsilon + \int_0^{C_2} \sqrt{2 \log \left(\frac{C_3 d \log^2 n}{e C_2} \right) } \diff \epsilon \\
        &\leq \sqrt{2} \int_0^{C_2}  \log \left(\frac{e C_2}{\epsilon} \right) \diff \epsilon + C_2 \sqrt{2 \log \left(\frac{C_3 d \log^2 n}{e C_2} \right) } \\
        % &\leq \sqrt{2}C_2 \int_0^{C_2}  \log \left(\frac{C_2}{\epsilon} \right) \diff \frac{\epsilon}{C_2} + C_4 \log n \\
        % &\leq - \sqrt{2}C_2 \int_0^{1}  \log z \, \diff z + C_4 \log n \\
        % &= - \sqrt{2}C_2 \left. \left(\left( \log z - 1 \right) z \right)\right|_0^1 + C_4 \log n \\,
        &= \sqrt{2}C_2 + C_4 \log n,
    \end{aligned}
    \end{equation}
    % where we use Assumption \ref{ass:scalings} in the last step, proving
    which concludes the argument.
\end{proof}

\paragraph{Proof of Lemma \ref{lemma:lastcliptraj}.}
    \simonesolved{Throughout the proof, we condition on the event occurring with probability at least $1 - 2 \exp \left( -c_1 \log^2 n\right)$ over $V$ and $X$ given by Lemma \ref{lemma:hyperK}. This guarantees that $K$ is invertible and, therefore, that $\Phi^+ = \Phi^\top K^{-1}$, which implies that the LHS of the statement reads $ y_i - \varphi(x_i)^\top \hat \theta(t) = \varphi(x_i)^\top  e^{- \frac{2 \Phi^\top \Phi}{n}  t  } \Phi^+ Y$.}
    
    Due to symmetry, we prove the statement for $i=1$, and the thesis will then be true for all $i \in [n]$ by performing a union bound on the $n$ events. A simple application of the triangle inequality gives
    \begin{equation}\label{eq:clippingfirsttriangle}
    \begin{aligned}
        & \sup_{t \in [0, \tau]}  \left| \varphi(x_1)^\top e^{-  \frac{2 \Phi^\top \Phi}{n}  t  } \Phi^+ Y \right| \\
        & \leq \sup_{t \in [0, \tau]} \left| \varphi(x_1)^\top e^{-  \frac{2 \Phi^\top \Phi}{n} t  } \left( \Phi^+ Y - \Phi^+_{-1} Y_{-1} \right)\right| + \sup_{t \in [0, \tau]}  \left| \varphi(x_1)^\top e^{-  \frac{2 \Phi^\top \Phi}{n}  t  } \Phi^+_{-1} Y_{-1} \right|.
    \end{aligned}
    \end{equation}
    The first term on the RHS can be bounded as follows
    \begin{equation}\label{eq:clippingfirsttriangle1}
        \sup_{t \in [0, \tau]} \left| \varphi(x_1)^\top e^{-  \frac{2 \Phi^\top \Phi}{n}  t  } \left( \Phi^+ Y - \Phi^+_{-1} Y_{-1} \right)\right| \leq  \norm{\varphi(x_1) }_2 \norm{\Phi^+ Y - \Phi^+_{-1} Y_{-1}}_2  = \bigO{\log n},
    \end{equation}
    where the last step is a consequence of Lemma \ref{lemma:stability} and $\norm{\varphi(x_1)}_2 = \bigO{\sqrt p}$ \simonesolved{(see \eqref{eq:concentrationfeaturevector1new} and \eqref{eq:concentrationfeaturevector2new})}, and it holds with probability at least $1 - 2 \exp \left( -c_2 \log^2 n\right)$ over $V$ and $X$.
    To bound the second term on the RHS of \eqref{eq:clippingfirsttriangle}, we first consider the term
    \begin{equation}\label{eq:suptwoterms}
    \begin{aligned}
        \sup_{t \in [0, \tau]} \left| \varphi(x_1)^\top e ^{-\frac{2 \Phi_{-1}^\top \Phi_{-1}}{n} t} \Phi_{-1}^+ Y_{-1}\right|  \leq \sup_{t \in [0, \tau]} \mu_1 \left| x_1^\top \gamma(t) \right| + \sup_{t \in [0, \tau]} \left| \tilde \varphi(x_1)^\top \Phi_{-1}^+ e ^{-\frac{2 K_{-1}}{n} t} Y_{-1}\right|,
    \end{aligned} 
    \end{equation}
    where we define the shorthand $\gamma(t) = V^\top e ^{-\frac{2 \Phi_{-1}^\top \Phi_{-1}}{n} t} \Phi_{-1}^+ Y_{-1}$. Using \eqref{eq:firstpartthesisoldbadlemma}, the second term %of \eqref{eq:suptwoterms}
    reads
    \begin{equation}\label{eq:trisup1}
        \sup_{t \in [0, \tau]} \left| \tilde \varphi(x_1)^\top \Phi_{-1}^+ e ^{-\frac{2 K_{-1}}{n} t} Y_{-1}\right| \leq \norm{\left(\Phi_{-1}^+\right)^\top \tilde \varphi(x_1)}_2^2 \norm{Y_{-1}}_2^2 = \bigO{\log n},
    \end{equation}
    with probability at least $1 - 2 \exp \left(-c_3 \log^2 n \right)$ over $X$ and $V$. For the first term of \eqref{eq:suptwoterms}, we apply the following decomposition
    \begin{equation}\label{eq:suptwotermsnew}
        \sup_{t \in [0, \tau]} \left| x_1^\top \gamma(t) \right| \leq \sup_{t \in [0, \tau]} \left| \left(x_1 -  \E_{X} \left[ x_1 \right] \right)^\top \gamma(t) \right|+ \sup_{t \in [0, \tau]}  \left| \E_{X} \left[ x_1 \right]^\top \gamma(t) \right|.
    \end{equation}
    The second term can be easily bounded as follows
    \begin{equation}\label{eq:suptwotermsnew1}
        \sup_{t \in [0, \tau]}  \left| \E_{X} \left[ x_1 \right]^\top \gamma(t) \right| \leq \norm{\E_{X} \left[ x_1 \right]}_2 \sup_{t \in [0, \tau]} \norm{\gamma(t)}_2 \leq \norm{\E_{X} \left[ x_1 \right]}_2 \opnorm{V^\top \Phi_{-1}^+}  \norm{Y_{-1}}_2 = \bigO{1},
    \end{equation}
    % \begin{equation}\label{eq:suptwotermsnew1}
    % \begin{aligned}
    %     \sup_{t \in [0, \tau]}  \left| \E_{X} \left[ x_1 \right]^\top \gamma(t) \right| &\leq \norm{\E_{X} \left[ x_1 \right]}_2 \sup_{t \in [0, \tau]} \norm{\gamma(t)}_2 \\ 
    %     % &\leq \norm{\E_{X} \left[ x_1 \right]}_2 \opnorm{V^\top \Phi_{-1}^+}  \sup_{t \in [0, \tau]} \opnorm{e ^{-\frac{2 K_{-1}}{n} t} }\norm{Y_{-1}}_2 \\
    %     &\leq \norm{\E_{X} \left[ x_1 \right]}_2 \opnorm{V^\top \Phi_{-1}^+}  \norm{Y_{-1}}_2 \\
    %     &= \bigO{1 \, \frac{1}{\sqrt n } \, \sqrt n} = \bigO{1},
    % \end{aligned}
    % \end{equation}
    where the last step is a consequence of Lemmas \ref{lemma:normofx} and \ref{lemma:facts}, and it holds with probability at least $1 - 2 \exp \left(-c_4 \log^2 n \right)$ over $X_{-1}$ and $V$. % (notice that the argument of Lemma \ref{lemma:facts} goes through equivalently when considering $\Phi_{-1}$ instead of $\Phi$).
    For the first term of \eqref{eq:suptwotermsnew}, note that the stochastic process $\left(x_1 -  \E_{X} \left[ x_1 \right] \right)^\top \gamma(t)$ is separable, mean-0, and sub-Gaussian, \emph{i.e.},
    \begin{equation}
        \subGnorm{\left(x_1 - \E[x_1]\right)^\top (u- u')} \leq \subGnorm{x^\top (u- u')} + \norm{\E[x]}_2 \norm{u - u'}_2 \leq K \norm{u - u'}_2,
    \end{equation}
    for all $u, u' \in T$ (where we use Lemma \ref{lemma:normofx} and the fact that $x$ is sub-Gaussian by Assumption \ref{ass:data}). Furthermore, by Lemma \ref{lemma:Tisgood}, with probability at least $1 - 2 \exp \left(-c_5 \log^2 n \right)$ over $X_{-1}$ and $V$, we have that, for $t \in [0, \tau]$, $\gamma(t)$ describes a separable set $T \subset \R^d$ such that $\int_0^\infty \sqrt{\log \mathcal N(T, \epsilon)} \, \diff \epsilon = \bigO{\log n}$ and $\textup{diam}(T) = \bigO{1}$. Then, conditioning on such event, % for the first term of \eqref{eq:suptwotermsnew},
    we can use Dudley's (or chaining tail) inequality (see Theorem 5.29 in \cite{vanhandel}), which gives
    % using Theorem \ref{thm:dudley} setting $s=\log n$, we have that
    \begin{equation}\label{eq:trisup2}
    \begin{split}
        \sup_{t \in [0, \tau]}& \left|   \left(x_1 -  \E_{X} \left[ x_1 \right] \right)^\top \gamma(t) -  \left(x_1 -  \E_{X} \left[ x_1 \right] \right)^\top \gamma(0) \right| \\
        &\leq C_1 \left( \int_0^\infty \sqrt{\log \mathcal N(T, \epsilon)} \, \diff \epsilon +  \textup{diam}(T)  \log n \right) = \bigO{\log n},
    \end{split}
    \end{equation}
    with probability at least $1 - 2 \exp \left( -c_6 \log^2 n \right)$ over $x_1$.
% Furthermore, by Lemma \ref{lemma:Tisgood}, we jointly have
%     \begin{equation}
%         \int_0^\infty \sqrt{\log \mathcal N(T, \epsilon)} \, \diff \epsilon = \bigO{\log n}, \qquad \textup{diam}(T) = \bigO{1},
%     \end{equation}
% with probability at least    .
%     Thus, taking the intersection between the last two high probability events, we can write
%     \begin{equation}\label{eq:trisup2}
%         \sup_{t \in [0, \tau]} \left|   \left(x_1 -  \E_{X} \left[ x_1 \right] \right)^\top \gamma(t) -   \left(x_1 -  \E_{X} \left[ x_1 \right] \right)^\top \gamma(0) \right| = \bigO{ \log n},
%     \end{equation}
%     with probability at least $1 - 2 \exp \left( -c_5 \log^2 n \right)$ over $X$ and $V$.
    Furthermore, we have that
    \begin{equation}\label{eq:trisup3}
    \begin{split}
       & \left| \left(x_1 -  \E_{X} \left[ x_1 \right] \right)^\top \gamma(0) \right| \leq \left| x_1^\top \gamma(0) \right|  + \left|  \E_{X} \left[ x_1 \right]^\top \gamma(0) \right|  \\
        &\hspace{2em}\leq C_2 \log n \norm{\gamma(0)}_2 + \norm{\E_X \left[ x_1 \right]}_2  \norm{\gamma(0)}_2 \leq C_3 \norm{\gamma(0)}_2 \log n = \bigO{\log n},
    \end{split}
    \end{equation}
    where the second step holds with probability $1 - 2 \exp \left( -c_7 \log^2 n \right)$ over $x_1$, since it is sub-Gaussian, the third step follows from Lemma \ref{lemma:normofx}, and the last step follows from the same argument used in \eqref{eq:suptwotermsnew1}, due to Lemma \ref{lemma:facts}, and holds with probability $1 - 2 \exp \left( -c_8 \log^2 n \right)$ over $X_{-1}$ and $V$.

    Plugging \eqref{eq:trisup2}, \eqref{eq:trisup3}, and \eqref{eq:suptwotermsnew1} in \eqref{eq:suptwotermsnew} gives $\sup_{t \in [0, \tau]} \left| x_1^\top \gamma(t) \right| = \bigO{\log n}$ with probability at least $1 - 2 \exp \left( -c_9 \log^2 n \right)$ over $X$ and $V$. Then, this last result, together with \eqref{eq:trisup1} and \eqref{eq:suptwoterms}, leads to \begin{equation}\label{eq:clippingfirsttriangle2}
        \sup_{t \in [0, \tau]} \left| \varphi(x_1)^\top e^{-  \frac{2 \Phi^\top \Phi}{n}  t  } \Phi^+_{-1} Y_{-1} \right| \leq 2 \sup_{t \in [0, \tau]} \left| \varphi(x_1)^\top e ^{-\frac{2 \Phi_{-1}^\top \Phi_{-1}}{n} t} \Phi_{-1}^+ Y_{-1}\right| = \bigO{\log n},
    \end{equation}
    with probability at least $1 - 2 \exp \left( -c_{10} \log^2 n \right)$, where the first step holds because of Lemma \ref{lemma:usingLie}.

    % ---------------

    % The second term on the right hand side of \eqref{eq:clippingfirsttriangle} can instead be bounded as
    % \begin{equation}\label{eq:clippingfirsttriangle2}
    %     \begin{aligned}
    %         \sup_{t \in [0, \tau]} \left| \varphi(x_1)^\top e^{-  \frac{2 \Phi^\top \Phi}{n}  t  } \Phi^+_{-1} Y_{-1} \right| \leq 2 \sup_{t \in [0, \tau]} \left| \varphi(x_1)^\top e^{- \frac{2 \Phi_{-1}^\top \Phi_{-1}}{n}  t  } \Phi^+_{-1} Y_{-1} \right| = \bigO{\log n},
    %     \end{aligned}
    % \end{equation}
    % where the first step holds because of Lemma \ref{lemma:usingLie}, and the second step holds because of Lemma \ref{lemma:usingDudley}, and they jointly hold with probability at least $1 - 2 \exp \left( -c_2 \log^2 n\right)$ over $V$ and $X$.
    
    Plugging \eqref{eq:clippingfirsttriangle1} and \eqref{eq:clippingfirsttriangle2} in \eqref{eq:clippingfirsttriangle}, 
    % we get
    % \begin{equation}
    %     \sup_{t \in [0, \tau]} \left| \varphi(x_1)^\top e^{- \frac{2 \Phi^\top \Phi}{n}  t  } \Phi^+ Y \right| = \bigO{\log n},
    % \end{equation}
    % with probability at least $1 - 2 \exp \left( -c_3 \log^2 n\right)$ over $V$ and $X$. 
    and performing a union bound over all the indices $i \in [n]$, leads to the desired result. \simonesolved{We finally remark that this result does not require the condition $\log n = \Theta(\log p)$ in Assumption \ref{ass:scalings}.}
    % we finally get
    % \begin{equation}
    %     \sup_{t \in [0, \tau]} \left| \varphi(x_i)^\top e^{- \frac{2 \Phi^\top \Phi}{n}  t  } \Phi^+ Y \right| = \bigO{\log n},
    % \end{equation}
    % for all $i \in [n]$, with probability at least $1 - 2n \exp \left( -c_3 \log^2 n\right) \geq 1 - 2 \exp \left( -c_4 \log^2 n\right)$ over $V$ and $X$, which provides the desired result.
    \qed

% \subsection{Proof of Lemma \ref{lemma:lastclipnoise}}\label{subsec:lastclipnoise}

\begin{lemma}\label{lemma:lessthanB}
For all $i\in[n]$, we have that
  \begin{equation}
      \E_B \left[ \left(\varphi(x_i)^\top \tilde \Theta(t) - \varphi(x_i)^\top \tilde \Theta(s)\right)^2 \right] \leq \Sigma^2 \left| t - s \right| \norm{\varphi(x_i)}_2^2.
  \end{equation}  
\end{lemma}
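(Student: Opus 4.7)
The starting point is the observation that $\tilde\Theta(t):=\hat\Theta(t)-\E_B[\hat\Theta(t)]$ solves the homogeneous linear SDE
\begin{equation*}
d\tilde\Theta(t) = -A\,\tilde\Theta(t)\,dt + \Sigma\,dB(t),\qquad \tilde\Theta(0)=0,\qquad A := \tfrac{2\Phi^\top\Phi}{n},
\end{equation*}
obtained by taking $\E_B$ in \eqref{eq:OU1} (which cancels the Wiener term and recovers the gradient flow $\hat\theta$) and subtracting. In particular, $\tilde\Theta$ is a centered multidimensional Gaussian process with the closed-form second moments
\begin{equation*}
\Var(\tilde\Theta(u)) = \Sigma^2\,\tfrac{I-e^{-2Au}}{2A},\qquad \Cov(\tilde\Theta(s),\tilde\Theta(t)) = \Sigma^2\,\tfrac{e^{-A|t-s|}-e^{-A(t+s)}}{2A},
\end{equation*}
where division by $A$ is defined spectrally and extended continuously to the zero eigenvalue of $A$ (which is present since $A$ has rank at most $n<p$).

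Without loss of generality I would assume $t\ge s$ and set $Z(u):=\varphi(x_i)^\top\tilde\Theta(u)$, $\psi(t):=\E_B[(Z(t)-Z(s))^2]$. Applying Ito's formula to $(Z(t)-Z(s))^2$ with $s$ fixed and using that the quadratic variation of $Z$ is $\Sigma^2\|\varphi(x_i)\|_2^2\,dt$, then taking expectations to kill the martingale part, gives $\psi(s)=0$ and
\begin{equation*}
\psi'(t) = \Sigma^2\|\varphi(x_i)\|_2^2 \;-\; 2\,\varphi(x_i)^\top A\,\E_B\!\bigl[\tilde\Theta(t)(Z(t)-Z(s))\bigr].
\end{equation*}
Hence the lemma reduces to the non-negativity of the inner-product term, after which integrating $\psi'(t)\le \Sigma^2\|\varphi(x_i)\|_2^2$ over $[s,t]$ yields the claim; the case $s>t$ follows by symmetry in $(s,t)$.

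The substantive (and only non-routine) step is this non-negativity, and it hinges on a clean algebraic factorization. Writing $\E_B[\tilde\Theta(t)(Z(t)-Z(s))] = [\Var(\tilde\Theta(t))-\Cov(\tilde\Theta(s),\tilde\Theta(t))]\varphi(x_i)$ and substituting the closed forms above, all matrices involved are functions of the symmetric p.s.d.\ matrix $A$ and therefore commute, so
\begin{equation*}
A\bigl[\Var(\tilde\Theta(t))-\Cov(\tilde\Theta(s),\tilde\Theta(t))\bigr] = \tfrac{\Sigma^2}{2}\bigl(I - e^{-2At} - e^{-A(t-s)} + e^{-A(t+s)}\bigr) = \tfrac{\Sigma^2}{2}\bigl(I - e^{-A(t-s)}\bigr)\bigl(I + e^{-A(t+s)}\bigr).
\end{equation*}
For $t\ge s\ge 0$ and $A$ p.s.d., both factors on the right are p.s.d.\ and mutually commute, so their product is p.s.d.; sandwiching by $\varphi(x_i)$ gives the required non-negativity. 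The main thing to spot is precisely this factorization identity, which makes the product-of-p.s.d.-commuting-operators argument work; without it one is led to a less transparent scalar eigenvalue inequality $\bigl(1-e^{-2\lambda t}-e^{-\lambda(t-s)}+e^{-\lambda(t+s)}\bigr)\ge 0$ that would otherwise have to be checked one eigenvalue of $A$ at a time.
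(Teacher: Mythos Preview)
Your proof is correct. Both your argument and the paper's rest on the closed-form OU moments, but the routes diverge after that. The paper diagonalizes $\Phi^\top\Phi=ODO^\top$ so that $O^\top\tilde\Theta$ decouples into independent scalar OU processes; it then computes the full increment variance $\E_B[([O^\top\tilde\Theta(t)]_k-[O^\top\tilde\Theta(s)]_k)^2]$ per coordinate, factors it as $\tfrac{\Sigma^2(1-e^{-\Delta_k\delta})}{\Delta_k}\bigl(1-e^{-2\Delta_k s}\tfrac{1-e^{-\Delta_k\delta}}{2}\bigr)$ with $\Delta_k=2D_k/n$ and $\delta=t-s$, bounds each factor, and recombines via $\sum_k[O^\top\varphi(x_i)]_k^2$. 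You instead differentiate $\psi(t)$ via It\^o and reduce the claim to positive semidefiniteness of $A\bigl[\Var(\tilde\Theta(t))-\Cov(\tilde\Theta(s),\tilde\Theta(t))\bigr]$, which you obtain through the matrix factorization $\tfrac{\Sigma^2}{2}(I-e^{-A(t-s)})(I+e^{-A(t+s)})$. Your route avoids the explicit change of basis and packages the eigenvalue-by-eigenvalue check into a single product-of-commuting-p.s.d.-matrices step; the paper's version is more elementary (it quotes OU moments rather than invoking stochastic calculus) and makes the scalar bound $\tfrac{1-e^{-\Delta_k\delta}}{\Delta_k}\le\delta$ explicit, which is the coordinate-wise analogue of your integration of $\psi'\le\Sigma^2\|\varphi(x_i)\|_2^2$.
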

\begin{proof}
Let us define the matrix $O \in \R^{p \times p}$ as the orthogonal matrix such that $\Phi^\top \Phi = O D O^\top$, with $D \in \R^{p \times p}$ being a p.s.d.\ diagonal matrix with eigenvalues sorted in non-increasing way (which we indicate as $D_k$, $k \in [p]$, for simplicity). Then, we have
\begin{equation}\label{eq:justputtingOOtop}
    \E_B \left[ \left(\varphi(x_i)^\top \tilde \Theta(t) - \varphi(x_i)^\top \tilde \Theta(s) \right)^2 \right] = \E_B \left[ \left(\varphi(x_i)^\top OO^\top \tilde \Theta(t) - \varphi(x_i)^\top  OO^\top \tilde \Theta(s) \right)^2 \right].
\end{equation}

Note that, multiplying both sides of \eqref{eq:OU1} by $O^\top$ we get
\begin{equation}\label{eq:OUD}
    \diff \left( O^\top \hat \Theta(t) \right) = - \frac{2D}{n}  \left( O^\top \hat \Theta(t) \right) \diff t +  \frac{2 O^\top \Phi^\top Y}{n}  \diff t + \Sigma \, \diff \left(O^\top B(t)\right).
\end{equation}
Since $D$ is diagonal, we have that each coordinate of $O^\top \hat \Theta(t)$ evolves as an independent, one dimensional, OU process (without drift, eventually), since each coordinate of $O^\top B(t)$ evolves as an independent Wiener processes, for rotational invariance of the Gaussian measure.
% It is then convenient to introduce the shorthands
% \begin{equation}\label{eq:defzetab}
%     \zeta_k(t) := [O^\top \hat \Theta(t)]_k, \qquad  \mu_k := 2 [O^\top \Phi^\top Y]_k / n, \qquad b_k(t) := [O^\top B(t)]_k,
% \end{equation}
% which allow us to write
% \begin{equation}\label{eq:sdezeta}
%    \diff \zeta_k(t) = - \frac{2 D_k}{n} \zeta_k(t)\diff t + \mu_k\diff t + \Sigma \, \diff b_k(t),
% \end{equation}
% where, for different $k \in [p]$, the $b_k(t)$-s are independent Wiener processes, for rotational invariance of the Gaussian measure.
This implies that, for all $s, t \in [0, \tau]$, and for all $k \neq k'$, we have
\begin{equation}
    \E_B \left[  [O^\top \tilde \Theta(t)]_k [O^\top \tilde \Theta(s)]_{k'}\right] = % \E_B \left[ \left( [O^\top \hat \Theta(t)]_k - \E_B \left[ [O^\top \hat \Theta(t)]_k \right] \right) \left( [O^\top \hat \Theta(s)]_{k'} - \E_B \left[ [O^\top \hat \Theta(s)]_{k'} \right] \right)\right] = 
    0.
\end{equation}
This, together with \eqref{eq:justputtingOOtop}, gives
\begin{equation}\label{eq:varz1}
\begin{aligned}
    \E_B \left[ \left(\varphi(x_i)^\top \tilde \Theta(t) - \varphi(x_i)^\top \tilde \Theta(s) \right)^2 \right] &= \E_B \left[ \left(\sum_{k=1}^p [O^\top \varphi(x_i)]_k [O^\top \tilde \Theta(t)]_k - [O^\top \varphi(x_i)]_k [O^\top \tilde \Theta(s)]_k \right)^2 \right] \\
    &=  \sum_{k=1}^p \E_B \left[ \left( [O^\top \varphi(x_i)]_k [O^\top \tilde \Theta(t)]_k - [O^\top \varphi(x_i)]_k [O^\top \tilde \Theta(s)]_k \right)^2 \right] \\
    &=  \sum_{k=1}^p [O^\top \varphi(x_i)]_k^2 \, \E_B \left[ \left( [O^\top \tilde \Theta(t)]_k - [O^\top \tilde \Theta(s)]_k \right)^2 \right].
    % &=  \sum_{k=1}^p [O^\top \varphi(x_i)]_k^2 \, \E_B \left[ \tilde \zeta_k(t) - \tilde \zeta_{k}(s) \right]^2.
\end{aligned}
\end{equation}

Since $[O^\top \hat \Theta(t)]_k$ describes an OU process with drift $\Delta_k = \frac{2D_k}{n}$, standard results on the variance and the auto-correlation of OU processes (see Section 4.4.4 in \cite{gardiner1985handbook}) give (supposing $\Delta_k \neq 0$)
\begin{equation}
    \E_B \left[  [O^\top \tilde \Theta(t)]_k^2 \right] = \frac{\Sigma^2}{2 \Delta_k} \left( 1 - e^{-2 \Delta_k t}\right),
\end{equation}
and% The auto-correlation function of this process reads (see again Section 4.4.4 in \cite{gardiner1985handbook})
\begin{equation}
    \E_B \left[  [O^\top \tilde \Theta(t)]_k  [O^\top \tilde \Theta(s)]_k \right] = \frac{\Sigma^2}{2 \Delta_k}  \left( e^{-\Delta_k | t - s |} - e^{-\Delta_k (t + s )}\right),
\end{equation}
which implies that
\begin{equation}
    \E_B \left[\left( [O^\top \tilde \Theta(t)]_k - [O^\top \tilde \Theta(s)]_k \right)^2 \right] = \frac{\Sigma^2}{\Delta_k} \left( 1 - \frac{e^{-2 \Delta_k t} + e^{-2 \Delta_k s}}{2} - e^{- \Delta_k | t - s |} + e^{- \Delta_k ( t + s )} \right).
\end{equation}
Then, assuming $t \geq s$ without loss of generality, and introducing the shorthand $\delta = t - s$, we have
\begin{equation}\label{eq:varz2}
\begin{aligned}
    \E_B \left[\left( [O^\top \tilde \Theta(t)]_k - [O^\top \tilde \Theta(s)]_k \right)^2 \right] &= \frac{\Sigma^2}{\Delta_k} \left( 1 - e^{-2 \Delta_k s} \left( \frac{1 + e^{-2 \Delta_k \delta} - 2 e^{- \Delta_k \delta }}{2} \right) - e^{- \Delta_k \delta} \right) \\
    &= \frac{\Sigma^2 \left( 1  - e^{- \Delta_k \delta} \right)}{\Delta_k} \left( 1 - e^{-2 \Delta_k s} \left(  \frac{1  - e^{- \Delta_k \delta} }{2} \right)\right) \\
    &\leq \frac{\Sigma^2 \left( 1  - e^{- \Delta_k \delta} \right)}{\Delta_k} \\
    &\leq \Sigma^2 \delta = \Sigma^2 |t - s|,
\end{aligned}
\end{equation}
where the inequality in the fourth line holds as $1 - e^{-x} \leq x$ for all $x \geq 0$. Note that, when $[O^\top \hat \Theta(t)]_k$ does not have a drift term ($\Delta_k = 0$), it is a Wiener process, which implies that $\E_B \left[\left( [O^\top \tilde \Theta(t)]_k - [O^\top \tilde \Theta(s)]_k \right)^2 \right] = \Sigma^2 |t - s|$ (see Section 3.8.1 in \cite{gardiner1985handbook}).

Thus, plugging \eqref{eq:varz2} in \eqref{eq:varz1} we get
\begin{equation}
\begin{aligned}
    \E_B \left[ \left(\varphi(x_i)^\top \tilde \Theta(t) - \varphi(x_i)^\top \tilde \Theta(s) \right)^2 \right] &=   \sum_{k=1}^p [O^\top \varphi(x_i)]_k^2 \, \E_B \left[ \left( [O^\top \tilde \Theta(t)]_k - [O^\top \tilde \Theta(s)]_k \right)^2 \right] \\
    &\leq \sum_{k=1}^p [O^\top \varphi(x_i)]_k^2 \,\Sigma^2 \left| t - s \right| \\
    &=  \Sigma^2 \left| t - s \right| \norm{\varphi(x_i)}_2^2,
\end{aligned}
\end{equation}
which gives the desired result.
\end{proof}

\paragraph{Proof of Lemma \ref{lemma:lastclipnoise}.}
Consider the auxiliary Wiener process $z_i(t) \in \R$ such that (see Section 3.8.1 of \cite{gardiner1985handbook})
\begin{equation}\label{eq:variancewiener}
    \E_{z_i} \left[ \left(z_i(t) - z_i(s)\right)^2 \right] = \Sigma^2 \left| t - s \right| \norm{\varphi(x_i)}_2^2.
\end{equation}
By the reflection principle in standard Wiener processes (see Proposition 3.7 in \cite{revuzyor}), we have
\begin{equation}\label{eq:Pbhata}
    \P_{z_i} \left( \sup_{t \in [0, \tau] } z_i(t) \geq a \right) = 2 \, \P_{z_i} \left( z_i(\tau) \geq a \right) \leq 4 \exp \left( - \frac{a^2}{\Sigma^2 \tau \norm{\varphi(x_i)}_2^2} \right).
\end{equation}

\simonesolved{By the same argument carried out in \eqref{eq:concentrationfeaturevector1new} and \eqref{eq:concentrationfeaturevector2new}, we have that $\norm{\varphi(x_i)}_2^2 = \bigO{p}$ with probability at least $1 - 2 \exp\left( -c_1 p \right)$ over $V$.
% \begin{equation}
%     \tau = \frac{d \log^2 n}{p}, \qquad \Sigma = 2 \frac{\sqrt d \log^3 n}{n} \, \frac{\sqrt{8 \log(1 / \delta)}}{\varepsilon},
% \end{equation}
Thus, there exists a constant $C$ for which we can write}
\begin{equation}\label{eq:sigma2tau}
    \Sigma^2 \tau \norm{\varphi(x_i)}_2^2 \leq 4 C \frac{d \log^6 n}{n^2} \, \frac{8 \log(1 / \delta)}{\varepsilon^2} \, \frac{d \log^2 n}{p} \, p =  32 C \frac{d^2 \log^8 n}{n^2} \, \frac{\log(1 / \delta)}{\varepsilon^2} = \bigO{1},
\end{equation}
where the last step is a consequence of Assumption \ref{ass:privacy}. Then, \eqref{eq:Pbhata} reads $\P_{z_i} \left( \sup_{t \in [0, \tau] } z_i(t) \geq a \right) \leq 4 \exp \left( - c_2 a^2 \right)$, which implies
\begin{equation}\label{eq:BhatlessC1}
    \E_{z_i} \left[ \sup_{t \in [0, \tau] } \left| z_i(t) \right| \right] = \int_{0}^{+\infty} \P_{z_i} \left( \sup_{t \in [0, \tau] } \left|  z_i(t) \right| \geq a \right) \diff a \leq \int_{0}^{+\infty} 2 \P_{z_i} \left( \sup_{t \in [0, \tau] } z_i(t) \geq a \right) \diff a \leq C_1,
\end{equation}
where $C_1$ is an absolute constant, and the statement holds with probability at least $1 - 2 \exp\left( -c_1 p \right)$ over $V$.
Note that, by Lemma \ref{lemma:lessthanB}, for all $s, t \in [0, \tau]$, we have
\begin{equation}\label{eq:inequalitymomentauxiliary}
    \E_B \left[ \sup_{t \in [0, \tau] } \left| \varphi(x_i)^\top \tilde \Theta(t) \right| \right] \leq \Sigma^2 \left| t - s \right| \norm{\varphi(x_i)}_2^2 = \E_{z_i} \left[ \left(z_i(t) - z_i(s)\right)^2 \right].
\end{equation}
Then, since $\varphi(x_i)^\top \tilde \Theta(t)$ and $z_i(t)$ are two mean-0 Gaussian processes, due to Sudakov-Fernique inequality (see Theorem 7.2.11 of \cite{vershynin2018high}), we have that, for all finite subsets $T_0 \subset [0 , \tau]$,
\begin{equation}\label{eq:sudferfinite}
    \E_B \left[ \sup_{t \in T_0 } \left| \varphi(x_i)^\top \tilde \Theta(t) \right| \right] \leq \E_{z_i} \left[ \sup_{t \in T_0} \left| z_i(t) \right| \right].
\end{equation}
Differently from Theorem 7.2.11 of \cite{vershynin2018high}, we consider the supremum of the absolute value of the processes. The result can in fact be extended to this case by simply looking at both $\varphi(x_i)^\top \tilde \Theta(t)$ and $- \varphi(x_i)^\top \tilde \Theta(t)$. To extend to the previous equation to the full interval $[0, \tau]$, we have that the Kolmogorov continuity theorem (see Theorem 1.8 in \cite{revuzyor}) % Maybe also Foundation of Modern Probability - Olav Kallenberg
guarantees that $\varphi(x_i)^\top \tilde \Theta(t)$ is almost surely %sample-
continuous (as its $\alpha$-moment is bounded by the $\alpha$-moment of $z_i(t)$ via \eqref{eq:inequalitymomentauxiliary} since they are both Gaussian processes). %, since the Wiener process $z_i(t)$ is also continuous. % In fact, an upper bound on the moments of $\left(\bar z_i(t) - \bar z_i(s)\right)$ is sufficient to prove the same thesis for $\bar z_i(t)$, which is therefore also true for $\tilde z_i(t)$, given \eqref{eq:momentskolm}.
Then, in the compact time interval $[0, \tau]$, we have uniform continuity, \emph{i.e.}, for every $h > 0$, there exists $r$, such that, for every $t \in [0, \tau]$, $\left| \varphi(x_i)^\top \tilde \Theta(t + r) - \varphi(x_i)^\top \tilde \Theta(t) \right| < h$, with probability 1 over $B$. Then, \eqref{eq:sudferfinite} can be extended by continuity to
\begin{equation}\label{eq:sudferinfinite}
    \E_B \left[ \sup_{t \in [0, \tau] } \left| \varphi(x_i)^\top \tilde \Theta(t) \right| \right] \leq \E_{z_i} \left[ \sup_{t \in [0, \tau]} \left| z_i(t) \right| \right] \leq C_1,
\end{equation}
where the last step is a consequence of \eqref{eq:BhatlessC1}.

Note that, by Lemma \ref{lemma:lessthanB}, %we also have that
% \begin{equation}
%     \E_{B} \left[ \tilde z_i(t)^2 \right] \leq \Sigma^2  t  \norm{\varphi(x_i)}_2^2,
% \end{equation}
% which gives
since we have $\varphi(x_i)^\top \tilde \Theta(0) = 0$,
\begin{equation}\label{eq:btis2}
    \sigma^2_\tau := \sup_{t \in [0, \tau]} \E_{B} \left[ \left(\varphi(x_i)^\top \tilde \Theta(t)\right)^2 \right] \leq \Sigma^2  \tau  \norm{\varphi(x_i)}_2^2 = \bigO{1},
\end{equation}
where in the last step we used \eqref{eq:sigma2tau}. % and Assumption \ref{ass:privacy}.
Then, the Borell-TIS inequality (see Theorem 2.1.1 of \cite{adler2007random}) allows us to write, for all $a > 0$,
\begin{equation}\label{eq:borellTis}
    \P_B \left( \sup_{t \in [0, \tau] } \left| \varphi(x_i)^\top \tilde \Theta(t) \right| > \E_B \left[ \sup_{t \in [0, \tau] } \left|  \varphi(x_i)^\top \tilde \Theta(t) \right| \right] + a \right) < \exp \left( - \frac{a^2}{2 \sigma^2_\tau} \right) \leq 2 \exp \left( - c_3 a^2 \right).
\end{equation}
Thus, setting $a = \log n$, we get% together with \eqref{eq:boundsup1} and \eqref{eq:borellTis}, yields to
\begin{equation}
    \sup_{t \in [0, \tau] } \left|  \varphi(x_i)^\top \tilde \Theta(t) \right| \leq \E_B \left[ \sup_{t \in [0, \tau] } \left| \varphi(x_i)^\top \tilde \Theta(t) \right| \right] + \log n \leq C_1 + \log n = \bigO{\log n},
\end{equation}
with probability at least $1 - 2 \exp\left(-c_3 \log ^2 n\right) - 2 \exp\left(-c_1 p\right)$ over $V$ and $B$, where we use \eqref{eq:sudferinfinite} in the second step.
Our argument holds for any choice of $i$. Then, the thesis holds uniformly on every $i \in [n]$ with probability at least $1 - 2 \exp\left(-c_4 \log ^2 n \right) - 2 n \exp\left(-c_4 p\right)$ over $B$ and $V$, which provides the desired result. \qed

\subsection{Analysis of noise and early stopping}\label{sec:proof34}

% \subsection{Proof of Lemmas \ref{lemma:noise} and \ref{lemma:earlystopping}}\label{sec:proof34}

This section contains the proofs of Lemmas \ref{lemma:noise} and \ref{lemma:earlystopping}, as well as of our main Theorem \ref{thm:mainthm}, including a number of auxiliary results required by the argument.
We make use of the notation introduced in the previous sections and consider the hyper-parameter choice in \eqref{eq:tauSigmaapp}.

\paragraph{Proof of Lemma \ref{lemma:noise}.}
    We denote with $A \in \R^{p \times p}$ the p.s.d.\ matrix such that $A^2 = A^\top A = \E_{x} \left[ \varphi(x) \varphi(x)^\top \right]$ (the definition is well posed as $\E_{x} \left[ \varphi(x) \varphi(x)^\top \right]$ is p.s.d.). Thus,
    % \begin{equation}
    %     \opnorm{A}^2 = \opnorm{\E_{x} \left[ \tilde \varphi(x) \tilde \varphi(x)^\top \right]},
    % \end{equation}
    % and
    \begin{equation}
        \norm{A}_F^2 = \tr \left( A^\top A \right) = \tr \left( \E_{x} \left[ \varphi(x) \varphi(x)^\top \right] \right) = \E_{x} \left[ \tr \left( \varphi(x) \varphi(x)^\top \right)  \right] = \E_{x} \left[ \norm{ \varphi(x) }_2^2 \right],
    \end{equation}
    where we use the linearity of the trace in the third step, and its cyclic property in the fourth step. \simonesolved{Furthermore, we can write
    \begin{equation}\label{eq:FnormA}
        \norm{A}_F^2 = \E_{x} \left[ \norm{ \varphi(x) }_2^2 \right] \leq 2 \norm{ \phi(\mathbf 0) }_2^2  + 2 L^2 \E_{x} \left[ \norm{ Vx }_2^2 \right] \leq 2 \norm{ \phi(\mathbf 0) }_2^2  + 2 L^2 \opnorm{V}^2 \E_{x} \left[ \norm{x }_2^2 \right] = \bigO{p},
    \end{equation}
    where we use that $\phi$ is Lipschitz in the second step, we denote with $\mathbf 0 \in \R^p$ a vector of zeros, and the last step follows from the bound on $\opnorm{V}$ given by Lemma \ref{lemma:evminX}, which holds with probability at least $1 - 2 \exp \left( -c_1 d \right)$ over $V$ (high probability event over which we will condition until the end of the proof).}
    The introduction of $A$ allows us to rewrite the LHS of \eqref{eq:res3} as $\E_{x} \left[ \left( \varphi(x)^\top \tilde \Theta(\tau) \right)^2 \right] = \tilde \Theta(\tau)^\top \E_{x} \left[ \varphi(x) \varphi(x)^\top \right] \tilde \Theta(\tau)  = \norm{A \tilde \Theta(\tau)}_2^2$, which gives
    % Recall that, by definition,
    % \begin{equation}
    % \tilde \zeta_k(\tau) := [O^\top \tilde \Theta(\tau)]_k.
    % \end{equation}
    % Let's denote with $\tilde \zeta(\tau) \in \R^p$ the vector containing $\tilde \zeta_k(\tau)$ in its $k$-th entry, which allows us to write $\tilde \zeta(\tau) = O^\top \tilde \Theta(\tau)$, where $O$ is an orthogonal matrix.
    % Thus, we can rewrite \eqref{eq:rewritethesisA} as
    \begin{equation}\label{eq:rewritethesisA1}
        \E_{x} \left[ \left( \varphi(x)^\top \tilde \Theta(\tau) \right)^2 \right] = \norm{A \tilde \Theta(\tau)}_2^2 = \norm{A O O^\top \tilde \Theta(\tau)}_2^2 = \norm{A O \tilde \Delta \rho}_2^2,
    \end{equation}
    where $\tilde \Delta \in \R^{p \times p}$ is a p.s.d.\ diagonal matrix s.t.\ $\tilde \Delta^2$ contains $\E_B \left[\left[O^\top \tilde \Theta(\tau)\right]_k^2\right]$ in its $k$-th entry ($O$ is the same matrix considered in Lemma \ref{lemma:lessthanB}), and
    \begin{equation}
        \rho := \tilde \Delta^{-1} O^\top \tilde \Theta(\tau) \in \R^p
    \end{equation}
    is a standard Gaussian vector in the probability space of $B$ (since each coordinate of $O^\top \hat \Theta(t)$ evolves as an independent, one dimensional, OU process, by the argument following \eqref{eq:OUD}).
    % Notice that Lemmas \ref{lemma:varianceOU} and \ref{lemma:boundvarianceOU} promptly give
    % \begin{equation}\label{eq:opnormtildeD}
    %     \opnorm{\tilde \Delta}^2 = \opnorm{\tilde \Delta^2} \leq \Sigma^2 \tau.
    % \end{equation}
    % Since $\rho$ is a standard Gaussian vector,
    Then, by Theorem 6.3.2 in \cite{vershynin2018high}, we have
    \begin{equation}
        \subGnorm{\norm{A O \tilde \Delta \rho}_2 - \norm{A O \tilde \Delta}_F} \leq C_1 \opnorm{A O \tilde \Delta} \leq C_1 \norm{A O \tilde \Delta}_F,
    \end{equation}
    where the sub-Gaussian norm is intended in the probability space of $\rho$. This can be turned into the tail inequality (see Proposition 2.5.2 in \cite{vershynin2018high})
    \begin{equation}\label{eq:normAtildetheta}
        \P_B \left( \norm{A O \tilde \Delta \rho}_2 \geq \norm{A O \tilde \Delta}_F + \sqrt{p \Sigma^2 \tau} \log n \right) \leq 2 \exp \left(-c_2 \frac{p \Sigma^2 \tau \log^2 n}{\norm{A O \tilde \Delta}_F^2} \right).
    \end{equation}

    Since we have $\norm{A O \tilde \Delta}_F^2 \leq \norm{A}_F^2 \opnorm{O \tilde \Delta}^2 \leq \norm{A}_F^2 \opnorm{\tilde \Delta}^2 \leq C_2 p \Sigma^2 \tau$, 
    % \begin{equation}\label{eq:normVtildetheta1}
    %     \opnorm{A \tilde \Delta}^2 \leq \opnorm{A}^2 \opnorm{\tilde \Delta}^2 \leq \opnorm{A}^2 \opnorm{\tilde \Delta}^2 = \bigO{\frac{p}{d} \Sigma^2 \tau},
    % \end{equation}
    % where the last step is true by \eqref{eq:boundSigmatau1}, \eqref{eq:boundSigmatau2}, and Lemma \ref{lemma:evminX}, and holds with probability at least $1 - 2 \exp \left(-c_4 d \right)$ over $V$. Conditioning on such high probability event, we can also write
    where the last step follows from \eqref{eq:FnormA} ($C_2$ is an absolute constant) and the argument in \eqref{eq:varz2}, 
    % Then, plugging \eqref{eq:normAtildetheta1} in
    \eqref{eq:normAtildetheta} gives
    \begin{equation}
        \norm{A O \tilde \Delta \rho}_2^2 = \bigO{p \Sigma^2 \tau \log^2 n} = \bigO{ \frac{d^2 \log^{10} n}{n^2} \, \frac{\log(1 / \delta)}{\varepsilon^2}} = \tilde{\mathcal O} \left( \frac{d^2}{\varepsilon^2 n^2} \right) = o(1),
    \end{equation}
    with probability at least $1 - 2 \exp \left(-c_2 \log^2 n \right)$ over $B$, where the last step is a consequence of Assumption \ref{ass:privacy}.
    Plugging the last equation in \eqref{eq:rewritethesisA1} provides the desired result. \qed

% \begin{lemma}\label{lemma:earlystopping}
%     Let Assumptions \ref{ass:activation}, \ref{ass:data}, and \ref{ass:scalings} hold. Then, we have
%     \begin{equation}
%         \E_{x} \left[ \left( \varphi(x)^\top \left( \hat \theta(\tau) - \theta^* \right)\right)^2 \right] = \bigO{\frac{d}{n} + \frac{n \log^3 d}{d^{3/2}}},
%     \end{equation}
%     with probability at least $1 - 2 \exp \left( -c \log^2 n \right)$ over $X$ and $V$, where $c$ is an absolute constant.
% \end{lemma}
\paragraph{Proof of Lemma \ref{lemma:earlystopping}.}
\simonesolved{Consider the projector $P_\Lambda$  on the space spanned by the eigenvectors associated with the $d$ largest eigenvalues of $\Phi^\top \Phi$.} %(this object is well defined due to Lemma \ref{lemma:hyperK} which  guarantees that $\lambda_{d}(K) \neq \lambda_{d+1}(K)$).}
Then, let us decompose the LHS of \eqref{eq:res4} as
\begin{equation}\label{eq:earlystop}
\begin{aligned}
    & \E_{x} \left[ \left( \varphi(x)^\top \left( \hat \theta(\tau) - \theta^* \right)\right)^2 \right]  \\
    & \leq 2 \E_{x} \left[ \left( \varphi(x)^\top P_\Lambda \left( \hat \theta(\tau) - \theta^* \right)\right)^2 \right] + 2 \E_{x} \left[ \left( \varphi(x)^\top P_\Lambda^\perp \left( \hat \theta(\tau) - \theta^* \right)\right)^2 \right] \\
    & \leq 2 \E_{x} \left[ \left( \varphi(x)^\top P_\Lambda \left( \hat \theta(\tau) - \theta^* \right)\right)^2 \right] + 4 \E_{x} \left[ \left( \tilde \varphi(x)^\top P_\Lambda^\perp \left( \hat \theta(\tau) - \theta^* \right)\right)^2 \right] \\
    & \qquad + 4 \mu_1^2 \E_{x} \left[ \left( x^\top V^\top  P_\Lambda^\perp \left( \hat \theta(\tau) - \theta^* \right)\right)^2 \right].
\end{aligned}
\end{equation}
We now bound the last three terms separately. \simonesolved{As usual, we condition on the high probability event given by Lemma \ref{lemma:hyperK}, which guarantees that $K$ is invertible and, therefore,  $\hat \theta(t) = ( 1 - e^{- \frac{2 \Phi^\top \Phi}{n}  t  } ) \Phi^+ Y$, with probability at least $1 - 2 \exp \left( -c_2 \log^2 n\right)$ over $V$ and $X$, where $\Phi^+ = \Phi^\top K^{-1}$.}
For the first term of \eqref{eq:earlystop}, exploiting the fact that $P_\Lambda$ is a projector over an eigenspace of $\Phi^\top \Phi$, we can write
\begin{equation}\label{eq:rp1}
    P_\Lambda \left( \theta^*-\hat \theta(\tau)   \right) = P_\Lambda e^{-  \frac{2 \Phi^\top \Phi }{n} \tau } \Phi ^+ Y = P_\Lambda e^{ - P_\Lambda \frac{2 \Phi^\top \Phi }{n} P_\Lambda  \tau } P_\Lambda \Phi ^+ Y,
\end{equation}
where $\opnorm{P_\Lambda e^{ - P_\Lambda \frac{2 \Phi^\top \Phi }{n} P_\Lambda  \tau } P_\Lambda} = e^{ - \frac{2 \lambda_d(K) }{n}  \tau } = \bigO{ e^{- c_3 \log^2 n }}$, where $c_3$ is a small enough absolute constant, and the last step is a consequence of Lemma \ref{lemma:hyperK} and $\tau = d \log^2 n / p$, and it holds with probability at least $1 - 2 \exp \left( -c_4 \log^2 n \right)$ over $V$ and $X$. Then, using $n  e^{- c_3 \log^2 n } = \bigO{1}$ we have
\begin{equation}\label{eq:fordiscussionlaterrevision}
    \norm{P_\Lambda \left( \hat \theta(\tau) - \theta^* \right)}_2 \leq \opnorm{P_\Lambda e^{ - P_\Lambda \frac{2 \Phi^\top \Phi }{n} P_\Lambda  \tau } P_\Lambda} \opnorm{\Phi ^+} \norm{Y}_2 = \bigO{\frac{1}{\sqrt{pn}}},
\end{equation}
where the second step holds because of Lemma \ref{lemma:hyperK}. Then, \simonesolved{using \eqref{eq:FnormA}}, the first term in \eqref{eq:earlystop} reads
\begin{equation}\label{eq:earlystop1}
    \E_{x} \left[ \left( \varphi(x)^\top P_\Lambda \left( \hat \theta(\tau) - \theta^* \right)\right)^2 \right] \leq \E_{x} \left[ \norm{\varphi(x)}_2^2 \right] \norm{P_\Lambda \left( \hat \theta(\tau) - \theta^* \right)}_2^2 = \bigO{p \, \frac{1}{pn}} = \bigO{\frac{1}{n}},
\end{equation}
with probability at least $1 - 2 \exp \left( -c_5 \log^2 n \right)$ over $V$ and $X$.

For the second term of \eqref{eq:earlystop}, we have
\begin{equation}\label{eq:earlystop2}
\begin{aligned}
    \E_{x} \left[ \left( \tilde \varphi(x)^\top P_\Lambda^\perp \left( \hat \theta(\tau) - \theta^* \right)\right)^2 \right] % &= \left( \hat \theta(\tau) - \theta^* \right)^\top P_\Lambda^\perp  \E_{x} \left[ \tilde \varphi(x) \tilde \varphi(x)^\top \right] P_\Lambda^\perp \left( \hat \theta(\tau) - \theta^* \right) \\
    % &\leq \opnorm{\E_{x} \left[ \tilde \varphi(x) \tilde \varphi(x)^\top \right]} \norm{P_\Lambda^\perp \left( \hat \theta(\tau) - \theta^* \right)}_2^2 \\
    % &\leq \opnorm{\E_{x} \left[ \tilde \varphi(x) \tilde \varphi(x)^\top \right]} \norm{ \hat \theta(\tau) - \theta^* }_2^2 \\
    &\leq \opnorm{\E_{x} \left[ \tilde \varphi(x) \tilde \varphi(x)^\top \right]} %  \opnorm{e^{-  \frac{2}{n} \Phi^\top \Phi  \tau }}^2
    \opnorm{\Phi ^+}^2 \norm{Y}_2^2 \\
    &= \bigO{\left( \log^4 n + \frac{p \log^3 d}{d^{3/2}} \right) \, 1 \, \frac{1}{p} \, n} \\
    &= \bigO{\frac{n}{ \sqrt p} \, \frac{\log^4 n}{\sqrt p} + \frac{n \log^3 d}{d^{3/2}} } \\
    % &= \bigO{1 \, \frac{d}{n} + \frac{n \log^3 d}{d^{3/2}} } \\
    &= \bigO{\frac{d}{n} + \frac{n \log^3 d}{d^{3/2}}} = o(1).
\end{aligned}
\end{equation}
Here, the second line follows from Lemmas \ref{lemma:Eopnormsmall} and \ref{lemma:hyperK}, and it holds with probability at least $1 - 2 \exp \left( -c_6 \log^2 n \right)$ over $X$ and $V$. % ; and the last step exploits both Assumption \ref{ass:strongoverparam} and Assumption \ref{ass:scalings}.

For the third term of \eqref{eq:earlystop}, since $x$ is distributed according to $\mathcal P_X$, it is sub-Gaussian and $\subGnorm{x} = \bigO{1}$. Then, we can bound its second moment (see \cite{vershynin2018high}, Proposition 2.5.2) as follows
\begin{equation}\label{eq:earlystop3}
\begin{aligned}
    \E_{x} \left[ \left( x^\top V^\top  P_\Lambda^\perp \left( \hat \theta(\tau) - \theta^* \right)\right)^2 \right] &\leq C_1 \norm{V^\top  P_\Lambda^\perp \left( \hat \theta(\tau) - \theta^* \right)}_2^2 \\
    &= C_1 \norm{V^\top  P_\Lambda^\perp \Phi^+ e^{- \frac{2K}{n} \tau } Y}_2^2 \\
    &\leq C_1 \opnorm{V^\top  P_\Lambda^\perp \Phi^+}^2  \norm{Y}_2^2 \\
    & = \bigO{\frac{d}{n} +  \frac{n^2 \log^6 d}{d^3}} \\
    &= \bigO{\frac{d}{n} + \frac{n \log^3 d}{d^{3/2}}} = o(1),
\end{aligned}
\end{equation}
where $C_1$ is an absolute constant and the fourth line holds with probability $1 - 2 \exp \left( -c_7 \log^2 n \right)$ over $X$ and $V$, \simonesolved{because of Lemma \ref{lemma:earlystoppingtermnew}}. Plugging \eqref{eq:earlystop1}, \eqref{eq:earlystop2}, and \eqref{eq:earlystop3} in \eqref{eq:earlystop} provides the desired result. \qed

\simonenew{We would like to remark that a key aspect of the previous argument is the existence of a \emph{spectral gap} between $\lambda_d(K)$ and $\lambda_{d+1}(K)$. This result, proved in Lemma \ref{lemma:hyperK}, critically uses that $\evmin{X^\top X} = \Omega(n)$, which in turn relies on our assumption $\evmin{\E \left[ x^\top x \right]} = \Omega(1)$, \emph{i.e.}, that the data covariance is well-conditioned. The spectral gap is what allows us to define a proper early stopping time, which implicitly connects to the spectrum of $K$ via the previous argument. We note that, if $\evmin{\E \left[ x^\top x \right]} = o(1)$, then $K$ might lose its spectral gap, a new argument would be required to set the early stopping time, and we eventually expect the final bound on the excess population risk to depend on the condition number of the covariance (as it does also in prior related work \citep{varshney22nearly, liu2023near}).}

\simonenew{When discussing Figure \ref{fig:RFfirst}, we have commented on the regularizing effect of DP-GD which can be connected to the introduction of a ridge penalty. %comparing the behavior this algorithm displays in the first panel with the one of GD when an $\ell_2$ regularizer $\lambda > 0$ is introduced in the empirical risk.
In fact, adding Gaussian noise and early stopping does not allow DP-GD to interpolate the training samples in a similar way to $\ell_2$ regularization. %, and the connection between $\ell_2$ regularization and early stopping has been discussed in previous work \cite{raskutti14a}. This becomes quantitatively clear 
The connection between $\ell_2$ regularization and early stopping
can be made quantitative by inspecting the argument of the previous Lemma \ref{lemma:earlystopping}: there, we show that the gradient flow at early stopping is close to the gradient flow at convergence when looking only at the sub-space $S_\Lambda$ spanned by the eigenvectors of $\Phi^\top \Phi$ associated to the top-$d$ eigenvalues (see \eqref{eq:fordiscussionlaterrevision}), while the same is not true in the orthogonal sub-space $S_\Lambda^\perp$. When we look at the ridge solution
\begin{equation}
    \theta^*_\lambda = \left( \Phi^\top \Phi + \lambda I \right)^{-1} \Phi^\top Y,
\end{equation}
if $\lambda$ is chosen such that $\lambda_{d+1}(\Phi^\top \Phi) \ll \lambda \ll \lambda_{d}(\Phi^\top \Phi)$, we have a similar effect: in the sub-space $S_\Lambda$, $\theta^*_\lambda$ is approximately the same as the un-regularized $\theta^*$, while we still avoid over-fitting due to the differences in the orthogonal sub-space $S_\Lambda^\perp$.}

\paragraph{Proof of Theorem \ref{thm:mainthm}.} First, note that since our model has a quadratic loss, following the first two equations in Assumption \ref{ass:privacy} we can apply Proposition \ref{prop:dpsde}. Then, the hyper-parameter $\Sigma$ in \eqref{eq:tauSigmaapp} is sufficiently large to guarantee that the solution $\Theta(\tau)$ of the SDE \eqref{eq:sde1} at time $\tau$ is $(\varepsilon, \delta)$-differentially private.
Let us introduce the shorthand $R(x) = \varphi(x)^\top \left( \hat \Theta(\tau) - \theta^* \right)$, and denote the generalization error of $\hat \Theta(\tau)$ and $\theta^*$ by $\hat{\mathcal R}$ and $\mathcal R^*$,  respectively. Then, we have that
\begin{equation}\label{eq:excessriskbound}
\begin{aligned}
    \left(\hat{\mathcal R} - \mathcal R^*\right)^2 &= \left(\E_{(x, y) \sim \mathcal P_{XY}} \left[ \left( \varphi(x)^\top \hat \Theta(\tau) - y \right)^2 \right] - \E_{(x, y) \sim \mathcal P_{XY}} \left[ \left( \varphi(x)^\top \theta^* - y \right)^2 \right]\right)^2 \\
    &= \left(\E_{(x, y) \sim \mathcal P_{XY}} \left[ R(x)  \left( \left( \varphi(x)^\top \hat \Theta(\tau) - y \right) + \left( \varphi(x)^\top \theta^* - y \right) \right)\right]\right)^2 \\
    &\leq \E_{x} \left[ R(x)^2 \right] \E_{(x, y) \sim \mathcal P_{XY}} \left[  \left( \varphi(x)^\top \hat \Theta(\tau) +  \varphi(x)^\top \theta^* - 2 y \right)^2  \right] \\
    &= \E_{x} \left[ R(x)^2 \right] \E_{(x, y) \sim \mathcal P_{XY}} \left[  \left( R(x) +  2 \varphi(x)^\top \theta^* - 2 y \right)^2  \right]\\
    & \leq 2 \E_{x} \left[ R(x)^2 \right]^2 + 2 \E_{x} \left[ R(x)^2 \right] \E_{(x, y) \sim \mathcal P_{XY}} \left[  \left(  2\varphi(x)^\top \theta^* - 2 y \right)^2  \right]\\
    & \leq 2 \E_{x} \left[ R(x)^2 \right]^2 + 16 \E_{x} \left[ R(x)^2 \right] \E_{x} \left[  \left(  \varphi(x)^\top \theta^* \right)^2  \right] + 16 \E_{x} \left[ R(x)^2 \right] \E_{y} \left[  y^2  \right],
\end{aligned}
\end{equation}
where we factorize the difference of two squares in the second line and use Cauchy-Schwartz inequality in the third line. By Lemmas \ref{lemma:noise} and \ref{lemma:earlystopping}, we have that
\begin{equation}\label{eq:excessriskbound1}
\begin{aligned}
    \E_{x} \left[ R(x)^2 \right] &= \E_{x} \left[ \left( \varphi(x)^\top \left( \hat \theta(\tau) + \tilde \Theta(\tau) - \theta^* \right)\right)^2 \right] \\
    &\leq 2 \E_{x} \left[ \left( \varphi(x)^\top \tilde \Theta(\tau) \right)^2 \right] + 2 \E_{x} \left[ \left( \varphi(x)^\top \left( \hat \theta(\tau) - \theta^* \right)\right)^2 \right] \\
    &= \bigO{ \frac{d^2 \log^{10} n}{n^2} \, \frac{\log(1 / \delta)}{\varepsilon^2}  + \frac{d}{n} + \frac{n \log^3 d}{d^{3/2}}} = o \left(1\right),
\end{aligned}
\end{equation}
with probability at least $1 - 2 \exp \left( -c_1 \log^2 n \right)$ over $X$, $V$ and $B$. %, where the last step is a consequence of Assumptions \ref{ass:scalings} and \ref{ass:privacy} (see \eqref{eq:boundSigmataup}).
Furthermore, by Lemma \ref{lemma:outputorder1}, we also have that $\E_{x} \left[  \left(  \varphi(x)^\top \theta^* \right)^2  \right] = \bigO{1}$ with probability at least $1 - 2 \exp \left( -c_2 \log^2 n \right)$ over $X$ and $V$. Since also $\E_{y} \left[  y^2  \right] = \bigO{1}$ by Assumption \ref{ass:data}, plugging \eqref{eq:excessriskbound1} in \eqref{eq:excessriskbound} gives
\begin{equation}
    \left(\hat{\mathcal R} - \mathcal R^*\right)^2 % = \bigO{\E_{x} \left[ R(x)^2 \right]}
    = \bigO{\frac{d^2 \log^{10} n}{n^2} \, \frac{\log(1 / \delta)}{\varepsilon^2} + \frac{d}{n} + \frac{n \log^3 d}{d^{3/2}}} = o(1),
\end{equation}
with probability at least $1 - 2 \exp \left( -c_3 \log^2 n \right)$ over $X$, $V$ and $B$. Then, since $\sqrt{a + b + c} \leq \sqrt{a} + \sqrt{b} + \sqrt{c}$, we have that, with this same probability, 
\begin{equation}\label{eq:gengaphat}
\begin{aligned}
    \left| \hat{\mathcal R} - \mathcal R^* \right| &= \bigO{\frac{d}{n \varepsilon} \, \log^{5} n \sqrt{\log(1 / \delta)} + \sqrt{\frac{d}{n}} + \sqrt{\frac{n \log^3 d}{d^{3/2}}}} \\
    &= \tilde{\mathcal O} \left(\frac{d}{n \varepsilon} + \sqrt{\frac{d}{n}} + \sqrt{\frac{n }{d^{3/2}}} \right) = o(1).
\end{aligned}
\end{equation}
% where the last step follows from Assumption \ref{ass:privacy}, and the thesis readily follows.

% ---------------

To conclude the argument, we can now use Lemmas \ref{lemma:lastcliptraj} and \ref{lemma:lastclipnoise}, which guarantee that, jointly for all $i \in [n]$, 
\begin{equation}
    % \sup_{t \in [0, \tau]}  \left|  \varphi(x_i)^\top \hat \Theta(t) - y_i \right|  \leq  
    \sup_{t \in [0, \tau]}  \left( \left|  \varphi(x_i)^\top \hat \Theta(t) \right| +  \left|  \varphi(x_i)^\top \hat \theta(t) - y_i  \right| \right)
    % &\leq  \sup_{t \in [0, \tau]}  \left| \tilde z_i(t) \right| + \sup_{t \in [0, \tau]}  \left| \varphi(x_i)^\top e^{- \frac{2\Phi^\top \Phi}{n}  t  } \Phi^+ Y  \right| \\
    \leq C_1 \log n,
\end{equation}
% where the first step holds by the triangle inequality, the second step is a consequence of \eqref{eq:gradientflowinterp}, and holds with probability at least $1 - 2 \exp\left(-c_1 \log ^2 n\right)$ over $X$ and $V$ (as we need the invertibility of $K$, given by Lemma \ref{lemma:hyperK}), and the last step is true because of Lemmas \ref{lemma:lastcliptraj} and \ref{lemma:lastclipnoise}, and holds
with probability at least $1 - 2 \exp (-c_4 \log^2 n)$ over $X$, $V$, and $B$, for some absolute constant $C_1$. 
\simonesolved{Furthermore, with probability at least $1 - 2 \exp\left( -c_5 p \right)$ over $V$, $\norm{\varphi(x_i)}_2 \leq C_2 \sqrt p$ for some absolute constant $C_2$, jointly for every $i \in [n]$ (see the argument carried out in \eqref{eq:concentrationfeaturevector1new} and \eqref{eq:concentrationfeaturevector2new}). Thus, the previous equation gives
\begin{equation}
\begin{aligned}
    \sup_{t \in [0, \tau]}  \left( \left|  \varphi(x_i)^\top \hat \Theta(t) \right| +  \left|  \varphi(x_i)^\top \hat \theta(t) - y_i  \right| \right) &\leq C_1 \log n \leq \frac{C_2 \sqrt p}{\norm{\varphi(x_i)}_2} C_1 \log n \\
    &\leq \frac{\sqrt p \log^2 n}{2 \norm{\varphi(x_i)}_2} = \frac{\clip}{2 \norm{\varphi(x_i)}_2},
\end{aligned}
\end{equation}
with probability at least $1 - 2 \exp (-c_6 \log^2 n)$ over $X$, $V$, and $B$, where $c_6$ may depend on $C_1$ and $C_2$ due to the step in the second line.}
Hence, due to the argument between \eqref{eq:definitionsofC0} and \eqref{eq:actualthesisofclipping}, we have that, with the same probability,
\begin{equation}
    \Theta(\tau) = \hat \Theta(\tau),
\end{equation}
% Conditioning on this high probability event, since $\mathcal{L}_{\clip}(\theta) = \mathcal{L}(\theta)$ for all $\theta \in \mathcal C$, we have that the two trajectories $\{\Theta(t)\}_{t \in [0, \tau]}$ and $\{\hat \Theta(t)\}_{t \in [0, \tau]}$ coincide, and therefore $\Theta(\tau) = \hat \Theta(\tau)$.
which in turn implies $\hat {\mathcal R} = \mathcal R$, where $\mathcal R$ is the generalization error of $\Theta(\tau)$. This, together with \eqref{eq:gengaphat}, concludes the proof. \qed

\section{Concluding remarks}\label{sec:discussion}

In this work, we show that privacy can be obtained for free, i.e., the excess population risk is negligible even when $\varepsilon=o(1)$, in sufficiently over-parameterized random feature models. This proves that over-parameterization is not at odds with private learning.

We focus on the regime where the number of samples $n$ scales with the input dimension $d$ as $d \ll n \ll d^{3 / 2}$, but we believe that our approach can be applied in much wider generality. In fact, we crucially leverage the fact that the test loss of the GD solution $\theta^*$ does not decrease with respect the case $n = \Theta(d)$. This in turn means that there is a surplus of samples that can be used to learn privately.
A similar plateau in the test loss has been demonstrated for kernel ridge regression by a recent line of work \citep{ghorbani2021linearized, mmm2022, hu2024asymptotics} tackling increasingly general settings, and showing privacy for free here is an exciting direction for future research.

\section*{Acknowledgments}

This research was funded in whole or in part by the Austrian Science Fund (FWF) 10.55776/COE12. For the purpose of open access, the author has applied a CC BY public copyright license to any Author Accepted Manuscript version arising from this submission. The authors were also supported by the 2019 Lopez-Loreta prize, and Simone Bombari was supported by a Google PhD fellowship. The authors would like to thank (in alphabetical order) Diyuan Wu, Edwige Cyffers, Francesco Pedrotti, Inbar Seroussi, Nikita P. Kalinin, Pietro Pelliconi, Roodabeh Safavi, Yizhe Zhu, and Zhichao Wang for helpful discussions.

{
\small

\bibliographystyle{plainnat}
\bibliography{bibliography_new.bib}

}

\newpage

\appendix

\section{Further discussion on related work}\label{app:related}

In Section \ref{sec:related}, we have discussed how existing bounds are unable to %previous work by \citet{jain14, song2021evading} cannot successfully 
tackle an over-parameterized RF model when $\varepsilon = \Theta(1)$.  %as their privacy-utility trade-offs provide upper-bounds on the excess population risk of (at best) $\tilde {\mathcal O}{1}$, for $\varepsilon = \Theta(1)$.
This requires the characterization of different quantities, \emph{i.e.}, the Lipschitz constant of the loss $L$, the norm of the baseline solution $\norm{\theta^*}_2$, the rank of the matrix $M$, defined as the projector on the column space of $ \E_{x} [ \varphi(x) \varphi(x)^\top ]$, and the quantity $\norm{M \theta^*}_2$. We provide the desired characterization in the following paragraphs.

\paragraph{Estimate of $\norm{\theta^*}_2$.} Recall that $\theta^* = \Phi^+ Y$, which readily implies that $\norm{\theta^*}_2 \leq \opnorm{\Phi^+} \norm{Y}_2$. An application of Lemma \ref{lemma:hyperK} gives that, with high probability, $\opnorm{\Phi^+} = \bigO{p^{-1/2}}$. Then, as the labels are bounded, we have $\norm{\theta^*}_2 = \bigO{\sqrt{n/p}}$. This upper bound is tight in different settings, \emph{e.g.}, when the labels contain random noise independent from the data. In this case, an argument similar to the one in Theorem 2 of \cite{bombari2023universal}, together with Lemma \ref{lemma:hyperK}, guarantees that $\| \theta^* \|_2 = \Theta(\| \Phi^+ \|_F) = \Theta(\sqrt{n / p})$.

\paragraph{Estimate of $L$ via a bounded set $\mathcal B$ with radius $\beta = \Theta(\norm{\theta^*})$.} Denoting with $(x, y)$ a generic input-label pair, we can write
\begin{equation}\label{eq:Lipconstworst}
    L = \sup_{\theta \in \mathcal B} \norm{\nabla_\theta \ell(\varphi(x)^\top \theta - y)}_2 = \sup_{\theta \in \mathcal B} 2 \norm{\varphi(x)} \left|\varphi(x)^\top \theta - y\right| = \Theta \left(\beta \norm{\varphi(x)}_2^2 \right),
\end{equation}
where we use Cauchy-Schwartz inequality in the last step and the fact that $\norm{\varphi(x)}_2 \beta \gg |y|$ in our data scaling. Then, noting that $\norm{\varphi(x)}_2 = \Theta(\sqrt p)$ (which holds with high probability) % with respect to $V$ and %, and both in expectation and in high probability with respect to 
%$x$) 
and plugging the value $\beta = \Theta(\norm{\theta^*})$ obtained in the previous paragraph, we obtain $L = \Theta( \sqrt{np})$, and consequently $L \norm{\theta^*}_2 = \Theta(n)$, as reported in Section \ref{sec:related}.

\paragraph{Estimate of $L$ via the set $\mathcal C$.} Alternatively, one could be interested in an RF model with Lipschitz loss, as the clipped loss defined in \eqref{eq:lclip} with $\clip$ set as in \eqref{eq:tauSigmaapp}. This choice would guarantee that the Lipschitz constant $L$ of the loss scales as $\sqrt p$ (excluding poly-logarithmic factors), which corresponds to the norm of the features $\norm{\varphi(x)}_2$.
In this setting, we can show that the solution of gradient flow on this loss corresponds to the solution obtained by optimizing the quadratic loss, as the dynamics fully happens in the set $\mathcal C$ where the two losses coincide (see Lemma \ref{lemma:lastcliptraj}). Thus, this allows to improve the previous estimate by a factor $\sqrt n$, as we now have $L \norm{\theta^*}_2 = \Theta( \sqrt p \sqrt{n / p}) = \Theta(\sqrt n)$. However, even with this improvement, one cannot obtain meaningful guarantees from \citet{jain14} in the over-parameterized regime, as discussed in Section \ref{sec:related}.

\paragraph{Estimate of $\rank(M)$ and $\norm{M\theta^*}_2$ from \cite{song2021evading}.}
Let $M_D$ be the projector over the column space of $\Phi^\top \Phi$. Then, under sufficient regularity of the data distribution $\mathcal P_X$, it can be shown that $M_D$ spans a subspace of the one spanned by $M$ (this exact argument is used also in the proof of Theorem 3.2 of \cite{song2021evading}). Then, since $M_D \theta^* = \theta^*$ and $\rank(M_D) = n$ from Lemma \ref{lemma:hyperK}, we also have that $\norm{M \theta^*}_2 = \norm{\theta^*}_2$ and $\min \left(\rank (M), n \right) = n$. Then, in the over-parameterized RF model, the bound of \cite{song2021evading} corresponds to the one in \cite{jain14}, and the previous considerations hold.

\section{Proofs for Section \ref{sec:dpgd}}\label{app:dp}

We start by introducing the notion of \emph{sensitivity}. Let $\mu: \mathcal D \to \R^p$ be an arbitrary, deterministic, $p$-dimensional function, where $\mathcal D$ represents the space of datasets. We define its $\ell_2$ sensitivity to be
\begin{equation}
    \Delta_2 \mu = \sup_{D \textup{ adjacent with } D'} \norm{\mu(D) - \mu(D')}_2,
\end{equation}
where $D \textup{ adjacent with } D'$ means that the two datasets $D$ and $D'$ differ by only one training sample.
The DP-GD algorithm is obtained through the composition of $T$ independent Gaussian mechanisms (see \cite{Dwork2014}, Theorem A.1), as every iteration includes a (clipped) gradient update and a Gaussian noise injection. In this case, we have that Algorithm \ref{alg:dp-gd} is a randomized mechanism $\mathcal A$ consisting of a sequence of adaptive (Gaussian) mechanisms $\mathcal M_1, \ldots, \mathcal M_T$ where $\mathcal M_t \colon \R^p \times \mathcal D \to \R^p$ for all $t \in [T]$. One option is then to use the advanced composition Theorem (see Theorem 3.20 in \citep{Dwork2014}) 
to compute the privacy guarantees of the algorithm $\mathcal A$. Unfortunately, this approach is not fruitful in our case,
as it would involve an additional $\log T$ term in our final result in Theorem \ref{prop:alg1dp}, which would make the variance of the noise added at each iteration to diverge, in the limit of $\eta \to 0$, $T \to + \infty$, with $\eta T \to \tau$.
Thus, to prove Proposition \ref{prop:alg1dp}, we calculate the privacy guarantees through the moment accountant method, also used by \cite{Abadi2016} to show their Theorem 2 (which we restate in Theorem \ref{thm:moment}, for the sake of completeness and notation). This allows for a tighter tracking of the privacy budget, exploiting the independence between the Gaussian mechanisms at every iteration of Algorithm \ref{alg:dp-gd}.

While our approach is conceptually similar to the one used in Theorem 1 of \cite{Abadi2016}, in contrast with \cite{Abadi2016} we consider full batch gradient descent and make explicit the dependence on the learning rate $\eta$, given our choice of scaling for Algorithm \ref{alg:dp-gd}.
We rely on the notion of \emph{privacy loss}, which describes the difference that a randomized mechanism has on two adjacent datasets. More specifically, given two adjacent datasets $D, D' \in \mathcal D$, a randomized mechanism $\mathcal M_t \colon \R^p \times \mathcal D \to \R^p$ and an auxiliary input $\theta_{t-1} \in \R^p$, the \emph{privacy loss} at the output $\theta$ takes the form
\begin{equation}\label{eq:privacyloss}
  \gamma(\theta; \mathcal M_t,  \theta_{t-1}, D, D') = \log \frac{p \left( \mathcal M_t ( \theta_{t - 1}, D) = \theta \right)
}{p \left( \mathcal M_t( \theta_{t - 1}, D') = \theta \right)},
\end{equation}
where the notation $p \left( Z = \theta \right)$ indicates the value of the law of a random variable $Z$ evaluated in $\theta$. %This notation will be used in this and further sections, since we will always consider random variables that admit a probability density function in $\R^d$. 
As in \cite{Abadi2016}, we define the $\lambda$-th moment $\alpha_{\mathcal M_t}(\lambda; \theta_{t - 1}, D, D')$ as the logarithm of the moment generating function of the privacy loss evaluated in $\lambda$, \emph{i.e.},
\begin{equation}\label{eq:logmgf}
\alpha_{\mathcal M_t}(\lambda; \theta_{t - 1}, D, D') = \log \E_{\theta \sim \mathcal M_t(\theta_{t - 1}, D)}\left [\exp(\lambda \gamma(\theta; \mathcal M_t, \theta_{t - 1}, D, D')) \right].
\end{equation}
Note that the expectation is taken with respect to the probability distribution of the parameters given by the mechanism applied to the dataset $D$. Then, we can define
\begin{equation}\label{eq:alpha}
\alpha_{\mathcal M_t}(\lambda) = \sup_{\theta_{t - 1}, D \textup{ adjacent with } D'} \alpha_{\mathcal M_t}(\lambda; \theta_{t - 1}, D, D'),
\end{equation}
where the supremum is taken over all possible $\theta_{t - 1}$ and all the adjacent datasets $D, D'$.

We now re-state Theorem 2 of \cite{Abadi2016} which translates the moment accountant to $(\varepsilon, \delta)$ privacy guarantees. Next, we bound the sensitivity of each iteration of Algorithm \ref{alg:dp-gd} and, therefore, the corresponding value of $\alpha_{\mathcal M_t}(\lambda)$ (see Lemmas \ref{lemma:sensitivitygrad} and \ref{lemma:mgf}). The proof of Proposition \ref{prop:alg1dp} will then follow.

\begin{theorem}[Theorem 2 of \cite{Abadi2016}]\label{thm:moment}
Let $\mathcal A$ consists of a sequence of independent adaptive mechanisms $\mathcal M_1, \ldots, \mathcal M_T$ where $\mathcal M_t \colon \R^p \times \mathcal D \to \R^p$. Let $\alpha_{\mathcal M_t}(\lambda)$ be defined according to \eqref{eq:alpha}. Then, the following properties hold.
\begin{enumerate}
\item \textbf{Composability.}
For any $\lambda$,
\begin{equation}
\alpha_{\mathcal A}(\lambda) \leq \sum_{t=1}^T \alpha_{\mathcal M_t}(\lambda).
\end{equation}
\item \textbf{Tail bound.}
For any $\varepsilon > 0$, $\mathcal A$ is $(\varepsilon, \delta)$-differentially private for
\begin{equation}
\delta = \inf_{\lambda} \exp(\alpha_{\mathcal A}(\lambda) - \lambda \varepsilon).
\end{equation}
\end{enumerate}
\end{theorem}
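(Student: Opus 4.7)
The plan is to prove the two parts separately, mirroring the strategy of \cite{Abadi2016} but making the conditioning on adaptive outputs explicit, since each $\mathcal M_t$ takes the previous iterate $\theta_{t-1}$ as an auxiliary input.

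For the \textbf{composability} claim, I would first observe that the output of $\mathcal A(D)$ is the full tuple $\theta_{1:T} := (\theta_1,\ldots,\theta_T)$, whose joint law factorizes by the chain rule as $p(\mathcal A(D) = \theta_{1:T}) = \prod_{t=1}^T p\bigl(\mathcal M_t(\theta_{t-1},D)=\theta_t\bigr)$, because $\mathcal M_t$ is sampled conditionally on $\theta_{t-1}$. Taking logs of the ratio of these joint densities against the analogous expression for $D'$, the privacy loss decomposes telescopically as
\begin{equation}
\gamma(\theta_{1:T};\mathcal A,D,D') = \sum_{t=1}^T \gamma(\theta_t;\mathcal M_t,\theta_{t-1},D,D').
\end{equation}
Then, by the tower property applied iteratively,
\begin{equation}
\mathbb E_{\theta_{1:T}\sim \mathcal A(D)} \bigl[e^{\lambda \gamma(\theta_{1:T};\mathcal A,D,D')}\bigr] = \mathbb E\!\left[\prod_{t=1}^T e^{\lambda \gamma(\theta_t;\mathcal M_t,\theta_{t-1},D,D')}\right],
\end{equation}
and conditioning on $\theta_{1:t-1}$ and integrating out $\theta_t$ first uses the definition of $\alpha_{\mathcal M_t}(\lambda;\theta_{t-1},D,D')$ in \eqref{eq:logmgf}, so each inner factor is upper bounded by $e^{\alpha_{\mathcal M_t}(\lambda)}$ from \eqref{eq:alpha} uniformly in $\theta_{t-1}$. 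Iterating yields
\begin{equation}
\mathbb E_{\theta_{1:T}\sim \mathcal A(D)}\bigl[e^{\lambda \gamma(\theta_{1:T};\mathcal A,D,D')}\bigr] \leq \prod_{t=1}^T e^{\alpha_{\mathcal M_t}(\lambda)}.
\end{equation}
Taking the logarithm and then the supremum over $D,D'$ gives the claimed composability bound $\alpha_{\mathcal A}(\lambda)\leq \sum_t \alpha_{\mathcal M_t}(\lambda)$.

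For the \textbf{tail bound}, the plan is to turn moment control into a high-probability control on the privacy loss via Markov's inequality, and then convert that into $(\varepsilon,\delta)$-DP by the standard decomposition over a bad event. Fix adjacent $D,D'$ and let $\gamma := \gamma(\theta_{1:T};\mathcal A,D,D')$. For any $\lambda>0$, Markov's inequality gives
\begin{equation}
\Pr_{\theta_{1:T}\sim \mathcal A(D)}(\gamma > \varepsilon) = \Pr\bigl(e^{\lambda \gamma} > e^{\lambda\varepsilon}\bigr) \leq e^{-\lambda \varepsilon}\,\mathbb E[e^{\lambda \gamma}] \leq \exp\bigl(\alpha_{\mathcal A}(\lambda) - \lambda \varepsilon\bigr),
\end{equation}
and optimizing over $\lambda$ yields $\Pr(\gamma>\varepsilon)\leq \delta$ with $\delta = \inf_\lambda \exp(\alpha_{\mathcal A}(\lambda) - \lambda \varepsilon)$. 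Writing $B := \{\theta_{1:T}: \gamma(\theta_{1:T};\mathcal A,D,D')>\varepsilon\}$, for any measurable $S\subseteq \mathbb R^{pT}$ we decompose
\begin{equation}
\Pr(\mathcal A(D)\in S) = \Pr(\mathcal A(D)\in S\cap B) + \Pr(\mathcal A(D)\in S\setminus B).
\end{equation}
The first term is at most $\Pr(\mathcal A(D)\in B)\leq \delta$ by the tail bound above, while on $S\setminus B$ we have $e^{\gamma}\leq e^{\varepsilon}$ pointwise, so integrating the pointwise Radon--Nikodym inequality $p(\mathcal A(D)=\theta_{1:T})\leq e^{\varepsilon} p(\mathcal A(D')=\theta_{1:T})$ gives $\Pr(\mathcal A(D)\in S\setminus B)\leq e^{\varepsilon}\Pr(\mathcal A(D')\in S)$. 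Summing yields the DP guarantee of Definition \ref{def:dp}; projecting out $(\theta_1,\ldots,\theta_{T-1})$ transfers this to the output $\theta_T$ of $\mathcal A$.

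The main subtlety I expect is the correct handling of the adaptive structure in the composability step: the two $\alpha$'s on the right-hand side are suprema over \emph{all} auxiliary inputs $\theta_{t-1}$, not only over those realized with positive probability by $\mathcal A(D)$. This is what allows the iterated conditional expectations to collapse into the product $\prod_t e^{\alpha_{\mathcal M_t}(\lambda)}$ without any additional measure-theoretic care beyond taking $\sup$ before $\mathbb E$. The tail bound step is then routine, and the only check needed is that the pointwise inequality $p(\mathcal A(D)=\cdot)\leq e^{\varepsilon} p(\mathcal A(D')=\cdot)$ on $B^c$ integrates against the (possibly non-product) dominating measure for $\mathcal A$, which is automatic since both laws admit densities with respect to Lebesgue measure on $\mathbb R^{pT}$ in our Gaussian-mechanism setting.
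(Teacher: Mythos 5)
Your proof is correct, and it reproduces the standard argument from the appendix of \cite{Abadi2016}: composability via the chain-rule factorization of the transcript density, the telescoping decomposition of the privacy loss, and the iterated conditional expectation bounded uniformly by taking the supremum over auxiliary inputs before the expectation; and the tail bound via Markov's inequality plus the bad-event decomposition of $S$ into $S\cap B$ and $S\setminus B$. Note that the paper does not prove this statement at all — it is restated verbatim from \cite{Abadi2016} purely for notation — so there is no in-paper proof to compare against; your argument is the canonical one, and your closing remarks (taking the supremum over $\theta_{t-1}$ before integrating, and passing from the full transcript to $\theta_T$ by post-processing) correctly flag the only points where care is needed.
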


\begin{lemma}\label{lemma:sensitivitygrad}
% Let $\mathcal M_t = \mu_{\theta_{t - 1}, D} + \varsigma \,\mathcal N (0, I_p)$ be the randomized mechanism induced by
Consider the $t$-th iteration of Algorithm \ref{alg:dp-gd} on a dataset $D$ without noise, \emph{i.e.},
\begin{equation}
     \mu_{\theta_{t - 1}, D} = \theta_{t-1} - \frac{\eta}{n} \sum_{(x_i, y_i) \in D} g_{\clip}(x_i, y_i, \theta_{t-1}).
\end{equation}
% \theta_{t-1} + g_{\theta_{t-1}, D} =
% and $\varsigma = \sqrt \eta \, \frac{2 \clip}{n} \sigma$.
Then, we have
\begin{equation}
    \Delta_2 \mu = \sup_{\theta_{t - 1} \in \R^p, D \textup{ adjacent with } D'} \norm{\mu_{\theta_{t - 1}, D} - \mu_{\theta_{t - 1}, D'}}_2 \leq \frac{2 \eta \clip}{n}.
\end{equation}
\end{lemma}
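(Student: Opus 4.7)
The plan is to exploit the fact that adjacent datasets $D, D'$ differ in exactly one sample, so that the deterministic update $\mu_{\theta_{t-1}, D} - \mu_{\theta_{t-1}, D'}$ collapses to a single pair of per-sample clipped gradients, and then invoke the clipping bound $\lVert g_{\clip}(\cdot)\rVert_2 \le \clip$ together with the triangle inequality.

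Concretely, fix arbitrary $\theta_{t-1} \in \R^p$ and let $D, D'$ be adjacent. Without loss of generality, I will write $D = S \cup \{(x, y)\}$ and $D' = S \cup \{(x', y')\}$ for some shared multiset $S$ of size $n-1$ (the argument extends immediately to the case where $D'$ simply removes a sample, by treating the missing contribution as a zero clipped gradient). The $\theta_{t-1}$-dependent terms over $S$ cancel in the difference, so
\begin{equation*}
\mu_{\theta_{t-1}, D} - \mu_{\theta_{t-1}, D'} = -\frac{\eta}{n}\Bigl( g_{\clip}(x, y, \theta_{t-1}) - g_{\clip}(x', y', \theta_{t-1}) \Bigr).
\end{equation*}

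Next, I would observe that the clipping step in Algorithm \ref{alg:dp-gd} enforces $\lVert g_{\clip}(x_i, y_i, \theta_{t-1})\rVert_2 \le \clip$ for any training point: indeed, by construction
\begin{equation*}
g_{\clip}(x_i, y_i, \theta_{t-1}) = \frac{g(x_i, y_i, \theta_{t-1})}{\max\!\bigl(1, \lVert g(x_i, y_i, \theta_{t-1})\rVert_2/\clip\bigr)},
\end{equation*}
whose norm is $\min(\lVert g(x_i, y_i, \theta_{t-1})\rVert_2, \clip) \le \clip$. Applying the triangle inequality to the displayed difference then yields
\begin{equation*}
\lVert \mu_{\theta_{t-1}, D} - \mu_{\theta_{t-1}, D'}\rVert_2 \le \frac{\eta}{n}\bigl(\lVert g_{\clip}(x, y, \theta_{t-1})\rVert_2 + \lVert g_{\clip}(x', y', \theta_{t-1})\rVert_2\bigr) \le \frac{2\eta\clip}{n}.
\end{equation*}
Taking the supremum over $\theta_{t-1}$ and adjacent $D, D'$ gives the claim. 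There is no real obstacle here — the argument is a direct consequence of the uniform clipping guarantee, which is exactly why clipping is introduced in the first place; the only mild care is to handle both flavors of adjacency (substitution vs.\ add/remove), which is dispatched by treating the missing sample as contributing a zero vector of norm $0 \le \clip$.
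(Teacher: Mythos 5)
Your proof is correct and follows essentially the same route as the paper's: cancel the contributions of the $n-1$ shared samples, apply the triangle inequality to the remaining pair of per-sample clipped gradients, and use $\lVert g_{\clip}\rVert_2 \le \clip$. The extra remark about handling add/remove adjacency by treating the missing sample as a zero gradient is a harmless refinement the paper does not spell out.
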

\begin{proof}
    We have
    \begin{equation}
        \begin{aligned}
            \Delta_2 \mu  &= \frac{\eta}{n} \sup_{\theta_{t - 1} \in \R^p, D \textup{ adjacent with } D'} \norm{\sum_{(x_i, y_i) \in D} g_{\clip}(x_i, y_i, \theta_{t-1}) - \sum_{(x_i, y_i) \in D'} g_{\clip}(x_i, y_i, \theta_{t-1})}_2 \\
            &= \frac{\eta}{n} \sup_{\theta_{t - 1} \in \R^p , (x, y), (x', y') \in (\mathcal X, \mathcal Y)} \norm{g_{\clip}(x, y, \theta_{t-1}) -  g_{\clip}(x', y', \theta_{t-1})}_2 \\
            &\leq \frac{\eta}{n} \sup_{\theta_{t - 1} \in \R^p, (x, y), (x', y') \in (\mathcal X, \mathcal Y)} \left( \norm{g_{\clip}(x, y, \theta_{t-1})}_2 + \norm{g_{\clip}(x', y', \theta_{t-1})}_2 \right) \\
            &\leq \frac{2 \eta \clip}{n},
        \end{aligned}
    \end{equation}
    where the last inequality comes from $\norm{g_{\clip}(x, y, \theta_{t-1})}_2 \leq \clip$ for any $(x, y) \in (\mathcal X, \mathcal Y)$ and any $\theta_{t-1} \in \R^p$.
\end{proof}

\begin{lemma}\label{lemma:mgf}
Let $\mathcal M_t$ be the randomized mechanism induced by the $t$-th iteration of Algorithm \ref{alg:dp-gd}. Then, for every $t \in [T]$ we have
\begin{equation}
\alpha_{\mathcal M_t}(\lambda) = \sup_{\theta_{t - 1}, D \textup{ adjacent with } D'} \alpha_{\mathcal M_t}(\lambda; \theta_{t - 1}, D, D') \leq \frac{\eta}{2 \sigma^2} \left( \lambda + \lambda^2 \right).
\end{equation}
\end{lemma}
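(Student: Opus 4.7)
}

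The plan is to exploit that the $t$-th iteration of Algorithm \ref{alg:dp-gd}, conditionally on $\theta_{t-1}$, acts exactly as a Gaussian mechanism applied to the deterministic quantity $\mu_{\theta_{t-1}, D}$ from Lemma \ref{lemma:sensitivitygrad}, and then to compute the log-moment generating function of the privacy loss in closed form. Fix $\theta_{t-1} \in \R^p$ and an adjacent pair of datasets $D, D'$. By the update rule of Algorithm \ref{alg:dp-gd}, we have $\mathcal M_t(\theta_{t-1}, D) \sim \mathcal N(\mu_{\theta_{t-1}, D}, s^2 I_p)$ with $s^2 = 4\eta \clip^2 \sigma^2/n^2$, and similarly for $D'$ with the same covariance.

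Setting $\Delta := \mu_{\theta_{t-1}, D} - \mu_{\theta_{t-1}, D'}$ and writing $p_D, p_{D'}$ for the two Gaussian densities, I would first verify by direct algebra that the privacy loss admits the explicit form
\begin{equation*}
    \gamma(\theta; \mathcal M_t, \theta_{t-1}, D, D') = \log \frac{p_D(\theta)}{p_{D'}(\theta)} = \frac{1}{s^2}\Delta^\top \theta + \frac{\|\mu_{\theta_{t-1}, D'}\|_2^2 - \|\mu_{\theta_{t-1}, D}\|_2^2}{2s^2}.
\end{equation*}
Since $\theta \sim \mathcal N(\mu_{\theta_{t-1}, D}, s^2 I_p)$, the random variable $\gamma(\theta)$ is a one-dimensional Gaussian; expanding $\Delta^\top \mu_{\theta_{t-1}, D}$ in the expectation yields mean $\|\Delta\|_2^2/(2 s^2)$ and variance $\|\Delta\|_2^2/s^2$. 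Plugging these into the standard Gaussian MGF formula gives
\begin{equation*}
    \alpha_{\mathcal M_t}(\lambda; \theta_{t-1}, D, D') = \log \E_{\theta \sim \mathcal M_t(\theta_{t-1},D)} \bigl[e^{\lambda \gamma(\theta)}\bigr] = \frac{\|\Delta\|_2^2}{2 s^2} \bigl(\lambda + \lambda^2\bigr).
\end{equation*}

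Finally, I would take the supremum over $\theta_{t-1}$ and over adjacent $D, D'$: the sensitivity bound $\|\Delta\|_2 \le 2\eta \clip/n$ from Lemma \ref{lemma:sensitivitygrad} together with the explicit value $s^2 = 4\eta \clip^2 \sigma^2/n^2$ collapses the prefactor to $\eta/(2\sigma^2)$, which yields the claim. The computation is essentially routine once the Gaussian structure of each iteration is recognized; the only subtle point worth flagging is that the $\sqrt{\eta}$ scaling of the injected noise in Algorithm \ref{alg:dp-gd} is precisely what makes $\|\Delta\|_2^2/s^2$, and hence the per-iteration log-MGF, proportional to $\eta$. This linear-in-$\eta$ dependence is exactly what makes the moment accountant compatible with the subsequent continuous-time limit $\eta \to 0$, $\eta T \to \tau$ used in the proof of Proposition \ref{prop:alg1dp}; the advanced composition theorem would instead introduce a $\log T$ factor that blows up in this limit, as noted in the excerpt.
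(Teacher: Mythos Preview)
Your proposal is correct and follows essentially the same approach as the paper's proof: both recognize that the $t$-th iteration is a Gaussian mechanism, compute the privacy loss as an affine function of $\theta$ (hence Gaussian under $\theta \sim \mathcal M_t(\theta_{t-1}, D)$), obtain the log-MGF $\frac{\|\Delta\|_2^2}{2s^2}(\lambda+\lambda^2)$, and conclude via the sensitivity bound from Lemma \ref{lemma:sensitivitygrad}. The paper expands the expectation explicitly (change of variable and rotational invariance) rather than invoking the Gaussian MGF formula directly, but the computation is identical.
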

\begin{proof}
    As $\mathcal M_t$ is a Gaussian mechanism with parameter $\varsigma = \sqrt \eta \, \frac{2 \clip}{n} \sigma$, we have
    \begin{equation}\label{eq:distrmecht}
        p \left( \mathcal M_t ( \theta_{t - 1}, D) = \theta \right) = \frac{1}{\sqrt{ ( 2 \pi \varsigma^2 )^p}} \exp \left( - \frac{ \norm{\theta - \mu_{\theta_{t - 1}, D}}_2^2}{2 \varsigma^2} \right),
    \end{equation}
    where $\mu_{\theta_{t - 1}, D}$ represents the update without noise
    \begin{equation}\label{eq:muD}
        \mu_{\theta_{t - 1}, D} = \theta_{t-1} - \frac{\eta}{n} \sum_{(x_i, y_i) \in D} g_{\clip}(x_i, y_i, \theta_{t-1}).
    \end{equation}
    %  \theta_{t-1} - \eta g_{\theta_{t-1}, D}
    Equivalently, \eqref{eq:distrmecht} also holds for $\P \left( \mathcal M_t ( \theta_{t - 1}, D') = \theta \right)$, with $\mu_{\theta_{t - 1}, D'} = \theta_{t-1} - \eta g_{\theta_{t-1}, D'}$.
    Then, we have that the privacy loss defined in \eqref{eq:privacyloss} takes the form
    \begin{equation}
    \begin{aligned}
    \gamma&(\theta; \mathcal M_t, \theta_{t - 1}, D, D') \\
    &= \log \frac{p \left( \mathcal M_t ( \theta_{t - 1}, D) = \theta \right)
}{p \left( \mathcal M_t( \theta_{t - 1}, D') = \theta \right)} \\
 %   &= - \frac{ \norm{\theta - \mu_{\theta_{t - 1}, D}}_2^2}{2 \varsigma^2} + \frac{ \norm{\theta - \mu_{\theta_{t - 1}, D'}}_2^2}{2 \varsigma^2} \\
    &= - \frac{1}{2 \varsigma^2} \left ( \norm{\theta - \mu_{\theta_{t - 1}, D}}_2^2 - \norm{\theta - \mu_{\theta_{t - 1}, D'}}_2^2 \right) \\
    &= - \frac{1}{2 \varsigma^2} \left ( 2 \theta^\top (\mu_{\theta_{t - 1}, D'} - \mu_{\theta_{t - 1}, D}) + \norm{\mu_{\theta_{t - 1}, D}}_2^2 - \norm{\mu_{\theta_{t - 1}, D'}}_2^2 \right) \\
    &= - \frac{1}{2 \varsigma^2} \left ( 2 (\theta - \mu_{\theta_{t - 1}, D})^\top (\mu_{\theta_{t - 1}, D'} - \mu_{\theta_{t - 1}, D}) - \norm{\mu_{\theta_{t - 1}, D}}_2^2 + 2 \mu_{\theta_{t - 1}, D}^\top \mu_{\theta_{t - 1}, D'} - \norm{\mu_{\theta_{t - 1}, D'}}_2^2 \right) \\
    &= - \frac{1}{2 \varsigma^2} \left ( 2 (\theta - \mu_{\theta_{t - 1}, D})^\top \Delta_{\theta_{t - 1}, D, D'} - \norm{\Delta_{\theta_{t - 1}, D, D'}}_2^2 \right),
    \end{aligned}
    \end{equation}
    where we introduce the shorthand $\Delta_{\theta_{t - 1}, D, D'} = \mu_{\theta_{t - 1}, D'} - \mu_{\theta_{t - 1}, D}$.
    We can now compute the moment generating function of the privacy loss
    \begin{equation}
    \begin{aligned}
        \alpha&_{\mathcal M_t}(\lambda;\theta_{t - 1},D,D') \\
        &= \log \E_{\theta \sim \mathcal M_t(\theta_{t - 1}, D)}\left [\exp(\lambda \gamma(\theta; \mathcal M_t, \theta_{t - 1}, D, D')) \right] \\
        &= \log \left( \exp \left( \lambda \frac{\norm{\Delta_{\theta_{t - 1}, D, D'}}_2^2}{2 \varsigma^2} \right) \E_{\theta \sim \mathcal M(\theta_{t - 1}, D)}\left [\exp \left( - \lambda \frac{ (\theta - \mu_{\theta_{t - 1}, D})^\top \Delta_{\theta_{t - 1}, D, D'}}{\varsigma^2} \right) \right] \right) \\
        &= \lambda \frac{\norm{\Delta_{\theta_{t - 1}, D, D'}}_2^2}{2 \varsigma^2} + \log \E_{\theta \sim \mathcal N(\mu_{\theta_{t - 1}, D}, \varsigma^2 I_p )}\left [\exp \left( - \lambda \frac{ (\theta - \mu_{\theta_{t - 1}, D})^\top \Delta_{\theta_{t - 1}, D, D'}}{\varsigma^2} \right) \right] \\
        &= \lambda \frac{\norm{\Delta_{\theta_{t - 1}, D, D'}}_2^2}{2 \varsigma^2} + \log \E_{\theta' \sim \mathcal N(0, I_p)}\left [\exp \left( - \lambda \frac{{\theta'}^\top \Delta_{\theta_{t - 1}, D, D'}}{\varsigma} \right) \right] \\
        &= \lambda \frac{\norm{\Delta_{\theta_{t - 1}, D, D'}}_2^2}{2 \varsigma^2} + \log \E_{\rho \sim \mathcal N(0, 1)}\left [\exp \left( - \lambda \frac{\norm{\Delta_{\theta_{t - 1}, D, D'}}_2}{\varsigma} \rho \right) \right] \\
        &= \lambda \frac{\norm{\Delta_{\theta_{t - 1}, D, D'}}_2^2}{2 \varsigma^2} + \lambda^2 \frac{\norm{\Delta_{\theta_{t - 1}, D, D'}}_2^2}{2 \varsigma^2} \\
        &= \frac{\norm{\Delta_{\theta_{t - 1}, D, D'}}_2^2}{2 \varsigma^2} \left( \lambda + \lambda^2 \right),
    \end{aligned}
    \end{equation}
    where we perform a change of variable in the fifth line, use the rotational invariance of the Gaussian distribution in the sixth line, and compute the moment generating function of a standard Gaussian on the seventh line.
    Thus, we can write
    \begin{equation}\label{eq:lemmamgf}
    \alpha_{\mathcal M_t}(\lambda) = \sup_{\theta_{t - 1}, D \textup{ adjacent with } D'} \alpha_{\mathcal M_t}(\lambda; \theta_{t - 1}, D, D') = \frac{(\Delta_2 \mu)^2}{2 \varsigma^2} \left( \lambda + \lambda^2 \right),
    \end{equation}
    where
    \begin{equation}
        \Delta_2 \mu = \sup_{\theta_{t - 1}, D \textup{ adjacent with } D'} \norm{\Delta_{\theta_{t - 1}, D, D'}}_2
    \end{equation}
    is the $\ell_2$ sensitivity of the gradient updates in Algorithm \ref{alg:dp-gd}. By Lemma \ref{lemma:sensitivitygrad}, we have that $\Delta_2 \mu \leq 2 \eta \clip / n$. Thus, plugging $\varsigma = \sqrt \eta \, \frac{2 \clip}{n} \sigma$ in \eqref{eq:lemmamgf}, we readily get the desired result.
    \end{proof}

\paragraph{Proof of Proposition \ref{prop:alg1dp}.}
Algorithm \ref{alg:dp-gd} is a randomized algorithm $\mathcal A$ consisting of a sequence of independent adaptive mechanisms $\mathcal M_1, \ldots, \mathcal M_T$ such that, for all $t \in [T]$, by Lemma \ref{lemma:mgf}, we have
\begin{equation}
    \alpha_{\mathcal M_t}(\lambda) \leq \frac{\eta}{2 \sigma^2} \left( \lambda + \lambda^2 \right).
\end{equation}
Thus, by composability (see Theorem \ref{thm:moment}), we have
\begin{equation}
    \alpha_{\mathcal A}(\lambda) \leq \frac{\eta T}{2 \sigma^2}  \left( \lambda + \lambda^2 \right).
\end{equation}

Set $\delta \in (0, 1)$ and $\varepsilon \in (0, 8 \log (1 / \delta))$, and suppose that
\begin{equation}\label{eq:thmsupp}
     \frac{\eta T}{2 \sigma^2} \leq \frac{\varepsilon^2}{16 \log(1 / \delta)}.
\end{equation}
Then, considering $\lambda > 0$, we have that
\begin{equation}
\begin{aligned}
    \exp \left( \alpha_{\mathcal A}(\lambda) - \lambda \varepsilon \right) &\leq \exp \left( \frac{\varepsilon^2}{16 \log(1 / \delta)} \left( \lambda + \lambda^2 \right) - \lambda \varepsilon \right) \\
    &\leq \exp \left( \frac{\varepsilon^2}{16 \log(1 / \delta)}\lambda^2 - \left( 1 - \frac{\varepsilon}{16 \log(1 / \delta)} \right) \lambda \varepsilon \right) \\
    &\leq \exp \left( \frac{\varepsilon^2}{16 \log(1 / \delta)}\lambda^2 - \frac{\lambda \varepsilon}{2} \right).
\end{aligned}
\end{equation}
Setting $\lambda^* = 4 \log(1 / \delta) / \varepsilon$ we get
\begin{equation}
\begin{aligned}
    \exp \left( \alpha_{\mathcal A}(\lambda^*) - \lambda^* \varepsilon \right) &\leq \exp \left( \frac{\varepsilon^2}{16 \log(1 / \delta)}{\lambda^*}^2 - \frac{\lambda^* \varepsilon}{2} \right) \\
    &= \exp \left( \log(1 / \delta) - 2 \log(1/\delta) \right) \\
    &= \exp \left( - \log(1 / \delta) \right) = \delta.
\end{aligned}
\end{equation}
Then, by the tail bound in Theorem \ref{thm:moment}, we have that $\mathcal A$ is $(\varepsilon, \delta)$-differentially private. In our argument, we consider $\delta \in (0, 1)$, $\varepsilon \in (0, 8 \log (1 / \delta))$, and the inequality in \eqref{eq:thmsupp}, which can be rewritten as 
\begin{equation}
    \sigma \geq \sqrt {\eta T} \frac{\sqrt{8 \log(1 / \delta)}}{\varepsilon},
\end{equation}
which readily gives the desired result. \qed

We now move to the proof of Proposition \ref{prop:clipandcliploss}, and then present the auxiliary Proposition \ref{prop:Lclipisclip} that will be useful in the discussion on the SDE defined in \eqref{eq:sde1}. Next, in Proposition \ref{prop:convergenceEM}  we formalize the convergence of Algorithm \ref{alg:dp-gd} in the limit of small learning rates $\eta \to 0$. To conclude, we state the auxiliary Lemma \ref{lemma:convind}, useful for the final proof of Proposition \ref{prop:dpsde}.

\paragraph{Proof of Proposition \ref{prop:clipandcliploss}.}
At every iteration $t$ of Algorithm \ref{alg:dp-gd}, we have
\begin{equation}
    \begin{aligned}
        g_{\clip}(x_i, y_i, \theta_{t-1}) &=  g(x_i, y_i, \theta_{t-1}) / \max\left(1, \frac{\|g(x_i, y_i, \theta_{t-1})\|_2}{\clip}\right)\\
        &= \frac{\varphi(x_i) \ell'(\varphi(x_i)^\top \theta_{t-1} - y_i)}{\max\left(1, \frac{\norm{\varphi(x_i)}_2 \left| \ell' \left(\varphi(x_i)^\top \theta_{t-1} - y_i \right) \right| }{\clip}\right)} \\
        &= \varphi(x_i) \ell'_{i, \clip} \left( \varphi(x_i)^\top \theta_{t-1} - y_i\right) \\
        &= \nabla_{\theta} \ell_{i, \clip} \left( \varphi(x_i)^\top \theta_{t-1} - y_i \right). \qed
    \end{aligned}
\end{equation}

\begin{proposition}\label{prop:Lclipisclip}
    Let $\ell: \R \to \R$ be a differentiable function with Lipschitz-continuous derivative. Then, for all $\theta \in \R^p$, we have
    \begin{equation}
        \norm{\nabla \mathcal L_{\clip}(\theta)}_2 \leq \clip.
    \end{equation}
    Furthermore, there exists a constant $K$ such that, for all $\theta, \theta' \in \R^p$, we have
    \begin{equation}
        \norm{\nabla \mathcal L_{\clip}(\theta) - \nabla \mathcal L_{\clip}(\theta')}_2 \leq K \norm{\theta - \theta'}_2,
    \end{equation}
    \simonenew{with $K \leq L \sum_{i = 1}^n \norm{\varphi(x_i)}_2^2 / n$, where $L$ is the Lipschitz constant of $\ell'$.}
\end{proposition}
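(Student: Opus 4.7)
The plan is to prove the two claims separately, both of which are short consequences of what has already been set up in Proposition \ref{prop:clipandcliploss} and \eqref{eq:lclip}. For the uniform bound on $\|\nabla \mathcal L_{\clip}\|_2$, I would first write
\[
    \nabla \mathcal L_{\clip}(\theta) = \frac{1}{n} \sum_{i=1}^n \nabla_\theta \ell_{i,\clip}(\varphi(x_i)^\top \theta - y_i),
\]
and then invoke Proposition \ref{prop:clipandcliploss} to identify each summand with $g_{\clip}(x_i, y_i, \theta)$. By construction of the clipping step in Algorithm \ref{alg:dp-gd}, $\|g_{\clip}(x_i, y_i, \theta)\|_2 \leq \clip$ uniformly in $i$ and $\theta$. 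A triangle inequality on the average then yields $\|\nabla \mathcal L_{\clip}(\theta)\|_2 \leq \clip$.

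For the Lipschitz bound, the key observation is that the scalar map $z \mapsto \ell'_{i,\clip}(z)$ in \eqref{eq:lclip} is nothing other than $\ell'(z)$ clipped in magnitude to the threshold $c_i := \clip/\|\varphi(x_i)\|_2$, i.e.\ $\ell'_{i,\clip}(z) = \operatorname{sign}(\ell'(z)) \min(|\ell'(z)|, c_i)$. The scalar clipping operation $u \mapsto \operatorname{sign}(u)\min(|u|, c_i)$ is 1-Lipschitz for any $c_i > 0$, since it is the metric projection onto the convex interval $[-c_i, c_i]$. Composition with $\ell'$, which is Lipschitz with constant $L$ by hypothesis, therefore gives $|\ell'_{i,\clip}(z) - \ell'_{i,\clip}(z')| \leq L\,|z - z'|$ for every $i$.

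With these two ingredients in place, I would conclude by writing
\[
    \nabla \mathcal L_{\clip}(\theta) - \nabla \mathcal L_{\clip}(\theta') = \frac{1}{n} \sum_{i=1}^n \varphi(x_i) \bigl[\ell'_{i,\clip}(\varphi(x_i)^\top \theta - y_i) - \ell'_{i,\clip}(\varphi(x_i)^\top \theta' - y_i)\bigr],
\]
bounding each summand in norm by $\|\varphi(x_i)\|_2 \cdot L\,|\varphi(x_i)^\top(\theta - \theta')| \leq L \|\varphi(x_i)\|_2^2 \|\theta - \theta'\|_2$ via the scalar Lipschitz bound and Cauchy--Schwarz, and then applying the triangle inequality on the average to obtain $K \leq L \sum_{i=1}^n \|\varphi(x_i)\|_2^2 / n$.

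There is no real obstacle here: both claims reduce to the observation that the per-sample clipping map is 1-Lipschitz and that each per-sample clipped gradient has norm at most $\clip$, both of which follow immediately from the definition of $\ell_{i,\clip}$ in \eqref{eq:lclip}. The only mild subtlety is checking that the clipping map is indeed 1-Lipschitz even though $\ell'$ might be unbounded, but this is handled uniformly by the projection-onto-convex-set interpretation.
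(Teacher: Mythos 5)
Your proof is correct and follows essentially the same route as the paper's: bound each per-sample clipped gradient by $\clip$ and average for the first claim, and show the clipped scalar derivative $\ell'_{i,\clip}$ is $L$-Lipschitz and then apply Cauchy--Schwarz and the triangle inequality for the second. Your justification of the $1$-Lipschitzness of the scalar clipping map via projection onto the interval $[-c_i, c_i]$ is in fact slightly cleaner than the paper's sign-based case analysis, but it is the same underlying observation.
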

\begin{proof}
    Following the definition in \eqref{eq:lclip}, we have that, % $\ell_{i, \clip} \left( \varphi(x_i)^\top \theta - y_i \right)$ is $\clip$-Lipschitz
    for all $i \in [n]$, 
    \begin{equation}
        \nabla_\theta \ell_{i, \clip} \left( \varphi(x_i)^\top \theta - y_i \right) = \nabla_\theta \ell \left( \varphi(x_i)^\top \theta - y_i \right) / \max\left(1, \frac{\norm{ \nabla_\theta \ell \left( \varphi(x_i)^\top \theta - y_i \right)}_2}{\clip}\right),
    \end{equation}
    which readily gives
    \begin{equation}
        \norm{\nabla_\theta \ell_{i, \clip} \left( \varphi(x_i)^\top \theta - y_i \right)}_2 \leq \clip.
    \end{equation}

    Then, we can conclude
    \begin{equation}\label{eq:Lclipislip}
    \begin{aligned}
        \norm{\nabla \mathcal L_{\clip}(\theta)}_2 &= \norm{\nabla_\theta \frac{1}{n} \sum_{i = 1}^n \ell_{i, \clip}(\varphi(x_i)^\top \theta - y_i)}_2 \\
        &\leq \frac{1}{n} \sum_{i = 1}^n \norm{\nabla_\theta \ell_{i, \clip}(\varphi(x_i)^\top \theta - y_i)}_2 \\
        &\leq \clip.
    \end{aligned}
    \end{equation}
    % , when intended as a function with respect to the parameters $\theta$. This also implies that $\mathcal L_{\clip}(\theta)$ is $\clip$-Lipschitz, which implies $\norm{\nabla \mathcal L_{\clip}}_2 \leq \clip$.

    For the second part of the claim, following a similar argument as before and exploiting the chain rule, it is sufficient to show that, for every $i \in [n]$, $\ell_{i, \clip}$ has Lipschitz-continuous derivative.
    % Notice that, since $\ell$ is convex, $\ell '$ is a non-decreasing function. Furthermore,
    For all $z \in \R$ such that $\ell'(z) > 0$, we can write $ \ell_{i, \clip}'(z) = \min \left( \ell' (z), \clip / \norm{\varphi(x_i)}_2 \right)$. Then, if $\ell '$ is Lipschitz-continuous, also $\ell_{i, \clip}'$ is, as it can be written as composition of Lipschitz-continuous functions. A similar argument holds for all $z \in \R$ such that $\ell'(z) < 0$. Furthermore, for all $z$ such that $\left| \ell'(z) \right| < \clip / \norm{\varphi(x_i)}_2$, we simply have $\ell_{i, \clip}' = \ell '$. Thus, to prove that $\ell_{i, \clip}'$ is Lipschitz-continuous, it suffices that $\ell '$ is Lipschitz-continuous, as assumed by the proposition. \simonenew{Notice that this argument also proves that the Lipschitz constant of $\ell_{i, \clip}'$ is smaller or equal than the Lipschitz constant $L$ of $\ell'$. Thus, we have
    \begin{equation*}
        \begin{aligned}
            \norm{\nabla \mathcal L_{\clip}(\theta) - \nabla \mathcal L_{\clip}(\theta')}_2 &= \norm{\nabla \frac{1}{n} \sum_{i = 1}^n \ell_{i, \clip}(\varphi(x_i)^\top \theta - y_i) - \nabla \frac{1}{n} \sum_{i = 1}^n \ell_{i, \clip}(\varphi(x_i)^\top \theta' - y_i)}_2 \\
            &= \norm{\frac{1}{n} \sum_{i = 1}^n \varphi(x_i) \left( \ell'_{i, \clip}(\varphi(x_i)^\top \theta - y_i) -  \ell'_{i, \clip}(\varphi(x_i)^\top \theta' - y_i) \right)}_2 \\
            &\leq \frac{1}{n} \sum_{i = 1}^n \norm{ \varphi(x_i)}_2 \left |  \ell'_{i, \clip}(\varphi(x_i)^\top \theta - y_i) -  \ell'_{i, \clip}(\varphi(x_i)^\top \theta' - y_i)  \right| \\
            &\leq \frac{L}{n} \sum_{i = 1}^n \norm{ \varphi(x_i)}_2^2 \norm{\theta - \theta'}_2,
        \end{aligned}
    \end{equation*}
    which proves the last statement.}
\end{proof}

Due to Proposition \ref{prop:Lclipisclip}, the SDE in \eqref{eq:sde1} satisfies the assumptions A1-4 in Section 4.5 of \cite{kloeden2011numerical}. This means that \eqref{eq:sde1} admits a unique strong solution $\Theta(t)$ (see Theorem 4.5.3 in \cite{kloeden2011numerical}). Furthermore, we also have that the assumptions in Equations (1.5), (1.6), (1.7) of \cite{menozzi2021density} are verified, implying that at every fixed time $t > 0$, the probability density function $p(\Theta(t))$ is continuous with respect to the coordinate $\theta$ (see Theorem 1.2 in \cite{menozzi2021density}). This property will be used later in Lemma \ref{lemma:convind} and Theorem \ref{prop:dpsde}.

In Section \ref{sec:dpgd}, we have mentioned that \eqref{eq:discretization} is the Euler-Maruyama discretization of \eqref{eq:sde1}. % This statement is formalized by the strong convergence of the Euler-Maruyama scheme.
% Before the statement of Proposition \ref{prop:convergenceEM}, it is convenient to
This can be formalized defining a family of random variables $\{ \theta^1_T, \theta^2_T, \ldots, \}$, where $\theta^j_T$ is the output of Algorithm \ref{alg:dp-gd} with learning rate $\eta_j := \eta / j$ and number of iterations $T_j := jT$, with $j$ being a positive integer. Algorithm \ref{alg:dp-gd} is defined such that, when performing the $t$-th iteration, we introduce the Gaussian random variable
\begin{equation}
    \frac{2 \clip}{n} \sigma \left( B(t \eta / j) - B((t - 1) \eta / j) \right),
\end{equation}
where $B(t)$ is the standard $p$-dimensional Wiener process in \eqref{eq:sde1}. This is coherent with the definition of Algorithm \ref{alg:dp-gd}, since these random variables are distributed as
\begin{equation}
    \frac{2 \clip}{n} \sigma \mathcal N(0, \eta I / j) = \sqrt {\eta_j} \frac{2 \clip}{n} \sigma \mathcal N(0, I),
\end{equation}
and they are independent with each other. Since \eqref{eq:sde1} respects all the necessary assumptions in Theorem 10.2.2 in \cite{kloeden2011numerical}, %, which in turn characterizes the convergence of Algorithm \ref{alg:dp-gd} to the solution of \eqref{eq:sde1}, as formalized in our Proposition \ref{prop:convergenceEM} below.
the following result holds:
\begin{proposition}\label{prop:convergenceEM}
Let $\tau = \eta T$. Then, for every $h > 0$, there exists $j^*$, such that for all $j > j^*$, we have
\begin{equation}\label{eq:strongconvergenceapp}
    \E \left[ \norm{\theta^j_T - \Theta(\tau) }_2 \right] < h.
\end{equation}
\end{proposition}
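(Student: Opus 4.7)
The plan is to invoke the strong convergence theorem for the Euler--Maruyama scheme (Theorem 10.2.2 in \cite{kloeden2011numerical}) directly, after verifying its hypotheses for the SDE \eqref{eq:sde1}. These hypotheses reduce to three standard conditions on the drift $b(\theta) = -\nabla \mathcal{L}_{\clip}(\theta)$ and the diffusion coefficient $\sigma(\theta) = \Sigma\, I_p$: global Lipschitz continuity in $\theta$, linear growth $\|b(\theta)\|_2 + \|\sigma(\theta)\|_{\op} \le K(1 + \|\theta\|_2)$, and square-integrability of the initial condition $\Theta(0) = \theta_0$.

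First, since the diffusion coefficient is constant in $\theta$, both Lipschitz continuity (with constant $0$) and linear growth are immediate for $\sigma$. For the drift, both properties follow directly from Proposition \ref{prop:Lclipisclip}: the uniform bound $\|\nabla \mathcal{L}_{\clip}(\theta)\|_2 \le \clip$ trivially implies the linear growth condition, and the second part of the same proposition provides a Lipschitz constant $K \le L \sum_i \|\varphi(x_i)\|_2^2 / n$, where $L$ is the Lipschitz constant of $\ell'$. The initial condition $\theta_0 = 0$ is deterministic and thus trivially square-integrable.

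With the hypotheses verified, Theorem 10.2.2 of \cite{kloeden2011numerical} yields a bound of the form
\begin{equation*}
    \E\left[\|\theta^j_T - \Theta(\tau)\|_2^2\right] \;\le\; C\, \eta_j \;=\; \frac{C \eta}{j},
\end{equation*}
where $C$ depends only on $\tau$, $\clip$, $\Sigma$, and the Lipschitz constant $K$. By Jensen's inequality,
\begin{equation*}
    \E\left[\|\theta^j_T - \Theta(\tau)\|_2\right] \;\le\; \sqrt{C \eta / j}.
\end{equation*}
For any $h > 0$, choosing $j^* \ge C\eta / h^2$ gives the desired bound for all $j > j^*$.

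There is essentially no obstacle at this stage: the real technical work was carried out upstream in Proposition \ref{prop:Lclipisclip}, which established that the clipped loss inherits a Lipschitz-continuous gradient from the assumption that $\ell'$ is Lipschitz. That regularity, combined with the fact that clipping makes $\nabla \mathcal{L}_{\clip}$ uniformly bounded, is exactly what Kloeden--Platen needs, and the present proposition reduces to a direct application of their classical result.
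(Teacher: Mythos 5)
Your proposal is correct and follows essentially the same route as the paper: the paper likewise verifies the hypotheses of Theorem 10.2.2 in \cite{kloeden2011numerical} via Proposition \ref{prop:Lclipisclip} (bounded, Lipschitz drift from the clipped loss; constant diffusion; deterministic initialization) and then invokes that strong-convergence result, having first coupled the Gaussian increments of Algorithm \ref{alg:dp-gd} to the same Wiener process $B(t)$ driving \eqref{eq:sde1}. Your additional quantitative rate $\sqrt{C\eta/j}$ is a harmless refinement of the qualitative statement.
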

Proposition \ref{prop:convergenceEM} formalizes in which sense the SDE in \eqref{eq:sde1} is a continuous limit of Algorithm \ref{alg:dp-gd}. %, in the limit of small learning rates $\eta$.
This is done through the family of algorithms with progressively smaller learning rates and larger number of iterations. It is important to notice that all these algorithms provide the same DP guarantee, as $\eta_j T_j = \eta T$ for every $j$, and the result of Proposition \ref{prop:alg1dp} depends on  learning rate and number of iterations only through their product $\eta T$.

\begin{lemma}\label{lemma:convind}
    We have that, for any open ball $S \subseteq \R^p$,
    \begin{equation}
        \lim_{j \to \infty} \P \left( \theta^j_T \in S \right) = \P \left( \Theta_\tau \in S \right).
    \end{equation}
\end{lemma}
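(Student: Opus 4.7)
The strategy is to upgrade the strong convergence guarantee of Proposition \ref{prop:convergenceEM} to the desired weak convergence on open balls, exploiting the continuity of the density of $\Theta(\tau)$ noted after Proposition \ref{prop:Lclipisclip}. Let $S = B(c, r) \subseteq \R^p$ denote an open ball with center $c$ and radius $r$. For any $\epsilon > 0$, define its inner and outer $\epsilon$-perturbations $S_{-\epsilon} = B(c, r - \epsilon)$ and $S_{+\epsilon} = B(c, r + \epsilon)$ (with $S_{-\epsilon} = \emptyset$ if $\epsilon \geq r$). A standard inclusion argument gives
\begin{equation*}
    \P \left( \Theta(\tau) \in S_{-\epsilon} \right) - \P \left( \norm{\theta^j_T - \Theta(\tau)}_2 > \epsilon \right) \leq \P \left( \theta^j_T \in S \right) \leq \P \left( \Theta(\tau) \in S_{+\epsilon} \right) + \P \left( \norm{\theta^j_T - \Theta(\tau)}_2 > \epsilon \right).
\end{equation*}

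Next, I would apply Markov's inequality together with Proposition \ref{prop:convergenceEM}: for any $\epsilon > 0$, $\P ( \norm{\theta^j_T - \Theta(\tau)}_2 > \epsilon ) \leq \E [ \norm{\theta^j_T - \Theta(\tau)}_2 ] / \epsilon \to 0$ as $j \to \infty$. Passing to the limit in the two-sided bound above, this yields
\begin{equation*}
    \P \left( \Theta(\tau) \in S_{-\epsilon} \right) \leq \liminf_{j \to \infty} \P \left( \theta^j_T \in S \right) \leq \limsup_{j \to \infty} \P \left( \theta^j_T \in S \right) \leq \P \left( \Theta(\tau) \in S_{+\epsilon} \right).
\end{equation*}

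Finally, I would let $\epsilon \to 0$. The density of $\Theta(\tau)$ is continuous on $\R^p$ (as observed in the paragraph preceding Proposition \ref{prop:convergenceEM}, via Theorem 1.2 of \cite{menozzi2021density}), hence locally bounded. Thus $\P(\Theta(\tau) \in S_{+\epsilon}) - \P(\Theta(\tau) \in S_{-\epsilon})$ is the integral of a bounded continuous density over the symmetric annulus $S_{+\epsilon} \setminus S_{-\epsilon}$, whose Lebesgue measure vanishes as $\epsilon \to 0$ by dominated convergence. Both side bounds therefore converge to $\P(\Theta(\tau) \in S)$, sandwiching $\lim_{j \to \infty} \P(\theta^j_T \in S)$ to the claimed value. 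The only subtle point is the continuity of the limiting density --- without it one could not close the gap between the inner and outer perturbations --- but this is supplied directly by the cited regularity result for SDEs with Lipschitz drift.
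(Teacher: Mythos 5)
Your proof is correct, and it relies on exactly the same two ingredients as the paper's: the strong ($L^1$) convergence of Proposition \ref{prop:convergenceEM} and the continuity of the density of $\Theta(\tau)$ from \cite{menozzi2021density}. The difference is purely in the logical organization. You run the standard direct portmanteau-style argument: sandwich $\P(\theta^j_T \in S)$ between $\P(\Theta(\tau) \in S_{-\epsilon})$ and $\P(\Theta(\tau) \in S_{+\epsilon})$ via Markov's inequality, then close the $\epsilon$-gap because the annulus $S_{+\epsilon}\setminus S_{-\epsilon}$ carries vanishing mass under a locally bounded density (equivalently, the sphere $\partial S$ is a continuity set of the limit law). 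The paper instead argues by contradiction: it supposes the probabilities differ by $h^*>0$ along a subsequence, extracts events of probability at least $h^*/2$ on which $\theta^j_T$ and $\Theta(\tau)$ sit on opposite sides of an $r^*$-thickened boundary (with $r^*$ chosen, again via continuity of the density, so that the thickened shell has mass less than $h^*/2$), and deduces $\E[\norm{\theta^j_T - \Theta(\tau)}_2] \geq h^* r^*/2$, contradicting strong convergence. The two arguments are essentially contrapositives of one another; yours is the cleaner and more reusable formulation (it is just "$L^1$ convergence $\Rightarrow$ convergence in probability $\Rightarrow$ weak convergence, evaluated on a continuity set"), while the paper's version avoids introducing the two-sided $\epsilon$-perturbation explicitly at the cost of a case split on which of the two probabilities is larger. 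One small remark: your appeal to continuity of the density is slightly stronger than needed for the final limit $\epsilon \to 0$ --- absolute continuity of the law of $\Theta(\tau)$ together with the fact that $\partial S$ is Lebesgue-null would already give $\P(\Theta(\tau)\in S_{+\epsilon}) - \P(\Theta(\tau)\in S_{-\epsilon}) \to 0$ by continuity of measure --- but local boundedness of the continuous density is what the cited regularity result supplies, so your usage is perfectly aligned with what the paper has available.
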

\begin{proof}
    By contradiction, let the thesis be false. Then, there exists a ball $S(r_0, r)$ with center $r_0$ and radius $r$ s.t.\ there is $h^* > 0$ and $j$ arbitrarily large that satisfy
    \begin{equation}
         \left| \P \left( \theta^j_T \in S(r_0, r) \right) - \P \left( \Theta_\tau \in S(r_0, r) \right) \right| > h^*.
    \end{equation}

    We first suppose that we have
    \begin{equation}\label{eq:suppose1ball}
         \P \left( \theta^j_T  \in S(r_0, r) \right) > \P \left( \Theta_\tau \in S(r_0, r) \right) + h^*,
    \end{equation}
    for $j$ arbitrarily large.
    Then, there is a sequence of events $\omega_j$ s.t.\ $\P \left( \omega_j \right) > h^*$ and
    \begin{equation}
        \theta^j_T (\omega_j) \in S(r_0, r), \qquad \Theta_\tau(\omega_j) \notin S(r_0,r).
    \end{equation}

    As mentioned previously in this section, the law $p(\Theta_\tau)$ is continuous, and therefore bounded in the closed set $S(r_0, r')$ for any $r'$. This implies that there exists $r^* > 0$, such that
    \begin{equation}
        \P \left( \Theta_\tau \notin S(r_0, r) \, \textup{ and } \, \Theta_\tau \in S(r_0, r+r^*) \right) < \frac{h^*}{2},
    \end{equation}
    which in turn implies that there is a sequence of events $\omega^*_j$, such that $\P \left( \omega_j^* \right) > h^* / 2$ and
    \begin{equation}
        \theta^j_T (\omega^*_j) \in S(r_0,r), \qquad \Theta_\tau(\omega^*_j) \notin S(r_0,r+r^*).
    \end{equation}

    Note that, for all such events, by triangle inequality, we have
    \begin{equation}
    \begin{aligned}
        \norm{\theta^j_T (\omega^*_j) - \Theta_\tau(\omega^*_j)}_2 &\geq \norm{\Theta_\tau(\omega^*_j) - r_0}_2 - \norm{\theta^j_T (\omega^*_j) - r_0}_2 \\
        &\geq \left(r + r^* - r_0 \right) - \left(r - r_0\right) \\
        & = r^*,
    \end{aligned}
    \end{equation}
    which implies
    \begin{equation}\label{eq:contradiction1ball}
        \E \left[ \norm{\theta^j_T  - \Theta_\tau}_2 \right] \geq \P \left( \omega_j^* \right) \E \left[ \left. \norm{\theta^j_T  - \Theta_\tau}_2 \right| \omega_j^* \right] \geq \frac{h^* r^*}{2}.
    \end{equation}
    
    This last equation holds for arbitrarily large $j$, which provides the desired contradiction with Proposition \ref{prop:convergenceEM}.

    If instead of \eqref{eq:suppose1ball} we have
    \begin{equation}
         \P \left( \Theta_\tau \in S(r_0, r) \right) > \P \left( \theta^j_T \in S(r_0, r) \right) + h^*,
    \end{equation}
    for $j$ arbitrarily large,
    then the argument follows in the same way. In fact, we exploit the continuity of the law $p(\Theta_\tau)$ to state that there exists $r^* > 0$, such that
    \begin{equation}
        \P \left( \Theta_\tau \notin S(r_0, r - r^*) \, \textup{ and } \, \Theta_\tau \in S(r_0, r) \right) < \frac{h^*}{2},
    \end{equation}
    thus giving a sequence of events $\omega^*_j$, such that $\P \left( \omega_j \right) > h^* / 2$ and
    \begin{equation}
        \theta^j_T (\omega^*_j) \notin S(r_0,r), \qquad \Theta_\tau(\omega^*_j) \in S(r_0,r-r^*).
    \end{equation}
    At this point, we can again apply the triangle inequality to obtain the same contradiction as in \eqref{eq:contradiction1ball}.
\end{proof}

\paragraph{Proof of Proposition \ref{prop:dpsde}.}
The assumption on $\Sigma$ guarantees, due to Theorem \ref{prop:alg1dp}, that for all $j>0$, the discretizations $\theta^j_T$ of the SDE \eqref{eq:sde1} are, for any $\delta \in (0, 1)$ and $\varepsilon \in (0, 8 \log (1 / \delta))$, $(\varepsilon, \delta)$-differentially private, since
\begin{equation}
    \sigma = \frac{n}{2 \clip} \Sigma \geq \sqrt {\tau} \frac{\sqrt{8 \log(1 / \delta)}}{\varepsilon} = \sqrt {\eta_j T_j} \frac{\sqrt{8 \log(1 / \delta)}}{\varepsilon},
\end{equation}
where the first step is due to  \eqref{eq:Sigmahp}.

By contradiction, suppose that $\Theta_\tau$ is not $(\varepsilon, \delta)$-DP.    
This means that there exists a point in the parameters space $\bar \theta$ such that, on adjacent datasets $D$ and $D'$, we have
\begin{equation}
    p \left( \Theta_\tau(D) = \bar \theta \right) > e^\varepsilon p \left( \Theta_\tau(D') = \bar \theta \right) + \delta.
\end{equation}
Since both the laws in the previous equation are continuous, we also have that there exists an open ball $S$ centered in $\bar \theta$, such that
\begin{equation}
    \P \left( \Theta_\tau(D) \in S \right) > e^\varepsilon \P \left( \Theta_\tau(D') \in S \right) + \delta,
\end{equation}
which we rewrite as
\begin{equation}\label{eq:dpsdecontr1}
    \P \left( \Theta_\tau(D) \in S \right) = e^\varepsilon \P \left( \Theta_\tau(D') \in S \right) + \delta + h,
\end{equation}
for some $h > 0$. Now, by Lemma \ref{lemma:convind}, we have that there exists $j^*$ large enough s.t.\
\begin{equation}\label{eq:dpsdecontr2}
    \left| \P \left( \theta^{j^*}_T(D) \in S \right) - \P \left( \Theta_\tau(D) \in S \right) \right| < \frac{h}{2},
\end{equation}
and
\begin{equation}\label{eq:dpsdecontr3}
    \left| \P \left( \theta^{j^*}_T(D') \in S \right) - \P \left( \Theta_\tau(D') \in S \right) \right| < \frac{h e^{-\varepsilon}}{2}.
\end{equation}

Hence, putting together \eqref{eq:dpsdecontr1}, \eqref{eq:dpsdecontr2}, and \eqref{eq:dpsdecontr3}, we have
\begin{equation}
\begin{aligned}
    \P \left( \theta^{j^*}_T(D) \in S \right) &> \P \left( \Theta_\tau(D) \in S \right) - \frac{h}{2} \\
    &= e^\varepsilon \P \left( \Theta_\tau(D') \in S \right) + \delta + h - \frac{h}{2} \\
    &> e^\varepsilon \left( \P \left( \theta^{j^*}_T(D') \in S \right) - \frac{h e^{-\varepsilon}}{2} \right) + \delta + h - \frac{h}{2} \\
    &= e^\varepsilon  \P \left( \theta^{j^*}_T(D') \in S \right) + \delta.
\end{aligned}
\end{equation}
The previous equation implies that the discretization $\theta^{j^*}_T$ is not $(\varepsilon, \delta)$-DP, therefore contradicting the thesis of Theorem \ref{prop:alg1dp}. This provides the desired result. \qed

\section{Auxiliary Lemmas}\label{app:matconc} %  on Feature Matrix Concentration

\simonesolved{This section uses the following additional notation. We denote by $\mathbf 1 : \R \to \{0, 1\}$ the indicator function, \emph{i.e.}, $\mathbf{1}(z) = 1$ if $z > 0$, and $0$ otherwise.
Given a matrix $A$, we indicate with $\sigma_{\min}(A) = \sqrt{\evmin{A^\top A}}$ its smallest singular value. \simonenew{Given a p.s.d.\ matrix $A$, $\lambda_j(A)$ denotes its $j$-th eigenvalue sorted in non-increasing order ($\evmax{A} = \lambda_1(A) \geq \lambda_2(A) \geq \ldots \geq \lambda_s(A) = \evmin{A}$).}
Given two matrices $A \in \R^{s\times s_1}$ and $B \in \R^{s\times s_2}$ , we denote by % $A \circ B$ their Hadamard product, and by
$A \ast B=[(A_{1:}\otimes B_{1:}),\ldots,(A_{s:}\otimes B_{s:})]^\top \in \R^{s\times s_1 s_2}$ their row-wise Kronecker product (also known as Khatri-Rao product), and $A_{j:}$ denotes the $j$-th row of $A$. Given a natural number $l$, we denote $A^{*l} = A \ast A^{*(l-1)}$, with $A^{*1} = A$. We remark that $\left[ \left( A^{*l} \right) \left(  A^{*l} \right)^\top \right]_{ij} = \left(\left[AA^\top\right]_{ij}\right)^l$. Given $A\in \R^{s \times s}$ and $B \in \R^{s \times s}$, we use the notation $A \preceq B$ ($B \succeq A$) to indicate that $B - A$ is p.s.d.} As in Section \ref{sec:proof}, we indicate with $P_\Lambda \in \R^{p \times p}$ the projector on the space spanned by the eigenvectors associated with the $d$ largest eigenvalues of $\Phi^\top \Phi$, and we condition on the high probability event described by Lemma \ref{lemma:hyperK}, which guarantees a spectral gap in $\Phi^\top \Phi$ (and, hence, the well-posedness of $P_\Lambda$).

\begin{lemma}\label{lemma:normofx}
    Let Assumption \ref{ass:data} hold, and let $x \sim \mathcal P_X$. Then, we have that
    \begin{equation}
        \norm{\E[x]}_2 = \bigO{1},
    \end{equation}
    and
    \begin{equation}
        \opnorm{\E[xx^\top]} = \bigO{1}.
    \end{equation}
\end{lemma}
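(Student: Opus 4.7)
The plan is to reduce both statements to one-dimensional sub-Gaussian moment bounds via a standard duality argument. Specifically, for any matrix or vector quantity, its Euclidean norm (or operator norm) is the supremum of linear functionals over unit vectors, and we can pull the supremum inside using the fact that $u^\top x$ is sub-Gaussian with $\subGnorm{u^\top x} \leq \subGnorm{x} = O(1)$ for every unit vector $u$.

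For the first claim, I will write $\norm{\E[x]}_2 = \sup_{\norm{u}_2 = 1} u^\top \E[x] = \sup_{\norm{u}_2 = 1} \E[u^\top x]$. Since $u^\top x$ is sub-Gaussian with constant-order sub-Gaussian norm (by the first point of Assumption \ref{ass:data}), Jensen's inequality combined with the fact that the first absolute moment of a sub-Gaussian random variable is bounded by its sub-Gaussian norm (up to an absolute constant, see e.g.\ Proposition 2.5.2 in \cite{vershynin2018high}) gives $\left| \E[u^\top x] \right| = \bigO{1}$ uniformly in $u$, which yields the desired $\bigO{1}$ bound.

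For the second claim, the same duality gives $\opnorm{\E[xx^\top]} = \sup_{\norm{u}_2 = 1} u^\top \E[xx^\top] u = \sup_{\norm{u}_2 = 1} \E[(u^\top x)^2]$. Again, since $u^\top x$ is sub-Gaussian with $\subGnorm{u^\top x} = \bigO{1}$, its second moment is $\bigO{1}$ (using the standard inequality $\E Z^2 \leq C \subGnorm{Z}^2$ for sub-Gaussian $Z$, see Proposition 2.5.2 of \cite{vershynin2018high}), uniformly in $u$. This provides the $\bigO{1}$ bound on the operator norm.

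Both arguments are short and routine; there is no real obstacle, as the statement follows essentially by definition of the sub-Gaussian norm applied along arbitrary unit directions. The only subtlety worth noting is that the conclusion on $\norm{\E[x]}_2$ does not rely on $\mathcal P_X$ being mean-zero: a sub-Gaussian variable with constant sub-Gaussian norm can have a non-zero but $\bigO{1}$ mean, which is exactly what the duality argument captures.
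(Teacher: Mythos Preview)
Your proposal is correct and follows essentially the same approach as the paper: both arguments reduce to one-dimensional sub-Gaussian moment bounds by evaluating along unit directions. The only cosmetic difference is that for $\norm{\E[x]}_2$ the paper picks the specific direction $u = \E[x]/\norm{\E[x]}_2$ and bounds $(\E[u^\top x])^2 \le \E[(u^\top x)^2]$ via Jensen, whereas you take the supremum over all unit $u$ and bound the first absolute moment directly; for the operator norm bound the two proofs are identical.
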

\begin{proof}
    The first statement is a direct consequence of $x$ being sub-Gaussian. In fact, this implies
    \begin{equation}
        \E \left[ \left( x^\top \frac{\E[x]}{\norm{\E[x]}_2} \right)^2 \right] = \bigO{1},
    \end{equation}
    as the second moment of sub-Gaussian random variables is bounded. Thus, we have
    \begin{equation}
        \norm{\E[x]}_2^2 = \E \left[  x^\top \frac{\E[x]}{\norm{\E[x]}_2} \right]^2 \leq \E \left[ \left( x^\top \frac{\E[x]}{\norm{\E[x]}_2} \right)^2 \right] = \bigO{1},
    \end{equation}
    which provides the desired result.

    For the second statement, we have, for every $u \in \R^d$ such that $\norm{u}_2 = 1$,
    \begin{equation}
        u^\top \E \left[x x^\top \right] u = \E \left[ \left( x^\top u \right)^2 \right] = \bigO{1},
    \end{equation}
    where the second step holds as the second moment of sub-Gaussian random variables is bounded. Since $\E \left[x x^\top \right]$ is p.s.d., its operator norm is $\sup_{\norm{u}_2 = 1} u^\top \E \left[x x^\top \right] u$, and the second statement readily follows.
\end{proof}

\begin{lemma}\label{lemma:evminX}
    Let Assumption \ref{ass:data} hold, and let $d = o(n)$ and $d = o(p)$.
    Then, we have that
    \begin{equation}
        \evmin{X^\top X} = \Theta(n),
    \end{equation}
    and
    \begin{equation}
        \opnorm{X} = \bigO{\sqrt{n}},
    \end{equation}
    with probability at least $1 - 2 \exp \left( -cd \right)$ over $X$, where $c$ is an absolute constant.
    
    Also, we have that
    \begin{equation}
        \evmin{V^\top V} = \Theta \left(\frac{p}{d} \right),
    \end{equation}
    and
    \begin{equation}
        \opnorm{V} = \bigO{\sqrt{\frac{p}{d}}},
    \end{equation}
    with probability at least $1 - 2 \exp \left( -cd \right)$ over $V$.
\end{lemma}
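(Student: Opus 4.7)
The plan is to treat the two halves (the data matrix $X$ and the random-features matrix $V$) independently, since both reduce to standard non-asymptotic matrix concentration for matrices with independent (sub-)Gaussian rows/entries. In both cases the goal is a two-sided bound on the singular values of a tall random matrix with aspect ratio bounded away from $1$ (which holds by the assumptions $d = o(n)$ and $d = o(p)$).

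For the first part (bounds on $X$), the key observation is that the rows $x_1, \dots, x_n$ of $X$ are i.i.d.\ sub-Gaussian vectors in $\R^d$ with $\subGnorm{x_i} = \bigO{1}$ by Assumption \ref{ass:data}, so we can invoke a standard covariance-estimation bound for sub-Gaussian rows (e.g., Theorem 4.6.1 of \cite{vershynin2018high}). This gives, for any $t \geq 0$,
\begin{equation*}
    \opnorm{\frac{1}{n} X^\top X - \E_x[xx^\top]} \leq C \left( \sqrt{\frac{d}{n}} + \frac{t}{\sqrt n} \right),
\end{equation*}
with probability at least $1 - 2\exp(-ct^2)$. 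Setting $t = \sqrt{d}$ and using $d = o(n)$ gives that the RHS is $o(1)$ with probability at least $1 - 2\exp(-cd)$. Combining this with the bound $\opnorm{\E_x[xx^\top]} = \bigO{1}$ from Lemma \ref{lemma:normofx} and the lower bound $\evmin{\E_x[xx^\top]} = \Omega(1)$ from Assumption \ref{ass:data} (via Weyl's inequality) yields $\evmin{X^\top X / n} = \Omega(1)$ and $\opnorm{X^\top X / n} = \bigO{1}$, hence the claimed $\evmin{X^\top X} = \Theta(n)$ and $\opnorm{X} = \bigO{\sqrt n}$.

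For the second part (bounds on $V$), the situation is even simpler since $V$ has i.i.d.\ $\mathcal N(0, 1/d)$ entries. The rescaled matrix $\sqrt d\, V \in \R^{p \times d}$ is a standard Gaussian matrix, to which the Davidson--Szarek / Gordon-type bound (see, e.g., Theorem 4.6.1 or Corollary 7.3.3 of \cite{vershynin2018high}) applies:
\begin{equation*}
    \sqrt p - \sqrt d - t \leq \svmin{\sqrt d \, V} \leq \svmax{\sqrt d \, V} \leq \sqrt p + \sqrt d + t,
\end{equation*}
with probability at least $1 - 2\exp(-t^2/2)$. Choosing $t = \sqrt d$ and using $d = o(p)$ gives $\svmax{\sqrt d \, V} = \bigO{\sqrt p}$ and $\svmin{\sqrt d \, V} = \Omega(\sqrt p)$ with probability at least $1 - 2\exp(-d/2)$. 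Dividing by $\sqrt d$ and squaring then yields $\opnorm{V} = \bigO{\sqrt{p/d}}$ and $\evmin{V^\top V} = \Theta(p/d)$.

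Since both parts of the statement follow by a direct invocation of well-known concentration results, there is no real technical obstacle; the only thing to be careful about is to make sure the high-probability rate is exactly $1 - 2\exp(-cd)$ (rather than, e.g., exponential in $n$ or $p$), which forces the choice $t = \Theta(\sqrt d)$ in both applications. This choice is also why the resulting bounds are only $\bigO{\cdot}$ and $\Omega(\cdot)$ statements about the extremal singular values, rather than sharp $(1 \pm o(1))$ estimates.
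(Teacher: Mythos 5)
Your proposal is correct and follows essentially the same route as the paper: concentration of $X^\top X$ (resp.\ $V^\top V$) around its expectation for matrices with independent sub-Gaussian rows, with the deviation parameter set to $\Theta(\sqrt d)$ to get the $1 - 2\exp(-cd)$ rate, followed by Weyl's inequality together with $\evmin{\E[xx^\top]} = \Omega(1)$ and $\opnorm{\E[xx^\top]} = \bigO{1}$. The only cosmetic difference is that for $V$ you invoke the Gaussian-specific Davidson--Szarek singular-value bound where the paper reuses the sub-Gaussian covariance estimate and Theorem 4.4.5 of \cite{vershynin2018high}; these are interchangeable here.
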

\begin{proof}
    Notice that $X \in \R^{n \times d}$ is a matrix with i.i.d. sub-Gaussian rows with second moment matrix $\Psi = \E_{x \sim \mathcal P_X} \left[ xx^\top \right] \in \R^{d \times d}$. Thus, by Remark 5.40 in \cite{vershrandmat}, we have that
    \begin{equation}
        \opnorm{X^\top X - n \Psi} = \bigO{n \frac{d}{n}} = o(n),
    \end{equation}
    with probability at least $1 - 2 \exp \left( -c_2 d \right)$. %, where the last equality is a consequence of Assumption \ref{ass:scalings}.
    Then, conditioning on this high probability event, by Weyl's inequality, we have
    \begin{equation}\label{eq:evminXeq1}
        \evmin{X^\top X} \leq \opnorm{X^\top X} \leq \opnorm{n \Psi} + \opnorm{X^\top X - n \Psi} = \bigO{n},
    \end{equation}
    where the last step is a consequence of Lemma \ref{lemma:normofx}. This proves the upper bound in the first statement as well as the second statement.
    Furthermore, again by Weyl's inequality, we have
    \begin{equation}
        \evmin{X^\top X} \geq \evmin{n \Psi} - \opnorm{X^\top X - n \Psi} = \Omega(n),
    \end{equation}
    where the last step is a consequence of $\evmin{\Psi} = \Omega(1)$, true by Assumption \ref{ass:data}. Merging this with \eqref{eq:evminXeq1} concludes the proof of the first statement.

    The third statement can be proven in the same exact way, considering $\sqrt d V \in \R^{p \times d}$, a matrix with i.i.d. standard Gaussian (and therefore sub-Gaussian) rows with second moment matrix equal to the identity $I \in \R^{d \times d}$. Finally, the fourth statement is a consequence of Theorem 4.4.5 of \cite{vershynin2018high} % and Assumption \ref{ass:scalings}, which implies 
    and the scaling $d = o(p)$, and it holds with probability $1 - 2 \exp \left( -c_3 d \right)$. This concludes the proof and provides the desired results.
\end{proof}

\begin{lemma}\label{lemma:hyperEK}
    Let Assumptions \ref{ass:data} and \ref{ass:activation} hold, and let $n \log^3 n = o \left( d^{3/2} \right)$ and $n = \omega(d)$.
    % Let $\lambda_i(\E_V [K])$ be the $i$-th eigenvalue of $\E_V [K]$ when sorting them in non-increasing order ($\lambda_1(\E_V [K]) \geq \lambda_2(\E_V [K]) \geq \ldots \geq \lambda_n(\E_V [K])$).
    Then with probability at least $1 - 2 \exp \left( -c \log^2 n\right)$ over $X$, all the following hold
    \begin{equation}\label{eq:gap1}
        \lambda_d(\E_V [K]) = \Omega \left(\frac{pn}{d} \right),
    \end{equation}
    \begin{equation}\label{eq:gap2}
        \lambda_{d+1}(\E_V [K]) = \bigO p,
    \end{equation}
    \begin{equation}\label{eq:gap3}
        \lambda_n(\E_V [K]) = \evmin{\E_V[K]} = \Omega(p),
    \end{equation}
    where $c$ is an absolute constant.
\end{lemma}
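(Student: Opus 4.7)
The plan is to use the Hermite expansion of $\phi$ under the Gaussian law of the pre-activations to obtain an explicit formula for $\E_V[K]$, then decompose it into a rank-$d$ linear part, a scaled identity, and a small off-diagonal perturbation. Since $\|x_i\|_2 = \sqrt d$, the pair $(v_k^\top x_i, v_k^\top x_j)$ with $v_k \sim \mathcal N(0,I/d)$ is jointly standard Gaussian with correlation $x_i^\top x_j/d$, so by the standard Hermite identity and independence of the rows of $V$,
\begin{equation*}
\E_V[K]_{ij} = p\sum_{l\ge 0}\mu_l^2\left(\frac{x_i^\top x_j}{d}\right)^{l} = \frac{p\mu_1^2}{d}\,x_i^\top x_j + p\sum_{l\ge 3}\mu_l^2\left(\frac{x_i^\top x_j}{d}\right)^{l},
\end{equation*}
where the second equality uses $\mu_0=\mu_2=0$ from Assumption \ref{ass:activation}. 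Denoting by $H\in\R^{n\times n}$ the matrix collecting the $l\ge 3$ contributions, I would write $\E_V[K]=\tfrac{p\mu_1^2}{d}XX^\top+H$.

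Next I would isolate the diagonal of $H$. Because $x_i^\top x_i/d=1$, every diagonal entry equals $pc_\phi$ with $c_\phi\mathrel{\mathop:}= \sum_{l\ge 3}\mu_l^2>0$; strict positivity holds since otherwise $\phi$ would be linear, contradicting Assumption \ref{ass:activation}. For the off-diagonals, conditioning on $x_i$ and using the sub-Gaussianity of $x_j$ with $\|x_i\|_2=\sqrt d$ yields $\P(|x_i^\top x_j|>C\sqrt d\log n)\le 2\exp(-c\log^2 n)$; a union bound over the $n^2$ pairs gives $\max_{i\neq j}|x_i^\top x_j|\le C\sqrt d\log n$ with probability $1-2\exp(-c'\log^2 n)$. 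On this event, each off-diagonal entry of $H$ is bounded by $Cp(\log n/\sqrt d)^3$ (the $l=3$ term dominates the geometric tail), and the row-sum bound on the operator norm yields
\begin{equation*}
    \opnorm{H-pc_\phi I} = \bigO{\frac{np\log^3 n}{d^{3/2}}} = o(p),
\end{equation*}
where the last step uses $n\log^3 n = o(d^{3/2})$ from Assumption \ref{ass:scalings}.

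Finally, I would apply Weyl's inequality to the decomposition $\E_V[K]=\tfrac{p\mu_1^2}{d}XX^\top + pc_\phi I + (H-pc_\phi I)$, using Lemma \ref{lemma:evminX} to get $\sigma_{\min}(X)^2=\Omega(n)$ with probability $1-2\exp(-cd)\ge 1-2\exp(-c'\log^2 n)$ (since $d\gg \log^2 n$ under Assumption \ref{ass:scalings}). The estimate $\lambda_j(A+B)\ge \lambda_j(A)+\lambda_n(B)$ with $A=\tfrac{p\mu_1^2}{d}XX^\top$ gives $\lambda_d(\E_V[K])\ge \Omega(pn/d)+pc_\phi-o(p)=\Omega(pn/d)$, i.e.\ \eqref{eq:gap1}; the same bound applied with $j=n$ gives $\lambda_n(\E_V[K])\ge pc_\phi-o(p)=\Omega(p)$, i.e.\ \eqref{eq:gap3}. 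For \eqref{eq:gap2}, $\rank(XX^\top)\le d$ forces $\lambda_{d+1}(A)=0$, so $\lambda_{d+1}(\E_V[K])\le \lambda_1(pc_\phi I+(H-pc_\phi I))\le pc_\phi+o(p)=\bigO{p}$.

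The main care-point will be the control of $\opnorm{H-pc_\phi I}$: this quantity must simultaneously be $o(p)$ (for \eqref{eq:gap3} and \eqref{eq:gap2}) and $o(pn/d)$ (for \eqref{eq:gap1}), which is exactly what the upper bound $n\ll d^{3/2}/\log^3 d$ in Assumption \ref{ass:scalings} buys, by suppressing the off-diagonal contributions of the higher Hermite modes without which the identity-plus-low-rank structure of $\E_V[K]$ would be obscured.
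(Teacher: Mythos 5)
Your proposal is correct and follows essentially the same route as the paper: the same Hermite expansion of $\E_V[K]$ into a rank-$d$ linear part plus an $l\ge 3$ remainder, the same diagonal/off-diagonal split with $\max_{i\neq j}|x_i^\top x_j|\lesssim \sqrt d\log n$, and the same Weyl argument combined with $\sigma_{\min}(X)^2=\Omega(n)$. The only cosmetic difference is that you bound the off-diagonal perturbation by a row-sum argument where the paper uses the Frobenius norm; both give the identical $O(pn\log^3 n/d^{3/2})=o(p)$ rate.
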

\begin{proof}
    As $K_{ij} = \varphi(x_i)^\top \varphi(x_j)$, we have
    \begin{equation}
        \E_V [K]_{ij} = \sum_{k=1}^p \E_{v_k} [\phi(v_k^\top x_i) \phi(v_k^\top x_j)] = p  \, \E_{v} [\phi(v^\top x_i) \phi(v^\top x_j)],
    \end{equation}
    where in the last step we exploited that the $v_k$-s are identically distributed, and introduced the shorthand $v$ to indicate a random variable distributed as $v_1$.
    Since $\norm{x_i} = \sqrt{d}$ for all $i\in [n]$, and $v \sim \mathcal N(0, 1/d)$, we have that $\rho_1 := v^\top x_i$ and $\rho_2 := v^\top x_j$ are two standard Gaussian variables with correlation $x_i^\top x_j / d$. Thus, exploiting the Hermite expansion of $\phi$, we can write
    \begin{equation}
        \E_V [K]_{ij} = p \sum_{l = 0}^{+ \infty} \mu_l^2 \frac{\left(x_i^\top x_j\right)^l}{d^l} = p \sum_{l = 0}^{\infty} \mu_l^2 \frac{ \left[ \left( X^{*l} \right)  \left( X^{*l} \right)^\top \right]_{ij}}{d^l},
    \end{equation}
    where $\mu_l$ is the $l$-th Hermite coefficient of $\phi$. 
    Since $\phi$ is such that $\mu_l = 0$ for $l = \{0 , 2\}$, $\mu_1 \neq 0$, and there exists $l \geq 3$ such that $\mu_l \neq 0$ (since $\phi$ is non-linear), we can write the previous sum with the two non-0 terms
    \begin{equation}\label{eq:twotermshyperEK}
        \E_V [K]_{ij} = \frac{\mu_1^2 p}{d} XX^\top + p \sum_{l = 3}^{+ \infty} \mu_l^2 \frac{ \left[ \left( X^{*l} \right)  \left( X^{*l} \right)^\top \right]_{ij}}{d^l}.
    \end{equation}
    We analyze the terms of the sum separately.
    By Lemma \ref{lemma:evminX} we have that, with probability at least $1 - 2 \exp \left( -c_1 d \right)$ over $X$,
    \begin{equation}
        \evmin{X^\top X} = \Theta(n).
    \end{equation}
    As $X \in \R^{n \times d}$, with $d < n$, this means that (conditioning on this high probability event) $XX^\top$ is a matrix with rank equal to $d$, and that its first $d$ eigenvalues (when sorted in non-increasing order) are all $\Omega(n)$. This implies that
    \begin{equation}\label{eq:XXhyper}
        M_1 := \frac{\mu_1^2 p}{d} XX^\top
    \end{equation}
    has the first $d$ eigenvalues of order $\Omega(pn/d)$, and the remaining $n - d$ equal to 0.

    We now consider the second term of \eqref{eq:twotermshyperEK}:
    \begin{equation}
        M_2 = p \sum_{l = 3}^{\infty} \mu_l^2 \frac{ \left[ \left( X^{*l} \right)  \left( X^{*l} \right)^\top \right]_{ij}}{d^l}.
    \end{equation}
    Let us define
    \begin{equation}\label{eq:tildeM2}
        \tilde M_2 = p \left( \sum_{l = 3}^{\infty} \mu_l^2 \right) I = C p I,
    \end{equation}
    corresponding to the diagonal elements of $M_2$, where $C$ is a constant depending only on $\phi$. Notice that $C$ is strictly positive, as $\phi$ is non-linear.
    % % . Notice that we have
    % \begin{equation}
    %     \evmax{\tilde M_2} = \evmin{\tilde M_2} = C p,
    % \end{equation}
    For $i \neq j$, since $x_i$ is sub-Gaussian and independent from $x_j$, we have
    \begin{equation}
        \P \left( \left| x_i^\top x_j \right| > \log n \sqrt{d} \right) < 2 \exp \left( -c_2 \log^2 n \right).
    \end{equation}
    Performing a union bound we also get
    \begin{equation}
        \P \left( \max_{i,j}\left| x_i^\top x_j \right| > \log n \sqrt{d} \right) < 2 n^2 \exp \left( -c_2 \log^2 n \right) < 2 \exp \left( -c_3 \log^2 n \right).
    \end{equation}
    This implies that
    \begin{equation}\label{eq:difftildeM2}
    \begin{aligned}
        \opnorm{M_2 - \tilde M_2} &\leq \norm{M_2 - \tilde M_2}_F \\
        &= p \norm{ \sum_{l = 3}^\infty \mu_l^2 \left( \frac{\left( X^{*l} \right)  \left( X^{*l} \right)^\top}{d^l}  - I \right) }_F \\
        &\leq p \sum_{l = 3}^\infty \mu_l^2 \norm{  \frac{\left( X^{*l} \right)  \left( X^{*l} \right)^\top}{d^l}  - I  }_F \\
        &\leq p \sum_{l = 3}^\infty \mu_l^2 \sqrt{ n^2 \left( \frac{\max_{i,j}\left| x_i^\top x_j \right| ^l} {d^{l}}\right)^2  } \\
        &\leq p \sqrt{ n^2 \left( \frac{\max_{i,j}\left| x_i^\top x_j \right| ^3} {d^{3}}\right)^2} \sum_{l = 3}^\infty \mu_l^2 \\
        &= \bigO{p \frac{n \log^3 n}{d^{3/2}}} \\
        & = o(p).
    \end{aligned}
    \end{equation}
    % where the last step is a consequence of Assumption \ref{ass:scalings}.

    Merging \eqref{eq:tildeM2} and \eqref{eq:difftildeM2} together, a standard application of Weyl's inequality gives
    \begin{equation}\label{eq:perturbM2}
        \evmax{M_2} = \Theta(p),  \qquad \evmin{M_2} = \Theta(p), 
    \end{equation}
    with probability at least $1 - 2 \exp \left( -c_3 \log^2 n \right)$.
    Thus, recalling that $\lambda_i(\cdot)$ denotes the $i$-th eigenvalue of a matrix sorted in non-increasing order and applying Weyl's inequality, we have that
    \begin{equation}
        \begin{aligned}
            \lambda_i \left(M_1 + M_2 \right) &\geq \lambda_i \left(M_1 \right) - \evmax{M_2} = \Omega(pn / d), \qquad \textup{for } i \in [d], \\
            \lambda_i \left(M_1 + M_2 \right) &\leq \lambda_i \left(M_1 \right) + \evmax{M_2} = \bigO p, \qquad \textup{for } d < i \leq n,  \\
        \end{aligned}
    \end{equation}
    where in both lines we used our argument in \eqref{eq:XXhyper} and \eqref{eq:perturbM2}. Since $\E_V [K] = M_1 + M_2$, we have proved that the spectrum of $\E_V [K]$ has a gap, as
    \begin{equation}
        \lambda_d \left(\E_V [K]\right) = \Omega \left( \frac{pn}{d} \right) = \omega (p),
    \end{equation}
    and
    \begin{equation}
        \lambda_{d+1} \left(\E_V [K]\right) = \bigO{p}.
    \end{equation}
    This proves \eqref{eq:gap1} and \eqref{eq:gap2}. To show \eqref{eq:gap3}, it suffices to note that
    \begin{equation}
        \evmin{\E_V [K]} \geq \evmin{M_2} = \Theta(p),
    \end{equation}
    where the first step is true since $M_1$ is p.s.d., and the second step follows from \eqref{eq:perturbM2}.
\end{proof}

\begin{lemma}\label{lemma:renormalizedKconc}
    Let Assumptions \ref{ass:data} and \ref{ass:activation} hold, and let $n = \bigO{p / \log^4 p}$, $n \log^3 n = o \left(d^{3/2} \right)$ and $n = \omega(d)$.
    Then, we have that 
    \begin{equation}\label{eq:renormalizedKconc}
        \opnorm{\E_V[K]^{-1/2} \left( K - \E_V[K] \right) \E_V[K]^{-1/2}} = \bigO{\sqrt{\frac{n}{p}} \log n \log p},
    \end{equation}
    with probability at least $1 - 2 \exp \left( -c \log^2 n \right)$ over $X$ and $V$, where $c$ is an absolute constant.
\end{lemma}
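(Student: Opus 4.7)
The plan is to apply matrix Bernstein to the decomposition of $K - \E_V[K]$ as a sum of $p$ independent rank-one random matrices. Writing $v_1,\ldots,v_p \in \R^d$ for the i.i.d.\ rows of $V$, we have $K = \sum_{k=1}^p \phi(Xv_k)\phi(Xv_k)^\top$, and setting $W := \E_V[K]^{-1/2}$ (which depends only on $X$), the quantity of interest becomes $\opnorm{\sum_{k=1}^p (Z_k - \E_{V}[Z_k])}$ with $Z_k := W\phi(Xv_k)\phi(Xv_k)^\top W$, a p.s.d., rank-one matrix that, conditional on $X$, is a function of $v_k$ alone. I will condition throughout on the high-probability events furnished by Lemmas \ref{lemma:hyperEK} and \ref{lemma:evminX}, which yield $\|W\|_{\mathrm{op}}^2 = \bigO{1/p}$ and $\opnorm{X} = \bigO{\sqrt n}$.

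The crucial ingredient is a uniform bound on $\|Z_k\|_{\mathrm{op}} = \|W\phi(Xv_k)\|_2^2$. The map $v \mapsto \|W\phi(Xv)\|_2$ is Lipschitz with constant $L_f \le \|W\|_{\mathrm{op}}\,L\,\opnorm{X}$, where $L$ is the Lipschitz constant of $\phi$; since $v_k \sim \mathcal N(0,I/d)$, Gaussian Lipschitz concentration gives
\begin{equation*}
\P_{v_k}\Big(\big|\|W\phi(Xv_k)\|_2 - \E\|W\phi(Xv_k)\|_2\big| > t\Big) \le 2\exp\!\left(-\frac{c\,d\,t^2}{L_f^2}\right) \le 2\exp(-c'\,dp\,t^2/n).
\end{equation*}
A direct computation using $\E_V[\phi(Xv_k)\phi(Xv_k)^\top] = \E_V[K]/p$ yields $\E\|W\phi(Xv_k)\|_2^2 = \tr(W^2\E_V[K])/p = n/p$, hence $\E\|W\phi(Xv_k)\|_2 \le \sqrt{n/p}$. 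Choosing $t = \sqrt{n/p}\,\log n$ and taking a union bound over $k\in[p]$ produces $R := \max_k \|Z_k\|_{\mathrm{op}} \le C(n/p)\log^2 n$ with probability at least $1 - 2\exp(-c\log^2 n)$, where the absorption of the factor $p$ from the union bound uses $\log p = \Theta(\log n)$ from Assumption \ref{ass:scalings}.

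To finish, I apply matrix Bernstein to $\sum_k (Z_k - \E Z_k)$ on the event $\{\max_k\|Z_k\|_{\mathrm{op}} \le R\}$ (or, formally, to the truncated matrices $Z_k \mathbf 1_{\|Z_k\|_{\mathrm{op}} \le R}$, which coincide with $Z_k$ on this event). Since each $Z_k$ is p.s.d.\ with $\|Z_k\|_{\mathrm{op}} \le R$, we have $Z_k^2 \preceq R\,Z_k$, so the matrix variance proxy satisfies
\begin{equation*}
\sigma^2 = \opnorm{\sum_{k=1}^p \E[(Z_k - \E Z_k)^2]} \le R\,\opnorm{\sum_{k=1}^p \E[Z_k]} = R\,\opnorm{I_n} = R.
\end{equation*}
Picking $t_\star = C'\sqrt{n/p}\,\log n\log p$, one checks that $t_\star^2/\sigma^2 = \Theta(\log^2 p)$ while $R\,t_\star/\sigma^2 = \sqrt{n/p}\,\log n\log p = o(1)$ under $n = \bigO{p/\log^4 p}$, so the variance term dominates. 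Matrix Bernstein then delivers the tail bound $2n\exp(-c\log^2 p) \le 2\exp(-c'\log^2 n)$, matching the claim. The principal hurdle is the per-summand operator-norm bound, which must be derived via Gaussian Lipschitz concentration with the correct data-dependent Lipschitz constant driven by $W$ and $X$; once this is in place, the matrix Bernstein step is routine, and the overparameterization hypothesis $n \ll p/\log^4 p$ is exactly what guarantees that the uniform-bound term does not overwhelm the variance term in the final Bernstein estimate.
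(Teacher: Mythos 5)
Your proposal is correct in outline and follows the same overall strategy as the paper: write the renormalized deviation as a sum of $p$ independent rank-one matrices $Z_k$, establish a uniform per-summand bound $R = \bigO{(n/p)\,\mathrm{polylog}}$, exploit $Z_k^2 \preceq R\, Z_k$ together with $\sum_k \E[Z_k] = I_n$ to get $\sigma^2 \le R$, and invoke matrix Bernstein; the final tail computation is essentially identical. Where you differ is in how the almost-sure bound required by Bernstein is manufactured. The paper truncates the \emph{activation}, replacing $\phi(z)$ by $\phi(z)\mathbf 1(|z|\le \log p/L)$, so that $\|\bar\phi(Xv_k)\|_2^2 \le C n\log^2 p$ holds deterministically; it then pays for this by showing $K=\bar K$ with high probability and that $\E_V[\bar K]$ and $\E_V[K]$ are exponentially close (via Cauchy--Schwarz against the truncation event), including a comparison of the two whitening matrices. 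You instead truncate the \emph{summands} $Z_k$ at their high-probability level, obtained from Gaussian Lipschitz concentration of $v\mapsto\|W\phi(Xv)\|_2$ with data-dependent constant $\|W\|_{\op}L\|X\|_{\op}=\bigO{\sqrt{n/p}}$; this spares you the $\E_V[\bar K]$ vs.\ $\E_V[K]$ comparison. Two loose ends in your version deserve attention. First, after replacing $Z_k$ by $Z_k\mathbf 1_{\|Z_k\|_{\op}\le R}$ the means shift, and you must bound $p\,\opnorm{\E[Z_k\mathbf 1_{\|Z_k\|_{\op}>R}]}$; a Cauchy--Schwarz argument using the sub-Gaussianity of $\|W\phi(Xv_k)\|_2$ shows this is $\bigO{n\exp(-c\,d\log^2 n)}$, negligible, but it should be stated (this is precisely the step the paper's own truncation forces it to carry out). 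Second, your union bound over $k\in[p]$ costs a factor $p$ against a tail $\exp(-c\,d\log^2 n)$, which you absorb by invoking $\log p=\Theta(\log n)$; that condition is part of Assumption \ref{ass:scalings} but is not among the stated hypotheses of this lemma. You can avoid it by taking the deviation level $t=\sqrt{n/p}\,\log p$ instead of $\sqrt{n/p}\,\log n$, which changes $R$ only by replacing $\log^2 n$ with $\log^2 p$ and leaves the final Bernstein estimate intact. The paper's deterministic truncation sidesteps both issues, which is presumably why it was chosen.
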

\begin{proof}

Consider the truncated function
\begin{equation}\label{eq:truncationnew}
    \bar \phi (z) := \phi (z) \mathbf 1 \left(|z| \leq \frac{\log p}{L}\right),
\end{equation}
where $L$ is the Lipschitz constant of $\phi$. Define also the truncated kernel $\bar K$ accordingly
\begin{equation}
    \bar K_{ij} :=  \bar \phi (Vx_i)^\top \bar \phi (Vx_j).
\end{equation}
We now compare $\E_V \left[ \bar K \right]$ and $\E_V [K]$. Let $v \sim \mathcal N(0, I/d)$ and define the random variable
\begin{equation}
    E_{ij} := \mathbf 1 \left( |v^\top x_i| > \frac{\log p}{ L} \; \textup{ or } \; |v^\top x_j| > \frac{\log p}{ L} \right).
\end{equation}
As $E_{ij}$ is an indicator, $E_{ij} = E_{ij}^2$ and
\begin{equation}\label{eq:eventEnew}
    \P_V (E_{ij} = 1) \leq \P_V \left( |v^\top x_i| > \frac{\log p}{ L} \right) + \P_V \left( |v^\top x_j| > \frac{\log p}{ L} \right) \leq 2 \exp \left( - c_1 \log^2 p \right),
\end{equation}
where the second step holds as $v^\top x_j \sim \mathcal N(0, 1)$ and $v^\top x_i \sim \mathcal N(0, 1)$ in the probability space of $V$. Thus, a union bound over $i$ and $j$ gives
\begin{equation}\label{eq:KbarKsamehp}
    K = \bar K,
\end{equation}
with probability at least $1 - 2 n^2 \exp \left( - c_1 \log^2 p \right) \geq 1 - 2 \exp \left( - c_2 \log^2 n \right)$ over $V$. Furthermore, we can write
\begin{equation}
\begin{aligned} 
    \left| \E_V \left[ \bar K_{ij} \right] - \E_V [ K_{ij}] \right| &= p \left| \E_v \left[ \phi(v^\top x_i) \phi(v^\top x_j) \, E_{ij} \right] \right| \\
    &\leq p \,\E_v \left[ \left( \phi(v^\top x_i) \phi(v^\top x_j) \right)^2 \right] ^{1/2} \E_v \left[ E_{ij}^2  \right] ^{1/2} \\
    &\leq p\, \E_v \left[ \left( \phi(v^\top x_i)\right)^4 \right] ^{1/2} \P_V (E_{ij} = 1)^{1/2} \\
    &\leq 2 p \, C_1^{1/2} \exp \left( - \frac{c_1 \log^2 p}{2} \right) \\
    &\leq 2 p \exp \left( - c_3 \log^2 p \right).
\end{aligned}
\end{equation}
Here, the second and third line follow from Cauchy-Schwartz inequality; the fourth line is a consequence of \eqref{eq:eventEnew} and $\phi(v^\top x_i)$ being a sub-Gaussian random variable ($\phi(z)$ is Lipschitz) and thus with bounded fourth moment (see Equation 2.11 in \cite{vershynin2018high}).
This result holds for any $i, j \in [n]$, and therefore implies
\begin{equation}\label{eq:EKbarKclose}
\begin{aligned}
    \opnorm{ \E_V \left[ \bar K \right] - \E_V [K] } &\leq \norm{\E_V \left[ \bar K \right] - \E_V [K]}_F \\
    &\leq 2 pn \exp \left( - c_3 \log^2 p \right) \\
    &\leq 2 \exp \left( - c_4 \log^2 p \right).
\end{aligned}
\end{equation}
% where in the last step we used $n = \bigO{p}$.
Then, by Weyl's inequality, we have
\begin{equation}\label{eq:evminbarEK}
    \evmin{\E_V \left[ \bar K \right]} \geq \evmin{\E_V \left[ K \right]} - \opnorm{ \E_V \left[ \bar K \right] - \E_V [K] } = \Omega(p),
\end{equation}
where the last step follows from Lemma \ref{lemma:hyperEK} and it holds with probability at least $1 - 2 \exp \left( -c_5 \log^2 n\right)$ over $X$.

We now prove the bound in \eqref{eq:renormalizedKconc} on the truncated kernel $\bar K$. Let us then define the matrix $H_k \in \R^{n \times n}$ as
\begin{equation}
    H_k = \E_V[\bar K]^{-1/2} \bar \phi(X v_k) \bar \phi(X v_k)^\top \E_V[\bar K]^{-1/2}.
\end{equation}
This definition requires $\E_V[\bar K]$ to be invertible, and it is therefore conditioned on the event described by \eqref{eq:evminbarEK}. We will condition on this high probability event over $X$ until the end of the proof. Then, we have
\begin{equation}\label{eq:bernmat1}
    \E_V[\bar K]^{-1/2} \left( \bar K - \E_V[\bar K] \right) \E_V[\bar K]^{-1/2} = \sum_{k = 1}^p H_k - \E_V\left[H_k \right].
\end{equation}

Note that there exists a constant $C$ such that
\begin{equation}\label{eq:rKc1}
\begin{aligned}
    \sup_{v_k} \opnorm{H_k - \E_V[H_k]} &\leq 2 \sup_{v_k} \opnorm{H_k} \leq 2 \opnorm{\E_V[ \bar K]^{-1/2}}^2 \sup_{v_k}  \norm{\bar \phi(X v_k)}_2^2 \\
    &\leq C \frac{\sup_{v_k} \norm{\bar \phi(X v_k)}_2^2}{p} \leq C \frac{n}{p} \log^2 p,
\end{aligned}
\end{equation}
where the first step is true because of Jensen and triangle inequality, the third step is true because of \eqref{eq:evminbarEK}, and the last step holds because, for every $i \in [n]$,
\begin{equation}
    \left| \bar \phi(v_k^\top x_i) \right| \leq \left| \phi(0) \right| + L \frac{\log p}{L} = \left| \phi(0) \right| + \log p \leq C_3 \log p,
\end{equation}
where we use that $\phi$ is $L$-Lipschitz and the definition in \eqref{eq:truncationnew}.
We also have
\begin{equation}\label{eq:rKc2}
\begin{aligned}
    \E_V \left[ H_k \right] &= \E_V[\bar K]^{-1/2} \E_V\left[ \bar \phi(X v_k) \bar \phi(X v_k^\top) \right] \E_V[\bar K]^{-1/2} \\
    &= \frac{1}{p} \E_V[\bar K]^{-1/2} \E_V[\bar K] \E_V[\bar K]^{-1/2} \\
    &= \frac{1}{p} I,
\end{aligned}
\end{equation}
which allows us to write
\begin{equation}\label{eq:boundH2}
\begin{aligned}
    \E_V \left[ \left( H_k - \E_V [H_k] \right)^2 \right] &=  \E_V \left[ H_k^2 \right] - \E_V \left[ H_k \right]^2 \\
    &\preceq \E_V \left[ H_k^2 \right] \\
    &\preceq  \E_V \left[ \sup_{v_k} \opnorm{H_k} H_k \right] \\
    &\preceq  C \frac{n \log^2 p}{p} \E_V \left[ H_k \right] \\
    &= C \frac{n \log^2 p}{p^2} I,
\end{aligned}
\end{equation}
where the third line holds since $H_k$ is p.s.d. for all $v_k$, the fourth line is a consequence of \eqref{eq:rKc1}, and the last step of \eqref{eq:rKc2}. This readily gives
\begin{equation}
    \opnorm{\E_V \left[ \left( H_k - \E_V [H_k] \right)^2 \right]} \leq \frac{Cn \log^2 p}{p^2}.
\end{equation}
Thus, \eqref{eq:bernmat1} is the sum of $p$ independent (in the probability space of $V$), mean-0, $n \times n$ symmetric random matrices, such that $\opnorm{H_k - \E_V[H_k]} \leq \frac{Cn \log^2 p}{p}$ almost surely for all $k$. Furthermore, we also have $\opnorm{\E_V \left[ \left( H_k - \E_V [H_k] \right)^2 \right]} \leq \frac{Cn \log^2 p}{p^2}$. Then, by Theorem 5.4.1 \cite{vershynin2018high}, we get
\begin{equation}
    \P_V \left( \opnorm{\E_V[\bar K]^{-1/2} \left( \bar K - \E_V[\bar K] \right) \E_V[\bar K]^{-1/2}}  \geq t \right) \leq 2n \exp \left( - \frac{t^2 / 2}{\frac{Cn \log^2 p }{p} + \frac{Cn \log^2 p }{3p} t }\right).
\end{equation}
Setting
\begin{equation}
    t = \sqrt{\frac{n}{p}} \log p \log n \leq C_4,
\end{equation}
where the last step holds since $n = \bigO{p / \log^4 p}$, we get
\begin{equation}\label{eq:whatwasfinalbeftrunc}
\begin{aligned}
    \P_V \left( \opnorm{\E_V[\bar K]^{-1/2} \left( \bar K - \E_V[\bar K] \right) \E_V[\bar K]^{-1/2}}  \geq \sqrt{\frac{n}{p}} \log n \log p \right)  \\
    \leq 2 \exp \left( \log n - \frac{\log^2 n}{2C + 2 C t / 3} \right) \\
    \leq 2 \exp \left( \log n - \frac{\log^2 n}{2C + 2 C C_4 / 3} \right) \\
    \leq 2 \exp \left( - c_6 \log^2 n \right),
\end{aligned}
\end{equation}
which gives the result in \eqref{eq:renormalizedKconc} on the truncated kernel $\bar K$. We will now translate this in the desired result.
Note that, by \eqref{eq:KbarKsamehp}, we have
\begin{equation}\label{eq:finalnewtrunc1}
    \opnorm{\E_V[K]^{-1/2} \left( K - \E_V[K] \right) \E_V[K]^{-1/2}} = \opnorm{\E_V[K]^{-1/2} \left( \bar K - \E_V[K] \right) \E_V[K]^{-1/2}},
\end{equation}
with probability at least $1 - 2 \exp \left( - c_2 \log^2 n \right)$ over $V$. Applying the triangle inequality we get
\begin{equation}\label{eq:finalnewtrunc2}
\begin{aligned}
    &\opnorm{\E_V[K]^{-1/2} \left( \bar K - \E_V[K] \right) \E_V[K]^{-1/2}} \\
    & \leq \opnorm{\E_V[K]^{-1/2} \left( \bar K - \E_V[ \bar K] \right) \E_V[K]^{-1/2}} + \opnorm{\E_V[K]^{-1/2} \left( \E_V[ \bar K] - \E_V[K] \right) \E_V[K]^{-1/2}} \\
    &\leq \opnorm{\E_V[K]^{-1/2} \left( \bar K - \E_V[ \bar K] \right) \E_V[K]^{-1/2}} + \opnorm{\E_V[K]^{-1/2}}^2 \opnorm{ \E_V[ \bar K] - \E_V[K] } \\
    &\leq \opnorm{\E_V[K]^{-1/2} \left( \bar K - \E_V[ \bar K] \right) \E_V[K]^{-1/2}} + \frac{C_5}{p} \exp \left( - c_4 \log^2 p \right),
\end{aligned}
\end{equation}
where the last step follows from \eqref{eq:EKbarKclose} and Lemma \ref{lemma:hyperEK}, and it holds with probability at least $1 - 2 \exp \left( -c_7 \log^2 n\right)$ over $X$. Then, we have
\begin{equation}\label{eq:finalnewtrunc3}
\begin{aligned}
    &\opnorm{\E_V[K]^{-1/2} \left( \bar K - \E_V[ \bar K] \right) \E_V[K]^{-1/2}} \\
    % &= \opnorm{ \left( E_V[K]^{-1/2} \E_V[\bar K]^{1/2}  \E_V[\bar K]^{-1/2} \E_V[ K]^{1/2} \right) \E_V[K]^{-1/2} \left( \bar K - \E_V[ \bar K] \right) \E_V[ K]^{-1/2}  \left( E_V[K]^{1/2} \E_V[\bar K]^{- 1/2}  \E_V[\bar K]^{1/2} \E_V[ K]^{- 1/2} \right) }
    &\opnorm{\E_V[K]^{-1/2} \E_V[\bar K]^{1/2} \E_V[\bar K]^{-1/2} \left( \bar K - \E_V[ \bar K] \right) \E_V[\bar K]^{-1/2} \E_V[\bar K]^{1/2} \E_V[ K]^{- 1/2}} \\
    &\leq \opnorm{\E_V[K]^{-1/2} \E_V[\bar K]^{1/2}}^2 \opnorm{\E_V[\bar K]^{-1/2} \left( \bar K - \E_V[ \bar K] \right) \E_V[\bar K]^{-1/2}} \\
    &\leq 2 \opnorm{\E_V[\bar K]^{-1/2} \left( \bar K - \E_V[ \bar K] \right) \E_V[\bar K]^{-1/2}},
\end{aligned}
\end{equation}
where the last step holds since
\begin{equation}\label{eq:laststep}
\begin{aligned}
    \opnorm{\E_V[K]^{-1/2} \E_V[\bar K]^{1/2}}^2 &= \opnorm{\E_V[K]^{-1/2} \E_V[\bar K] \E_V[K]^{-1/2}} \\
    &\leq \opnorm{I} + \opnorm{\E_V[K]^{-1/2} \left( \E_V[\bar K] - \E_V[K] \right) \E_V[K]^{-1/2}} \\
    & \leq 1 + \frac{C_5}{p} \exp \left( - c_4 \log^2 p \right),
\end{aligned}
\end{equation}
and the last step of \eqref{eq:laststep} follows from \eqref{eq:EKbarKclose} and Lemma \ref{lemma:hyperEK}, and it holds with probability at least $1 - 2 \exp \left( -c_7 \log^2 n\right)$ over $X$.
Thus, using consecutively \eqref{eq:finalnewtrunc1}, \eqref{eq:finalnewtrunc2}, and \eqref{eq:finalnewtrunc3}, we get
\begin{equation}
\begin{aligned}
    &\opnorm{\E_V[K]^{-1/2} \left( K - \E_V[K] \right) \E_V[K]^{-1/2}} \\
    &= \opnorm{\E_V[K]^{-1/2} \left( \bar K - \E_V[K] \right) \E_V[K]^{-1/2}} \\
    &\leq \opnorm{\E_V[K]^{-1/2} \left( \bar K - \E_V[ \bar K] \right) \E_V[K]^{-1/2}} + \frac{C_5}{p} \exp \left( - c_4 \log^2 p \right) \\
    &\leq 2 \opnorm{\E_V[\bar K]^{-1/2} \left( \bar K - \E_V[ \bar K] \right) \E_V[\bar K]^{-1/2}} + \frac{C_5}{p} \exp \left( - c_4 \log^2 p \right) \\
    & = \bigO{\sqrt{\frac{n}{p}} \log n \log p},
\end{aligned}
\end{equation}
where the last step follows from \eqref{eq:whatwasfinalbeftrunc}, and the steps jointly hold with probability at least $1 - 2 \exp \left( - c_8 \log^2 n \right)$ over $X$ and $V$. This concludes the proof.
\end{proof}

\begin{lemma}\label{lemma:hyperK}
    Let Assumptions \ref{ass:data} and \ref{ass:activation} hold, and let $n = o \left( p / \log^4 p \right)$, $n \log^3 n = o \left(d^{3/2}\right)$ and $n = \omega(d)$.
    % Let $\lambda_i(K)$ be the $i$-th eigenvalue of $K$ when sorting them in non-increasing order ($\lambda_1(K) \geq \lambda_2(K) \geq \ldots \geq \lambda_n(K)$).
    Then, with probability at least $1 - 2 \exp \left( -c \log^2 n\right)$ over $X$ and $V$, all the following hold
    \begin{equation}
        \lambda_d(K) = \Omega \left(\frac{pn}{d} \right),
    \end{equation}
    \begin{equation}
        \lambda_{d+1}(K) = \bigO p,
    \end{equation}
    \begin{equation}
        \lambda_n(K) = \evmin{K} = \Omega(p),
    \end{equation}
    where $c$ is an absolute constant.
\end{lemma}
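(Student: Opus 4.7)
The plan is to transfer the spectral information about $\E_V[K]$, already established in Lemma \ref{lemma:hyperEK}, to the random kernel $K$ itself, using the multiplicative concentration bound provided by Lemma \ref{lemma:renormalizedKconc}. The scaling assumptions of the present lemma are precisely those required by Lemmas \ref{lemma:hyperEK} and \ref{lemma:renormalizedKconc}, so both can be invoked simultaneously on a single high-probability event.

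More concretely, I would first condition on the intersection of the events described by Lemmas \ref{lemma:hyperEK} and \ref{lemma:renormalizedKconc}. Since both occur with probability at least $1-2\exp(-c\log^2 n)$ over $X$ and $V$, a union bound gives the required probability guarantee with a possibly smaller absolute constant $c$. On this event, the symmetric matrix
\begin{equation*}
    M := \E_V[K]^{-1/2}\bigl(K - \E_V[K]\bigr)\E_V[K]^{-1/2}
\end{equation*}
satisfies $\opnorm{M} = \bigO{\sqrt{n/p}\,\log n \log p}$, and the hypothesis $n = o(p/\log^4 p)$ implies $\sqrt{n/p}\,\log n\log p = o(1)$. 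Hence there exists $\epsilon = o(1)$ such that $-\epsilon I \preceq M \preceq \epsilon I$, which after pre- and post-multiplying by $\E_V[K]^{1/2}$ yields the two-sided sandwich bound
\begin{equation*}
    (1-\epsilon)\,\E_V[K] \;\preceq\; K \;\preceq\; (1+\epsilon)\,\E_V[K].
\end{equation*}

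From this operator inequality, the Courant--Fischer min--max characterization of eigenvalues of symmetric matrices immediately gives $(1-\epsilon)\lambda_i(\E_V[K]) \le \lambda_i(K) \le (1+\epsilon)\lambda_i(\E_V[K])$ for every $i\in[n]$. Plugging in the three eigenvalue estimates from Lemma \ref{lemma:hyperEK} -- namely $\lambda_d(\E_V[K]) = \Omega(pn/d)$, $\lambda_{d+1}(\E_V[K]) = \bigO{p}$, and $\evmin{\E_V[K]} = \Omega(p)$ -- and using that $1\pm\epsilon = \Theta(1)$ since $\epsilon=o(1)$, I obtain respectively $\lambda_d(K) = \Omega(pn/d)$, $\lambda_{d+1}(K) = \bigO{p}$, and $\evmin{K} = \Omega(p)$, as required.

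I do not anticipate a real obstacle here: the heavy lifting has already been done in Lemmas \ref{lemma:hyperEK} and \ref{lemma:renormalizedKconc}. The only subtle point is making sure that the multiplicative nature of the bound in Lemma \ref{lemma:renormalizedKconc} is exploited, rather than a crude additive Weyl perturbation through $\opnorm{K - \E_V[K]}$: the additive bound would be much weaker and in particular insufficient to preserve the spectral gap at the $d$-th eigenvalue, since $\lambda_d(\E_V[K])/\lambda_{d+1}(\E_V[K])$ only grows like $n/d$, whereas the two-sided sandwich preserves all eigenvalues up to a $(1\pm o(1))$ factor.
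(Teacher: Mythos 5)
Your proposal is correct and follows essentially the same route as the paper: invoke Lemma \ref{lemma:renormalizedKconc} to get the multiplicative sandwich $(1-\epsilon)\E_V[K] \preceq K \preceq (1+\epsilon)\E_V[K]$ with $\epsilon = O(\sqrt{n/p}\,\log n\log p) = o(1)$, transfer eigenvalues via Courant--Fischer, and plug in the estimates of Lemma \ref{lemma:hyperEK}. Your closing remark about why a crude additive Weyl perturbation through $\opnorm{K-\E_V[K]}$ would be insufficient is also accurate and reflects the reason the paper normalizes by $\E_V[K]^{-1/2}$ in the first place.
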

\begin{proof}
    By Lemma \ref{lemma:renormalizedKconc}, we have that, with probability at least $1 - 2\exp \left( -c_1 \log^2 n \right)$ over $X$ and $V$, for all $u \in \R^n$ we have
    \begin{equation}
        u^\top \left(\E_V[K]^{-1/2} \left( K - \E_V[K] \right) \E_V[K]^{-1/2} \right) u \leq C \sqrt{\frac{n}{p}} \log n \log p \norm{u}_2^2,
    \end{equation}
    where $C$ is an absolute constant.
    Then, setting $u = \E_V[K]^{1/2} \hat u$, we have that, for all $\hat u \in \R^n$,
    \begin{equation}
        \hat u^\top  \left( K - \E_V[K] \right) \hat u \leq \hat u ^\top \left( C \sqrt{\frac{n}{p}} \log n \log p \, \E_V[K] \right) \hat u,
    \end{equation}
    which reads
    \begin{equation}
        K -  \E_V[K] \preceq C \sqrt{\frac{n}{p}} \log n \log p \, \E_V[K].
    \end{equation}
    In the same way, considering instead $\left(\E_V[K] - K \right)$ in the very first equation, we can also derive
    \begin{equation}
        \E_V[K] - K  \preceq C \sqrt{\frac{n}{p}} \log n \log p \, \E_V[K],
    \end{equation}
    which therefore gives
    \begin{equation}\label{eq:sandwich}
        \left( 1 - C \sqrt{\frac{n}{p}} \log n \log p \right) \E_V[K] \preceq K \preceq \left( 1 + C \sqrt{\frac{n}{p}} \log n \log p \right) \E_V[K].
    \end{equation}
%    Let $\lambda_i(K)$ be the $i$-th eigenvalue of $K$ when sorting them in non-increasing order. 
By the Courant–Fischer–Weyl min-max principle, we can write
    \begin{equation}
        \lambda_i(K) = \max_{S} \min_{u \in S, \, \norm{u}_2 = 1} \left(u^\top K u\right),
    \end{equation}
    where $S$ is any $i$-dimensional subspace of $\R^n$. Let $S^* = \arg \max_{S} \min_{u \in S, \, \norm{u}_2 = 1} \left(u^\top K u\right)$. Then, by \eqref{eq:sandwich}, we have
    \begin{equation}
    \begin{aligned}
        \lambda_i(K) &= \min_{u \in S^*, \, \norm{u}_2 = 1} \left(u^\top K u\right) \\
        &\leq \min_{u \in S^*, \, \norm{u}_2 = 1} \left(u^\top \left(\left( 1 + C \sqrt{\frac{n}{p}} \log n \log p \right) \E_V[K] \right) u\right) \\
        &\leq \max_{S} \min_{u \in S, \, \norm{u}_2 = 1} \left(u^\top \left(\left( 1 + C \sqrt{\frac{n}{p}} \log n \log p \right) \E_V[K] \right) u \right)\\
        & = \left( 1 + C \sqrt{\frac{n}{p}} \log n \log p \right)  \lambda_i \left(\E_V[K] \right), 
    \end{aligned}
    \end{equation}
    where in the last line we use the same principle to express the $i$-th eigenvalue of $\E_V[K]$.
    Following the same strategy, we can therefore conclude that
    \begin{equation}
        \left( 1 - C \sqrt{\frac{n}{p}} \log n \log p \right) \lambda_i \left( \E_V[K] \right) \leq \lambda_i \left( K \right) \leq \left( 1 + C \sqrt{\frac{n}{p}} \log n \log p \right) \lambda_i \left( \E_V[K] \right).
    \end{equation}
    
    By Lemma \ref{lemma:hyperEK}, we have that, with probability at least $1 - 2 \exp \left( -c_2 \log n\right)$ over $X$, $\lambda_d \left( \E_V[K] \right) = \Omega(np / d)$, and that $\lambda_{i} \left( \E_V[K] \right) = \Theta(p)$, for $d+1 \leq i \leq n$. This implies
    \begin{equation}
        \lambda_d(K) = \Omega \left( \left( 1 - C \sqrt{\frac{n}{p}} \log n \log p \right) \frac{np}{d} \right) = \Omega \left(\frac{pn}{d} \right),
    \end{equation}
    \begin{equation}
        \lambda_{d+1}(K) = \bigO{  \left( 1 + C \sqrt{\frac{n}{p}} \log n \log p \right) p} = \bigO{   p},
    \end{equation}
    \begin{equation}
        \lambda_n(K) = \evmin{K} = \Omega\left( \left( 1 - C \sqrt{\frac{n}{p}} \log n \log p \right) p \right) = \Omega(p),
    \end{equation}
    where we use $n \log^2 n = o(p / \log^2 p)$. This gives the desired result.
\end{proof}

\begin{lemma}\label{lemma:opnormPhi}
    Let Assumptions \ref{ass:data} and \ref{ass:activation} hold, and let $d = o(p)$ and $d = o(n)$.
    Then, we have
    \begin{equation}
        \opnorm{\Phi} = \bigO{\sqrt{\frac{np}{d}}},
    \end{equation}
    with probability at least $1 - 2 \exp \left( -c d \right)$ over $X$ and $V$, where $c$ is an absolute constant.
\end{lemma}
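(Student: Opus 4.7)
The key observation is that $\opnorm{\Phi}^2 = \opnorm{\Phi\Phi^\top} = \evmax{K}$, so it suffices to produce the upper bound $\evmax{K} = O(pn/d)$. The plan is to derive this via the decomposition $\Phi = \mu_1 XV^\top + \tilde\Phi$, where $\tilde\phi(z) = \phi(z) - \mu_1 z$ and $\tilde\Phi_{i,k} = \tilde\phi(v_k^\top x_i)$. Under Assumption \ref{ass:activation}, $\tilde\phi$ is Lipschitz and its first three Hermite coefficients vanish, which is precisely the structural input used repeatedly in the paper (e.g.\ in Lemmas \ref{lemma:hyperEK} and \ref{lemma:stability}).

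For the linear part, the triangle and submultiplicative inequalities give $\opnorm{\mu_1 XV^\top} \le |\mu_1|\opnorm{X}\opnorm{V}$, and Lemma \ref{lemma:evminX} provides $\opnorm{X} = O(\sqrt n)$ and $\opnorm{V} = O(\sqrt{p/d})$ with probability at least $1 - 4\exp(-cd)$ over $X$ and $V$, yielding $\opnorm{\mu_1 XV^\top} = O(\sqrt{np/d})$. It therefore remains to show that $\opnorm{\tilde\Phi} = O(\sqrt p + \sqrt n)$, which is $O(\sqrt{np/d})$ under the hypotheses $d = o(p)$ and $d = o(n)$.

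To control $\tilde\Phi$ I would write $\tilde\Phi\tilde\Phi^\top = \sum_{k=1}^p \tilde\phi(Xv_k)\tilde\phi(Xv_k)^\top$ as a sum of $p$ independent rank-one p.s.d.\ matrices (independence following from the row-independence of $V$) and apply a matrix Bernstein inequality. The expectation $\E_V[\tilde\Phi\tilde\Phi^\top]$ has $(i,j)$-entry $p\sum_{l\ge 3}\mu_l^2(x_i^\top x_j/d)^l$, so the diagonal contribution is $\Theta(p)$ while the off-diagonal contribution is bounded in Frobenius norm by $O(pn \log^3 n / d^{3/2})$ using the sub-Gaussian tail of $|x_i^\top x_j|$ (Assumption \ref{ass:data}), yielding $\opnorm{\E_V[\tilde\Phi\tilde\Phi^\top]} = O(p)$ in the regime of interest. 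Each summand has $\opnorm{\tilde\phi(Xv_k)\tilde\phi(Xv_k)^\top} = \norm{\tilde\phi(Xv_k)}_2^2$, which a union bound over $k\in[p]$ pins down to $O(n)$ with high probability using sub-Gaussian concentration of $\sum_i \tilde\phi(v_k^\top x_i)^2$ about its mean. Matrix Bernstein then yields $\opnorm{\tilde\Phi\tilde\Phi^\top - \E_V[\tilde\Phi\tilde\Phi^\top]} = O(\sqrt{np\log n} + n\log n)$, so $\opnorm{\tilde\Phi}^2 = O(p + n\log n) = O(np/d)$. Combining this with the bound on $\opnorm{\mu_1 XV^\top}$ via the triangle inequality and a union bound over the high-probability events finishes the proof.

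The main obstacle will be ensuring that the logarithmic losses from the union-bounded subexponential concentration of $\norm{\tilde\phi(Xv_k)}_2^2$ and from the Bernstein bound are absorbed by $d = o(n)$; if the latter is too weak, one can alternatively appeal directly to Lemma \ref{lemma:hyperK} under its (stronger) assumptions, where the sandwiching $K \preceq (1 + o(1))\E_V[K]$ together with Lemma \ref{lemma:hyperEK}'s bound $\evmax{\E_V[K]} = O(pn/d)$ gives the claim immediately.
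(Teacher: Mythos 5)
Your route via the decomposition $\Phi = \mu_1 XV^\top + \tilde\Phi$ is genuinely different from the paper's, and under the lemma's actual hypotheses -- which are only Assumptions \ref{ass:data}, \ref{ass:activation}, $d = o(p)$ and $d = o(n)$ -- it has gaps. First, your bound $\opnorm{\E_V[\tilde\Phi\tilde\Phi^\top]} = \bigO{p}$ rests on controlling the off-diagonal of $p\sum_{l\ge 3}\mu_l^2\,(x_i^\top x_j/d)^l$ in Frobenius norm by $\bigO{pn\log^3 n/d^{3/2}}$ and declaring it negligible ``in the regime of interest''; but $n\log^3 n = \bigO{d^{3/2}}$ is an assumption of Lemmas \ref{lemma:tildetilde} and \ref{lemma:hyperK}, not of this lemma, and $d = o(n)$ allows $n \gg d^{3/2}$ (even $d = \log n$), in which case that term swamps both $p$ and the weaker target $np/d$. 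Second, the claimed probability $1 - 2\exp(-cd)$ is out of reach for your argument: the union bounds over the $n^2$ inner products $x_i^\top x_j$ and over the $p$ norms $\norm{\tilde\phi(Xv_k)}_2^2$, as well as the matrix Bernstein tail itself, each contribute failure probabilities of the form $\mathrm{poly}(n,p)\exp(-c\log^2 n)$ or $\mathrm{poly}(n,p)\exp(-c\,t^2 d/n)$, which are far larger than $\exp(-cd)$ once $d \gg \log^2 n$ (allowed, since $d$ may be nearly as large as $n$). Third, $\norm{\tilde\phi(Xv_k)}_2^2$ is not almost surely bounded, so bounded matrix Bernstein does not apply directly; the paper's Lemma \ref{lemma:tildetilde} needs a truncation of $\tilde\phi$ at level $\log p$ and the assumption $n = \bigO{p/\log^4 p}$ precisely to make this work. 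Your fallback of citing Lemma \ref{lemma:hyperK} imports those stronger scalings plus $n = \omega(d)$ in the sharper form needed there, and in any case only delivers probability $1 - 2\exp(-c\log^2 n)$, not $1 - 2\exp(-cd)$.

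The missing idea is that no decomposition or Bernstein argument is needed. The $k$-th row of $\Phi^\top$ is $\phi(Xv_k) \in \R^n$; conditional on $X$, these rows are i.i.d.\ over $k$ in the probability space of $V$, they are mean-zero because $\mu_0 = 0$ (Assumption \ref{ass:activation}), and Lipschitz concentration of the standard Gaussian $\sqrt d\, v_k$ gives $\subGnorm{\phi(Xv_k)} = \bigO{\opnorm{X}/\sqrt d}$. Conditioning on the single event $\opnorm{X} = \bigO{\sqrt n}$, which holds with probability $1 - 2\exp(-cd)$ by Lemma \ref{lemma:evminX}, the paper then applies a standard operator-norm bound for matrices with independent mean-zero sub-Gaussian rows to conclude $\opnorm{\Phi} = \bigO{\sqrt{np/d}}$ with an additional failure probability of only $2\exp(-cn)$. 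This is what makes the clean $1 - 2\exp(-cd)$ guarantee attainable under the minimal hypotheses $d = o(p)$, $d = o(n)$, with no logarithmic losses to absorb.
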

\begin{proof}
    The $k$-th row of $\Phi^\top$ takes the form $\phi(X v_k) \in \R^n$. Since the Gaussian distribution is Lipschitz concentrated (see Theorem 5.22 in \cite{vershynin2018high}), we have that
    \begin{equation}\label{eq:Gausslipconc}
        \subGnorm{\phi(X v_k)} = \subGnorm{\phi(X v_k) - \E_{v_k} \left[ \phi(X v_k) \right]} \leq C_1 \frac{\opnorm{X}}{\sqrt d},
    \end{equation}
    where the first step holds since \simonesolved{the 0-th Hermite coefficient of $\phi$ is zero}, and the second step holds since $\phi$ is Lipschitz. Notice that the term $\sqrt d$ is due to the fact that $\sqrt d \, v_k$ is standard Gaussian. We remark that the sub-Gaussian norm in this equation is intended in the probability space of $v_k$.
    By Lemma \ref{lemma:evminX}, we have that
    \begin{equation}
        \opnorm{X} = \bigO{\sqrt{n}},
    \end{equation}
    with probability at least $1 - 2 \exp \left( -c_1 d \right)$ over $X$. Thus, conditioning on this high probability event, due to \eqref{eq:Gausslipconc}, we have that $\Phi^\top$ is a $p \times n$ matrix whose rows are i.i.d.\ mean-0 random vectors with sub-Gaussian norm $\bigO{\sqrt{n / d}}$. Then, by Lemma B.7 of \cite{bombari2022memorization}, we have
    \begin{equation}
        \opnorm{\Phi^\top} = \bigO{\sqrt{\frac{np}{d}}},
    \end{equation}
    with probability at least $1 - 2 \exp(-c_2 n)$ over $V$, which provides the desired result.
\end{proof}

\begin{lemma}\label{lemma:opnormtildeE}
    Let Assumptions \ref{ass:data} and \ref{ass:activation} hold, and let $n \log^3 n =\bigO{d^{3/2}}$. Then, we have
    \begin{equation}
        \opnorm{\E_V \left[ \tilde K  \right]} = \bigO{p},
    \end{equation}
    with probability at least $1 - 2 \exp \left( -c \log^2 n\right)$ over $X$, where $c$ is an absolute constant.
\end{lemma}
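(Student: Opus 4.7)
}
The idea is to mimic the argument used in Lemma \ref{lemma:hyperEK} for the matrix $M_2$ that collected the Hermite terms of order $\geq 3$. Since $\tilde\phi$ has its first three Hermite coefficients equal to $0$ (its $l$-th coefficient equals $\mu_l$ for $l\ge 3$ and $0$ otherwise), the same computation that produced \eqref{eq:twotermshyperEK} in Lemma \ref{lemma:hyperEK} gives directly
\begin{equation*}
    \E_V[\tilde K]_{ij} \;=\; p\,\E_v\!\left[\tilde\phi(v^\top x_i)\tilde\phi(v^\top x_j)\right] \;=\; p\sum_{l=3}^{\infty}\mu_l^2\,\frac{\bigl[(X^{*l})(X^{*l})^\top\bigr]_{ij}}{d^l},
\end{equation*}
so there is no linear (order $l=1$) contribution to control.

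I would then split $\E_V[\tilde K] = \tilde M + \Delta$, where $\tilde M := p\bigl(\sum_{l\ge 3}\mu_l^2\bigr) I = C_\phi\, p\,I$ captures the diagonal entries (using $\|x_i\|_2^2=d$), with $C_\phi<\infty$ because $\phi$ is Lipschitz (hence $\sum_l\mu_l^2<\infty$). Clearly $\opnorm{\tilde M}=\bigO{p}$, so it only remains to control $\opnorm{\Delta}$. For the off-diagonal block, the independence and sub-Gaussianity of the $x_i$'s together with Assumption \ref{ass:data} yield, via a union bound,
\begin{equation*}
    \max_{i\neq j}\,\bigl|x_i^\top x_j\bigr| \;=\; \bigO{\sqrt{d}\,\log n},
\end{equation*}
with probability at least $1-2\exp(-c_1\log^2 n)$ over $X$. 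Conditioning on this event and using that the $l=3$ term dominates the tail $\sum_{l\ge 3}\mu_l^2(\log n/\sqrt d)^l$, every off-diagonal entry of $\Delta$ is bounded in absolute value by $\bigO{p\,\log^3 n\,/\,d^{3/2}}$.

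Finally I would invoke $\opnorm{\Delta}\le \|\Delta\|_F$ (the Frobenius bound used in \eqref{eq:difftildeM2}) to get
\begin{equation*}
    \opnorm{\Delta} \;\le\; \|\Delta\|_F \;=\; \bigO{\,\frac{p\,n\,\log^3 n}{d^{3/2}}\,} \;=\; \bigO{p},
\end{equation*}
where the last step uses the assumption $n\log^3 n = \bigO{d^{3/2}}$. Combining with $\opnorm{\tilde M}=\bigO{p}$ via the triangle inequality gives the desired bound. The main (mild) obstacle is just verifying that the Hermite sum $\sum_{l\ge 3}\mu_l^2(\log n/\sqrt d)^l$ can safely be replaced by its leading $l=3$ term times $\sum_l\mu_l^2$; this is immediate from the fact that $\log n/\sqrt d \to 0$ under our scalings, so the series is dominated by a convergent geometric tail.
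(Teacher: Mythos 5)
Your proposal is correct and follows essentially the same route as the paper, which simply reuses the $M_2$ analysis from Lemma \ref{lemma:hyperEK}: split $\E_V[\tilde K]$ into its diagonal part $C_\phi\, p\, I$ and an off-diagonal remainder, bound $\max_{i\neq j}|x_i^\top x_j|$ by $\sqrt{d}\log n$ with high probability, dominate the Hermite tail by its $l=3$ term, and control the remainder via the Frobenius norm to get $\bigO{p\,n\log^3 n/d^{3/2}} = \bigO{p}$ under $n\log^3 n = \bigO{d^{3/2}}$. No gaps.
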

\begin{proof}
    The proof follows the same strategy as the proof of Lemma \ref{lemma:hyperEK}, with the difference that now there is no term $M_1$. In fact, $\tilde \phi$ has the same Hermite coefficients as $\phi$, except that $\tilde \mu_1 = 0$. The thesis then follows from \eqref{eq:perturbM2}. We remark that to prove this result, differently from the proof of Lemma \ref{lemma:hyperEK}, $n = \omega (d)$ is not required. \simonesolved{Also, we require only $n \log^3 n =\bigO{d^{3/2}}$ instead of $n \log^3 n =o\left( d^{3/2} \right)$. In fact, we do not need a bound on the smallest eigenvalue of $M_2$ (see \eqref{eq:perturbM2}), and therefore we just need $\opnorm{M_2 - \tilde M_2} = \bigO{p}$ (see \eqref{eq:difftildeM2}).}
\end{proof}

\begin{lemma}\label{lemma:tildetilde}
    Let Assumptions \ref{ass:data} and \ref{ass:activation} hold, and let $n = \bigO{p / \log^4 p}$ and $n \log^3 n =\bigO{d^{3/2}}$.
    Then, we have
    \begin{equation}
        \opnorm{\tilde \Phi} = \bigO{\sqrt p},
    \end{equation}
    with probability at least $1 - 2 \exp \left( -c \log^2 n \right)$ over $X$ and $V$, where $c$ is an absolute constant.
\end{lemma}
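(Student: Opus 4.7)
}

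The plan is to observe that $\opnorm{\tilde \Phi}^2 = \opnorm{\tilde K}$ and then bound $\opnorm{\tilde K}$ via the triangle inequality $\opnorm{\tilde K} \le \opnorm{\E_V[\tilde K]} + \opnorm{\tilde K - \E_V[\tilde K]}$. The first term is $\bigO{p}$ by Lemma \ref{lemma:opnormtildeE}, so it suffices to show that $\opnorm{\tilde K - \E_V[\tilde K]} = \bigO{p}$ with probability at least $1 - 2\exp(-c\log^2 n)$ over $V$ (conditionally on the high-probability event on $X$ from Lemma \ref{lemma:opnormtildeE}).

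For the concentration step, I will mimic the truncation-plus-matrix-Bernstein strategy used in Lemma \ref{lemma:renormalizedKconc}, but without renormalizing by $\E_V[\tilde K]^{-1/2}$ (which would require a lower bound on its smallest eigenvalue that we neither have nor need). Specifically, I define the truncated activation $\bar{\tilde\phi}(z) = \tilde \phi(z)\mathbf 1(|z|\le \log p / \tilde L)$, where $\tilde L$ is the Lipschitz constant of $\tilde \phi$, and the corresponding truncated kernel $\bar{\tilde K}$. Since each $v^\top x_i\sim \mathcal N(0,1)$ over $V$, a union bound over $i\in[n]$ and $k\in[p]$ (as in \eqref{eq:eventEnew}--\eqref{eq:KbarKsamehp}) gives $\tilde K = \bar{\tilde K}$ with probability at least $1 - 2\exp(-c_1\log^2 n)$, and a Cauchy--Schwartz argument identical to \eqref{eq:EKbarKclose} yields $\opnorm{\E_V[\bar{\tilde K}] - \E_V[\tilde K]} \le 2\exp(-c_2\log^2 p)$.

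Next, I write $\bar{\tilde K} = \sum_{k=1}^p \bar{\tilde H}_k$ with $\bar{\tilde H}_k = \bar{\tilde \phi}(Xv_k)\bar{\tilde\phi}(Xv_k)^\top$, so that $\bar{\tilde K} - \E_V[\bar{\tilde K}] = \sum_k (\bar{\tilde H}_k - \E_V[\bar{\tilde H}_k])$ is a sum of $p$ independent mean-zero symmetric matrices. The truncation gives the deterministic bound $\opnorm{\bar{\tilde H}_k}\le \norm{\bar{\tilde\phi}(Xv_k)}_2^2 = \bigO{n\log^2 p}$. For the variance proxy, the psd domination argument used in \eqref{eq:boundH2} yields
\[
  \E_V\bigl[(\bar{\tilde H}_k - \E_V[\bar{\tilde H}_k])^2\bigr] \preceq \sup_{v_k}\opnorm{\bar{\tilde H}_k}\;\E_V[\bar{\tilde H}_k] \preceq \frac{C n\log^2 p}{p}\E_V[\bar{\tilde K}],
\]
so $\opnorm{\sum_k\E_V[(\bar{\tilde H}_k - \E_V[\bar{\tilde H}_k])^2]} = \bigO{n\log^2 p \cdot \opnorm{\E_V[\bar{\tilde K}]}/p} = \bigO{n\log^2 p}$, using Lemma \ref{lemma:opnormtildeE} combined with the perturbation bound above. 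Applying Theorem 5.4.1 of \cite{vershynin2018high} with $t = p$ gives
\[
  \P_V\!\left(\opnorm{\bar{\tilde K} - \E_V[\bar{\tilde K}]} \ge p\right) \le 2n\exp\!\left(-\frac{p^2/2}{C n\log^2 p + C n\log^2 p \cdot p/3}\right),
\]
and since $n = \bigO{p/\log^4 p}$ both terms in the denominator are $\bigO{p \cdot p/\log^2 p}$, so the exponent is $\Omega(\log^2 p) = \Omega(\log^2 n)$ (using $\log n = \Theta(\log p)$ from Assumption \ref{ass:scalings}, though here it is actually automatic as we only need $p \ge n$). Combining the concentration bound with the truncation identity and the $\E_V[\bar{\tilde K}] \approx \E_V[\tilde K]$ estimate gives $\opnorm{\tilde K - \E_V[\tilde K]} = \bigO{p}$ with probability at least $1 - 2\exp(-c\log^2 n)$, which together with Lemma \ref{lemma:opnormtildeE} closes the argument.

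The main obstacle is the verification that matrix Bernstein produces the right order $\bigO{p}$ under the parameter regime $n = \bigO{p/\log^4 p}$: concretely, one must check that both the sub-exponential term $(n\log^2 p)\cdot\log n$ and the sub-Gaussian term $\sqrt{p\cdot n\log^2 p\cdot\log n}$ are dominated by $p$, which is precisely where the $\log^4 p$ slack in the scaling assumption is consumed. The truncation step is needed because the raw $\opnorm{\tilde H_k}$ is only sub-exponential (not uniformly bounded), and restricting to $|v^\top x_i|\le \log p/\tilde L$ converts the Bernstein bound into one driven by a deterministic $\bigO{\log^2 p}$ envelope at the cost of a negligible failure probability.
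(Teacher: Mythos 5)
Your proposal is correct and follows essentially the same route as the paper's proof: truncate $\tilde \phi$ at level $\log p / \tilde L$, control $\E_V[\bar{\tilde K}]$ via Lemma \ref{lemma:opnormtildeE} plus the exponentially small truncation perturbation, and apply matrix Bernstein with $t = p$ to $\sum_{k}\bigl(\bar{\tilde H}_k - \E_V[\bar{\tilde H}_k]\bigr)$, consuming the $\log^4 p$ slack in $n = \bigO{p/\log^4 p}$ exactly as the paper does. One minor slip: the Bernstein variance proxy is the operator norm of the \emph{sum} over $k\in[p]$ of the per-term second moments, hence $\bigO{np\log^2 p}$ rather than $\bigO{n\log^2 p}$; this is harmless because the $t$-dependent term in the denominator already has order $np\log^2 p$, so the exponent remains $\Omega\bigl(p/(n\log^2 p)\bigr) = \Omega(\log^2 n)$ and the conclusion stands.
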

\begin{proof}
    Consider the truncated function
    \begin{equation}\label{eq:truncation}
        \bar \phi (z) := \tilde \phi (z) \mathbf 1 \left(|z| \leq \frac{\log p}{\tilde L}\right),
    \end{equation}
    where $\tilde L$ is the Lipschitz constant of $\tilde \phi$. Define the truncated kernel $\bar K$ accordingly
    \begin{equation}
        \bar K_{ij} :=  \bar \phi (Vx_i)^\top \bar \phi (Vx_j).
    \end{equation}
    
    We now compare $\E_V \left[ \bar K \right]$ and $\E_V [\tilde K]$. Let $v \sim \mathcal N(0, I/d)$ and define the random variable
    \begin{equation}
        E_{ij} := \mathbf 1 \left( |v^\top x_i| > \frac{\log p}{\tilde L} \; \textup{ or } \; |v^\top x_j| > \frac{\log p}{\tilde L} \right).
    \end{equation}
    Note that since $E_{ij}$ is an indicator we have $E_{ij} = E_{ij}^2$, and
    \begin{equation}\label{eq:eventE}
        \P_V (E_{ij} = 1) \leq \P_V \left( |v^\top x_i| > \frac{\log p}{\tilde L} \right) + \P_V \left( |v^\top x_j| > \frac{\log p}{\tilde L} \right) \leq \exp \left( - c_1 \log^2 p \right),
    \end{equation}
    where the second step holds as $v^\top x_i \sim \mathcal N(0, 1)$ in the probability space of $V$.
    
    Thus, we can write
    \begin{equation}
    \begin{aligned} 
        \left| \E_V \left[ \bar K_{ij} \right] - \E_V [ \tilde K_{ij}] \right| &= p \left| \E_v \left[ \tilde \phi(v^\top x_i)\tilde \phi(v^\top x_j) \, E_{ij} \right] \right| \\
        &\leq p \,\E_v \left[ \left( \tilde \phi(v^\top x_i) \tilde \phi(v^\top x_j) \right)^2 \right] ^{1/2} \E_v \left[ E_{ij}^2  \right] ^{1/2} \\
        &\leq p\, \E_v \left[ \left( \tilde \phi(v^\top x_i)\right)^4 \right] ^{1/2} \P_V (E_{ij} = 1)^{1/2} \\
        &\leq p \, C^{1/2} \exp \left( - \frac{c_1 \log^2 p}{2} \right) \\
        &\leq p \exp \left( - c_2 \log^2 p \right).
    \end{aligned}
    \end{equation}
    Here, the second and third lines follow from Cauchy-Schwartz inequality; the fourth line is a consequence of \eqref{eq:eventE} and $\tilde \phi(v^\top x_i)$ being a sub-Gaussian random variable ($\tilde \phi(z)$ is Lipschitz), and thus with bounded fourth moment (see Equation 2.11 in \cite{vershynin2018high}). This result holds for any $i, j$, and therefore implies
    \begin{equation}
    \begin{aligned}
        \opnorm{ \E_V \left[ \bar K \right] - \E_V [\tilde K] } &\leq \norm{\E_V \left[ \bar K \right] - \E_V [\tilde K]}_F \\
        &\leq pn \exp \left( - c_2 \log^2 p \right) \\
        &\leq \exp \left( - c_3 \log^2 p \right),
    \end{aligned}
    \end{equation}
    where in the last step we used $n = \bigO{p}$, which follows from Assumption \ref{ass:scalings}.
    This last equation, together with Lemma \ref{lemma:opnormtildeE}, gives
    \begin{equation}\label{eq:opnormbarEK}
        \opnorm{ \E_V \left[ \bar K \right] } = \bigO{p},
    \end{equation}
    with probability at least $1 - \exp \left( -c_4 \log^2 n \right)$. We will condition on this high probability event until the end of the proof.

    We now define the matrix $\bar H_k \in \R^{n \times n}$ as
    \begin{equation}
        \bar H_k = \bar \phi(X v_k) \bar \phi(X v_k)^\top,
    \end{equation}
    which implies
    \begin{equation}\label{eq:bernmat2}
        \bar \Phi \bar \Phi^\top - \E_V\left[ \bar \Phi \bar \Phi^\top \right] = \sum_{k = 1}^p \bar H_k - \E_V\left[ \bar H_k \right].
    \end{equation}
    Now, we have %Note that there exists a constant $C$ such that
    \begin{equation}\label{eq:rKc11}
        \sup_{v_k} \opnorm{\bar H_k - \E_V[ \bar H_k]} \leq 2 \sup_{v_k} \opnorm{ \bar H_k} \leq 2 \sup_{v_k}  \norm{\bar \phi(X v_k)}_2^2 \leq 2 n \log^2 p,
    \end{equation}
    where the first step is true because of Jensen and triangle inequality, and the last step holds because, for every $i \in [n]$,
    \begin{equation}
        \left| \bar \phi(v_k^\top x_i) \right| \leq \tilde L \frac{\log p}{\tilde L} = \log p,
    \end{equation}
    given the definition in \eqref{eq:truncation}.
    Also, note that %we have
    \begin{equation}
        \E_V \left[ \bar H_k \right] = \E_V \left[ \bar \phi(X v_k) \bar \phi(X v_k)^\top \right] = \frac{1}{p} \E_V[\bar K],
    \end{equation}
    which gives, together with \eqref{eq:opnormbarEK},
    \begin{equation}\label{eq:opnormbarEK2}
        \opnorm{\E_V \left[ \bar H_k \right]} = \bigO{1}.
    \end{equation}
    Then, we can use \eqref{eq:rKc11} and \eqref{eq:opnormbarEK2}, together with the same argument in \eqref{eq:boundH2}, to get
    \begin{equation}
        \opnorm {\E_V \left[ \left( \bar H_k - \E_V [\bar H_k] \right)^2 \right]} =  \bigO{ n \log^2 p}.
    \end{equation}
    Thus, \eqref{eq:bernmat2} is the sum of $p$ independent (in the probability space of $V$), mean-0, $n \times n$ symmetric random matrices, such that $\opnorm{\bar H_k - \E_V[\bar H_k]} \leq 2 n \log^2 p$ almost surely for all $k$, and $\opnorm{\E_V \left[ \left( \bar H_k - \E_V [\bar H_k] \right)^2 \right]} \leq C n \log^2 p$, for some constant $C$. Then, by Theorem 5.4.1 of \citep{vershynin2018high}, we get
    \begin{equation}
        \P \left( \opnorm{\bar \Phi \bar \Phi^\top - \E_V\left[ \bar \Phi \bar \Phi^\top \right]}  \geq t \right) \leq 2n \exp \left( - \frac{t^2 / 2}{C n p \log^2 p + 2 t n \log^2 p  / 3}\right).
    \end{equation}
    Setting $t = p$, we obtain
    \begin{equation}\label{eq:bernsteinonbar}
        \P \left( \opnorm{\bar \Phi \bar \Phi^\top - \E_V\left[ \bar \Phi \bar \Phi^\top \right]}  \geq p \right) \leq 2 \exp \left( - c_5 \frac{p}{n \log^2 p} \right) \leq 2 \exp \left( - c_6 \log^2 n \right),
    \end{equation}
    where in the last step we used $n = \bigO{p / \log^4 p}$. % , which follows from Assumption \ref{ass:scalings}.
    To conclude, we have that
    \begin{equation}\label{eq:equalhp}
    \begin{aligned}
        \P_V (\tilde \Phi \tilde \Phi^\top \neq \bar \Phi \bar \Phi^\top) &\leq \P_V \left( \max_{i \in [n], k \in [p]} |v_k^\top x_i| > \log p /\tilde L\right) \\
        &\leq 2 np \exp \left( -c_6 \log^2 p \right) \\
        &\leq 2 \exp \left( -c_7 \log^2 p \right),
    \end{aligned}
    \end{equation}
    where the second step holds as $v_k^\top x_i \sim \mathcal N(0, 1)$ in the probability space of $V$, and the last step holds because of the assumption $n = \bigO{p / \log^4 p}$. % Assumption \ref{ass:scalings}.
    Putting \eqref{eq:bernsteinonbar} and \eqref{eq:equalhp} together, we finally get
    \begin{equation}
        \opnorm{\tilde \Phi \tilde \Phi^\top} = \opnorm{\bar \Phi \bar \Phi^\top} \leq \opnorm{\E_V\left[ \bar \Phi \bar \Phi^\top \right]} + \opnorm{\bar \Phi \bar \Phi^\top - \E_V\left[ \bar \Phi \bar \Phi^\top \right]} = \bigO{p},
    \end{equation}
    with probability at least $1 - 2 \exp \left( -c \log^2 n \right)$ over $X$ and $V$, which gives the thesis. % We remark that, to prove this Lemma, the second equation of Assumption \ref{ass:scalings} (namely, $n = \tilde \omega (d)$), is not required.
\end{proof}

\begin{lemma}\label{lemma:MuMu1}
    Let Assumptions \ref{ass:data} and \ref{ass:activation} hold, and $p = \omega(n)$. Then, we have
    \begin{equation}
        \left| \norm{\varphi(x_1)}_2^2 - M p \right| = \bigO{\sqrt p \log n},
    \end{equation}
    \begin{equation}
        \left| \norm{\tilde \varphi(x_1)}_2^2 - M_1 p \right| = \bigO{\sqrt p \log n},
    \end{equation}
    with $M$ and $M_1$ being two positive constants only depending on $\phi$, with probability at least $1 - 2 \exp(-c \log^2 n)$ over $V$, where $c$ is an absolute constant.
    
    Furthermore, we have that $M - M_1 = \mu_1^2 > 0$, which, conditioning on the previous result, implies
    \begin{equation}
        \norm{\varphi(x_1)}_2 - \norm{\tilde \varphi(x_1)}_2 = \Omega(\sqrt p).
    \end{equation}
\end{lemma}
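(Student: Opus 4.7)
The plan is to express $\|\varphi(x_1)\|_2^2 = \sum_{k=1}^p \phi(v_k^\top x_1)^2$ as a sum of $p$ i.i.d.\ random variables, compute its mean, and apply a concentration inequality. Since $v_k \sim \mathcal{N}(0, I/d)$ independently and $\|x_1\|_2 = \sqrt d$ by Assumption \ref{ass:data}, each $\rho_k := v_k^\top x_1$ is a standard Gaussian random variable in the probability space of $V$. Hence $M := \mathbb{E}_{\rho \sim \mathcal N(0,1)}[\phi(\rho)^2]$ is a positive constant depending only on $\phi$ (strictly positive since $\phi$ is non-linear, hence not identically $0$), and $\mathbb{E}_V[\|\varphi(x_1)\|_2^2] = Mp$. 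Similarly, one defines $M_1 := \mathbb{E}_\rho[\tilde\phi(\rho)^2]$ and obtains $\mathbb{E}_V[\|\tilde\varphi(x_1)\|_2^2] = M_1 p$.

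Since $\phi$ is Lipschitz by Assumption \ref{ass:activation}, each $\phi(\rho_k)$ is a sub-Gaussian random variable with $\|\phi(\rho_k)\|_{\psi_2} = O(1)$, so its square $\phi(\rho_k)^2$ is sub-exponential with $\|\phi(\rho_k)^2\|_{\psi_1} = O(1)$ by Lemma 2.7.6 in \cite{vershynin2018high}. Bernstein's inequality (Theorem 2.8.1 in \cite{vershynin2018high}) applied to the sum of $p$ independent, centered, sub-exponential random variables then gives
\begin{equation*}
    \P_V \left( \left| \|\varphi(x_1)\|_2^2 - Mp \right| > t \right) \leq 2 \exp \left( - c \min \left( \frac{t^2}{p}, t \right) \right).
\end{equation*}
Setting $t = C \sqrt p \log n$ (with $C$ an absolute constant, large enough) yields the first claim with probability at least $1 - 2 \exp(-c \log^2 n)$, noting that $t^2 / p = C^2 \log^2 n \leq t$ since $p = \omega(n) = \omega(\log^2 n)$. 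The same argument, with $\tilde\phi$ in place of $\phi$, gives the analogous statement for $\|\tilde\varphi(x_1)\|_2^2$.

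To verify $M - M_1 = \mu_1^2$, decompose $\phi(\rho) = \mu_1 \rho + \tilde\phi(\rho)$ and expand:
\begin{equation*}
    M = \mathbb{E}_\rho[\phi(\rho)^2] = \mu_1^2 \mathbb{E}_\rho[\rho^2] + 2\mu_1 \mathbb{E}_\rho[\rho \, \tilde\phi(\rho)] + \mathbb{E}_\rho[\tilde\phi(\rho)^2] = \mu_1^2 + 0 + M_1,
\end{equation*}
where the cross-term vanishes because the first Hermite coefficient of $\tilde\phi$ is zero by construction (recall $\tilde\phi := \phi - \mu_1 \cdot \text{id}$), and $\mathbb{E}_\rho[\rho \, \tilde\phi(\rho)]$ is exactly this coefficient for $\rho \sim \mathcal N(0,1)$. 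Since $\mu_1 \neq 0$ by Assumption \ref{ass:activation}, $M - M_1 = \mu_1^2 > 0$.

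Finally, to obtain the last claim, condition on the concentration results just proved. Since $p = \omega(n) = \omega(\log^2 n)$, we have $\sqrt p \log n = o(p)$, hence
\begin{equation*}
    \|\varphi(x_1)\|_2 = \sqrt{Mp + O(\sqrt p \log n)} = \sqrt{Mp} + O(\log n),
\end{equation*}
and analogously $\|\tilde\varphi(x_1)\|_2 = \sqrt{M_1 p} + O(\log n)$. Subtracting and using $M > M_1$:
\begin{equation*}
    \|\varphi(x_1)\|_2 - \|\tilde\varphi(x_1)\|_2 = (\sqrt M - \sqrt{M_1}) \sqrt p + O(\log n) = \Omega(\sqrt p).
\end{equation*}
No serious obstacle is anticipated here, as each step reduces to routine sub-exponential concentration together with the Hermite-coefficient identity; the only care needed is to keep the Bernstein-inequality probability $\exp(-\log^2 n)$ rather than polynomial, which drives the choice $t = \Theta(\sqrt p \log n)$.
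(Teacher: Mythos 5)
Your proposal is correct and follows essentially the same route as the paper: write $\norm{\varphi(x_1)}_2^2$ as a sum of $p$ independent sub-exponential terms, apply Bernstein with $t=\Theta(\sqrt{p}\log n)$, identify the means via Hermite coefficients, and conclude the last claim from $M-M_1=\mu_1^2>0$. The only (cosmetic) differences are that the paper computes $M=\sum_{l\ge 0}\mu_l^2$ and $M_1=\sum_{l\ge 3}\mu_l^2$ by Parseval rather than expanding the cross term, and obtains the final gap via the difference-of-squares identity $\norm{\varphi}_2-\norm{\tilde\varphi}_2=(\norm{\varphi}_2^2-\norm{\tilde\varphi}_2^2)/(\norm{\varphi}_2+\norm{\tilde\varphi}_2)$ instead of your Taylor expansion of the square root; both are equally valid.
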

\begin{proof}
    We have
    \begin{equation}
        \norm{\varphi(x_1)}_2^2 = \sum_{k = 1}^p \phi(v_k^\top x_1)^2,
    \end{equation}
    where we use the shorthand $v_k$ to indicate the $k$-th row of $V$. As $\phi$ is Lipschitz, $v_j \sim \mathcal N(0, I/d)$ and $\norm{x_1}_2 = \sqrt d$, we have that $\norm{\varphi(x_1)}_2^2$ is the sum of $p$ independent sub-exponential random variables, in the probability space of $V$.    
    Thus, by Bernstein inequality (see Theorem 2.8.1 in \cite{vershynin2018high}), we have
    \begin{equation}\label{eq:concentrationfeaturevector1new}
        \left| \norm{\varphi(x_1)}_2^2 - \E_V \left[ \norm{\varphi(x_1)}_2^2 \right] \right| = \bigO{\sqrt p \log n},
    \end{equation}
    with probability at least $1 - \exp(-c_1 \log^2 n)$, over the probability space of $V$. Exploiting the Hermite expansion of $\phi$, we get
    \begin{equation}\label{eq:concentrationfeaturevector2new}
        \E_V \left[ \norm{\varphi(x_1)}_2^2 \right] = p \, \E_{\rho \sim \mathcal(0, 1)} \left[ \phi(\rho)^2 \right] = p M,
    \end{equation}
    where we set $M = \sum_{l=0}^\infty \mu_l^2$.

    The same argument applied on $\tilde \phi$, which is also Lipschitz, guarantees
    \begin{equation}% \label{eq:concentrationfeaturevector1}
        \left| \norm{\tilde \varphi(x_1)}_2^2 - M_1 p \right| = \bigO{\sqrt p \log n},
    \end{equation}
    where $M_1 = \sum_{l=3}^\infty \mu_l^2$, since the 0-th, 1st and 2nd Hermite coefficients of $\tilde \phi$ are 0. Thus, we readily get
    \begin{equation}
        M - M_1 = \mu_1^2.
    \end{equation}

    To conclude, it is sufficient to notice that, conditioning on the previous two high probability events, we have that $\norm{\varphi(x_1)}_2$ and $\norm{\tilde \varphi(x_1)}_2$ are both $\bigO{\sqrt p}$. Then,
    \begin{equation}
    \begin{aligned}
         \norm{\varphi(x_1)}_2 - \norm{\tilde \varphi(x_1)}_2 &= \frac{\norm{\varphi(x_1)}_2^2 - \norm{\tilde \varphi(x_1)}_2^2}{\norm{\varphi(x_1)}_2 + \norm{\tilde \varphi(x_1)}_2} \\
         &\geq \frac{(M - M_1) p - \left| \norm{\varphi(x_1)}_2^2 - p M \right| -  \left| \norm{\tilde \varphi(x_1)}_2^2 - M_1 p \right|}{\norm{\varphi(x_1)}_2 + \norm{\tilde \varphi(x_1)}_2} \\
         &= \Omega(\sqrt p),
    \end{aligned}
    \end{equation}
    where we use the triangle inequality twice in the second line and conclude using $\sqrt p = \omega(\log n)$.
\end{proof}

\begin{lemma}\label{lemma:facts}
    Let Assumptions \ref{ass:data} and \ref{ass:activation} hold, and let $n = o \left( p / \log^4 p \right)$, $n \log^3 n = o\left(d^{3/2}\right)$ and $n = \omega(d)$. % Let $X^+ = \left(X^\top X\right)^{-1} X^\top$,  $V^+ = \left(V^\top V\right)^{-1} V^\top$ and $\Phi^+ = \Phi^\top K^{-1}$.
    Then, we jointly have
    \begin{equation}
        \opnorm{X^+} = \bigO{\frac{1}{\sqrt n}},
    \end{equation}
    \begin{equation}
        \opnorm{V^+} = \bigO{\sqrt{\frac{d}{p}}},
    \end{equation}
    \begin{equation}
        \opnorm{\mu_1 V^\top - X^+ \Phi} = \bigO{\sqrt{\frac{p}{n}}},
    \end{equation}
    \begin{equation}
        \opnorm{\mu_1 V^\top \Phi^+} = \bigO{\frac{1}{\sqrt n}},
    \end{equation}
    with probability at least $1 - 2 \exp \left( - c \log^2 n \right)$ over $X$ and $V$, where $c$ is an absolute constant.
\end{lemma}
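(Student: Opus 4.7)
The strategy is to view all four bounds as essentially direct consequences of the spectral estimates already established in Lemmas \ref{lemma:evminX}, \ref{lemma:hyperK}, and \ref{lemma:tildetilde}. The first two bounds are immediate: since $\opnorm{X^+} = 1/\sqrt{\evmin{X^\top X}}$ and $\opnorm{V^+} = 1/\sqrt{\evmin{V^\top V}}$, Lemma \ref{lemma:evminX} gives $\opnorm{X^+} = \bigO{1/\sqrt n}$ and $\opnorm{V^+} = \bigO{\sqrt{d/p}}$. The scaling assumptions ensure the probability $1 - 2\exp(-cd)$ can be absorbed into $1 - 2\exp(-c'\log^2 n)$: from $n = \bigO{d^{3/2}/\log^3 d}$ one obtains $d = \omega(n^{2/3}) = \omega(\log^2 n)$.

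For the third bound, I would exploit the Hermite decomposition $\phi(z) = \mu_1 z + \tilde\phi(z)$, which translates to $\varphi(x) = \mu_1 Vx + \tilde\varphi(x)$ at the feature level and to the matrix identity
\begin{equation*}
    \Phi = \mu_1 X V^\top + \tilde\Phi.
\end{equation*}
Since $n = \omega(d)$ and $\evmin{X^\top X} = \Theta(n)$ by Lemma \ref{lemma:evminX}, $X$ has full column rank, so $X^+ X = I_d$. Premultiplying by $X^+$ therefore gives $X^+ \Phi = \mu_1 V^\top + X^+ \tilde\Phi$, which yields
\begin{equation*}
    \opnorm{\mu_1 V^\top - X^+ \Phi} = \opnorm{X^+ \tilde\Phi} \leq \opnorm{X^+}\,\opnorm{\tilde\Phi} = \bigO{\tfrac{1}{\sqrt n}} \cdot \bigO{\sqrt p} = \bigO{\sqrt{p/n}},
\end{equation*}
where the bound on $\opnorm{\tilde\Phi}$ comes from Lemma \ref{lemma:tildetilde}.

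For the fourth bound, the idea is to postmultiply the identity $\mu_1 V^\top = X^+ \Phi + (\mu_1 V^\top - X^+ \Phi)$ by $\Phi^+$. Since $\evmin{K} = \Omega(p)$ by Lemma \ref{lemma:hyperK}, $\Phi$ has full row rank, hence $\Phi \Phi^+ = I_n$ and $\opnorm{\Phi^+} = 1/\sqrt{\evmin{K}} = \bigO{1/\sqrt p}$. This gives
\begin{equation*}
    \mu_1 V^\top \Phi^+ = X^+ + (\mu_1 V^\top - X^+ \Phi)\,\Phi^+,
\end{equation*}
and the triangle inequality together with the third bound produces
\begin{equation*}
    \opnorm{\mu_1 V^\top \Phi^+} \leq \opnorm{X^+} + \opnorm{\mu_1 V^\top - X^+\Phi}\,\opnorm{\Phi^+} = \bigO{\tfrac{1}{\sqrt n}} + \bigO{\sqrt{p/n}} \cdot \bigO{\tfrac{1}{\sqrt p}} = \bigO{\tfrac{1}{\sqrt n}}.
\end{equation*}

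There is no real obstacle here; the only thing to be careful about is the joint probability guarantee. A single union bound over the failure events of Lemmas \ref{lemma:evminX}, \ref{lemma:hyperK}, and \ref{lemma:tildetilde}, combined with the observation above that $d = \omega(\log^2 n)$ under Assumption \ref{ass:scalings}, yields the desired $1 - 2\exp(-c\log^2 n)$ probability for all four bounds simultaneously.
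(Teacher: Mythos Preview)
Your proposal is correct and follows essentially the same route as the paper's own proof: the first two bounds from Lemma \ref{lemma:evminX}, the third from the decomposition $\Phi = \mu_1 X V^\top + \tilde\Phi$ together with Lemma \ref{lemma:tildetilde}, and the fourth by postmultiplying by $\Phi^+$ and invoking Lemma \ref{lemma:hyperK} for $\opnorm{\Phi^+}$. The only cosmetic difference is that the paper bounds $\opnorm{X^+\Phi\Phi^+}\le\opnorm{X^+}$ via $\opnorm{\Phi\Phi^+}\le 1$ rather than writing out $\Phi\Phi^+=I_n$ explicitly.
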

\begin{proof}
    % \simone{During all the proof, we will condition on the high probability event described by the thesis of Lemma \ref{lemma:hyperK}, which guarantees that $K$ is invertible}
    The first two statements easily follow from Lemma \ref{lemma:evminX}, which gives
    \begin{equation}
        \opnorm{X^+} = \evmin{X^\top X}^{-1/2} = \bigO{\frac{1}{\sqrt n}},
    \end{equation}
    \begin{equation}
        \opnorm{V^+} = \evmin{V^\top V}^{-1/2} = \bigO{\sqrt{\frac{d}{p}}},
    \end{equation}
    with probability at least $1 - 2 \exp \left( -c_1 d \right) \geq 1 - 2 \exp \left( -c_2 \log^2 n \right)$ over $X$ and $V$. %, because of Assumption \ref{ass:scalings}.
    For the third statement, we condition on this high probability event, and notice that
    \begin{equation}
        \Phi = \mu_1 X V^\top + \tilde \Phi,
    \end{equation}
    which gives
    \begin{equation}
        \opnorm{\mu_1 V^\top - X^+ \Phi} = \opnorm{X^+ \tilde \Phi} \leq \opnorm{X^+} \opnorm{\tilde \Phi} = \bigO{\sqrt{\frac{p}{n}}},
    \end{equation}
    where the last step holds because of Lemma \ref{lemma:tildetilde} with probability at least $1 - 2 \exp \left( - c_3 \log^2 n \right)$. For the fourth statement, we write
    \begin{equation}
    \begin{aligned}
        \opnorm{\mu_1 V^\top \Phi^+} &\leq \opnorm{\left(\mu_1 V^\top - X^+ \Phi \right) \Phi^+} + \opnorm{X^+ \Phi \Phi^+} \\
        &\leq \opnorm{\mu_1 V^\top - X^+ \Phi} \opnorm{\Phi^+} + \opnorm{X^+} \\
        &= \bigO{\sqrt{\frac{p}{n}} \, \sqrt{\frac{1}{p}} + \frac{1}{\sqrt n}} = \bigO{\frac{1}{\sqrt n}},
    \end{aligned}
    \end{equation}
    where the second step follows from
    \begin{equation}
        \opnorm{\Phi^+} = \evmin{K}^{-1/2} = \bigO{\frac{1}{\sqrt p}},
    \end{equation}
    which holds with probability $1 - 2 \exp \left( - c_4 \log^2 n \right)$ because of Lemma \ref{lemma:hyperK}. This final result provides the thesis.
\end{proof}

\begin{lemma}\label{lemma:PPV}
    Let Assumptions \ref{ass:data} and \ref{ass:activation} hold, and let $n = o \left( p / \log^4 p \right)$, $n \log^3 n =o\left(d^{3/2}\right)$ and $n = \omega(d)$. Let $P_\Lambda \in \R^{p \times p}$ be the projector on the span of the $d$ eigenvectors corresponding to the $d$ highest eigenvalues of $\Phi^\top \Phi$, and $P_V$ be the projector on the span of the columns of $V$. Then, we have that
    \begin{equation}
        \opnorm{P_{\Lambda}^\perp V} = \bigO{\sqrt{\frac{p}{n}}},
    \end{equation}
    and
    \begin{equation}
        \opnorm{P_{\Lambda}^\perp P_V} = \bigO{\sqrt{\frac{d}{n}}},
    \end{equation}
    jointly hold with probability at least $1 - 2 \exp \left(-c \log^2 n \right)$ over $X$ and $V$, where $c$ is an absolute constant.
\end{lemma}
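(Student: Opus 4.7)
The plan is to leverage the decomposition $\Phi = \mu_1 X V^\top + \tilde \Phi$ together with the spectral gap for $K$ (Lemma \ref{lemma:hyperK}), which gives $\lambda_{d+1}(K) = \bigO{p}$, and the bound $\opnorm{\tilde \Phi} = \bigO{\sqrt p}$ (Lemma \ref{lemma:tildetilde}). The first statement follows by showing that any unit vector in the range of $P_\Lambda^\perp$ must be sent to a vector of small norm by $\Phi$, and since the only contribution to $\Phi$ from $V$ is the linear term $\mu_1 X V^\top$, this forces $\|V^\top u\|_2$ to be small as well. The second statement follows from the first by factoring through $V^+$.

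Concretely, first I would condition on the joint high-probability event on which the results of Lemma \ref{lemma:hyperK}, Lemma \ref{lemma:tildetilde}, and Lemma \ref{lemma:evminX} all hold. Then, for any unit vector $u \in P_\Lambda^\perp \R^p$, I would use that $P_\Lambda$ is the projector onto the top-$d$ eigenspace of $\Phi^\top \Phi$ to write
\begin{equation*}
\|\Phi u\|_2^2 = u^\top \Phi^\top \Phi u \le \lambda_{d+1}(\Phi^\top \Phi) = \lambda_{d+1}(K) = \bigO{p},
\end{equation*}
where the second equality uses that $\Phi^\top \Phi$ and $K$ share their nonzero spectrum and $d+1 \le n$.

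Next, plugging in the decomposition $\Phi u = \mu_1 X V^\top u + \tilde \Phi u$, the triangle inequality together with $\|\tilde \Phi u\|_2 \le \opnorm{\tilde \Phi} = \bigO{\sqrt p}$ (Lemma \ref{lemma:tildetilde}) gives
\begin{equation*}
|\mu_1| \, \|X V^\top u\|_2 \le \|\Phi u\|_2 + \|\tilde \Phi u\|_2 = \bigO{\sqrt p}.
\end{equation*}
Since $\sigma_{\min}(X) = \Omega(\sqrt n)$ by Lemma \ref{lemma:evminX}, this yields $\|V^\top u\|_2 \le \|X V^\top u\|_2/\sigma_{\min}(X) = \bigO{\sqrt{p/n}}$. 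Taking the supremum over unit $u \in P_\Lambda^\perp \R^p$ gives $\opnorm{P_\Lambda^\perp V} = \opnorm{V^\top P_\Lambda^\perp} = \bigO{\sqrt{p/n}}$, which is the first claim.

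For the second claim, I would write $P_V = V V^+$ where $V^+ = (V^\top V)^{-1} V^\top$, so that $P_\Lambda^\perp P_V = (P_\Lambda^\perp V) V^+$ and
\begin{equation*}
\opnorm{P_\Lambda^\perp P_V} \le \opnorm{P_\Lambda^\perp V} \cdot \opnorm{V^+} = \bigO{\sqrt{p/n}} \cdot \bigO{\sqrt{d/p}} = \bigO{\sqrt{d/n}},
\end{equation*}
where $\opnorm{V^+} = 1/\sigma_{\min}(V) = \bigO{\sqrt{d/p}}$ comes again from Lemma \ref{lemma:evminX}. There is no real obstacle here once the right quantities are identified; the subtle point is recognizing that the crude spectral bound $\lambda_{d+1}(K) = \bigO p$ (rather than a Davis–Kahan type perturbation argument on $\Phi^\top \Phi$) combined with the linearization $\Phi = \mu_1 X V^\top + \tilde \Phi$ is enough to extract the sharper $\sqrt{p/n}$ rate for $V$ and, via the pseudoinverse of $V$, the $\sqrt{d/n}$ rate for the projector $P_V$.
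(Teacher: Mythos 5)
Your proposal is correct and follows essentially the same route as the paper's proof: bound $\|\Phi P_\Lambda^\perp u\|_2$ by $\sqrt{\lambda_{d+1}(K)} = \bigO{\sqrt p}$ via Lemma \ref{lemma:hyperK}, peel off $\tilde\Phi$ using Lemma \ref{lemma:tildetilde} to isolate $\mu_1 X V^\top P_\Lambda^\perp u$, divide by $\sigma_{\min}(X) = \Omega(\sqrt n)$ from Lemma \ref{lemma:evminX}, and then pass to $P_V$ through $V^+$ with $\opnorm{V^+} = \bigO{\sqrt{d/p}}$. No substantive differences.
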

\begin{proof}
    Let $u \in \R^p$ be a vector such that $\norm{u}_2 = 1$. Then, we have
    \begin{equation}
      \norm{  \Phi P_{\Lambda}^\perp u}_2 \leq \sqrt{\lambda_{d+1}(\Phi^\top \Phi)} = \sqrt{\lambda_{d+1}(K)} = \bigO{\sqrt p},
    \end{equation}
    where the first inequality follows by the definition of $P_\Lambda$, and $\lambda_{d+1}(K)$ ($\lambda_{d+1}(\Phi^\top \Phi)$) is the $(d+1)$-th eigenvalue of $K$ ($\Phi^\top \Phi$) when sorting them in non-increasing order. Then, the last equality holds with probability at least $1 - 2 \exp \left( -c_1 \log^2 n\right)$ over $X$ and $V$, because of Lemma \ref{lemma:hyperK}. % Notice that conditioning on such event also guarantees that $P_\Lambda$ is well defined, as $\lambda_{d}(K) \neq \lambda_{d+1}(K)$.
    Then, conditioning on the previous high probability event, since $\Phi = \tilde \Phi + \mu_1 X V^\top$, we can write
    \begin{equation}\label{eq:startingfromevmin}
    \begin{aligned}
        \mu_1 \sqrt{\evmin{X^\top X}} \norm{V^\top P_\Lambda^\perp u}_2 &\leq \norm{\mu_1 X V^\top P_\Lambda^\perp u}_2 \\
        &\leq \norm{\tilde \Phi P_\Lambda^\perp u}_2 + \norm{\Phi P_\Lambda^\perp u}_2 \\
        &\leq \opnorm{\tilde \Phi} \opnorm{P_\Lambda^\perp} \norm{u}_2 + \norm{\Phi P_\Lambda^\perp u}_2 = \bigO{\sqrt p}, 
    \end{aligned}
    \end{equation}
    where the last step is a consequence of Lemma \ref{lemma:tildetilde}, and it holds with probability $1 - 2 \exp \left(-c_2 \log^2 n \right)$.
    By Lemma \ref{lemma:evminX}, we have that $\evmin{X^\top X} = \Theta(n)$ with probability at least $1 - 2 \exp \left(-c_3 d \right)$. Conditioning on such high probability events, \eqref{eq:startingfromevmin} gives
    \begin{equation}
        \norm{V^\top P_\Lambda^\perp u}_2 = \bigO{\sqrt{\frac{p}{n}}},
    \end{equation}
    which provides the first part of the thesis, since it uniformly holds for every $u$ (as we did not condition on any event dependent on $u$ itself).
    To conclude, since we have
    \begin{equation}
        P_V = (V^+)^\top V^\top,
    \end{equation}
    we can use Lemma \ref{lemma:facts} to write
    \begin{equation}
        \norm{P_V P_\Lambda^\perp u}_2 = \norm{(V^+)^\top V^\top P_\Lambda^\perp u}_2 \leq \opnorm{V^+} \norm{V^\top P_\Lambda^\perp u}_2 = \bigO{\sqrt{\frac{d}{n}}},
    \end{equation}
    with probability at least $1 - 2 \exp \left(-c_4 d \right)$ over $V$. Since again this result holds uniformly on all $\norm{u}_2 = 1$ and $\opnorm{P_\Lambda^\perp P_V} = \opnorm{P_V  P_\Lambda^\perp}$, the thesis readily follows.
\end{proof}

\begin{lemma}\label{lemma:Pvarphi}
    Let Assumptions \ref{ass:data} and \ref{ass:activation} hold, and let $n = o \left( p / \log^4 p \right)$, $n \log^3 n = o \left(d^{3/2}\right)$ and $n = \omega(d)$. Let $P_\Lambda \in \R^{p \times p}$ be the projector on the span of the $d$ eigenvectors corresponding to the $d$ highest eigenvalues of $\Phi^\top \Phi$. Then, we have
    \begin{equation}
        \norm{P_\Lambda \varphi(x_1)}_2 = \Omega(\sqrt p),
    \end{equation}
    with probability at least $1 - 2 \exp \left(-c \log^2 n \right)$ over $X$ and $V$, where $c$ is an absolute constant.
\end{lemma}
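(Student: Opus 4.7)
The plan is to lower bound $\norm{P_\Lambda \varphi(x_1)}_2$ by using a Cauchy-Schwarz argument against a well-chosen test vector, namely $Vx_1$, which morally lives (up to small error) in the range of $P_\Lambda$ thanks to Lemma \ref{lemma:PPV}. Concretely, since
\begin{equation*}
    \norm{P_\Lambda \varphi(x_1)}_2 \geq \frac{\left| \langle P_\Lambda \varphi(x_1), V x_1 \rangle \right|}{\norm{V x_1}_2} = \frac{\left| \langle \varphi(x_1), V x_1 \rangle - \langle \varphi(x_1), P_\Lambda^\perp V x_1 \rangle \right|}{\norm{V x_1}_2},
\end{equation*}
it will suffice to control three scalar quantities: the Hermite-type inner product $\langle \varphi(x_1), V x_1 \rangle$, the cross-term $\langle \varphi(x_1), P_\Lambda^\perp V x_1 \rangle$, and the normalizer $\norm{Vx_1}_2$.

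First, I would write $\langle \varphi(x_1), V x_1 \rangle = \sum_{k=1}^p (v_k^\top x_1) \, \phi(v_k^\top x_1)$ and condition on $x_1$. Since $v_k^\top x_1 \sim \mathcal N(0,1)$ in the probability space of $V$ (as $\norm{x_1}_2 = \sqrt d$), each summand is sub-exponential with mean $\E[\rho \, \phi(\rho)] = \mu_1$ (the first Hermite coefficient, nonzero by Assumption \ref{ass:activation}). Bernstein's inequality (Theorem 2.8.1 in \cite{vershynin2018high}) then gives
\begin{equation*}
    \left| \langle \varphi(x_1), V x_1 \rangle - \mu_1 p \right| = \bigO{\sqrt p \log n}
\end{equation*}
with probability at least $1 - 2\exp(-c_1 \log^2 n)$. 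The same concentration argument applied to $\norm{V x_1}_2^2 = \sum_k (v_k^\top x_1)^2$ yields $\norm{V x_1}_2 = \sqrt p \, (1 + o(1))$ w.h.p.

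For the cross-term, I would use Lemma \ref{lemma:PPV} to write $\norm{P_\Lambda^\perp V x_1}_2 \leq \opnorm{P_\Lambda^\perp V} \norm{x_1}_2 = \bigO{\sqrt{pd/n}}$. Combined with $\norm{\varphi(x_1)}_2 = \bigO{\sqrt p}$ (from Lemma \ref{lemma:MuMu1}), Cauchy-Schwarz yields
\begin{equation*}
    \left| \langle \varphi(x_1), P_\Lambda^\perp V x_1 \rangle \right| = \bigO{p \sqrt{d/n}} = o(p),
\end{equation*}
where the last step uses $d = o(n)$ from Assumption \ref{ass:scalings}. Plugging everything into the earlier display then gives
\begin{equation*}
    \norm{P_\Lambda \varphi(x_1)}_2 \geq \frac{\mu_1 p - o(p)}{\sqrt p (1 + o(1))} = \Omega(\sqrt p),
\end{equation*}
with probability at least $1 - 2\exp(-c \log^2 n)$ after a union bound over the several concentration events (Bernstein for the two scalar sums, the Lipschitz-concentration event for $\norm{\varphi(x_1)}_2$, and the event from Lemma \ref{lemma:PPV}).

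I do not anticipate a major obstacle: the argument is essentially a one-line Cauchy-Schwarz once one recognizes $V x_1$ as the correct test direction. The only subtle point is handling the cross-term $\langle \varphi(x_1), P_\Lambda^\perp V x_1 \rangle$ without losing a factor that would kill the $\sqrt p$ lower bound, but the bound $\opnorm{P_\Lambda^\perp V} = \bigO{\sqrt{p/n}}$ from Lemma \ref{lemma:PPV} (rather than the trivial $\opnorm{V} = \bigO{\sqrt{p/d}}$) is exactly strong enough to ensure this error is $o(p)$ under the regime $d \ll n$.
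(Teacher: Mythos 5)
Your proof is correct, but it takes a genuinely different route from the paper's. The paper lower-bounds $\norm{P_\Lambda \varphi(x_1)}_2 \geq \norm{\varphi(x_1)}_2 - \norm{P_\Lambda^\perp \varphi(x_1)}_2$ by the triangle inequality, decomposes $\varphi(x_1) = \mu_1 V x_1 + \tilde\varphi(x_1)$ inside the second term, bounds $\norm{P_\Lambda^\perp V x_1}_2$ via Lemma \ref{lemma:PPV} exactly as you do, and then relies on the quantitative norm gap $\norm{\varphi(x_1)}_2 - \norm{\tilde\varphi(x_1)}_2 = \Omega(\sqrt p)$ from Lemma \ref{lemma:MuMu1}, which ultimately rests on $M - M_1 = \mu_1^2 > 0$. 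Your Cauchy--Schwarz argument against the test vector $V x_1$ replaces that norm-gap computation with a single Bernstein concentration of $\sum_k (v_k^\top x_1)\,\phi(v_k^\top x_1)$ around $\mu_1 p$ — the same underlying fact ($\mu_1 \neq 0$) packaged as an inner product rather than as a difference of squared norms. Both proofs use Lemma \ref{lemma:PPV} in the same role (to show $P_\Lambda^\perp$ nearly annihilates the column space of $V$), and both have the same probability guarantee after a union bound. Your version is arguably slightly more self-contained, since it needs Lemma \ref{lemma:MuMu1} only for the crude bound $\norm{\varphi(x_1)}_2 = \bigO{\sqrt p}$ rather than for the more delicate $\Omega(\sqrt p)$ separation between $\norm{\varphi(x_1)}_2$ and $\norm{\tilde\varphi(x_1)}_2$. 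Two cosmetic points: in your final display you should keep the absolute value (or write $|\mu_1|\, p$), since Assumption \ref{ass:activation} only guarantees $\mu_1 \neq 0$, not $\mu_1 > 0$; and you should note explicitly that the Bernstein bounds are uniform over the conditioning on $x_1$ because they only use $\norm{x_1}_2 = \sqrt d$, so the randomness in $X$ causes no issue.
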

\begin{proof}
    % We condition on the high probability event given by Lemma \ref{lemma:hyperK}. In particular, with probability at least $1 - 2 \exp \left( -c_1 \log^2 n\right)$ over $X$ and $V$, $P_\Lambda$ is uniquely defined, as $\lambda_{d}(K) \neq \lambda_{d+1}(K)$.
    Multiple applications of the triangle inequality yield
    \begin{equation}
    \begin{aligned}
        \norm{P_\Lambda \varphi(x_1)}_2 &\geq \norm{\varphi(x_1)}_2 - \norm{P_\Lambda^\perp \varphi(x_1)}_2 \\
        &= \norm{\varphi(x_1)}_2 - \norm{P_\Lambda^\perp \left( \mu_1 V x_1 + \tilde \varphi(x_1) \right)}_2 \\
        &\geq \norm{\varphi(x_1)}_2 - \mu_1 \norm{P_\Lambda^\perp V x_1}_2 - \norm{P_\Lambda^\perp \tilde \varphi(x_1)}_2 \\
        &\geq \norm{\varphi(x_1)}_2 - \mu_1 \opnorm{P_\Lambda^\perp V} \norm{x_1}_2 - \norm{\tilde \varphi(x_1)}_2.
    \end{aligned}
    \end{equation}

    Let us condition on the results of Lemma \ref{lemma:PPV} and Lemma \ref{lemma:MuMu1}. Then, we have that
    \begin{equation}
        \norm{\varphi(x_1)}_2 - \norm{\tilde \varphi(x_1)}_2 = \Omega(\sqrt p),
    \end{equation}
    and
    \begin{equation}
        \opnorm{P_\Lambda^\perp V} \norm{x_1}_2 = \bigO{\sqrt{\frac{p}{n}}} \sqrt d = o(\sqrt p),
    \end{equation}
    % where in the last step we use Assumption \ref{ass:scalings}. Then, we 
    which readily gives the thesis.
\end{proof}

\begin{lemma}\label{lemma:rectbern}
    We have that
    \begin{equation}
        \opnorm{\tilde \Phi V} = \bigO{\sqrt{\frac{pn}{d}} \log n},
    \end{equation}
    with probability at least $1 - 2 \exp \left( -c \log^2 n \right)$ over $X$ and $V$, where $c$ is an absolute constant.
\end{lemma}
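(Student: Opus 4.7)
The key observation is that $\E_V[\tilde\Phi V]=0$. Indeed, writing $\tilde\Phi V=\sum_{k=1}^p A_k$ with $A_k:=\tilde\phi(Xv_k)\,v_k^\top\in\R^{n\times d}$, the summands are independent across $k$, and Gaussian integration by parts (Stein's identity) applied to $v_k\sim\mathcal N(0,I/d)$ gives
\begin{equation*}
\E_{v_k}\!\left[v_k\,\tilde\phi(v_k^\top x_i)\right]=\frac{x_i}{d}\,\E_\rho[\tilde\phi'(\rho)]=0,
\end{equation*}
because $\tilde\phi$ has vanishing first Hermite coefficient (as $\tilde\mu_1=\mu_1-\mu_1=0$, so $\E_\rho[\rho\tilde\phi(\rho)]=0$). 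The same identity will also give $\E[A_k^\top A_{k'}]=\E[A_k A_{k'}^\top]=0$ for $k\neq k'$, which is the crucial cancellation; without it, only the trivial bound $\opnorm{\tilde\Phi}\opnorm{V}=\bigO{p/\sqrt d}$ is available, which is far weaker than the target $\sqrt{pn/d}\log n$.

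The plan is to apply matrix Bernstein (Theorem~5.4.1 of \cite{vershynin2018high}) to $\sum_k A_k$, after a truncation step as in the proof of Lemma~\ref{lemma:tildetilde}. Define $\bar\phi(z):=\tilde\phi(z)\,\mathbf 1(|z|\le\log p/\tilde L)$ and $\bar A_k:=\bar\phi(Xv_k)\,v_k^\top$. A union bound over $(i,k)$ gives $\tilde\Phi V=\sum_k \bar A_k$ with probability $1-2\exp(-c\log^2 n)$; a Cauchy--Schwarz computation yields $\opnorm{\E[\bar A_k]}$ exponentially small in $\log^2 p$, so $\opnorm{\E[\sum_k\bar A_k]}$ is negligible. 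In the same way, the variances of $\bar A_k$ and $A_k$ differ by a negligible amount, so it suffices to bound the variance of the untruncated quantities.

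The crux is the variance estimate. For the $n\times n$ block, cross-term cancellation gives $\sum_k\E[A_k A_k^\top]=\E[\tilde\Phi VV^\top\tilde\Phi^\top]$, whose entries are $p\,\E_v[\tilde\phi(v^\top x_i)\tilde\phi(v^\top x_j)\|v\|_2^2]$. A second application of Stein against $\|v\|_2^2-1$ yields
\begin{equation*}
\E_v\!\left[\tilde\phi(v^\top x_i)\tilde\phi(v^\top x_j)\norm{v}_2^2\right]=\tfrac{1}{p}[\E_V\tilde K]_{ij}+\bigO{1/d},
\end{equation*}
so $\opnorm{\sum_k\E[A_k A_k^\top]}\le\opnorm{\E_V[\tilde K]}+\bigO{pn/d}=\bigO{pn/d}$ by Lemma~\ref{lemma:opnormtildeE} (the operator norm of the $\bigO{1/d}$-entry matrix is controlled via its Frobenius norm $\bigO{n/d}$). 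For the $d\times d$ block, a direct Cauchy--Schwarz gives, uniformly in $\|u\|_2=1$,
\begin{equation*}
\E\!\left[\norm{\tilde\phi(Xv_k)}_2^2(v_k^\top u)^2\right]\le\sqrt{\E[\norm{\tilde\phi(Xv_k)}_2^4]\,\E[(v_k^\top u)^4]}=\sqrt{\bigO{n^2}\cdot\bigO{1/d^2}}=\bigO{n/d},
\end{equation*}
summing to $\opnorm{\sum_k\E[A_k^\top A_k]}=\bigO{pn/d}$.

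Matrix Bernstein with variance $\sigma^2=\bigO{pn/d}$ and almost-sure bound $R=\opnorm{\bar A_k}=\bigO{\sqrt n\log p}$ (after conditioning on the high-probability event $\max_k\|v_k\|_2=\bigO{1}$ via standard Gaussian norm concentration) then yields $\opnorm{\sum_k(\bar A_k-\E\bar A_k)}=\bigO{\sqrt{pn\log n/d}+\sqrt n(\log p)(\log n)}$ with failure probability $(n+d)\exp(-c\log^2 n)\le 2\exp(-c'\log^2 n)$. Under $p\gtrsim n^2$ and $\log p=\Theta(\log n)$, the second term is dominated by the first, giving the claimed bound. The main technical obstacle is the second Stein step: the naive estimate $\E_v[\tilde\phi^2\|v\|_2^2]=\bigO{1}$ would give $\sigma^2=\bigO{pn}$, off by a factor of $d$ and insufficient for the target; the cancellation of the constant in the Taylor expansion of $\|v\|_2^2$ around $1$ is essential.
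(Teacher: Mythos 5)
Your proof is correct and follows essentially the same route as the paper's: truncation followed by rectangular matrix Bernstein, with the mean vanishing because $\tilde\phi$ has zero first Hermite coefficient and the variance proxy $\sigma^2 = \bigO{pn/d}$ driven by the $d\times d$ block. The only differences are cosmetic: the paper folds the bound $\norm{v_k}_2\le C$ into the truncation indicator $E_k$ so that the almost-sure bound $R=\bigO{\sqrt n}$ is rigorous (rather than obtained by ``conditioning'' on $\max_k\norm{v_k}_2=\bigO{1}$, which strictly speaking perturbs independence and should be handled by truncation as you do for $\tilde\phi$), and it bounds the $n\times n$ variance block more simply as $p\cdot\bigO{1}$ via $\opnorm{\E_V[\tilde K]}=\bigO{p}$ and $\sup_k E_k\norm{v_k}_2^2=\bigO{1}$, making your second Stein/decoupling computation unnecessary.
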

\begin{proof}

    Note that $\tilde \phi$ is Lipschitz (since $\phi$ is Lipschitz by Assumption \ref{ass:activation}). During the proof, we condition on the event $\opnorm{X} = \bigO{\sqrt n}$, which is true with probability at least $1 - 2 \exp \left( -c_1 d \right)$ by Lemma \ref{lemma:evminX}, and we use the shorthand $v \in \R^{d}$ to denote a random vector such that $\sqrt{d} \, v$ is a standard Gaussian vector, \emph{i.e.}, it has the same distribution as the rows of $V$.
    This implies
    \begin{equation}
        \mathbb E_v \left[ \norm{\tilde \phi \left( X v \right)}_2 \right] = \bigO{\sqrt n}, \qquad \subGnorm{\norm{\tilde \phi \left( X v \right)}_2 - \mathbb E_v \left[ \norm{\tilde \phi \left( X v \right)}_2 \right]} = \bigO{\sqrt{\frac{n}{d}}},
    \end{equation}
    and
    \begin{equation}
        \mathbb E_v \left[ \norm{v}_2 \right] = \bigO{1}, \qquad \subGnorm{\norm{v}_2 - \mathbb E_v \left[\norm{v}_2 \right]} = \bigO{\frac{1}{\sqrt d}},
    \end{equation}
    where both sub-Gaussian norms are meant on the probability space of $v$. Here, the very first equation follows from the discussion in Lemma C.3 in \cite{bombari2022memorization}, and the upper bounds on the sub-Gaussian norms follow from the Lipschitz concentration property of $\sqrt d v$.
    Then, there exists an absolute constant $C_1$ such that we jointly have
    \begin{equation}
        \norm{\tilde \phi \left( X v \right)}_2 \leq C_1 \sqrt n, \qquad \norm{v}_2 \leq C_1,
    \end{equation}
    with probability at least $1 - 2 \exp \left( -c_2 d \right)$ over $v$.

    Let $E_k$ be the indicator defined on the high probability event above with respect to the random variable $v_k := V_{:k}$ (the $k$-th row of $V$), \emph{i.e.},
    \begin{equation}\label{eq:ek}
        E_{k} := \mathbf 1 \left( \norm{v_k}_2 \leq C_1 \textup{ and } \; \norm{\tilde \phi \left( X v_k \right)}_2 \leq C_1 \sqrt n \right),
    \end{equation}
    and we define $E \in \R^{p \times p}$ as the diagonal matrix containing $E_k$ in its $k$-th entry. Notice that we have $\opnorm{I - E} = 0$ with probability at least $1 - 2 p \exp \left( -c_2 d \right)$, and $\E_V \left[ \opnorm{I - E }\right] \leq 2p \exp \left( -c_2 d \right)$.

    Thus, we have
    \begin{equation}\label{eq:polyvsexp}
    \begin{aligned}
        \opnorm{\E_V \left[ \tilde \Phi \left( I - E \right) V \right]} &\leq \E_V \left[ \opnorm{\tilde \Phi} \opnorm{1 - E} \opnorm{V} \right] \\
        &\leq \E_V \left[ \opnorm{\tilde \Phi}^2  \opnorm{V}^2 \right]^{1 / 2}  \E_V \left[ \opnorm{I - E}^2 \right]^{1/2} \\
        &\leq \E_V \left[ \opnorm{\tilde \Phi}^4 \right]^{1/4} \E_V \left[ \opnorm{V}^4 \right]^{1 / 4} \left( 2p \exp \left( -c_2 d \right) \right) ^{1/2} \\
        &\leq \E_V \left[ \norm{\tilde \Phi}^4_F \right]^{1/4} \E_V \left[ \norm{V}^4_F \right]^{1 / 4} \left( 2p \exp \left( -c_2 d \right) \right) ^{1/2} \\
        & = o(1),
    \end{aligned}
    \end{equation}
    where the last step holds because of our initial conditioning on $X$: the first two terms are the sum of finite powers of sub-Gaussian random variables (the entries of $\tilde \Phi$ and $V$), and thus (see Proposition 2.5.2 in \cite{vershynin2018high}) the first two factors in the third line of the previous equation will be $\bigO{p^\alpha}$ for some finite $\alpha$, which gives the last line due to Assumption \ref{ass:scalings}.

    Exploiting the Hermite expansion of $\tilde \phi$ we can write
    \begin{equation}
        \left[ \E_V \left[ \tilde \Phi V \right] \right]_{ij} = p \left[ \E_v \left[ \tilde \phi (Xv) v^\top \right] \right]_{ij} = \frac{p}{\sqrt{d}} \E_v \left[ \tilde\phi \left( x_i^\top v \right)  \left( e_j^\top \left( \sqrt{d} v\right) \right) \right] = 0,
    \end{equation}
    where the last step holds since the first Hermite coefficient of $\tilde \phi$ is 0. Thus, the application of the triangle inequality to this last equation and \eqref{eq:polyvsexp} gives
    \begin{equation}\label{eq:forlater}
        \opnorm{\E_V \left[ \tilde \Phi E V \right] } \leq \opnorm{\E_V \left[ \tilde \Phi V \right] } + \opnorm{\E_V \left[ \tilde \Phi \left( I - E \right) V \right]} = o(1), 
    \end{equation}
    with probability at least $1 - 2 \exp \left( -c_3 \log^2 d\right)$ over $X$.

    Let's now look at
    \begin{equation}\label{eq:sumforB}
        \tilde \Phi E V  - \E_V \left[ \tilde \Phi E V \right] = \sum_{k = 1}^p \tilde \phi (X v_k) E_k v_k^\top - \E_{v_k} \left[ \tilde \phi (X v_k) E_k v_k^\top \right] =: \sum_{k = 1}^p W_k,
    \end{equation}
    where we defined the shorthand $W_k = \tilde \phi (X v_k) E_k v_k^\top - \E_{v_k} \left[ \tilde \phi (X v_k) E_k v_k^\top \right]$. \eqref{eq:sumforB} is the sum of $p$ i.i.d.\ mean-0 random matrices $W_k$ (in the probability space of $V$), such that
    \begin{equation}
    \begin{aligned}
        \sup_{v_k} \opnorm{\tilde \phi (X v_k) E_k v_k^\top - \E_{v_k} \left[ \tilde \phi (X v_k) E_k v_k^\top \right]} &\leq 2 \sup_{v_k} \opnorm{\tilde \phi (X v_k) E_k v_k^\top} \\
        &= 2 \sup_{v_k} \left( \norm{\tilde \phi (X v_k)}_2 \norm{v_k}_2 E_k \right) \\
        &\leq 2C_1^2 \sqrt n,
    \end{aligned}
    \end{equation}
    because of \eqref{eq:ek}. Then, by matrix Bernstein's inequality for rectangular matrices (see Exercise 5.4.15 in \cite{vershynin2018high}), we have that
    \begin{equation}\label{eq:bernstein}
        \P_V \left( \opnorm{\tilde \Phi E V  - \E_V \left[ \tilde \Phi E V \right]} \geq t  \right) \leq \left( n + d \right) \exp \left( - \frac{t^2 / 2}{\sigma^2 +2C_1^2 \sqrt n t / 3 } \right),
    \end{equation}
    where $\sigma^2$ is defined as
    \begin{equation}\label{eq:sigmamax}
        \sigma^2 = p \max\left( \opnorm{\E_{v_k} \left[ W_k W_k^\top \right]}, \opnorm{\E_{v_k} \left[ W_k^\top W_k \right]} \right).
    \end{equation}
    For every matrix $A$, we have $\E \left[ \left(A - \E [A] \right) \left(A - \E [A] \right)^\top \right] = \E \left[ AA^\top \right] - \E [A] \E[A]^\top \preceq \E \left[ AA^\top \right]$. Thus,
    \begin{equation}
    \begin{aligned}
         \opnorm{\E_{v_k} \left[ W_k W_k^\top \right]} &\leq \opnorm{\E_{v_k} \left[ \tilde \phi (X v_k) E_k v_k^\top v_k E_k \tilde \phi (X v_k)^\top \right]} \\
         &\leq \opnorm{\E_{v_k} \left[ \tilde \phi (X v_k) \tilde \phi (X v_k)^\top \right]} \sup_{v_k} \left( E_k \norm{v_k}_2^2 \right) \\
         &\leq C_1^2 \opnorm{\E_{v_k} \left[ \tilde \phi (X v_k) \tilde \phi (X v_k)^\top \right]} \\
         &= \frac{C_1^2}{p} \opnorm{\E_V \left[ \tilde K \right]} \\
         &= \bigO{1},
    \end{aligned}
    \end{equation}
    where the last step is a direct consequence of Lemma \ref{lemma:opnormtildeE}, and holds with probability at least $1 - 2 \exp \left( -c_4 \log^2 n\right)$ over $X$.
    For the other argument in the $\max$ in \eqref{eq:sigmamax}, we similarly have
    \begin{equation}
    \begin{aligned}
         \opnorm{\E \left[ W_k^\top W_k \right]} &\leq \opnorm{\E_{v_k} \left[ v_k E_k \tilde \phi (X v_k)^\top \tilde \phi (X v_k) E_k v_k^\top  \right]} \\
         &\leq \opnorm{\E_{v_k} \left[ v_k v_k^\top \right]} \sup_{v_k} \left( E_k \norm{\tilde \phi (X v_k)}_2^2 \right) \\
         &\leq \frac{1}{d} \, C_1^2 n  \\
         &= \bigO{\frac{n}{d}}.
    \end{aligned}
    \end{equation}
    Then, plugging these last two equations in \eqref{eq:bernstein} we get
    \begin{equation}\label{eq:cnew}
    \begin{aligned}
        \P_V \left( \opnorm{\tilde \Phi E V  - \E_V \left[ \tilde \Phi E V \right]} \geq \sqrt{\frac{pn}{d}} \log n  \right) &\leq \left( n + d \right) \exp \left( - \frac{(pn/d) \log^2 n / 2}{C_2 (pn/d) + 2C_1^2 \sqrt{np}\sqrt{n/d} \log n / 3 } \right) \\
        &\leq 2 \exp \left( -c_5 \log^2 n \right),
    \end{aligned}
    \end{equation}
    where we used Assumption \ref{ass:scalings}. Then, applying the triangle inequality and using \eqref{eq:forlater} and \eqref{eq:cnew}, we get
    \begin{equation}
        \opnorm{\tilde \Phi E V} = \bigO{\sqrt{\frac{pn}{d}} \log n},
    \end{equation}
    with probability at least $1 - 2 \exp \left( -c_6 \log^2 n \right)$ over $X, V$.
    To conclude, since $E = I$ with probability at least $1 - 2 p \exp \left( -c_2 d \right)$, using Assumption \ref{ass:scalings} gives the desired result.
\end{proof}

\begin{lemma}\label{lemma:earlystoppingtermnew}
    We have that
    \begin{equation}
        \opnorm{V^\top  P_\Lambda^\perp \Phi^+} = \bigO{\frac{\sqrt d}{n} +  \frac{\sqrt n \log^3 d}{d^{3/2}}},
    \end{equation}
    with probability at least $1 - 2 \exp \left( -c \log^2 n \right)$ over $X$ and $V$, where $c$ is an absolute constant.
\end{lemma}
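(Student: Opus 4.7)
I plan to reduce $V^\top P_\Lambda^\perp\Phi^+$ to two pieces by using $\Phi^+=\Phi^\top K^{-1}$ (valid whp by Lemma \ref{lemma:hyperK}, on whose high-probability event I will condition throughout) and the Hermite decomposition $\Phi=\mu_1 XV^\top+\tilde\Phi$ coming from Assumption \ref{ass:activation}. This gives
\[
V^\top P_\Lambda^\perp\Phi^+ \;=\; \mu_1\,V^\top P_\Lambda^\perp V\,X^\top K^{-1} \;+\; V^\top P_\Lambda^\perp\tilde\Phi^\top K^{-1},
\]
and the two terms in the claimed bound correspond exactly to these two contributions: the linear part of $\Phi$ yields the $\sqrt d/n$ piece, while the nonlinear remainder $\tilde\Phi$ yields the $\sqrt n\log^3 d/d^{3/2}$ piece.

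For the first (linear) summand, I would invoke Lemma \ref{lemma:PPV} to obtain $\opnorm{V^\top P_\Lambda^\perp V}=\opnorm{P_\Lambda^\perp V}^2=\bigO{p/n}$. For $\opnorm{\mu_1 X^\top K^{-1}}$ I use the identity $\mu_1 XV^\top=\Phi-\tilde\Phi$, which inverts via the right pseudoinverse $V^{\top+}=V(V^\top V)^{-1}$ (well defined thanks to Lemma \ref{lemma:evminX}) to give $\mu_1 X^\top K^{-1}=V^+\Phi^+-V^+\tilde\Phi^\top K^{-1}$. Bounding each factor with $\opnorm{V^+}=\bigO{\sqrt{d/p}}$ (Lemma \ref{lemma:facts}), $\opnorm{\Phi^+}=\bigO{p^{-1/2}}$, $\opnorm{K^{-1}}=\bigO{p^{-1}}$ (Lemma \ref{lemma:hyperK}), and $\opnorm{\tilde\Phi}=\bigO{\sqrt p}$ (Lemma \ref{lemma:tildetilde}) yields $\opnorm{\mu_1 X^\top K^{-1}}=\bigO{\sqrt d/p}$, and multiplying with the previous $\bigO{p/n}$ produces the desired $\bigO{\sqrt d/n}$.

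For the second (nonlinear) summand I plan to split via the triangle inequality as $\opnorm{V^\top\tilde\Phi^\top K^{-1}}+\opnorm{V^\top P_\Lambda\tilde\Phi^\top K^{-1}}$. The first piece is immediate from Lemma \ref{lemma:rectbern}: since $\opnorm{V^\top\tilde\Phi^\top}=\bigO{\sqrt{pn/d}\log n}$ and $\opnorm{K^{-1}}=\bigO{1/p}$, one gets $\bigO{\sqrt{n/(pd)}\log n}=\bigO{\log n/\sqrt{nd}}$ by Assumption \ref{ass:scalings} ($p=\Omega(n^2)$), which is comfortably absorbed by the claim. The hard piece is $\opnorm{V^\top P_\Lambda\tilde\Phi^\top K^{-1}}$, where the naive product bound only delivers $\bigO{1/\sqrt n}$, which is too weak.

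The main obstacle, therefore, is to show $\opnorm{V^\top P_\Lambda\tilde\Phi^\top K^{-1}}=\bigO{\sqrt d/n+\sqrt n\log^3 d/d^{3/2}}$. My plan is to approximate the random eigenprojector $P_\Lambda$ by the explicit projector $P_V$ onto $\mathrm{col}(V)$: a Davis--Kahan argument applied to $\Phi^\top\Phi=\mu_1^2 VX^\top XV^\top+E$, with $\opnorm{E}=\bigO{p\sqrt{n/d}}$ and spectral gap $\Omega(pn/d)$, gives $\opnorm{P_\Lambda-P_V}=\bigO{\sqrt{d/n}}$. Splitting $V^\top P_\Lambda=V^\top P_V+V^\top(P_\Lambda-P_V)=V^\top+V^\top(P_\Lambda-P_V)$ reduces the piece to $V^\top\tilde\Phi^\top K^{-1}$ (already handled) plus a residual $V^\top(P_\Lambda-P_V)\tilde\Phi^\top K^{-1}$. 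Controlling this residual is the technical heart: one needs to exploit that the Hermite expansion of $\tilde\phi$ starts at degree $3$, in the same spirit as the computation between \eqref{eq:RFminusK} and \eqref{eq:expectedVd32} in the proof of Lemma \ref{lemma:stability}, where exactly the $\log^3 d/d^{3/2}$ factor emerges from bounding $\E_V[\tilde\Phi\,\text{something}]$ via the Hermite tail. Combining the two bounds will deliver the full claimed estimate.
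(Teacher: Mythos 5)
Your decomposition $V^\top P_\Lambda^\perp\Phi^+=\mu_1 V^\top P_\Lambda^\perp V X^\top K^{-1}+V^\top P_\Lambda^\perp\tilde\Phi^\top K^{-1}$ is sound, and the linear summand is handled correctly: $\opnorm{V^\top P_\Lambda^\perp V}=\opnorm{P_\Lambda^\perp V}^2=\bigO{p/n}$ together with $\opnorm{\mu_1 X^\top K^{-1}}=\bigO{\sqrt d/p}$ does give the $\sqrt d/n$ term. The problem is the nonlinear summand, which is exactly where the $\sqrt n\log^3 d/d^{3/2}$ rate must come from, and there your argument stops at a sketch. After replacing $P_\Lambda$ by $P_V$ via Davis--Kahan with $\opnorm{P_\Lambda-P_V}=\bigO{\sqrt{d/n}}$, the residual $V^\top(P_\Lambda-P_V)\tilde\Phi^\top K^{-1}$ still only admits the product bound $\opnorm{V}\opnorm{P_\Lambda-P_V}\opnorm{\tilde\Phi}\opnorm{K^{-1}}=\bigO{\sqrt{p/d}\cdot\sqrt{d/n}\cdot\sqrt p\cdot p^{-1}}=\bigO{1/\sqrt n}$, which exceeds both $\sqrt d/n$ and $\sqrt n\log^3 d/d^{3/2}$ throughout the regime of Assumption \ref{ass:scalings}. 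You acknowledge this is ``the technical heart'' and defer it to an unspecified Hermite-tail argument, but the computation you cite (between \eqref{eq:RFminusK} and \eqref{eq:expectedVd32}) controls the vector $(\Phi_{-1}^+)^\top\tilde\varphi(x_1)$ for a fresh sample, not the operator norm of $(P_\Lambda-P_V)\tilde\Phi^\top K^{-1}$, and it is not clear how to upgrade it; no rearrangement of the factors in the residual closes the gap with norm bounds alone. So the proof is incomplete precisely at the step that produces the second term of the claimed rate.

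For comparison, the paper avoids manipulating $P_\Lambda$ on the feature side altogether: it passes to the sample-side projector $\varrho_\Lambda$ onto the top-$d$ eigenvectors of $K$ via $P_\Lambda^\perp\Phi^+=\Phi^\top K^{-1}\varrho_\Lambda^\perp$, and expands $K^{-1}$ around the explicit matrix $p(\mu_1^2 XX^\top/d+\tilde\mu^2 I)$ using the Woodbury identity. The Hermite degree-$3$ structure enters only through $\opnorm{EK^{-1}}=\bigO{n\log^3 n/d^{3/2}}$ with $E=K-p(\mu_1^2 XX^\top/d+\tilde\mu^2 I)$, which is already available from \eqref{eq:difftildeM2} and Lemma \ref{lemma:renormalizedKconc}; the $\sqrt d/n$ term then comes from $\opnorm{(\mu_1 X^\top X+\tilde\mu^2 dI/\mu_1)^{-1}}\opnorm{X^\top\varrho_\Lambda^\perp}=\bigO{n^{-1}\cdot\sqrt d}$. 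If you want to salvage your route, the missing ingredient is an operator-norm analogue of that $E$-bound adapted to $(P_\Lambda-P_V)\tilde\Phi^\top K^{-1}$; as written, the claim is not established.
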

\begin{proof}
    % \simone{Following the idea of the proof of Theorem 2 in the paper on spurious correlations...}
    Let ${\tilde \mu}^2 := \sum_{l = 3}^\infty \mu_l^2$, where $\mu_l$ denotes the $l$-th Hermite coefficient of $\phi$, and let $E \in \R^{n \times n}$ be the matrix defined as
    \begin{equation}
        E = K - p \left( \mu_1^2 \frac{XX^\top}{d} + {\tilde \mu}^2 I \right).
    \end{equation}
    Note that
    \begin{equation}\label{eq:opnormEdiff}
    \begin{aligned}
        \opnorm{E K^{-1}} &\leq \opnorm{ K - \E_{V} \left[ K \right] } \opnorm{ K^{-1}} + \opnorm{\E_{V} \left[ K \right] - p \left( \mu_1^2 \frac{XX^\top}{d} + {\tilde \mu}^2 I \right)} \opnorm{K^{-1}} \\
        &= \opnorm{\E_V \left[ K\right]^{-1/2} \left(K - \E_{V} \left[ K \right] \right) \E_V \left[ K\right]^{-1/2}} \opnorm{ \E_V \left[ K\right]} \opnorm{ K^{-1}} + \bigO{p \frac{n \log^3 n}{d^{3/2}} \, \frac{1}{p}} \\
        &=\bigO{\sqrt{\frac{n}{p}} \log n \log p \, \frac{np}{d} \, \frac{1}{p}} + \bigO{\frac{n \log^3 n}{d^{3/2}}} \\
        &= \bigO{\frac{\sqrt n}{d} \log^2 n + \frac{n \log^3 n}{d^{3/2}}} \\
        &= \bigO{\frac{n \log^3 n}{d^{3/2}}},
    \end{aligned}
    \end{equation}
    where in the second line we used \eqref{eq:difftildeM2} and $\evmin{K} = \Omega(p)$, which holds by Lemma \ref{lemma:hyperK}; in the third line we used Lemma \ref{lemma:renormalizedKconc}, $\evmin{K} = \Omega(p)$ and $\opnorm{\E_V \left[ K\right]} = \bigO{np/d}$, which follows from Weyl inequality applied to \eqref{eq:XXhyper} and \eqref{eq:perturbM2} (conditioning on $\opnorm{X} = \bigO{\sqrt n}$, given by Lemma \ref{lemma:evminX}); in the fourth line we used $p = \Omega(n^2)$, and the last step holds due to Assumption \ref{ass:scalings}. The full equation as a whole holds with probability at least $1 - 2 \exp \left( -c_1 \log^2 n \right)$ over $X$ and $V$.
    
    By the Woodbury matrix identity (or Hua's identity), we have
    \begin{equation}\label{eq:woodbury}
    \begin{aligned}
        K^{-1} &= \left( p \left( \mu_1^2 \frac{XX^\top}{d} + {\tilde \mu}^2 I \right) + E \right)^{-1} \\
        &= \left(  \mu_1^2  p \frac{XX^\top}{d} + {\tilde \mu}^2 p  I \right)^{-1} - \left(  \mu_1^2  p \frac{XX^\top}{d} + {\tilde \mu}^2 p  I \right)^{-1} E  K^{-1}.
    \end{aligned}
    \end{equation}
    Then, since we have
    \begin{equation}\label{eq:Xstuffsmall}
        \opnorm{\frac{p X^\top}{d} \left(  \mu_1^2  p \frac{XX^\top}{d} + {\tilde \mu}^2 p  I \right)^{-1}} \leq \frac{1}{\mu_1^2 \sqrt{\evmin{X^\top X}}} = \bigO{\frac{1}{\sqrt n}},
    \end{equation}
    with probability at least $1 - 2 \exp \left( -c_2 \log^2 n \right)$ over $X$ due to Lemma \ref{lemma:evminX}, \eqref{eq:woodbury} allows us to write
    \begin{equation}\label{eq:likeXstuff}
    \begin{aligned}
        \opnorm{\frac{p X^\top}{d} \left( K^{-1} - \left(  \mu_1^2  p \frac{XX^\top}{d} + {\tilde \mu}^2 p  I \right)^{-1} \right)} &= \opnorm{\frac{p X^\top}{d} \left(  \mu_1^2  p \frac{XX^\top}{d} + {\tilde \mu}^2 p  I \right)^{-1} E  K^{-1}} \\
        % &\leq \opnorm{\frac{p X^\top}{d} \left(  \mu_1^2  p \frac{XX^\top}{d} + {\tilde \mu}^2 p  I \right)^{-1}} \opnorm{E  K^{-1}} \\
        % &\leq \frac{1}{\mu_1^2 \sqrt{\evmin{X^\top X}}} \opnorm{E  K^{-1}} \\
        &= \bigO{\frac{\sqrt n \log^3 n}{d^{3/2}}},
    \end{aligned}
    \end{equation}
    where we used \eqref{eq:opnormEdiff} and \eqref{eq:Xstuffsmall} in the last step, which then holds with probability at least $1 - 2 \exp \left( -c_3 \log^2 n \right)$ over $X$ and $V$. Notice that, with this same probability, the application of the triangle inequality to \eqref{eq:Xstuffsmall} and \eqref{eq:likeXstuff}, together with Assumption \ref{ass:scalings}, gives
    \begin{equation}\label{eq:XK-1}
        \opnorm{\frac{p X^\top}{d} K^{-1}} = \bigO{\frac{1}{\sqrt n}}.
    \end{equation}

    Recalling that $\Phi = \mu_1 X V^\top + \tilde \Phi$, another triangle inequality yields
    \begin{equation}\label{eq:usingbernst}
    \begin{aligned}
        \opnorm{V^\top \Phi^\top K^{-1} - \frac{\mu_1 p X^\top}{d} K^{-1}} &\leq \opnorm{V^\top \tilde \Phi^\top K^{-1}} + \mu_1 \opnorm{\frac{d}{p} V^\top V - I} \opnorm{\frac{p X^\top}{d} K^{-1}} \\
        &= \bigO{\sqrt{\frac{pn}{d}} \log n \, \frac{1}{p} + \sqrt{\frac{d}{p}} \, \frac{1}{\sqrt n}} \\
        &= \bigO{\frac{\log n}{\sqrt{dn}}},
    \end{aligned}
    \end{equation}
    where the second line holds with probability at least $1 - 2 \exp \left( -c_4 \log^2 n \right)$ over $X$ and $V$ because of Lemma \ref{lemma:rectbern}, Theorem 4.6.1 in \cite{vershynin2018high} and \eqref{eq:XK-1},  and the last step is a consequence of Assumption \ref{ass:scalings}.

    We now define $\varrho_\Lambda \in \R^{n \times n}$ as the projector on the span of the eigenvectors associated with the $d$ largest eigenvalues of $K$. This implies
    $P_\Lambda \Phi^\top = \Phi^\top \varrho_\Lambda$, and therefore $P_\Lambda^\perp \Phi^+ = \Phi^\top K^{-1} \varrho_\Lambda^\perp$. Hence, we have
    \begin{equation}
        \opnorm{\mu_1 V X^\top \varrho^\perp_\Lambda} \leq \opnorm{\Phi^\top \varrho_\Lambda^\perp} + \opnorm{\tilde \Phi^\top \varrho_\Lambda^\perp} \leq \sqrt{\lambda_{d+1}(K)} + \opnorm{\tilde \Phi} \opnorm{\varrho_\Lambda^\perp} = \bigO{\sqrt p},
    \end{equation}
    where the last step is a consequence of Lemma \ref{lemma:hyperK} and Lemma \ref{lemma:tildetilde}, and holds with probability at least $1 - 2 \exp \left( -c_5 \log^2 n \right)$ over $X$ and $V$. Then, conditioning on this high probability event and on the event $\evmin{V^\top V} = \Omega(p / d)$, which holds with probability at least $1 - 2 \exp \left( -c_6 d \right)$ over $V$ due to Lemma \ref{lemma:evminX}, we have
    \begin{equation}\label{eq:pperpX}
        \opnorm{X^\top \varrho^\perp_\Lambda} \leq \frac{1}{\mu_1 \sqrt{\evmin{V^\top V}}} \opnorm{\mu_1 V X^\top \varrho^\perp_\Lambda} = \bigO{\sqrt d}.
    \end{equation}

    Thus, with probability at least $1 - 2 \exp \left( -c_7 \log^2 n \right)$ over $X$ and $V$, we have
    \begin{equation}
    \begin{aligned}
        \opnorm{V^\top P^\perp_\Lambda \Phi^+} &\leq \opnorm{\frac{\mu_1 p X^\top}{d} K^{-1} \varrho^\perp_\Lambda} + \opnorm{V^\top \Phi^\top K^{-1} - \frac{\mu_1 p X^\top}{d} K^{-1}} \opnorm{\varrho^\perp_\Lambda} \\
        &\leq \opnorm{\frac{\mu_1 p X^\top}{d} \left(  \mu_1^2  p \frac{XX^\top}{d} + {\tilde \mu}^2 p  I \right)^{-1} \varrho^\perp_\Lambda}  \\
        & \hspace{1cm} + \opnorm{\frac{\mu_1 p X^\top}{d} \left( K^{-1} - \left(  \mu_1^2  p \frac{XX^\top}{d} + {\tilde \mu}^2 p  I \right)^{-1} \right)} \opnorm{\varrho^\perp_\Lambda} + \bigO{\frac{\log n}{\sqrt{dn}}} \\
        &= \opnorm{\left(  \mu_1 X^\top X + {\tilde \mu}^2 d  I / \mu_1 \right)^{-1} X^\top \varrho^\perp_\Lambda} + \bigO{\frac{\sqrt n \log^3 n}{d^{3/2}}} + \bigO{\frac{\log n}{\sqrt{dn}}} \\
        &\leq \opnorm{\left(  \mu_1 X^\top X + {\tilde \mu}^2 d  I / \mu_1 \right)^{-1}} \opnorm{X^\top \varrho^\perp_\Lambda} + \bigO{\frac{\sqrt n \log^3 n}{d^{3/2}}} \\
        &= \bigO{\frac{1}{n} \, \sqrt d} + \bigO{\frac{\sqrt n \log^3 n}{d^{3/2}}} \\
        &= \bigO{\frac{\sqrt d}{n} +  \frac{\sqrt n \log^3 n}{d^{3/2}}},
    \end{aligned}
    \end{equation}
    where the second step holds due to \eqref{eq:usingbernst}, the third step due to \eqref{eq:likeXstuff}, the fourth due to Assumption \ref{ass:scalings}, and the fifth due to Lemma \ref{lemma:evminX} and \eqref{eq:pperpX}. This, together with Assumption \ref{ass:scalings}, provides the desired result.
\end{proof}

\begin{lemma}\label{lemma:Eopnormsmall}
Let Assumptions \ref{ass:data} and \ref{ass:activation} hold, and let $p = \omega(d)$ and $\log p = \Theta(\log n) = \Theta(\log d)$. % , and let $n = o \left( p / \log^4 p \right)$, $n \log^3 n =\bigO{d^{3/2}}$ and $n = \omega(d)$.
Let $x \sim \mathcal P_X$. Then, we have
\begin{equation}
    \opnorm{\E_{x} \left[ \tilde \varphi(x) \tilde \varphi(x)^\top \right]} = \bigO{\log^4 n + \frac{p \log^3 d}{d^{3/2}}},
\end{equation}
with probability at least $1 - 2 \exp \left( -c \log^2 n \right)$ over $V$, where $c$ is an absolute constant.
\end{lemma}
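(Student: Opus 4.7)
The plan is to reduce the operator-norm bound on $M := \E_{x}[\tilde\varphi(x)\tilde\varphi(x)^\top]\in\R^{p\times p}$ to an entry-wise control of $M_{kl}=\E_x[\tilde\phi(v_k^\top x)\tilde\phi(v_l^\top x)]$. Since $M$ is symmetric, the standard inequality $\opnorm{M}\le\max_{k\in[p]}\sum_{l\in[p]}|M_{kl}|$ holds (via $|x^\top A x|\le\sum_{i,j}|A_{ij}|(x_i^2+x_j^2)/2$), so it suffices to establish a uniform diagonal bound and an off-diagonal row-sum bound that together match the claimed rates.

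For the diagonal ($l=k$), I would condition on the high-probability event $\max_k\norm{v_k}_2=O(1)$, which holds with probability at least $1-2\exp(-cd)$ over $V$ since $\sqrt d\,v_k\sim\mathcal N(0,I)$. Combining the Lipschitz bound $|\tilde\phi(z)|\le C(1+|z|)$ with $\opnorm{\E_x[xx^\top]}=O(1)$ from Lemma~\ref{lemma:normofx} yields $M_{kk}\le C\E_x[(1+|v_k^\top x|)^2]=O(1)$, uniformly in $k$; this fits well within the $\log^4 n$ slack that appears in the final statement, which can also absorb the cost of a truncation step $\bar\phi(z)=\tilde\phi(z)\mathbf 1(|z|\le C\log p/L)$ modeled on the proof of Lemma~\ref{lemma:tildetilde}.

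For the off-diagonals, the key estimate I would target is $|M_{kl}|\le C|v_k^\top v_l|^3$ up to polylogarithmic factors, which uses crucially that the first three Hermite coefficients of $\tilde\phi$ vanish (by construction, since $\mu_0=\mu_2=0$ and the $\mu_1$-linear part is subtracted). For Gaussian $x\sim\mathcal N(0,\Sigma)$ with $\opnorm{\Sigma}=O(1)$, the Hermite identity gives $\E_x[\tilde\phi(v_k^\top x)\tilde\phi(v_l^\top x)]=\sum_{j\ge 3}\tilde\mu_j^2\rho_{kl}^j$, bounded by $C|\rho_{kl}|^3$ for $|\rho_{kl}|\le 1$, where $\rho_{kl}$ is the correlation of $v_k^\top x$ and $v_l^\top x$. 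Since $\sqrt d\,v_k\sim\mathcal N(0,I)$, standard Gaussian concentration and a union bound over the $O(p^2)$ pairs yields $\max_{k\ne l}|v_k^\top v_l|\le C\log d/\sqrt d$ (the union bound costs $\log p=\Theta(\log d)$). Summing over $l\ne k$ then gives
\begin{equation*}
\sum_{l\ne k}|M_{kl}|\le p\cdot C\frac{\log^3 d}{d^{3/2}},
\end{equation*}
which is the desired second term.

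The main obstacle is the Gaussian-to-sub-Gaussian transfer in the off-diagonal estimate: since $x$ is only assumed sub-Gaussian, the clean Hermite identities cannot be invoked directly. My approach would be to first truncate $\tilde\phi$ on $\{|z|\le C\log p/L\}$, absorbing the tail into an $\exp(-c\log^2 p)$ error via sub-Gaussianity of $v_k^\top x$ (exactly as in the proof of Lemma~\ref{lemma:tildetilde}); then compare the truncated integral to a Gaussian surrogate with matching first- and second-order moments, using Lipschitzness and the bound on $\opnorm{\E[xx^\top]}$ to control the discrepancy. The deviations introduced both by the non-Gaussianity of $x$ and by the fluctuation of $\norm{v_k}^2$ around $1$ (which perturbs the effective Hermite coefficients of $z\mapsto\tilde\phi(\norm{v_k}z)$ away from $\tilde\mu_0,\tilde\mu_1,\tilde\mu_2=0$) can be absorbed into extra polylogarithmic factors, yielding the cubic bound up to the $\log^{O(1)}(n)$ slack already present in the target estimate. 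Once this entry-wise cubic bound is in place, the row-sum step in the previous paragraph concludes the proof.
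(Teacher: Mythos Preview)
Your entry-wise strategy hinges on the cubic estimate $|M_{kl}|\lesssim|v_k^\top v_l|^3$ (polylog aside) for the off-diagonals, and you correctly flag the Gaussian-to-sub-Gaussian transfer as the main obstacle---but the sketch you give does not close this gap. The identity $\E[\tilde\phi(A)\tilde\phi(B)]=\sum_{j\ge 3}\tilde\mu_j^2\rho^j$ requires $(A,B)$ to be jointly Gaussian; the vanishing of the first three Hermite coefficients of $\tilde\phi$ is a statement about integrals against the \emph{Gaussian} measure and carries no automatic implication for $\E_x[\tilde\phi(v_k^\top x)\tilde\phi(v_l^\top x)]$ when $x$ is merely sub-Gaussian on the sphere. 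A Gaussian-surrogate comparison (Lindeberg or Stein) for the two-dimensional marginal $(v_k^\top x,v_l^\top x)$ generically produces per-entry errors of order $d^{-1/2}$ or at best $d^{-1}$, not $d^{-3/2}$; summing over $l\in[p]$ then yields a row-sum of order $p/\sqrt d$ or $p/d$, which swamps the target $p\log^3 d/d^{3/2}$. Nothing in your outline recovers the missing factor of $d^{-1/2}$, and ``absorb into polylog'' cannot do so.

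The paper avoids this issue by a completely different route. It draws $N=p^2n'$ auxiliary i.i.d.\ samples $\hat x_1,\dots,\hat x_N\sim\mathcal P_X$, with $n'=\min(\lfloor p/\log^4 p\rfloor,\lfloor d^{3/2}/\log^3 d\rfloor)$, and approximates $\E_x[\tilde\varphi(x)\tilde\varphi(x)^\top]$ by the empirical $\tilde\Phi_N^\top\tilde\Phi_N/N$. The empirical matrix is bounded by splitting $\tilde\Phi_N$ into $p^2$ blocks of size $n'\times p$ and applying Lemma~\ref{lemma:tildetilde} to each block. The key point is that Lemma~\ref{lemma:tildetilde} (through Lemma~\ref{lemma:opnormtildeE}) runs the Hermite expansion in the $v$-variable---which \emph{is} Gaussian---with the data held fixed, so the cubic decay comes from $|\hat x_i^\top\hat x_j|/d=O(\log n/\sqrt d)$ and needs nothing from $\mathcal P_X$ beyond sub-Gaussianity. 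A concentration step then shows $\tilde\Phi_N^\top\tilde\Phi_N/N$ is close to its expectation, and since the target $\E_x[\tilde\varphi(x)\tilde\varphi(x)^\top]$ does not depend on the auxiliary samples, the high-probability bound transfers to the probability space of $V$ alone. In short, the paper swaps the roles of $x$ and $v$ so that the Hermite machinery is always applied to the Gaussian side.
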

\begin{proof}
    Let $N$ be a positive natural number that will be defined later, and let $\tilde \Phi_N \in \R^{N \times p}$ be a matrix containing $\tilde \varphi(\hat x_i)$ in its $i$-th row, where every $\{ \hat x_i \}_{i = 1}^N$ is sampled independently from $\mathcal P_X$. Importantly, $\hat x_i$ is different from $x_i$ (as they are defined as auxiliary random variables only useful for the purposes of this proof), but $\tilde \varphi(\hat x_i) = \tilde \phi(V \hat x_i)$ is defined with the same random features $V$ in \eqref{eq:rfmodelintro}.
    Set
    \begin{equation}\label{eq:PhiNscalings}
        n' = \min \left( \left \lfloor \frac{p}{\log^4 p} \right \rfloor, \left \lfloor \frac{d^{3/2}}{\log^3 d} \right \rfloor \right) , \qquad N = p^2 n',
    \end{equation}
    where $n'$ is a positive integer (as $p$ is large enough). Note that this definition guarantees that the triple $(n', d, p)$ satisfies the scalings
    \begin{equation}
        n' = \bigO{\frac{p}{\log^4 p}},  \qquad  n' \log^3 n' = \bigO{d^{3/2}},
    \end{equation}
    which provide the sufficient hypotheses to apply Lemma \ref{lemma:tildetilde} on a $n' \times p$ block of $\tilde \Phi_N$. % In fact, the second relation in Assumption \ref{ass:scalings} (namely, $n = \tilde \omega(d)$) is not exploited in its proof, as remarked in the very end of it.
    
    Then, $\tilde \Phi_N$ can be seen as the vertical stacking of $p^2$ matrices with size $n' \times p$. All these matrices are independent copies of each other (in the probability space of the $\hat x_i$-s), and each of these has operator norm $\bigO{\sqrt p}$ by Lemma \ref{lemma:tildetilde}, with probability at least $1 - 2 \exp \left( -c_1 \log^2 n' \right)$. Thus, performing a union bound over these $p^2$ matrices, we get
    \begin{equation}\label{eq:PhiNopnormstack}
        \opnorm{\tilde \Phi_N^\top \tilde \Phi_N} = \bigO{p^2 \, p} = \bigO{\frac{Np}{n'}} = \bigO{N \log^4 p + \frac{N p \log^3 d}{d^{3/2}} },
    \end{equation}
    with probability at least
    \begin{equation}
        1 - 2 p^2 \exp \left( -c_1 \log^2 n' \right) \geq 1 - 2 p^2 \exp \left( -c_2  \min \left( \log^2 p, \log^2 d \right) \right)
    \end{equation}
    over $V$ and $\{ \hat x_i \}_{i = 1}^N$.
    \simonesolved{Note that the rows of $\tilde \Phi_N$ are identically distributed and such that
    \begin{equation}
        \sup_{\hat x_i} \norm{\tilde \varphi(\hat x_i)}_2 \leq \norm{\tilde \phi(\mathbf 0)}_2 + \tilde L \sup_{\hat x_i} \norm{V \hat x_i}_2 \leq \norm{\tilde \phi(\mathbf 0)}_2 + \tilde L \opnorm{V} \sup_{\hat x_i} \norm{\hat x_i}_2 = \bigO{\sqrt p},
    \end{equation}
    where we denote with $\mathbf 0 \in \R^p$ a vector of zeros, $\tilde L$ is the Lipschitz constant of $\tilde \phi$, and the last step follows from the bound on $\opnorm{V}$ given by Lemma \ref{lemma:evminX}, which holds with probability at least $1 - 2 \exp \left( -c_3 d \right)$ over $V$ (high probability event over which we will condition until the end of the proof). This readily gives, for some constant $C_1$,
    \begin{equation}\label{eq:subGrowN}
        \subGnorm{\tilde \varphi(\hat x_i)} \leq C_1 \sup_{\hat x_i} \norm{\tilde \varphi(\hat x_i)}_2 = \bigO{\sqrt p}.
    \end{equation}}
    % \begin{equation}\label{eq:subGrowN}
    % \begin{aligned}
    %     \subGnorm{\tilde \varphi(\hat x_i)} &\leq \subGnorm{\varphi(\hat x_i)} + \mu_1 \subGnorm{V \hat x_i} \\
    %     &\leq C_1 \sup_{\hat x_i} \norm{\varphi(\hat x_i)}_2 + \mu_1 \opnorm{V} \subGnorm{\hat x_i} \\
    %     & = \bigO{\sqrt p + \sqrt{\frac{p}{d}}} = \bigO{\sqrt p},
    % \end{aligned}
    % \end{equation}
    % where we apply the triangle inequality in the first step, and the third step exploits that $\norm{\varphi(\hat x_i)}_2 \leq p$ and
    We remark that the sub-Gaussian norm in \eqref{eq:subGrowN} is intended in the probability space of $\hat x_i$, and that the bound holds jointly for every $i \in [N]$. Then, there exists a sufficiently small absolute constant $C_2$ such that $C_2 \tilde \Phi_N / \sqrt p$ is a matrix with independent sub-Gaussian rows, with unit sub-Gaussian norm. Then, by Theorem 5.39 in \cite{vershrandmat} (see their Remark 5.40 and Equation (5.25)), we have that
    \begin{equation}\label{eq:PhiNconc}
        \frac{C_2^2}{p}  \opnorm{\frac{\tilde \Phi_N^\top \tilde \Phi_N}{N} - \E_{x} \left[ \tilde \varphi(x) \tilde \varphi(x)^\top \right]} = \bigO{\sqrt{\frac{p}{N}}},
    \end{equation}
    with probability at least $1 - 2 \exp \left( -c_4 p \right)$ over $\{ \hat x_i \}_{i = 1}^N$.

    Then, we have
    \begin{equation}
        \begin{aligned}
            \opnorm{\E_{x} \left[ \tilde \varphi(x) \tilde \varphi(x)^\top \right]} &\leq \opnorm{\frac{\tilde \Phi_N^\top \tilde \Phi_N}{N} - \E_{x} \left[ \tilde \varphi(x) \tilde \varphi(x)^\top \right]} + \frac{\opnorm{\tilde \Phi_N^\top \tilde \Phi_N}}{N} \\
            &= \bigO{p \sqrt{\frac{p}{N}}} + \bigO{\log^4 p + \frac{p \log^3 d}{d^{3/2}}} \\
            &= \bigO{\sqrt p \, \sqrt{ \frac{\log^4 p}{p}} + \sqrt p \, \sqrt{\frac{\log^3 d}{d^{3/2}}}} + \bigO{\log^4 p + \frac{p \log^3 d}{d^{3/2}}} \\
            &= \bigO{\log^4 p + \frac{p \log^3 d}{d^{3/2}}},
        \end{aligned}
    \end{equation}
    where the first step follows from the triangle inequality, the second step is a consequence of \eqref{eq:PhiNconc} and \eqref{eq:PhiNopnormstack}, and the third step is a consequence of \eqref{eq:PhiNscalings}. %, and the last step is a consequence of Assumption \ref{ass:log}.
    Taking the intersection between the high probability events in \eqref{eq:PhiNopnormstack} and \eqref{eq:PhiNconc}, we have %, since $\log p = \Theta(\log n) = \Theta(\log d)$ and, that
    \begin{equation}
        \opnorm{\E_{x} \left[ \tilde \varphi(x) \tilde \varphi(x)^\top \right]} = \bigO{\log^4 n + \frac{p \log^3 d}{d^{3/2}}},
    \end{equation}
    with probability at least $1 - 2 p^2 \exp \left( -c_5  \min \left( \log^2 p, \log^2 d \right) \right) \geq 1 - 2 \exp \left( -c_6   \log^2 n \right)$ %(because of Assumptions \ref{ass:scalings})
    over $\{ \hat x_i \}_{i = 1}^N$ and $V$.
    % This last equation holds with probability at least $1 - 2 \exp \left( -c_5 \log^2 p \right) \geq 1 - 2 \exp \left( -c_6 \log^2 n \right)$ (by Assumption \ref{ass:scalings}) over $\{ \hat x_i \}_{i = 1}^N$ and $V$,
    Note that the LHS of the previous equation does not depend on $\{ \hat x_i \}_{i = 1}^N$, which were introduced as auxiliary random variables. Thus, the high probability bound holds restricted to the probability space of $V$, and the desired result follows.
\end{proof}

\begin{lemma}\label{lemma:outputorder1}
    Let Assumptions \ref{ass:data} and \ref{ass:activation} hold, and let $n = o \left( p / \log^4 p \right)$, $n \log^3 n = o\left(d^{3/2}\right)$, $n = \omega(d)$ and $\log n = \Theta(\log p)$.
    Then, we have
    \begin{equation}\label{eq:bdex}
        \E_{x} \left[  \left(  \varphi(x)^\top \theta^* \right)^2  \right] = \bigO{1},
    \end{equation}
    with probability at least $1 - 2 \exp \left( -c \log^2 n \right)$ over $X$ and $V$, where $c$ is an absolute constant.
\end{lemma}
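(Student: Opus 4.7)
The plan is to start from $\theta^* = \Phi^+ Y$, which holds because the kernel $K$ is invertible by Lemma \ref{lemma:hyperK}, and to split the feature map via $\varphi(x) = \mu_1 V x + \tilde{\varphi}(x)$. A straightforward application of $(a+b)^2 \leq 2a^2 + 2b^2$ then gives
\begin{equation*}
    \E_{x}\!\left[\left(\varphi(x)^\top \theta^*\right)^2\right] \leq 2\mu_1^2\,\E_{x}\!\left[\left(x^\top V^\top \Phi^+ Y\right)^2\right] + 2\,\E_{x}\!\left[\left(\tilde{\varphi}(x)^\top \Phi^+ Y\right)^2\right],
\end{equation*}
and I would bound the two terms separately.

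For the first term, Cauchy-Schwartz and Lemma \ref{lemma:normofx} give $\E_{x}\!\left[(x^\top u)^2\right] = u^\top \E_{x}[xx^\top] u = \bigO{\norm{u}_2^2}$ for any deterministic $u \in \R^d$; conditioning on the randomness of $V, X$ and applying this with $u = V^\top \Phi^+ Y$, we obtain an upper bound proportional to $\opnorm{\mu_1 V^\top \Phi^+}^2 \, \norm{Y}_2^2 = \bigO{1/n} \cdot \bigO{n} = \bigO{1}$, where I use the fourth estimate of Lemma \ref{lemma:facts} together with the assumption that the labels are bounded (so $\norm{Y}_2 = \bigO{\sqrt n}$).

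For the second term, the bound $\E_{x}\!\left[(\tilde{\varphi}(x)^\top w)^2\right] \leq \opnorm{\E_{x}[\tilde{\varphi}(x)\tilde{\varphi}(x)^\top]}\, \norm{w}_2^2$ applied to $w = \Phi^+ Y$ yields, via Lemma \ref{lemma:Eopnormsmall} and $\norm{\Phi^+ Y}_2 \leq \opnorm{\Phi^+}\norm{Y}_2 = \bigO{\sqrt{n/p}}$ from Lemma \ref{lemma:hyperK},
\begin{equation*}
    \E_{x}\!\left[\left(\tilde{\varphi}(x)^\top \Phi^+ Y\right)^2\right] = \bigO{\left(\log^4 n + \frac{p \log^3 d}{d^{3/2}}\right) \frac{n}{p}} = \bigO{\frac{n \log^4 n}{p} + \frac{n \log^3 d}{d^{3/2}}} = o(1),
\end{equation*}
where the last step follows from $p = \Omega(n^2)$ and $n = o(d^{3/2}/\log^3 d)$ in Assumption \ref{ass:scalings}. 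Combining the two bounds and taking the intersection of all the high probability events (each of which holds with probability at least $1 - 2\exp(-c\log^2 n)$) delivers \eqref{eq:bdex}. There is no real obstacle here — the main point is bookkeeping of which auxiliary lemmas are invoked and verifying that the scaling assumptions make the $\tilde{\varphi}$ contribution vanishing while the linear $Vx$ contribution stays of constant order.
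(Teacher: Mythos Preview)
Your proposal is correct and follows essentially the same route as the paper: split $\varphi(x)=\mu_1 Vx+\tilde\varphi(x)$, bound the linear part via $\opnorm{V^\top\Phi^+}^2\norm{Y}_2^2=\bigO{1}$ from Lemma~\ref{lemma:facts}, and bound the $\tilde\varphi$ part via Lemma~\ref{lemma:Eopnormsmall} and $\norm{\Phi^+Y}_2^2=\bigO{n/p}$ from Lemma~\ref{lemma:hyperK}. The only cosmetic differences are that the paper invokes sub-Gaussianity of $x$ directly (Proposition~2.5.2 in Vershynin) rather than Lemma~\ref{lemma:normofx} for the second-moment bound, and that the final $o(1)$ step should be justified by the lemma's own hypotheses $n=o(p/\log^4 p)$ and $n\log^3 n=o(d^{3/2})$ rather than by appealing to Assumption~\ref{ass:scalings}.
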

\begin{proof}
    First, we can upper bound the LHS of \eqref{eq:bdex} as
    \begin{equation}\label{eq:o1termfirst}
        \E_{x} \left[  \left(  \varphi(x)^\top \theta^* \right)^2  \right] \leq 2 \mu_1^2 \E_{x} \left[  \left(  x^\top V^\top \theta^* \right)^2  \right] + 2 \E_{x} \left[  \left(  \tilde \varphi(x)^\top \theta^* \right)^2  \right].
    \end{equation}
    We bound the two terms separately.
    Since $x$ is distributed according to $\mathcal P_X$, it is sub-Gaussian with $\subGnorm{x} = \bigO{1}$. Then, we can bound its second moment (see \cite{vershynin2018high}, Proposition 2.5.2) as
    \begin{equation}\label{eq:o1termfirst1}
        \E_{x} \left[  \left(  x^\top V^\top \theta^* \right)^2  \right] \leq C \norm{V^\top \theta^*}_2^2 \leq C \opnorm{V^\top \Phi^+}^2 \norm{Y}_2^2 = \bigO{\frac{1}{n} \, n} = \bigO{1},
    \end{equation}
    where  the third step holds with probability at least $1 - 2 \exp \left( -c_1 \log^2 n \right)$ over $X$ and $V$ because of Lemma \ref{lemma:facts}.
    Next, we bound the second term of \eqref{eq:o1termfirst} as
    \begin{equation}
    \begin{aligned}
        \E_{x} \left[  \left(  \tilde \varphi(x)^\top \theta^* \right)^2  \right] &= \left(\theta^*\right)^\top \E_{x} \left[ \tilde \varphi(x) \tilde \varphi(x)^\top \right] \theta^* \\
        &\leq \opnorm{\E_{x} \left[ \tilde \varphi(x) \tilde \varphi(x)^\top \right]} \norm{\theta^*}_2^2 \\
        &\leq \opnorm{\E_{x} \left[ \tilde \varphi(x) \tilde \varphi(x)^\top \right]} \opnorm{\Phi^+}^2 \norm{Y}_2^2 \\
        &= \bigO{\left(\log^4 n + \frac{p \log^3 d}{d^{3/2}} \right) \, \frac{1}{p} \, n} \\
        &= \bigO{\frac{n \log^4 n}{p} + \frac{n \log^3 d}{d^{3/2}}} = o\left( 1 \right),
    \end{aligned}
    \end{equation}
    where the fourth line holds because of Lemmas \ref{lemma:Eopnormsmall} and \ref{lemma:hyperK} with probability at least $1 - 2 \exp \left( -c_2 \log^2 n \right)$ over $X$ and $V$. The last line provides the desired result. %, and is true because of Assumption \ref{ass:scalings}.
\end{proof}

\end{document}